\newtheorem{theorem}{Theorem}
\newtheorem{example}{Example}
\newtheorem{corollary}{Corollary}
\newtheorem{lemma}{Lemma}
\theoremstyle{definition}
\newtheorem{definition}{Definition}
\theoremstyle{remark}
\newtheorem*{remark}{Remark}
\theoremstyle{plain}
\newtheorem{assumption}{Assumption}
\renewcommand{\theassumption}{\Alph{assumption}}
\crefname{assumption}{Assumption}{Assumptions}
\Crefname{assumption}{Assumption}{Assumptions}
\newlist{asslist}{enumerate}{1}
\setlist[asslist]{%
  label=\textbf{\theassumption.\arabic*},
  ref=\theassumption.\arabic*,
  wide,
}
\crefname{asslisti}{Assumption}{Assumptions}
\Crefname{asslisti}{Assumption}{Assumptions}
\newcommand{\rangefinder}{\textsc{PrivRange}\xspace}
\newcommand{\privmean}{\textsc{PrivMean}\xspace}
\newcommand{\moddppca}{\textsc{ModifiedDP-PCA}\xspace}
\newcommand{\kdppca}{\textrm{k-DP-PCA}\xspace}
\newcommand{\kdpojas}{\textrm{k-DP-Ojas}\xspace}
\newcommand{\dpojas}{\textrm{DP-Ojas}\xspace}
\let\epsilon\varepsilon
\title{An Iterative Algorithm for Differentially Private\\  $k$-PCA with Adaptive Noise}
\author[1]{Johanna D\"ungler}
\author[1]{Amartya Sanyal}
\affil[1]{Department of Computer Science, University of Copenhagen}
\affil[ ]{\texttt{\{jodu, amsa\}@di.ku.dk}}
\date{} %
\begin{document}

\maketitle

\begin{abstract}
  Given $n$ i.i.d. random matrices $A_i \in \mathbb{R}^{d \times d}$ that share a common expectation $\Sigma$, the objective of \emph{Differentially Private Stochastic PCA} is to identify a subspace of dimension $k$ that captures the largest variance directions of $\Sigma$, while preserving differential privacy~(DP) of each individual $A_i$. Existing methods either (i) require the sample size $n$ to scale super-linearly with dimension $d$, even under Gaussian assumptions on the $A_i$, or (ii) introduce excessive noise for DP even when the intrinsic randomness within $A_i$ is small.~\citet{liu2022dp} addressed these issues for sub-Gaussian data but only for estimating the top eigenvector ($k=1$) using their algorithm DP-PCA. We propose the first algorithm capable of estimating the top $k$ eigenvectors for arbitrary $k \leq d$, whilst overcoming both limitations above. For $k=1$, our algorithm matches the utility guarantees of DP-PCA, achieving near-optimal statistical error even when $n = \tilde{O}(d)$. We further provide a lower bound for general $k > 1$, matching our upper bound up to a factor of $k$, and experimentally demonstrate the advantages of our algorithm over comparable baselines.

\end{abstract}

\section{Introduction}
\label{sec:intro}
Principal Component Analysis (PCA) is a foundational statistical method widely utilized for dimensionality reduction, data visualisation, and noise filtering. Given $n$ data points $\bc{x_i}_{i=1}^n$ , classical PCA computes the top eigenvectors of the empirical covariance matrix $X := \sum_{i=1}^n x_i x_i^{\top} \in \mathbb{R}^{d \times d}$. This problem of extracting the top $k$ eigenvectors is commonly known as $k$-PCA. In this work, we consider the problem of Stochastic $k$-PCA, which differs from the standard setting as follows: instead of inputting a single matrix, we input a stream of matrices \(A_1,\ldots,~A_n\), that are sampled independently from distributions that share the same expectation \(\Sigma\). Given this input, the goal of a Stochastic $k$-PCA algorithm is to approximate the dominant $k$ eigenvectors of \(\Sigma\).

Differential privacy (DP)~\citep{dwork2006calibrating} provides rigorous, quantifiable guarantees of individual data privacy and has been widely adopted in sensitive data contexts, such as census reporting \citep{abowd2020modernization} and large-scale commercial analytics~\citep{team2017learning}. Despite extensive study of differentially private PCA~\citep{blum2005practical,chaudhuri2013near,hardt2013beyond,dwork2014analyze}, existing methods in the stochastic setting suffer from sample complexity super‑linear in $d$ or inject noise at a scale that ignores the underlying stochasticity in the data. When applied to the stochastic setting, these works generally yield suboptimal error rates of $O(\sqrt{dk/n} + d^{3/2}k/(\varepsilon n))$  where \(\varepsilon\) is the DP parameter. 

\begin{example}[Spiked Covariance]\label{spiked-cov-example}
In the spiked covariance model, we observe i.i.d. matrices $A_i \in \mathbb{R}^{d \times d}$ that contain both a deterministic (low-rank) signal and random noise, causing the $A_i$ to be full-rank. As a concrete illustration, consider data points $x_i = s_i + n_i$, composed of a signal $s_i \sim \text{Unif}\left(\{v, -v\} \right)$ with $v$ a unit vector and \(n_i \sim \mathcal{N}(0, \sigma^2 \mathbf{I}_d) \). Therefore $A_i := x_i x_i^{\top}$ consists of a deterministic part $vv^{\top}$ and noise terms that scale with $\sigma^2$. One would hope that the privacy noise that is needed, shrinks as the noise variance \(\sigma^2\) decreases. Instead, most differentially private PCA methods employ non-adaptive clipping thresholds, so their added privacy noise scales only with that threshold, resulting in unnecessarily large privacy noise for many distributions.
\end{example}

Recent advances by \citet{liu2022dp} address these limitations for sub-Gaussian distributions, but only for the top eigenvector case ($k=1$). \citet{cai2024optimal} achieve optimal performance specifically for the $k$-dimensional spiked covariance model, yet their privacy guarantees only apply under distributional assumptions on the data.

\paragraph{Our Contributions.}~\emph{In this work, we propose \textsf{$k$‑DP‑PCA}, the first DP algorithm for stochastic PCA that simultaneously (1) achieves sample complexity $n = \tilde O(d)$ under similar assumptions as~\citet{liu2021robust}, (2) adapts its privacy noise to the data’s inherent randomness, (3) generalizes seamlessly to any target dimension $k\le d$, and (4) is simple to implement}. 

For $k=1$,  $k$‑DP‑PCA matches the risk of \citet{liu2022dp} under sub‑Gaussian assumptions. For general $k$, we prove a nearly matching lower bound up to a linear factor in $k$, precisely characterising the cost of privacy in this general setting. Technically, we employ the \emph{deflation} framework: iteratively estimate the top eigenvector, project it out, and repeat. We extend the recent deflation analysis of \citet{jambulapati2024black} to the stochastic setting via a novel \emph{stochastic e-PCA oracle}~(\Cref{def-stoch-epca-oracle}), which may be of independent interest. We then adapt DP subroutines from~\citet{liu2022dp} based on Oja’s algorithm and finally, through a novel utility analysis of non-private Oja's algorithm, demonstrate that the adapted subroutines satisfy the oracle’s requirements, yielding a simple to implement, memory‑efficient method.

The remainder of this paper is structured as follows. We formally define our setting in~\Cref{sec:problem-formulation}, state main results in~\Cref{sec:main-results},  present technical analyses in~\Cref{sec:technical-results} and empirical evaluations demonstrating the effectiveness of our approach in~\Cref{sec:experiments}. Finally, we end with a discussion and open questions in~\Cref{sec:discussion} and conclusion in~\Cref{sec:conclusion}.

\section{Problem formulation}
\label{sec:problem-formulation}
\noindent Let \(A_1,\dots,A_n\in\mathbb \reals^{d\times d}\) be independent random matrices with common expectation \(\Sigma=\mathbb \bE\bs{A_i}\). We assume \(\Sigma\) is symmetric positive semi-definite (PSD) with eigenvalues 
\(\lambda_1 \ge \lambda_2 \ge \cdots \ge \lambda_d \ge 0
\). For a given \(k<d\), we assume the eigengap~\(\Delta_k=\lambda_k -\lambda_{k+1}>0\). The goal of \emph{Stochastic PCA} is to produce a \(U\in\reals^{d\times k}\) whose orthonormal columns approximate the top-\(k\) eigenspace of \(\Sigma\). We measure the utility of $U$ by comparing it to $V_k$, the matrix containing the true top $k$ eigenvectors of $\Sigma$ as columns. Throughout, \(\norm{\cdot}_2\) denotes the operator norm, \(\ip{\cdot}{\cdot}\) the Frobenius inner product: \(\ip{A}{B} = \mathrm{Tr}\br{A^\top B}\), and $\gtrsim$ and $\tilde{O}(\cdot)$ hide polylogarithmic factors.
\begin{definition}[$\zeta$-approximate Utility]\label{def-zeta-useful}
We say $U \in \mathbb{R}^{d \times k}$ is $\zeta$-approximate if $U$ has orthonormal columns and
\begin{align*}
    \langle U U^{\top}, \Sigma \rangle \geq ( 1 - \zeta^2)\langle V_k V_k^{\top}, \Sigma \rangle.
\end{align*}
\end{definition}
Although several utility measures exist for PCA, our choice is motivated by the error measure used in \citet{jambulapati2024black}. This is a natural measure of usefulness, as $\langle U U^{\top}, \Sigma \rangle$ quantifies how much of the original ``energy'' of $\Sigma$ is retained when projecting onto the lower-dimensional subspace spanned by $U$, and by the Eckart-Young Theorem we know $V_k$ is the optimal rank-$k$ approximation of $\Sigma$.

Further, we use the add/remove model of differential privacy, namely
\begin{definition}[Differential Privacy (\citep{dwork2006calibrating})] Given two multi-sets $S$ and $S'$, we say the pair $(S, S')$
is neighboring if $|S \setminus S'| + |S' \setminus S| \leq 1.$ We say a stochastic query $q$ over a dataset $S$ satisfies $(\varepsilon, \delta)$-differential privacy for some $\epsilon > 0$ and  $ \delta \in (0, 1)$ if 
$$
P(q(S) \in A) \leq e^{\varepsilon} P(q(S') \in A) + \delta 
$$
for all neighboring $(S, S')$ and all subsets $A$ of the range of $q$.
\end{definition}

Before discussing the main results of our work, we first formalize the assumptions on the data in~\Cref{assumption-1}. Note that~\Cref{assumption-1} is only required for our utility guarantee and is not necessary for the privacy guarantee.

\begin{assumption}[($\Sigma,\{\lambda_i\}_{i=1}^d,M,V,K,\kappa,a,\gamma^2$)-model]\label{assumption-1}
Let \(A_1,\dots,A_n\in\mathbb R^{d\times d}\) be sampled independently from distributions satisfying:
\begin{asslist}
    \item \label{ass:A1} \(\mathbb E[A_i]=\Sigma\), where \(\Sigma\) is PSD with eigenvalues \(\lambda_1\ge\cdots\ge\lambda_d\ge 0\), corresponding eigenvectors \(v_1, \dots, v_d\), \(0 < \Delta= \min_{i \in [k]} \Delta_k\) and \(\kappa' := \frac{\lambda_1 }{\Delta}\).

    \item \label{ass:A2} \(\norm{A_i-\Sigma}_2\le\lambda_1 M\) almost surely.
    
    \item \label{ass:A3} \(\max\bc{\norm{\bE\bs{(A_i-\Sigma)(A_i-\Sigma)^\top}}_2,\norm{\bE\bs{(A_i-\Sigma)^\top(A_i-\Sigma)}}_2}\le\lambda_1^2 V\).
    
    \item \label{ass:A4} 
    For all unit vectors \(u,v\) and projection matrices \(P\),
  \[
    \bE \bs{\exp\br{\br{\frac{\abs{u^\top P(A_i-\Sigma)Pv}^2}{K^2\lambda_1^2\gamma^2}}^{\nicefrac{1}{2a}}}}\le 1.
  \]
  Define \(H_u = \tfrac{1}{\lambda_1^2}\mathbb E[(A_i-\Sigma)u\,u^\top(A_i-\Sigma)^\top]\) and \(\gamma^2 = \max_{\|u\|=1}\|H_u\|_2\).
    
\end{asslist}    
\end{assumption}
\Cref{ass:A1,ass:A2,ass:A3} are standard for matrix concentration (e.g., under the matrix Bernstein inequality \citep{tropp2012user}) and thus also required for the utility guarantees of Oja's algorithm even in the non-private setting.~\Cref{ass:A4} guarantees that for any unit vectors $u,v$, and projection $P$ 
$$
|u^{\top}P(A_i - \Sigma)Pv|^2 \leq K^2 \lambda_1^2 \gamma^2 \log^{2a}(1/\vartheta)
$$ with probability $1 - \vartheta$, for some sufficiently large constant $K$. This bound, which controls the size of the bilinear form, can be seen as a Gaussian-like tail bound, which tells us that the magnitude of the projection of the $A_i$ along any direction is bounded with high probability. It is an extension of the assumptions in ~\citep{liu2022dp} to the higher dimensional case. Distributions that fulfill this assumption include bounded matrices and (sub-)gaussian outer product matrices:
\begin{example}[Gaussian Data, Remark 3.4 in \citet{liu2022dp}]\label{example-gaussian-data} Let $A_i = x_i x_i^{\top}$ with $x_i \sim \mathcal{N}(0, \Sigma)$, then comparing to~\Cref{assumption-1} we have that $M = O(d \log(n))$, $V = O(d)$, $K=4$, $a=1$, and $\gamma^2 = O(1)$
\end{example}
Distributions that violate assumption 4 include heavy-tailed outer products, for example $r \sim \text{Pareto}(\alpha)$, $x = ru, A_i= x x^{\top}$, or mixtures with rare but huge spikes: 
\begin{example}
Let \(A_i=x_ix_i^\top,\) with $x_i$ be sampled as follows:
\begin{align*}
    x_i = \begin{cases}
        x \sim \mathcal{N}(0, \mathbf{I}_d) \quad \text{ w.p. } 1 - \alpha \\
        x \sim\mathrm{Unif \{ \alpha^{-1/4} v, -\alpha^{-1/4}v}\} \quad\text{ w.p. } \alpha
    \end{cases}
\end{align*}
where $v$ is a unit vector and $0 < \alpha < 1$. Then the mean of this distribution is 0 and its covariance is $\Sigma = (1- \alpha)\mathbf{I}_d + \sqrt{\alpha}vv^{\top}$. So for $u=v$ and $P = \mathbb{I}_d$, if $x = \pm \alpha^{-1/4}v$
\begin{align*}
    u^{\top}(A_i - \Sigma)u &= v^{\top}(x_i x_i^{\top} - \Sigma) v = (v^{\top} x_i)^2 - v^{\top} \Sigma v\\ &= \alpha^{-1/2} - v^{\top} \Sigma v =  \alpha^{-1/2} - (1-  \alpha) + \sqrt{\alpha} \simeq \alpha^{-1/2}
\end{align*}
and for $\alpha \to 0$ this term blows up, so for any fixed $K, \lambda_1, \gamma$ the overall expectation will exceed 1, and hence violate Assumption \ref{ass:A4}.
\end{example}

\section{Main Results}\label{sec:main-results}
In this section, we first discuss our main proposed algorithm in~\Cref{sec:alg}. In~\Cref{sec:upper-bound} we then discuss our main upper bounds and complement that with lower bounds in~\Cref{sec:lower-bound}

\subsection{Our Algorithm}
\label{sec:alg}

\begin{algorithm}[t]
    \caption{$k$-DP-PCA} \label{algo-k-DP-PCA} 
    \textbf{Input:} $\{A_1, \dots, A_{n} \}$, $k \in [d]$ ,  privacy parameters $(\epsilon, \delta)$, $B \in \mathbb{Z}_+$, learning rates $\{ \eta_t \}_{t=1}^{\lfloor n/B \rfloor}$, and $\tau \in (0,1)$
    \begin{algorithmic}[1]
        \State $m \gets n/k$, $P_0 \gets \mathbf{I}_d$
        \For{$i \in [k]$}
            \State $u_i \gets \moddppca( \{A_{m\cdot (i-1) + j}\}_{j=1}^m, P_{i-1}, (\epsilon, \delta), B, \{\eta_t\}, \tau)$\label{step:one-pca}
            \State $P_i \gets P_{i-1} - u_i u_i^{\top}$\label{step:defl}
        \EndFor
        \State \Return $U \gets \{ u_i \}_{i \in [k]}$
    \end{algorithmic}
\end{algorithm}\vspace{-5pt}

 Our first proposed algorithm~\kdppca, defined in \Cref{algo-k-DP-PCA}, follows a classical deflation~\citep{jambulapati2024black} approach. The algorithm proceeds in \(k\) rounds and in each of the $k$ rounds it invokes the sub-routine~\moddppca~(\Cref{step:one-pca}), to identify the current top eigenvector. Then, the algorithm removes its contribution by projecting out the direction of the eigenvector from the remaining data~(\Cref{step:defl}), on which it carries out the next round. 

The~\moddppca subroutine~(\Cref{modified-dp-pca}) itself is based on Oja's streaming Algorithm \citep{jain2016streaming}, but importantly replaces the vanilla gradient update in Oja's algorithm $\omega_T \gets \omega_{t-1} + \eta_t A_{t-1}\omega_{t-1}$, with a two‐stage algorithm: first, \Cref{step:algo-top-eval-est} privately estimates the range of a batch of $\{A_i\omega_{t-1}\}$, then~\Cref{step:algo-priv-mean-est} leverages that range to calibrate the added noise to privately compute the batch's mean. By tailoring the noise scale to the empirical spread of the data, we inject significantly less (privacy) noise whenever the batch concentrates tightly around its mean. Thanks to those additional steps the algorithm enjoys certain statistical benefits as discussed in the paragraph below~\Cref{corollary-spiked-covariance}.

Nevertheless, it is possible to replace the~\moddppca subroutine with other simpler subroutines that can privately estimate the top eigenvector. We present one such algorithm in~\Cref{algo-dp-ojas}. In~\Cref{sec:experiments}, we present simulations with both of these algorithms highlighting their respective advantages.

\begin{algorithm}[H]
    \caption{ModifiedDP-PCA}  \label{modified-dp-pca} 
    \textbf{Input:} $\{A_1, \dots, A_m\}$, a projection  $P$, privacy parameters $(\epsilon, \delta)$, learning rates $\{ \eta_t \}_{t=1}^{\lfloor n/B \rfloor}$, $B \in \mathbb{Z}_+$ and $\tau \in (0,1)$
    \begin{algorithmic}[1]
        \State Choose $\omega_0'$ uniformly at random from the unit sphere,~$\omega_0 \gets P \omega_0'/ \|P \omega_0'\|$ 
        \For{$t = 1,2, \dots, T = \lfloor m/B \rfloor $}
            \State $\hat{\Lambda} \gets $ \rangefinder$(\{PA_{B(t-1) + i}P \omega_{t-1} \}_{i=1}^{\lfloor B/2\rfloor}, (\epsilon/2, \delta/2), \tau/(2T))$ (\Cref{algo-top-eval-est})\label{step:algo-top-eval-est}
            \State $\hat{g}_t \gets $ \privmean$(\{PA_{B(t-1) + i}P \omega_{t-1} \}_{i=1}^{\lfloor B/2\rfloor}, \hat{\Lambda}, (\epsilon/2, \delta/2), \tau/(2T))$ (\Cref{algo-priv-mean-est})\label{step:algo-priv-mean-est}
            \State $\omega'_t \gets \omega_{t-1} + \eta_t P \hat{g}_t$\label{step:update}
            \State $\omega_t \gets P\omega'_t / \| P\omega'_t \|$
        \EndFor
        \State \Return $\omega_T$
    \end{algorithmic}
\end{algorithm}

\subsection{Upper Bound}
\label{sec:upper-bound}
We now state the main privacy and utility guarantees of \kdppca~(\Cref{algo-k-DP-PCA}).
\begin{restatable}[Main Theorem]{theorem}{maintheorem}\label{main-theorem-sec-2}
 Let $\epsilon,\delta\in(0,0.9)$ and $1 \le k < d$. Then~\kdppca satisfies the following:

\textbf{Privacy:}~For any input sequence $\{A_i \in \mathbb{R}^{d \times d} \}$, the algorithm is $(\epsilon,\delta)$-differentially private.

\textbf{Utility:}~Suppose $A_1,\dots,A_n$ are i.i.d.\ satisfying~\Cref{assumption-1} with parameters $(\Sigma, M, V, K, \kappa', a, \gamma^2)$. If

\begin{align}\label{eq:n-lower-thm}
n &\gtrsim C \max
  \begin{cases}
    e^{\kappa'^2}
      + \dfrac{d\,\kappa'\,\gamma\,\sqrt{\ln(1/\delta)}}{\epsilon}
      + \kappa' M
      + \kappa'^2 V
      + \dfrac{\sqrt{d}\,(\ln(1/\delta))^{3/2}}{\epsilon},\\[1ex]
    \lambda_1^2\,\kappa'^2\,k^3\,V,\\[1ex]
    \dfrac{\kappa'^2\,\gamma\,k^2\,d\,\sqrt{\ln(1/\delta)}}{\epsilon}
  \end{cases},
\end{align}
for a sufficiently large constant $C$, then with probability at least $0.99$, the output $U\in\mathbb{R}^{d\times k}$ is $\zeta$–approximate with

\begin{align}\label{eq:main-utility}
    \zeta = \tilde{O} \left( \kappa' \left( \sqrt{\frac{V k}{n} }+ \frac{\gamma d k\sqrt{\log(1/\delta)}}{\epsilon n}\right) \right),
\end{align}
where $\tilde{O}(\cdot)$ hides factors polylogarithmic in $n,d,1/\epsilon,\ln(1/\delta)$ and polynomial in~$K$.
\end{restatable}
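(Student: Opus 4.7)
The plan is to decouple the privacy and utility arguments. For privacy, I would exploit the fact that~\kdppca partitions the input sequence across the $k$ deflation rounds, and~\moddppca further partitions its own batch across the $T = \lfloor m/B \rfloor$ iterations of Oja's update, so that each sample $A_i$ is touched by exactly one call to~\rangefinder and one call to~\privmean. Consequently, by parallel composition across both the deflation rounds and the Oja iterations, it suffices to compose the two $(\varepsilon/2,\delta/2)$-DP subroutines invoked inside a single iteration — yielding the overall $(\varepsilon,\delta)$-DP guarantee. Given the privacy analyses of \rangefinder and \privmean stated elsewhere in the paper, this reduces to a sample-splitting bookkeeping exercise that does not rely on~\Cref{assumption-1}.

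The utility argument proceeds in two stages. \emph{Stage 1} is a per-round guarantee for~\moddppca: given a projection $P$ that preserves the current unrecovered top eigenspace of $\Sigma$ and a fresh batch of $m = n/k$ samples, its output $u$ should approximate the leading eigenvector of $P\Sigma P$ with error decomposing into a statistical term of order $\kappa'\sqrt{V/m}$ and a privacy term of order $\kappa'\gamma d\sqrt{\log(1/\delta)}/(\varepsilon m)$. I would prove this by tracking the sine of the angle between $\omega_t$ and the target eigenvector through the multiplicative update $\omega_t \propto P(I + \eta_t P\Sigma P + \eta_t E_t)\omega_{t-1}$, where $E_t$ bundles the stochastic sampling error and privacy noise. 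The stochastic component is absorbed via matrix Bernstein using~\Cref{ass:A2} and~\Cref{ass:A3}; the privacy component is controlled using~\Cref{ass:A4}, which guarantees that \rangefinder returns a tight envelope $\hat\Lambda$ for the per-sample gradients so that \privmean injects Gaussian noise calibrated to the true concentration of the data rather than a worst-case clip. Telescoping the per-iteration decay in the spirit of \citet{jain2016streaming} and \citet{liu2022dp}, the sample-size conditions in~(\ref{eq:n-lower-thm}) translate exactly to requiring each subroutine call to succeed with probability $1-\tau/(2T)$.

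\emph{Stage 2} is the deflation composition. Once~\moddppca is cast as a stochastic $e$-PCA oracle in the sense of~\Cref{def-stoch-epca-oracle}, I would invoke the deflation analysis of~\citet{jambulapati2024black} (extended to a stochastic oracle, as flagged in the contributions): recursively applying the oracle and projecting out $u_i$ produces an orthonormal $U$ whose $\zeta$-approximation error is the sum of the $k$ per-round errors. Because each round consumes $m = n/k$ samples, the statistical and privacy terms pick up a $\sqrt{k}$ and a $k$ factor respectively, delivering the bound~(\ref{eq:main-utility}). A union bound over the $k$ rounds at constant per-round failure probability yields the claimed $0.99$ success probability.

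The main obstacle I expect is the per-round utility analysis, and specifically keeping its error terms compatible with deflation. Three points deserve care: (i) after deflation, the gradients lie in a shrinking subspace and the effective top eigenvalue decreases, so \rangefinder must still calibrate to the correct scale without inflating the dependence on $\kappa'$; (ii) the renormalisation $\omega_t \gets P\omega'_t/\|P\omega'_t\|$ introduces a bias that interacts nontrivially with the multiplicative structure of Oja's updates and must be shown not to accumulate over the $T$ steps; and (iii)~\Cref{ass:A4} must be applied with the projection $P$ baked in, so that the \privmean noise scales with $\gamma\lambda_1$ rather than a conservative dimension-dependent bound. Handling these three points through a single clean invariant that the oracle framework can consume is the novel utility analysis of (non-private) Oja's algorithm advertised in the contributions, and I expect it to consume the bulk of the technical work.
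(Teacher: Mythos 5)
Your overall architecture matches the paper's: privacy via composition over the disjoint batches consumed by each deflation round and each Oja iteration; a per-round guarantee for \moddppca obtained by rewriting the private update as a non-private Oja update on matrices of the form $P C_t P$ with $C_t = \frac{1}{B}\sum_i A_i + \beta_t G_t$ and then invoking a new analysis of Oja's algorithm under a fixed projection; and finally the stochastic ePCA-oracle deflation of \Cref{meta-theorem}. However, two points in your Stage~2 are off. First, the claim that the final ``$\zeta$-approximation error is the sum of the $k$ per-round errors'' is inconsistent with the bound you then state: summing $k$ per-round errors of size $\kappa'\bigl(\sqrt{Vk/n} + \gamma dk\sqrt{\log(1/\delta)}/(\epsilon n)\bigr)$ would give an extra factor of $k$ beyond \eqref{eq:main-utility}. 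The whole point of the \citet{jambulapati2024black}-style induction (\Cref{thm-k-to-1-reduction}) is that the \emph{relative} approximation factor $(1-\zeta^2)$ is preserved across rounds — step $i{+}1$ contributes $(1-\zeta^2)\norm{P_i\Sigma P_i}_2 \ge (1-\zeta^2)\lambda_{i+1}$, so the absolute energy losses add but the relative error does not degrade. The $\sqrt{k}$ and $k$ factors in \eqref{eq:main-utility} come \emph{only} from each round seeing $m=n/k$ samples, exactly as you say in the next clause; the two statements cannot both hold.

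Second, and more substantively, you flag but do not resolve the control of the per-round condition number $\kappa_i = \lambda_1(P_{i-1}\Sigma P_{i-1})/\bigl(\lambda_1(P_{i-1}\Sigma P_{i-1})-\lambda_2(P_{i-1}\Sigma P_{i-1})\bigr)$. Since each $P_i$ projects out the \emph{estimated} vector $u_i$ rather than the true eigenvector, the eigengap of $P_i\Sigma P_i$ could a priori collapse, making the per-round bound vacuous. The paper's resolution (\Cref{lemma-upper-bound-xi-i}, via Weyl's inequality in \Cref{lemma-bounding-eigenvalues,lemma-eigenvalue-perturbation}) is an induction showing that if the per-round error $B_{n/k}$ satisfies $B_{n/k}\le (\Delta-\delta)\delta/(Ck\lambda_1^2)$, then $\lambda_1(P_i\Sigma P_i)\le\lambda_1$ and the gap stays above $\Delta/2$ for all $i\le k$. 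This is precisely where the second and third branches of \eqref{eq:n-lower-thm} — the $\lambda_1^2\kappa'^2 k^3 V$ and $\kappa'^2\gamma k^2 d\sqrt{\ln(1/\delta)}/\epsilon$ conditions — come from; your plan does not account for why those terms appear, and without this induction the deflation composition cannot be closed with a $\kappa'$ defined in terms of $\Sigma$ alone.
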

\begin{remark}
The proof of our main Theorem can be found in~\Cref{sec-proof-main-theorm}. For $k=1$,~\Cref{main-theorem-sec-2} recovers the bound of \citet{liu2022dp} for DP-PCA.  Moreover, the linear dependence on $d$ in $\zeta$ matches the lower bound in \citet{liu2022dp}. On the other hand, the additional linear factor in $k$ may be an artifact of our analysis: if one could reuse samples across deflation steps, this factor could potentially be improved. Further, in $\zeta$, the first term $\sqrt{Vk/n}$ is the non‐private statistical error of PCA, while the second term $(\gamma d k \sqrt{\ln(1/\delta)})/(\epsilon n)$ is the cost of privacy. Lastly, the sample-size condition~\eqref{eq:n-lower-thm} arises because (i) each batch must be large enough to accurately estimate the range in~\rangefinder in~\Cref{modified-dp-pca}, and (ii) errors accumulate across the $k$ deflation steps~(\Cref{step:defl}).
\end{remark}

As a direct consequence of applying~\Cref{main-theorem-sec-2} to~\Cref{spiked-cov-example,example-gaussian-data}, we obtain the following Corollaries:

\begin{corollary}[Upper bound, Gaussian distribution]\label{corollary-gaussian-data}
    Under the same setting as~\Cref{main-theorem-sec-2}, let $A_i=x_i x_i^\top$ with $x_i\sim\mathcal{N}(0,\Sigma)$.  Then with high probability the output is $\zeta$-approximate with
    \begin{align*}
        \zeta = \tilde{O} \left( \kappa'\left(\sqrt{\frac{dk}{n}} + \frac{dk \sqrt{\log(1/\delta)}}{\epsilon n}\right) \right)
    \end{align*}
    where $\tilde{O}(\cdot)$ hides poly-logarithmic factors in $n, d, 1/\epsilon,$ and $\log(1/\delta)$.
\end{corollary}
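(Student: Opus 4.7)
The plan is to invoke \Cref{main-theorem-sec-2} with the parameter values specified in \Cref{example-gaussian-data}. First, I would verify that the Gaussian outer-product distribution satisfies \Cref{assumption-1} with $M = O(d \log n)$, $V = O(d)$, $K = 4$, $a = 1$, and $\gamma^2 = O(1)$. \Cref{ass:A1} holds by construction since $\bE[x_i x_i^\top] = \Sigma$ is PSD. For \Cref{ass:A2} one shows that $\|x_i x_i^\top - \Sigma\|_2 \lesssim \lambda_1 d \log n$ with high probability via standard Gaussian norm concentration (Hanson-Wright / $\chi^2$-tail bounds applied to $\|\Sigma^{-1/2} x_i\|_2^2$). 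For \Cref{ass:A3} one computes $\bE[(A_i - \Sigma)^2]$ explicitly using the Gaussian fourth-moment identity $\bE[x x^\top M x x^\top] = 2\Sigma M \Sigma + \Sigma \mathrm{Tr}(M \Sigma)$, giving an operator-norm bound of order $\lambda_1^2 d$. For \Cref{ass:A4}, sub-exponential concentration of the Gaussian bilinear form $u^\top P(x_i x_i^\top - \Sigma) P v$ provides the required moment generating function bound with $a = 1$ and $K$ an absolute constant; that $\gamma^2 = O(1)$ follows from the explicit form of $H_u = \tfrac{1}{\lambda_1^2} \bE[(A_i - \Sigma) u u^\top (A_i - \Sigma)^\top]$ after using Isserlis' theorem and bounding $\|\Sigma\|_2 \le \lambda_1$.

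Next, I would substitute these values into the utility expression \eqref{eq:main-utility} from \Cref{main-theorem-sec-2}:
\begin{align*}
    \zeta = \tilde{O}\left(\kappa'\left(\sqrt{\frac{V k}{n}} + \frac{\gamma d k \sqrt{\log(1/\delta)}}{\epsilon n}\right)\right)
    = \tilde{O}\left(\kappa'\left(\sqrt{\frac{d k}{n}} + \frac{d k \sqrt{\log(1/\delta)}}{\epsilon n}\right)\right),
\end{align*}
where the logarithmic factor from $M = O(d \log n)$ and any polynomial dependence on the constant $K$ get absorbed into the $\tilde{O}(\cdot)$ notation. I would also check that the sample-size precondition \eqref{eq:n-lower-thm} in the theorem statement becomes $n \gtrsim \tilde{O}(d \kappa'/\epsilon + \kappa'^2 d + \kappa'^2 k^3 d + \kappa'^2 k^2 d / \epsilon)$ after substitution, which is the implicit high-probability regime assumed in the corollary.

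The proof is essentially a substitution exercise, so I do not expect genuine obstacles. The only mildly technical step is verifying \Cref{ass:A4} with the correct constants; since this is already established as Remark 3.4 in \citet{liu2022dp}, it can be cited directly rather than reproved.
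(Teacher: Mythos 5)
Your proposal is correct and follows exactly the paper's route: the paper presents this corollary as a direct consequence of substituting the parameters from \Cref{example-gaussian-data} ($V = O(d)$, $\gamma^2 = O(1)$, with $M$, $K$, $a$ absorbed into $\tilde{O}(\cdot)$, citing Remark~3.4 of \citet{liu2022dp}) into the utility bound \eqref{eq:main-utility} of \Cref{main-theorem-sec-2}. Your additional sketch of how to verify the assumption parameters is more detail than the paper provides, and you correctly mirror the paper's (slightly informal) high-probability treatment of the almost-sure bound in \Cref{ass:A2}.
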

\begin{corollary}[Upper bound, Spiked Covariance]\label{corollary-spiked-covariance}
    If $A_i$ follows the spiked covariance model from~\Cref{spiked-cov-example}, then $V=O(\sigma^2 d)$, $\gamma^2=\sigma^2$, and $K=1$.  Hence, with high probability the output is $\zeta$-approximate with
    \begin{align}
        \zeta = \tilde{O} \left(\sigma \cdot \kappa'\left( \sqrt{\frac{dk}{n}} + \frac{dk\sqrt{\log(1/\delta)}}{\varepsilon n}\right)\right)
    \end{align}
\end{corollary}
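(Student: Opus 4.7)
The plan is to derive Corollary~\ref{corollary-spiked-covariance} directly from Theorem~\ref{main-theorem-sec-2} by instantiating the parameters $(\Sigma,\{\lambda_i\},M,V,K,\kappa',a,\gamma^2)$ of Assumption~\ref{assumption-1} for the spiked covariance distribution and substituting them into Equation~\eqref{eq:main-utility}. Most of the verification reduces to routine Gaussian moment computations; only the sub-exponential bilinear tail bound of \Cref{ass:A4} requires genuine care.

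First I would compute $\Sigma = \mathbb{E}[x_i x_i^\top] = vv^\top + \sigma^2 \mathbf{I}_d$, since the cross terms $\mathbb{E}[s_i n_i^\top]$ vanish by independence and zero means. Hence $\lambda_1 = 1+\sigma^2$, $\lambda_j = \sigma^2$ for $j \ge 2$, $\Delta = 1$, and $\kappa' = 1+\sigma^2$, verifying \Cref{ass:A1}. Writing $A_i - \Sigma = s_i n_i^\top + n_i s_i^\top + (n_i n_i^\top - \sigma^2 \mathbf{I}_d)$, \Cref{ass:A2} follows from standard Gaussian norm concentration: with probability $1 - 1/\mathrm{poly}(n)$ one has $\|A_i - \Sigma\|_2 \lesssim \sigma^2 d \log n$, giving $M = \tilde O(d)$. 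For \Cref{ass:A3}, a direct moment computation using the 4th-order Isserlis formula shows that $\mathbb{E}[(A_i - \Sigma)(A_i - \Sigma)^\top]$ is dominated by the contribution of $\mathbb{E}[\|n_i\|^2 n_i n_i^\top] = (d+2)\sigma^4 \mathbf{I}_d$, together with lower-order $\sigma^2 vv^\top$ corrections, yielding $V = O(\sigma^2 d)$ after dividing by $\lambda_1^2 = \Theta(1)$.

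The main technical step is \Cref{ass:A4}. For any unit vectors $u,v'$ and projection $P$, I would expand
\[
u^\top P(A_i - \Sigma)Pv' = (u^\top P s_i)(n_i^\top P v') + (u^\top P n_i)(s_i^\top P v') + (u^\top P n_i)(n_i^\top P v') - \sigma^2\, u^\top P^2 v'.
\]
The first two terms are products of a bounded quantity (since $|u^\top P s_i|,|s_i^\top P v'| \le 1$) with a Gaussian linear form of variance at most $\sigma^2$, so their squares are sub-exponential of scale $\sigma^2$. The third term, a centered product of two correlated sub-Gaussian linear forms, is sub-exponential either by Cauchy--Schwarz on Orlicz norms or by Hanson--Wright applied to the symmetrization of $P^\top u v'^\top P$. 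Since projecting by $P$ only shrinks the relevant Gaussian variances, all constants are uniform in $P$. This yields Orlicz exponent $a=1$, $\gamma^2 = \sigma^2$, and $K = O(1)$. I expect this uniform control over projections to be the most delicate part of the proof, though standard sub-Gaussian machinery should suffice.

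Finally, substituting $V = O(\sigma^2 d)$, $\gamma = \sigma$, $K = O(1)$, $a = 1$, and $\kappa' = \Theta(1+\sigma^2)$ into~\eqref{eq:main-utility} gives
\[
\zeta = \tilde O\!\left(\kappa'\!\left(\sqrt{\tfrac{\sigma^2 d k}{n}} + \tfrac{\sigma d k\sqrt{\log(1/\delta)}}{\epsilon n}\right)\right) = \tilde O\!\left(\sigma\,\kappa'\!\left(\sqrt{\tfrac{dk}{n}} + \tfrac{dk\sqrt{\log(1/\delta)}}{\epsilon n}\right)\right),
\]
matching the stated bound. The sample-size prerequisite~\eqref{eq:n-lower-thm} likewise simplifies under the same substitution, and one only needs to check that it is implied by the regime in which the above bound is non-trivial.
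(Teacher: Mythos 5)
Your proposal is correct and follows essentially the same route as the paper: the paper states \Cref{corollary-spiked-covariance} as a direct consequence of substituting the spiked-covariance parameters $V=O(\sigma^2 d)$, $\gamma^2=\sigma^2$, $K=O(1)$, $a=1$, $\kappa'=\Theta(1+\sigma^2)$ into the utility bound~\eqref{eq:main-utility} of \Cref{main-theorem-sec-2}. Your additional verification of \Cref{ass:A1,ass:A2,ass:A3,ass:A4} (including the uniformity over projections $P$, which the paper handles via \Cref{lemma-op-norm-after-proj,lemma-op-norm-of-expectation-after-proj,lemma-gamma-bound}) is sound and merely fills in details the paper leaves implicit.
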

\noindent\textbf{Adaptive noise}: Our algorithm’s advantage is most pronounced when $\gamma$ and $V$ grow with the data randomness, as in Corollary~\ref{corollary-spiked-covariance}. Since for 
\(\zeta = \tilde O\!\Bigl( \sigma \kappa'\bigl(\sqrt{dk/n} + (dk\sqrt{\ln(1/\delta)})/(\epsilon n)\bigr)\Bigr),\)
the approximation error decreases as the noise standard deviation $\sigma$ shrinks.  Moreover, by comparison with Corollary~\ref{corollary-lower-bound-spiked-cov}, this bound is tight up to a factor of~$k$.

\subsection{Lower Bounds}
\label{sec:lower-bound}

In this section, we derive an information‐theoretic lower bound for differentially private PCA under our setting. Formal proofs can be found in~\Cref{sec:appendix-lower-bound}. Recall that our utility metric \(\zeta\) defined in~\Cref{def-zeta-useful} measures the \emph{relative} loss in captured variance compared to the optimal top-$k$ subspace of \(\Sigma\). By contrast, most classical lower bounds for PCA (e.g., \citet{cai2024optimal,liu2022dp}) quantify error in terms of the squared Frobenius norm
\(\|\tilde U\tilde U^\top - V_k V_k^\top\|_F^2\). These two measures are fundamentally different: the ratio of captured variance directly reflects variance explained in \(\Sigma\), whereas the Frobenius‐norm loss measures subspace distance without respecting the eigenvalue gaps in \(\Sigma\).  To connect them, we first establish:

\begin{restatable}[Reduction to Frobenius norm]{lemma}{lemmaredfrobnorm}\label{lemma:reduction-to-frob-norm}
    Let \(\Sigma\) be a PSD \(d\times d\) matrix with top-$k$ eigenvectors \(V_k\in\mathbb{R}^{d\times k}\) and eigenvalues $\lambda_1 \geq \dots \geq \lambda_d$. Any \( U\in\mathbb{R}^{d\times k}\) that satisfies $\| UU^{\top} - V_k V_k\|_F^2\geq \gamma$, must incur \[ \zeta^2 \geq \frac{\gamma \Delta_k}{2 \sum_{i=1}^k \lambda_i}\]
    where $\Delta_k:= \lambda_k - \lambda_{k+1}$.
\end{restatable}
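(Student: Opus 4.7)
~The plan is to directly translate the hypothesis on $\|UU^\top - V_k V_k^\top\|_F^2$ into a lower bound on the \emph{variance gap} $\langle V_k V_k^\top - UU^\top,\Sigma\rangle$, then invoke the definition of $\zeta$-approximate utility to convert that gap into a lower bound on $\zeta^2$. Concretely, rearranging \Cref{def-zeta-useful} with the identity $\langle V_k V_k^\top,\Sigma\rangle=\sum_{i=1}^k \lambda_i$ gives
\[
  \zeta^2 \;\ge\; \frac{\langle V_k V_k^\top - UU^\top,\,\Sigma\rangle}{\sum_{i=1}^k \lambda_i},
\]
so it suffices to show $\langle V_k V_k^\top - UU^\top,\Sigma\rangle \ge \tfrac{\Delta_k}{2}\,\|UU^\top - V_k V_k^\top\|_F^2$.

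The first step is to expand the inner product along the eigenbasis $v_1,\dots,v_d$ of $\Sigma$. Writing $a_i := v_i^\top UU^\top v_i \in [0,1]$, and using $v_i^\top V_k V_k^\top v_i = \mathbbm{1}[i\le k]$, I get
\[
  \langle V_k V_k^\top - UU^\top,\,\Sigma\rangle \;=\; \sum_{i=1}^k \lambda_i(1-a_i) \;-\; \sum_{i>k} \lambda_i a_i .
\]
The trace constraint $\sum_{i=1}^d a_i = \mathrm{Tr}(UU^\top) = k$ yields $\sum_{i\le k}(1-a_i) = \sum_{i>k} a_i =: s$, so normalising the top-$k$ coefficients $(1-a_i)/s$ and the bottom coefficients $a_i/s$ into probability vectors and using the eigenvalue ordering gives the clean lower bound
\[
  \langle V_k V_k^\top - UU^\top,\Sigma\rangle \;\ge\; s(\lambda_k - \lambda_{k+1}) \;=\; \Delta_k\cdot \mathrm{Tr}\bigl((I - V_kV_k^\top)\,UU^\top\bigr).
\]

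The second step is a standard identity for the Frobenius distance between two rank-$k$ orthogonal projectors:
\[
  \|UU^\top - V_kV_k^\top\|_F^2 \;=\; 2k - 2\,\mathrm{Tr}(UU^\top V_kV_k^\top) \;=\; 2\,\mathrm{Tr}\bigl((I - V_kV_k^\top)\,UU^\top\bigr).
\]
Combining this with the previous bound gives $\langle V_k V_k^\top - UU^\top,\Sigma\rangle \ge \tfrac{\Delta_k}{2}\|UU^\top - V_kV_k^\top\|_F^2 \ge \tfrac{\Delta_k}{2}\gamma$, and dividing through by $\sum_{i=1}^k \lambda_i$ produces the claimed bound.

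I do not expect a genuine obstacle: the argument is purely linear-algebraic, and the only place one must be careful is in using the trace constraint $\sum a_i = k$ together with the eigengap $\Delta_k$ to get a \emph{tight} comparison between the variance gap and the Frobenius distance—any looser application (e.g.\ bounding each term separately by $\lambda_1$) would lose the crucial $\Delta_k$ factor and produce a weaker inequality. The cleanest formulation is the probability-measure rewriting above, which makes the eigengap appear automatically.
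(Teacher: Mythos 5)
Your proposal is correct, and it reaches the same key intermediate inequality as the paper --- namely $\langle V_kV_k^\top - UU^\top,\Sigma\rangle \ge \tfrac{\Delta_k}{2}\|UU^\top - V_kV_k^\top\|_F^2$, followed by the same rearrangement of \Cref{def-zeta-useful} --- but by a genuinely different route. The paper isolates this inequality as \Cref{lemma-forb-vs-trace} and proves it via principal angles: it subtracts $\lambda_{k+1}\mathbf{I}_d$ inside the trace, passes to the positive part $(V_kV_k^\top - UU^\top)_+$, relates its trace to $\sum_i\sin\theta_i$, and separately establishes $\|UU^\top - V_kV_k^\top\|_F^2 = 2\|\sin\Theta(U,V)\|_F^2$. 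You instead expand $\langle V_kV_k^\top - UU^\top,\Sigma\rangle$ in the eigenbasis of $\Sigma$, use the trace constraint $\sum_i a_i = k$ to identify the total mass $s := \sum_{i\le k}(1-a_i) = \sum_{i>k}a_i$, and bound the two sums by $\lambda_k s$ and $\lambda_{k+1}s$ respectively; the projector identity then gives $s = \tfrac12\|UU^\top - V_kV_k^\top\|_F^2$. The two arguments are secretly computing the same quantity (your $s$ equals $\|\sin\Theta(U,V)\|_F^2$), but yours is more elementary and self-contained, avoiding the positive-part and principal-angle machinery where the paper's write-up is somewhat loose (e.g.\ the step $\mathrm{Tr}((V_kV_k^\top - UU^\top)_+) = \sum_i\sin\theta_i$ and a stray factor of $2$ in the claimed identity $\|U^\top V\|_F^2 = 2\sum_i\cos^2\theta_i$). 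What the paper's formulation buys is the explicit $\sin\Theta$ language, which connects to standard Davis--Kahan-type statements; what yours buys is a shorter, tighter derivation. The only implicit hypothesis in both proofs is that $U$ has orthonormal columns (so that $a_i\in[0,1]$ and $\mathrm{Tr}(UU^\top)=k$), which is guaranteed by \Cref{def-zeta-useful} even though the lemma statement says only $U\in\mathbb{R}^{d\times k}$.
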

 Note that if all eigenvalues of \(\Sigma\) are equal, every subspace captures the same variance so $\zeta=0$ for any estimate, yet two such subspaces can be far apart in Frobenius norm.  This gap in sensitivity to eigengaps is precisely why our reduction from Frobenius error to $\zeta$ incurs a factor of \(\Delta_k\). With this reduction in hand, we prove the spiked‐covariance lower bound by invoking standard Frobenius‐norm minimax rates~\citep{cai2024optimal} for differentially private PCA in the spiked covariance model.

\begin{restatable}[Lower bound, Spiked Covariance]{corollary}{corollarylowerbound}\label{corollary-lower-bound-spiked-cov}
    Let the $d \times n$ data matrix $X$ have i.i.d. columns samples from a distribution $P = \mathcal{N}(0, U^{\top} \Lambda U^{\top} + \sigma^2 \mathbf{I}_d) \in \mathcal{P}(\lambda, \sigma^2)$ where $\mathcal{P}(\lambda, \sigma^2) = \{\mathcal{N}(0, \Sigma), \Sigma = U\Lambda U^{\top} + \sigma^2 \mathbf{I}_d, c \lambda \leq \lambda_k \leq \dots \leq \lambda_1 \leq C \lambda\}$. Suppose $\lambda \leq c_0' \exp \{e \epsilon - c_0(\epsilon \sqrt{ndk} + dk) \}$ for some small constants $c_0, c'_0 > 0$. Then, there exists an absolute constant $c_1 > 0$ such that 
    \begin{align*}
         \inf_{\tilde{U} \in \mathcal{U}_{\epsilon, \delta}} \sup_{P \in \mathcal{P}(\lambda, \sigma^2)} \mathbb{E}[\zeta]  \geq c_1 \left( \left(\frac{\sigma \sqrt{\lambda_1 + \sigma^2}}{\sum_{i=1}^k (\lambda_i + \sigma^2)} \right) \left( \sqrt{\frac{dk}{n}} + \frac{dk}{n \epsilon}\right) \bigwedge 1 \right).
    \end{align*}
\end{restatable}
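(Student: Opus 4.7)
The strategy is to reduce the problem to an existing Frobenius-norm minimax lower bound for differentially private PCA in the spiked covariance model, and then convert that bound into our variance-capture utility $\zeta$ via \Cref{lemma:reduction-to-frob-norm}. The two ingredients are (i) the Frobenius-norm DP minimax rate of \citet{cai2024optimal}, which is stated for exactly the parameter class $\mathcal{P}(\lambda,\sigma^2)$ and under exactly the eigenvalue restriction $\lambda \le c_0' \exp(e\epsilon - c_0(\epsilon\sqrt{ndk}+dk))$ that appears in our hypothesis, and (ii) our reduction lemma relating Frobenius subspace error to $\zeta$.

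First, I would invoke \citet{cai2024optimal} to obtain an inequality of the form
\[
\inf_{\tilde U\in\mathcal{U}_{\epsilon,\delta}}\sup_{P\in\mathcal{P}(\lambda,\sigma^2)}
\mathbb{E}\!\left[\|\tilde U\tilde U^\top - V_kV_k^\top\|_F^2\right]
\;\gtrsim\;
\frac{\sigma^2(\lambda_1+\sigma^2)}{\lambda_k^2}\!\left(\frac{dk}{n}+\frac{d^2k^2}{n^2\epsilon^2}\right)\wedge k,
\]
where the $\wedge k$ term reflects the trivial upper bound $\|\tilde U\tilde U^\top - V_kV_k^\top\|_F^2\le 2k$. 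Equivalently, their construction yields a two-point/Fano family on which any DP estimator incurs Frobenius error at least some $\gamma^\star$ with probability $\ge 1/2$; taking expectations of the square root gives the bound on $\mathbb{E}[\|\tilde U\tilde U^\top-V_kV_k^\top\|_F]$ that I actually need downstream.

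Next, I would specialise \Cref{lemma:reduction-to-frob-norm} to the spiked covariance model. Since $\Sigma=U\Lambda U^\top + \sigma^2 \mathbf{I}_d$, the top-$k$ eigenvalues of $\Sigma$ are $\lambda_i+\sigma^2$ for $i\in[k]$, the $(k+1)$-th is $\sigma^2$, so $\Delta_k=\lambda_k$ and $\sum_{i=1}^k \lambda_i^{\Sigma}=\sum_{i=1}^k(\lambda_i+\sigma^2)$. Plugging this into \Cref{lemma:reduction-to-frob-norm} gives
\[
\zeta \;\geq\; \sqrt{\frac{\lambda_k}{2\sum_{i=1}^k(\lambda_i+\sigma^2)}}\;\big\|\tilde U\tilde U^\top-V_kV_k^\top\big\|_F.
\]
Taking expectations and combining with the Frobenius-norm lower bound from step one, the $\lambda_k$ in the numerator cancels one factor in the denominator of the Frobenius rate, and after regrouping terms one arrives at the prefactor $\sigma\sqrt{\lambda_1+\sigma^2}/\sum_{i=1}^k(\lambda_i+\sigma^2)$ times $\sqrt{dk/n}+dk/(n\epsilon)$, exactly as stated. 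The $\wedge 1$ cap is inherited from the $\wedge k$ cap on the Frobenius rate after pulling the $1/\sum_i(\lambda_i+\sigma^2)$ factor through and normalising.

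The main obstacle is bookkeeping rather than a new idea: one has to verify that the Frobenius-norm minimax rate from \citet{cai2024optimal} is stated (or can be re-derived from their construction) with the precise prefactor $\sigma^2(\lambda_1+\sigma^2)/\lambda_k^2$ and that it applies verbatim to the $(\epsilon,\delta)$-DP class $\mathcal{U}_{\epsilon,\delta}$ over the parameter family $\mathcal{P}(\lambda,\sigma^2)$ under the scaling hypothesis on $\lambda$. A secondary subtlety is that \Cref{lemma:reduction-to-frob-norm} gives a deterministic inequality between $\zeta^2$ and the Frobenius error, so converting an expected-squared Frobenius lower bound into an expected-$\zeta$ lower bound requires either working with the underlying in-probability statement from the Fano argument before squaring, or applying Jensen in the safe direction to $\zeta$ (since $\zeta$ is proportional to the Frobenius norm, not its square). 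Once these are in place, the corollary follows by direct substitution.
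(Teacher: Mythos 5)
Your proposal is correct and follows essentially the same route as the paper: the paper's proof is precisely the one-line combination of \Cref{lemma:reduction-to-frob-norm} with the Frobenius-norm minimax bound of \citet{cai2024optimal} (restated as \Cref{spiked-cov-lower-bound}). Your additional remark about applying the deterministic reduction pointwise before taking expectations (rather than squaring first) is the right way to make the expectation conversion rigorous, a detail the paper's proof leaves implicit.
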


Comparing to our upper bound~(\Cref{corollary-spiked-covariance}), we see matching dependence on \(\sigma\), \(d\), \(n\), and \(\epsilon\), up to a multiplicative factor of $k, \sqrt{\lambda_1 +\sigma^2}$, and $\sqrt{\log(1/\delta)}$.  The gap in \(k\) arises from our sequential deflation approach, which currently requires independent batches at each step. Reusing samples across rounds could remove this up to a \(\sqrt{k}\) factor \footnote{Reusing will allow us to use all $n$ samples every round (instead of $n/k$), however we will incur an additional \(\sqrt{k}\) factor due to privacy composition, which is why it will only lead to a total improvement of $\sqrt{k}$ and not $k$.}.  

\noindent\textbf{Special case $k=1$.}  When $k=1$, \kdppca reduces exactly to \moddppca.~\Cref{thm-utility-moddp-pca}  guarantees that the sine of the angle between the privately estimated eigenvector of~\moddppca and the true top eigenvector is small, which is equivalent to being close in the Frobenius norm. This matches the upper bound of~\citet{liu2022dp} and thus also the lower bound up to a factor of $\log(1/\delta)$ (restated in~\Cref{thm-gauss-lower-bound-k-1} in the Appendix).

\section{Technical Results}\label{sec:technical-results}
We now sketch the proof of~\Cref{main-theorem-sec-2} by first proving a more general ``meta‐theorem'' that applies to any \emph{stochastic ePCA oracle}~(defined below in~\Cref{def-stoch-epca-oracle}).  At a high level, \kdppca uses the classical deflation strategy:  
\begin{enumerate*}
  \item Extract the top eigenvector of the current residual using a 1‐PCA subroutine.  
  \item Project this vector out of the data.  
  \item Repeat until $k$ components are obtained.  
\end{enumerate*}
In~\Cref{main-theorem-sec-2} we implement the 1‐PCA step with \moddppca, but the same proof carries through for any algorithm satisfying the following guarantee.

\begin{restatable}[stochastic ePCA oracle]{definition}{defstochepcaoracle}\label{def-stoch-epca-oracle}
An algorithm $O_{\mathrm{ePCA}}$ is a $\zeta$\emph{–approximate 1‐ePCA oracle} if the following holds.  On independent inputs $A_1,\dots,A_n\in\mathbb{R}^{d\times d}$ with $\mathbb{E}[A_i]=\Sigma \in \mathbb{S}_{\succeq 0}^{d \times d}$ for all \(i\) and any orthogonal projector $P\in\mathbb{R}^{d\times d}$, $O_{\rm ePCA}$ returns a unit vector $u\in\mathrm{Im}(P)$ such that, with high probability,
 \begin{align*}
     \langle u u^{\top}, P \Sigma P \rangle \geq (1 - \zeta^2) \langle v v^{\top}, P \Sigma P \rangle
 \end{align*}
 where $v$ is the top eigenvector of the projected matrix $P \Sigma P$.  
\end{restatable}

This notion was inspired by \citet{jambulapati2024black}, who analyzed deflation in the non-stochastic setting. Their results do not extend the stochastic setting that we explore here.

\begin{restatable}[Meta Theorem]{theorem}{metatheorem}\label{meta-theorem}
Let $\Sigma\in\mathbb{S}_{\succeq0}^{d\times d}$ and $A_1,\dots,A_n$ be \(n\) i.i.d. samples with $\mathbb{E}[A_i]=\Sigma$.  Suppose we replace each 1‐PCA step in \Cref{step:one-pca} of ~\Cref{algo-k-DP-PCA}by a $\zeta$–approximate stochastic ePCA oracle $O_{1\rm PCA}$.  Then the deflation algorithm outputs $U\in\mathbb{R}^{d\times k}$ satisfying
\begin{align*}
    \langle UU^{\top}, \Sigma \rangle \geq (1 - \zeta^2) \| \Sigma \|_k.
\end{align*}  Further, for any \(\epsilon>0,\delta\in(0,1)\), if \(O_{1PCA}\) is \(\epsilon,\delta\)-DP then the entire algorithm remains $(\epsilon,\delta)$‐DP.
\end{restatable}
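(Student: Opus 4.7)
The plan is to prove the privacy and utility claims separately, treating every call to $O_{1\mathrm{PCA}}$ as a black box satisfying \Cref{def-stoch-epca-oracle}. \textbf{Privacy} is essentially parallel composition plus post-processing. The outer loop in \Cref{algo-k-DP-PCA} partitions the dataset into $k$ disjoint batches of size $m=n/k$ and invokes the oracle once per batch. A single data point lies in exactly one batch, say batch $i$; the previous outputs $u_1,\ldots,u_{i-1}$ and hence the projector $P_{i-1}$ are unaffected and act as public input to the $i$-th oracle call, which is $(\epsilon,\delta)$-DP in batch $i$ by hypothesis. The subsequent $u_{i+1},\ldots,u_k$ are deterministic functions of the untouched batches together with $u_i$, so by post-processing the whole output $U$ is $(\epsilon,\delta)$-DP; no factor of $k$ arises.

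\textbf{Utility} follows from a short induction on the deflation structure. First, the vectors $u_1,\ldots,u_k$ are orthonormal by construction: the oracle returns $u_i\in\mathrm{Im}(P_{i-1})$, and $P_{i-1}=I-\sum_{j<i}u_ju_j^\top$ is the projector onto $\mathrm{span}(u_1,\ldots,u_{i-1})^\perp$, a subspace of dimension $d-i+1$. Consequently $UU^\top=\sum_{i=1}^k u_iu_i^\top$ and $\langle UU^\top,\Sigma\rangle=\sum_{i=1}^k u_i^\top\Sigma u_i$. Since $P_{i-1}u_i=u_i$, the oracle guarantee rewrites as
\[
u_i^\top\Sigma u_i \;=\; \langle u_iu_i^\top,\,P_{i-1}\Sigma P_{i-1}\rangle \;\ge\; (1-\zeta^2)\,\lambda_1(P_{i-1}\Sigma P_{i-1}).
\]
The final ingredient is a Courant–Fischer lower bound: because $\mathrm{Im}(P_{i-1})$ has dimension $d-i+1$, the min–max characterization gives $\lambda_1(P_{i-1}\Sigma P_{i-1}) = \max_{x\in\mathrm{Im}(P_{i-1}),\,\|x\|=1} x^\top\Sigma x \ge \lambda_i(\Sigma)$. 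Summing over $i$ and combining with the previous display yields $\langle UU^\top,\Sigma\rangle \ge (1-\zeta^2)\sum_{i=1}^k \lambda_i(\Sigma) = (1-\zeta^2)\|\Sigma\|_k$.

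The \emph{main obstacle} in this theorem is really absorbed into \Cref{def-stoch-epca-oracle}: verifying that concrete DP subroutines such as \moddppca satisfy the stochastic ePCA property when called on a projected stream $\{P_{i-1}A_j P_{i-1}\}$ whose expectation is $P_{i-1}\Sigma P_{i-1}$ rather than $\Sigma$, using only $n/k$ samples and with $P_{i-1}$ itself derived from the data. Granting the oracle property, the meta-theorem reduces to the orthogonality-plus-interlacing argument above, mirroring the deflation analysis of \citet{jambulapati2024black} but with the approximation guarantee stated against the expected projected matrix rather than the empirical one.
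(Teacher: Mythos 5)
Your proposal is correct and follows essentially the same route as the paper: privacy via parallel composition over the disjoint batches, and utility by summing the per-step guarantees $u_i^\top\Sigma u_i\ge(1-\zeta^2)\lambda_1(P_{i-1}\Sigma P_{i-1})$ and lower-bounding $\lambda_1(P_{i-1}\Sigma P_{i-1})\ge\lambda_i(\Sigma)$. The paper packages this as an induction (\Cref{thm-k-to-1-reduction}) and cites \Cref{lemma-3-in-jamb} for the interlacing step, which is exactly the Courant--Fischer bound you prove inline.
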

\begin{remark}
    This Theorem is a consequence of the stochastic deflation method we prove in~\Cref{sec-stoch-black-box} and Parallel Composition~(\Cref{parallel-composition}).
\end{remark}

One important thing we would like to highlight in this section is that this proof strategy is not unique to \moddppca. In fact, our novel analysis of non-private Oja's algorithm~(\Cref{thm-utility-non-private-Ojas-with-P}) shows that~\Cref{algo-dp-ojas} is also a stochastic ePCA oracle. We highlight the two results below.

\begin{restatable}{theorem}{moddppcaandkdpojasepca}\label{thm:moddppca-and-kdpojas-epca}
\label{thm:pca-alg-epca}
    Given $A_1,\dots,A_n$ are i.i.d. and satisfy~\Cref{assumption-1},~\moddppca and~\dpojas as defined ~\Cref{modified-dp-pca,algo-dp-ojas} are stochastic ePCA oracles with \(\zeta=\tilde{O}\left(  \kappa'\left(\sqrt{\frac{V}{n} }+ \frac{\gamma d \sqrt{\log(1/\delta)}}{\epsilon n}\right) \right)\) and \(\zeta=\tilde{O}\left( \kappa' \left( \sqrt{\frac{V}{n} }+ \frac{(\gamma+1) d \sqrt{\log(1/\delta)}}{\epsilon n}\right)\right)\) respectively.
\end{restatable}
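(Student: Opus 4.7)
Let $v$ denote the top unit eigenvector of $P\Sigma P$ and write $\lambda_1^P := \langle vv^\top, P\Sigma P\rangle$. For any unit $u \in \mathrm{Im}(P)$, decompose $u = \cos\theta\cdot v + \sin\theta\cdot w$ with $w \in \mathrm{Im}(P)$, $w \perp v$; since $w^\top P\Sigma P w \geq 0$, we obtain $\langle uu^\top, P\Sigma P\rangle \geq (1-\sin^2\theta)\cdot\lambda_1^P$. Hence to establish the $\zeta$-approximate oracle property of~\Cref{def-stoch-epca-oracle}, it suffices to show both algorithms return a unit $u \in \mathrm{Im}(P)$ satisfying $\sin^2(u,v) \leq \zeta^2$ with high probability. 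Membership in $\mathrm{Im}(P)$ is automatic, since every iterate of both~\Cref{modified-dp-pca} and~\Cref{algo-dp-ojas} is explicitly re-projected by $P$ and renormalized.

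\textbf{Reduction to PCA on the projected moments.} Because $\mathbb{E}[PA_iP]=P\Sigma P$ and \Cref{ass:A4} is stated uniformly over projections, the sequence $\{PA_iP\}$ inherits the same moment and tail parameters $(V, K, \gamma, M)$ as the original $\{A_i\}$. The spectrum of $P\Sigma P$ has $\lambda_1^P \leq \lambda_1$ and eigengap $\Delta^P \geq \Delta$ in the deflation usage of~\Cref{algo-k-DP-PCA}, so the effective condition number never exceeds $\kappa'$. Applying~\Cref{thm-utility-moddp-pca} to this projected problem then yields a $\sin^2(\omega_T,v)$ bound of order $\tilde{O}\!\left(\kappa'^2\left(V/n + \gamma^2 d^2 \log(1/\delta)/(\epsilon n)^2\right)\right)$ whenever the sample-size prerequisites hold; taking square roots and substituting into the preceding reduction produces exactly the claimed $\zeta$ for \moddppca.

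\textbf{Analysis of \dpojas.} For \dpojas we combine~\Cref{thm-utility-non-private-Ojas-with-P} (which controls the statistical $\sqrt{V/n}$ contribution under an arbitrary projection) with a direct Gaussian-mechanism sensitivity calculation. Since \dpojas does not invoke \rangefinder, it must clip each $PA_iP\omega$ by a deterministic threshold that absorbs \emph{both} the signal component (of order $\lambda_1$) and the stochastic fluctuation (of order $\lambda_1\gamma$, up to logarithmic factors, by~\Cref{ass:A4}). Consequently the per-batch Gaussian noise variance scales with $(\gamma+1)^2 \lambda_1^2$ rather than $\gamma^2 \lambda_1^2$, which propagates through the Oja convergence bound to give the $(\gamma+1) d \sqrt{\log(1/\delta)}/(\epsilon n)$ privacy term, in place of the sharper $\gamma d \sqrt{\log(1/\delta)}/(\epsilon n)$ achieved by \moddppca.

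\textbf{Main obstacle.} The principal difficulty is propagating the projection $P$ cleanly through the (already delicate) Oja martingale analysis: the gradient noise $PA_iP\omega - P\Sigma P\omega$ must be controlled via the \emph{projected} moment bound $\gamma^2$, and the final angle must be measured inside $\mathrm{Im}(P)$ rather than $\mathbb{R}^d$. Secondarily, we must verify that the sample-size preconditions of~\Cref{thm-utility-moddp-pca,thm-utility-non-private-Ojas-with-P} remain valid when we substitute the global condition number $\kappa'$ for the (possibly larger) condition number of $P\Sigma P$; this is where the eigengap assumption on the original $\Sigma$, rather than merely on $P\Sigma P$, becomes essential.
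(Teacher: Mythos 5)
Your proposal matches the paper's argument: the paper proves this theorem by combining the utility guarantees of \moddppca and \dpojas (\Cref{thm-utility-moddp-pca,thm-utility-dp-ojas}) with the sine-to-energy conversion of \Cref{lemma-sin-to-epca}, which is exactly your first two reductions plus your \dpojas sensitivity argument (the paper likewise traces the extra $(\gamma+1)$ factor to the fixed clipping threshold $\beta \propto \lambda_1\sqrt{d}(K\gamma\log^a(\cdot)+1)$). One small imprecision: your claim that the projected eigengap satisfies $\Delta^P \ge \Delta$ is not what the paper establishes --- \Cref{lemma-eigenvalue-perturbation} shows the gap of $P\Sigma P$ can \emph{shrink} relative to $\Delta$ by an amount controlled by earlier deflation errors, which is why the paper states $\kappa'$ in \Cref{thm-utility-moddp-pca} relative to $P\Sigma P$ and defers the uniform control to \Cref{lemma-upper-bound-xi-i}; you flag this correctly as an obstacle later, so it does not affect the validity of the approach.
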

\begin{remark}
    In~\Cref{sec-proof-main-theorm}, we establish that both~\moddppca and~\kdpojas are valid ePCA oracles, with each result stated and proved as a separate theorem.
\end{remark}

Note that we cannot plug in the DP‐PCA algorithm of \citet{liu2022dp} in~\Cref{meta-theorem}, since it only guarantees relative error on $\mathbb{E}[P]\Sigma\mathbb{E}[P]$:
\begin{align*}
    \langle u u^{\top}, \mathbb{E}[P] \Sigma \mathbb{E}[P] \rangle \geq (1 - \zeta) \langle v v^{\top}, \mathbb{E}[P] \Sigma \mathbb{E}[P] \rangle,
\end{align*}
rather than on $P\Sigma P$, and $\mathbb{E}[P]$ need not be a projection matrix.

The proof of~\Cref{thm:moddppca-and-kdpojas-epca} follows directly from the utility proof of \moddppca~(\Cref{thm-utility-moddp-pca}) and of \dpojas~(\Cref{thm-utility-dp-ojas}). Combining this with~\Cref{meta-theorem} immediately gives us~\Cref{main-theorem-sec-2} and the following~\Cref{corollary:k-dp-ojas}.

To proof the utility of \moddppca we proceed in three steps:~\begin{enumerate*}
\item Prove non‐private Oja’s algorithm is a stochastic ePCA oracle via a Novel analysis in~\Cref{sec-non-private-Ojas}
\item Show that with high probability, the update step~(\Cref{step:update} in~\Cref{modified-dp-pca}) can be reduced to an update step of non-private Oja's algorithm with matrices $P C_t P$, where \( C_t := \frac{1}{B} \sum_{i \in [B]} A_i + \beta_t G_t \) and $G_t$ is a scaled Gaussian matrix. \item Bound the accumulated projection error across deflation steps~(\Cref{lemma-upper-bound-xi-i}). 
\end{enumerate*} Importantly, a similar argument also shows that~\dpojas~\Cref{algo-dp-ojas} satisfies the same property with a slightly differently \(\zeta\).

\begin{algorithm}[t]
    \caption{DP-Ojas} \label{algo-dp-ojas} 
    \textbf{Input:} $\{A_1, \dots, A_m\}$, a projection  $P$, privacy parameters $(\epsilon, \delta)$, learning rates $\{ \eta_t \}_{t=1}^{\lfloor m \rfloor}$
    \begin{algorithmic}[1]
        \State Set DP noise multiplier: $\alpha \gets C' \log(n/\delta)/(\epsilon \sqrt{n})$
        \State Set clipping threshold: $\beta \gets C \lambda_1 \sqrt{d} (K \gamma \log^a(nd/\zeta) + 1)$
        \State Choose $\omega_0'$ uniformly at random from the unit sphere,~$\omega_0 \gets P \omega_0'/ \|P \omega_0'\|$ 
        \For{$t = 1,2, \dots, m $}
            \State Sample $z_t \sim \mathcal{N}(0, \mathbf{I}_d)$
            \State $\omega'_t \gets \omega_{t-1} + \eta_t P \left( \text{clip}_{\beta} (P A_t P \omega_{t-1}) + 2 \beta \alpha z_t \right)$
            \State $\omega_t \gets P\omega'_t / \| P\omega'_t \|$
        \EndFor
        \State \Return $\omega_T$
    \end{algorithmic}
    where $\text{clip}_{\beta}(x) = x \cdot \min \{1, \frac{\beta}{\|x\|_2} \}$
\end{algorithm}\vspace{-2pt}

\begin{corollary}[$k$-DP-Ojas]\label{corollary:k-dp-ojas}
Under~\Cref{assumption-1}, if $n$ is sufficiently large then using~\Cref{algo-dp-ojas} in each 1‐PCA step returns $U\in\mathbb{R}^{d\times k}$ that is $\zeta$–approximate with
    \begin{align*}
        \zeta = \tilde{O} \left( \frac{\lambda_1 }{\Delta} \left( \sqrt{\frac{V k}{n} }+ \frac{(\gamma + 1) d k \log(1/\delta)}{\epsilon n} \right)\right)
    \end{align*}
hiding poly‐logarithmic factors in $n,d,1/\epsilon,\ln(1/\delta)$ and polynomial factors in $K$.
\end{corollary}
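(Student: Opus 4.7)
The corollary is a straightforward composition of two results already stated in the excerpt: the deflation meta-theorem~(\Cref{meta-theorem}), which lifts any stochastic ePCA oracle to a $k$-PCA guarantee, and~\Cref{thm:moddppca-and-kdpojas-epca}, which certifies that~\dpojas is such an oracle. My plan has three steps: (i) account for the per-batch sample budget, (ii) invoke~\Cref{meta-theorem} for utility, and (iii) invoke parallel composition for privacy.

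\textbf{Sample accounting.} By~\Cref{thm:moddppca-and-kdpojas-epca},~\dpojas run on $n'$ i.i.d.\ samples with any projection input $P$ is a $\zeta_0$-approximate stochastic ePCA oracle in the sense of~\Cref{def-stoch-epca-oracle}, with
\[
\zeta_0 \;=\; \tilde O\!\left(\kappa'\!\left(\sqrt{V/n'} + (\gamma+1)\,d\,\sqrt{\log(1/\delta)}/(\epsilon\,n')\right)\right).
\]
In~\Cref{algo-k-DP-PCA}, each of the $k$ deflation rounds feeds the 1-PCA subroutine a disjoint batch of $m = n/k$ fresh samples (line~1 of the algorithm). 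Substituting $n' = n/k$ yields the per-call factor
\[
\zeta_0 \;=\; \tilde O\!\left(\kappa'\!\left(\sqrt{Vk/n} + (\gamma+1)\,d\,k\,\sqrt{\log(1/\delta)}/(\epsilon\,n)\right)\right),
\]
which matches the bound claimed in the corollary after identifying $\kappa' = \lambda_1/\Delta$ from~\Cref{ass:A1} and absorbing any residual gap between $\sqrt{\log(1/\delta)}$ and $\log(1/\delta)$ into the $\tilde O(\cdot)$ notation.

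\textbf{Utility and privacy.} Applying~\Cref{meta-theorem} with $O_{1\mathrm{PCA}}$ set to~\dpojas directly transfers the per-call approximation $\zeta_0$ to the aggregate output $U$, giving $\langle UU^\top,\Sigma\rangle \ge (1-\zeta_0^2)\|\Sigma\|_k$; equivalently $U$ is $\zeta_0$-approximate in the sense of~\Cref{def-zeta-useful}. For privacy, the same theorem already yields the guarantee: since the $k$ calls to~\dpojas operate on pairwise disjoint subsets of the input, parallel composition (\Cref{parallel-composition}, already invoked in the remark following~\Cref{meta-theorem}) implies the full $k$-DP-Ojas procedure remains $(\epsilon,\delta)$-DP, so no extra privacy accounting is needed.

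\textbf{Main obstacle.} There is essentially no new technical obstacle at this stage, since the heavy lifting was done in the proofs of~\Cref{meta-theorem} (stochastic black-box deflation) and~\Cref{thm:moddppca-and-kdpojas-epca} (the novel non-private Oja's analysis extended to the DP setting). The only point worth a sanity check is that the projection $P_{i-1}$ passed to~\dpojas in round $i$ is data-dependent (built from earlier deflation rounds); but the ePCA oracle guarantee of~\Cref{def-stoch-epca-oracle} is stated uniformly over all orthogonal projectors $P$, so this data dependence is already absorbed in the meta-theorem's proof and requires no extra argument here.
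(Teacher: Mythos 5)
Your proposal is correct and follows essentially the same route as the paper, whose own justification for this corollary is precisely the combination of \Cref{meta-theorem} with \Cref{thm:moddppca-and-kdpojas-epca} (instantiated with $n'=n/k$ per deflation round) plus parallel composition for privacy. The only detail you gloss over slightly is that bounding the per-round condition number by $\lambda_1/\Delta$ (rather than the projected eigengap of $P_{i-1}\Sigma P_{i-1}$) relies on the accumulated-projection-error control of \Cref{lemma-upper-bound-xi-i}, but the paper folds this into the statement of \Cref{thm:moddppca-and-kdpojas-epca} exactly as you do.
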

\begin{remark}
    This Corollary follows directly from~\Cref{meta-theorem} together with~\Cref{thm:moddppca-and-kdpojas-epca}.
\end{remark}

When comparing the utility bounds of \moddppca and \kdpojas the difference is particularly apparent when considering~\Cref{spiked-cov-example}, as for \kdpojas when $\sigma \to 0$ the bound becomes $\tilde{O}\left(\frac{ d k \log(1/\delta)}{\epsilon n}\right)$, as due to the second term of the utility bound containing the multiplicative factor of $(\gamma + 1)$ (as opposed $\gamma$ as in \moddppca) it does not vanish. Therefore in the low-noise cases \moddppca will outperform \kdpojas. However, for other cases such as (sub-)Gaussian data we expect them to perform similarly. In those cases it can be preferential to use \kdpojas as due to its simplicity it requires less hyperparamters to be set and is more stable to changes in learning rates. 

\section{Experiments}
\label{sec:experiments}
In our experiments, we compare \kdppca and \kdpojas against two modified versions of the DP‐Gauss algorithms of~\citet{dwork2014analyze} and a modified version of the noisy power method~\citep{hardt2014noisy}. All of these works operate in a deterministic setting, and require some form of norm bound on the matrices to ensure differential privacy.~\citet{dwork2014analyze} requires each row of the data matrix $X\in\mathbb{R}^{n\times d}$ to be  bounded in $\ell_2$‐norm by 1 and they then estimate the top eigenvectors of $X^{\top}X$. The original noisy power method given a matrix $A \in \mathbb{R}^{d \times d}$, allowed only single entry changes by at most $\pm 1$, however more recent analysis~\citep{nicolas2024differentially, florina2016improved} showed that it protects the privacy for changes of the form $A' = A + C$, with $\sqrt{\sum_{i=1}^n \| C_{i,:} \|_1^2} \leq 1$. By contrast, our setting is stochastic: we draw independent matrices $A_i$, without any norm constraint, and we estimate the top eigenvectors of $\mathbb{E}[A_i] = \Sigma$. Thus, we first adapt these algorithms to also guarantee privacy in our setting. Note that if we draw observations $x_i$ from a distribution with mean zero and covariance \(\Sigma\), then $X^\top X = \sum_{i=1}^n x_i x_i^\top$ serves as an unbiased estimate of $n\Sigma$. A naive way to enforce the bounded norm requirement of \citet{dwork2014analyze}, is to define \( \tilde{x}_i = x_i / \max \{ \|x_i\|_2 \} \). However, this non-private pre-processing step will violate privacy~\Citep{hu2024provable}: modifying a single \( x_i \) can potentially change the maximum norm and thus affect all of the \( \tilde{x}_i \). A natural next attempt is to scale each vector exactly to unit norm, i.e., \( \tilde{x}_i = x_i / \|x_i\|_2 \). However, this will result in a biased estimator as \( \mathbb{E}\left[x x^{\top} // \|x\|^2 \right] \neq \Sigma \) and thus does not enjoy meaningful utility guarantees.
Instead, we clip each $x_i$ at $\beta$ so that with probability at least $1-\vartheta$,s $\|x_i\|_2\le\beta$. Then scaling the Gaussian noise in the DP‐Gauss mechanisms by $\beta$ maintains $(\epsilon,\delta)$‐DP guarantee. For the spiked covariance model this would mean $\beta = C \sqrt{\lambda_1} + \sigma \sqrt{d \log(n/\vartheta)}$. %
Using this strategy we modify Algorithm 1 and 2 in \citet{dwork2014analyze} and refer to them as \texttt{DP-Gauss-1} and \texttt{DP-Gauss-2} respectively. \texttt{DP-Gauss-1} first clips each $x_i$, adds appropriately scaled Gaussian noise to the sum $\sum_i \tilde x_i\tilde x_i^\top$, and then performs standard (non‐private) PCA. \texttt{DP-Gauss-2}, on the other hand, begins by privately estimating the eigengap of the clipped covariance matrix, runs non‐private PCA on the clipped data, and finally perturbs the resulting top-$k$ eigenvectors with noise that scales with that that privately computed eigengap. Similarly to what we do for the \texttt{DP-Gauss} algorithms, to enforce the condition $\sqrt{\sum_{i=1}^n \| C_{i,:} \|_1^2} \leq 1$ by~\citet{nicolas2024differentially} we define $A' = A + aa^{\top}$, meaning $C = aa^{\top}$, then the ith row of $C$ is equal to $| a_i| \| a\|_1$, which results in the requirement $\|a \|_2 \|a \|_1 \leq 1.$ So we clip the matrices to $\|a \|_1 \leq \alpha$, and $\|a \|_2 \leq \beta$ (same $\beta$ as for DP-Gauss) and scale the privacy noise accordingly. For the spiked covariance model we choose $\alpha = \sigma d + \sqrt{\lambda_1 d} + \sigma \sqrt{d \log(n/\vartheta)} $, to achieve $\|x_i \|_1 \leq \alpha$ with probability $1 - \vartheta$. This makes their algorithm comparable to DP-Gauss in terms of utility guarantees with respect to k and d. However, as we will see, it is still outperformed by the DP-Gauss algorithms. In the rest of this section,~\Cref{fig:kdppca} compares \kdppca with \texttt{DP-Gauss-1}, \texttt{DP-Gauss-2} and \texttt{DP-Power-Method} across various noise levels $\sigma$ and dimensions $d$.~\Cref{fig:kdpojas-kdppca} also incorporates the much simpler-to-implement~\kdpojas algorithm and shows that a simpler, more scalable algorithm can match or even outperform \kdppca in practice, despite its slightly weaker theoretical guarantee.

\noindent\textbf{Experimental Results using Spiked Covariance Data} We evaluate all methods on the spiked‐covariance model(see~\Cref{spiked-cov-example}).~\Cref{fig:sigma-bigger,fig:sigma-smaller} show utility as a function of sample size for large and small noise levels, respectively. Our results show that across both regimes, \kdppca consistently outperforms the baselines, with the gap widening when the noise level is significantly smaller than the signal strength ($\sigma \ll \lambda_1$).~\Cref{fig:impact of d} examines the effect of increasing ambient dimension $d$ at fixed $n$. As $d$ grows, the DP‐Gauss methods’ and Power-Method's utility degrades faster than \kdppca’s, reflecting the fact that their theoretical utility scales like $O(d^{3/2}/n)$, whereas our guarantee only incurs a linear dependence on $d\,$.

\begin{figure}[t] 
  \vspace{-3em}
  \centering
  \begin{subfigure}[b]{\textwidth}
    \centering
    \includegraphics[width=0.8\textwidth]{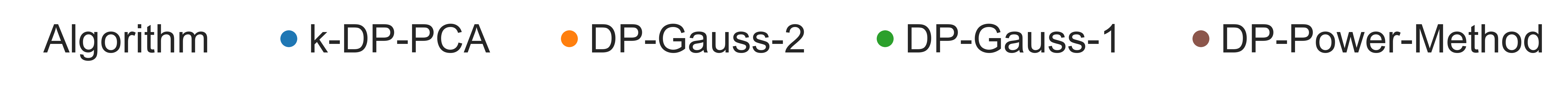}
  \end{subfigure}
  \vspace{1em}  %
  \begin{subfigure}[b]{0.3\textwidth}
    \centering
    \includegraphics[width=\textwidth]{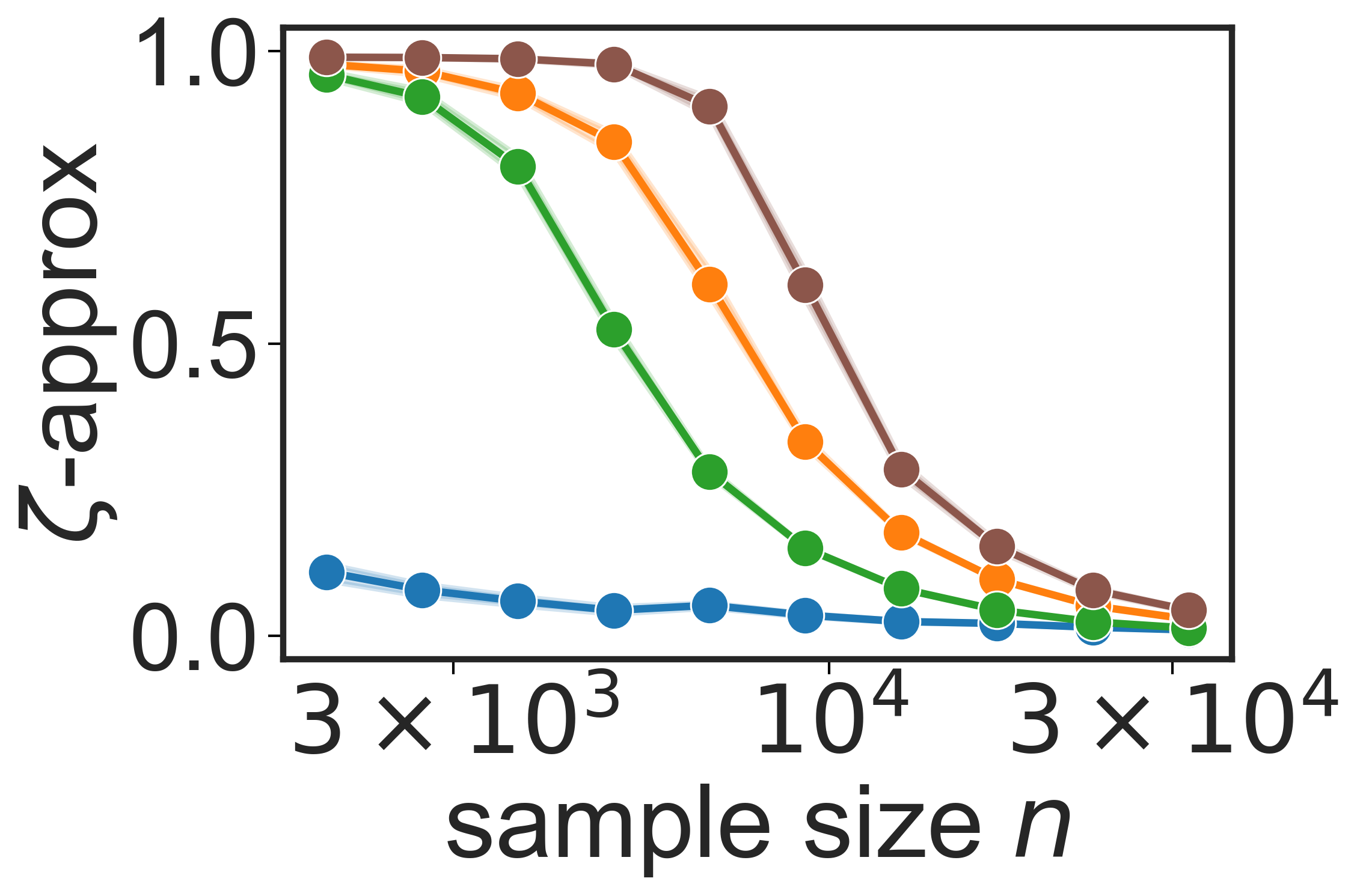}
    \caption{\small $\sigma = 0.025$}
    \label{fig:sigma-bigger}
  \end{subfigure}
  \hfill
  \begin{subfigure}[b]{0.3\textwidth}
    \centering
    \includegraphics[width=\textwidth]{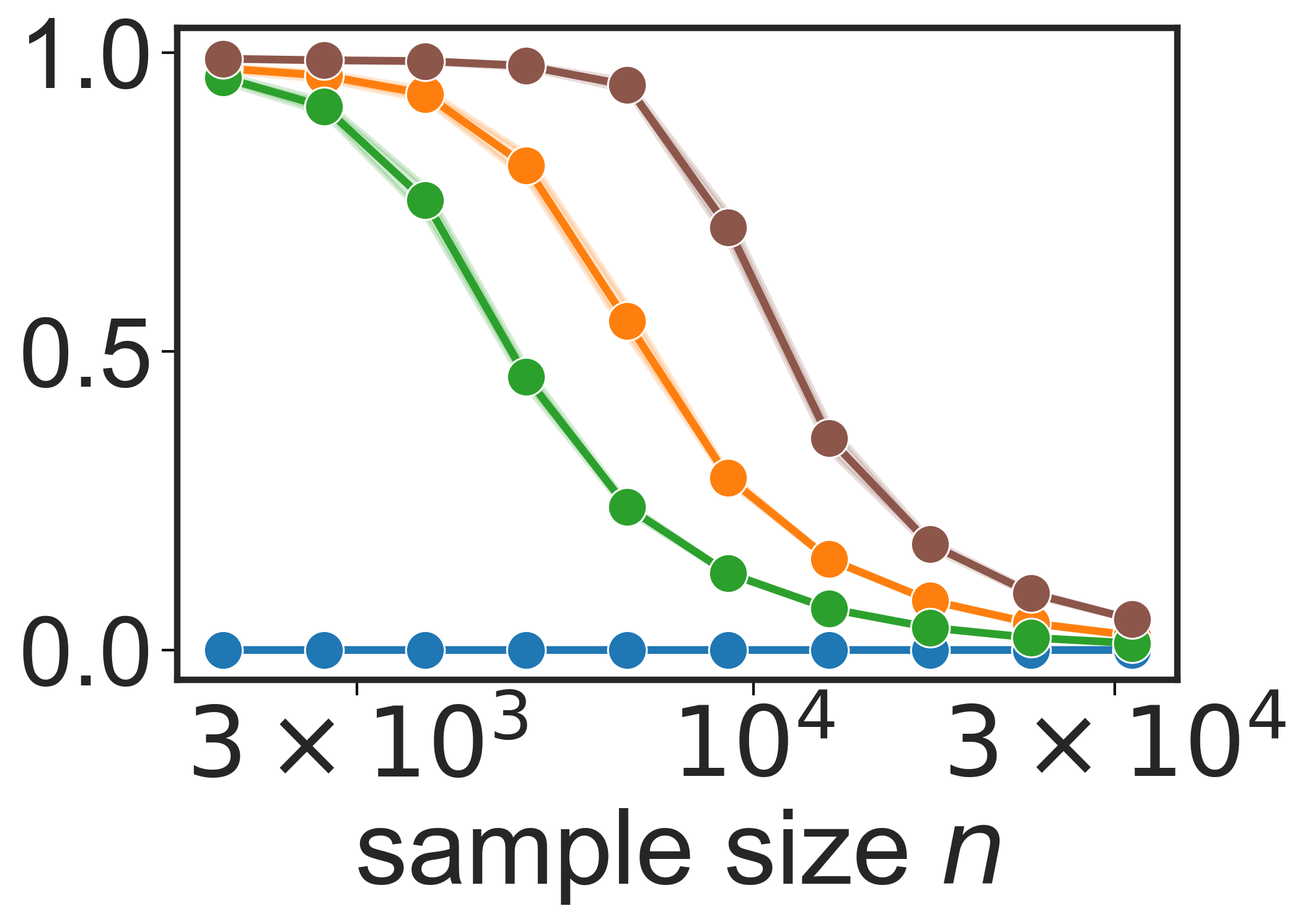}
    \caption{\small $\sigma = 0.001$}
    \label{fig:sigma-smaller}
  \end{subfigure}
  \hfill
  \begin{subfigure}[b]{0.3\textwidth}
    \centering
    \includegraphics[width=\textwidth]{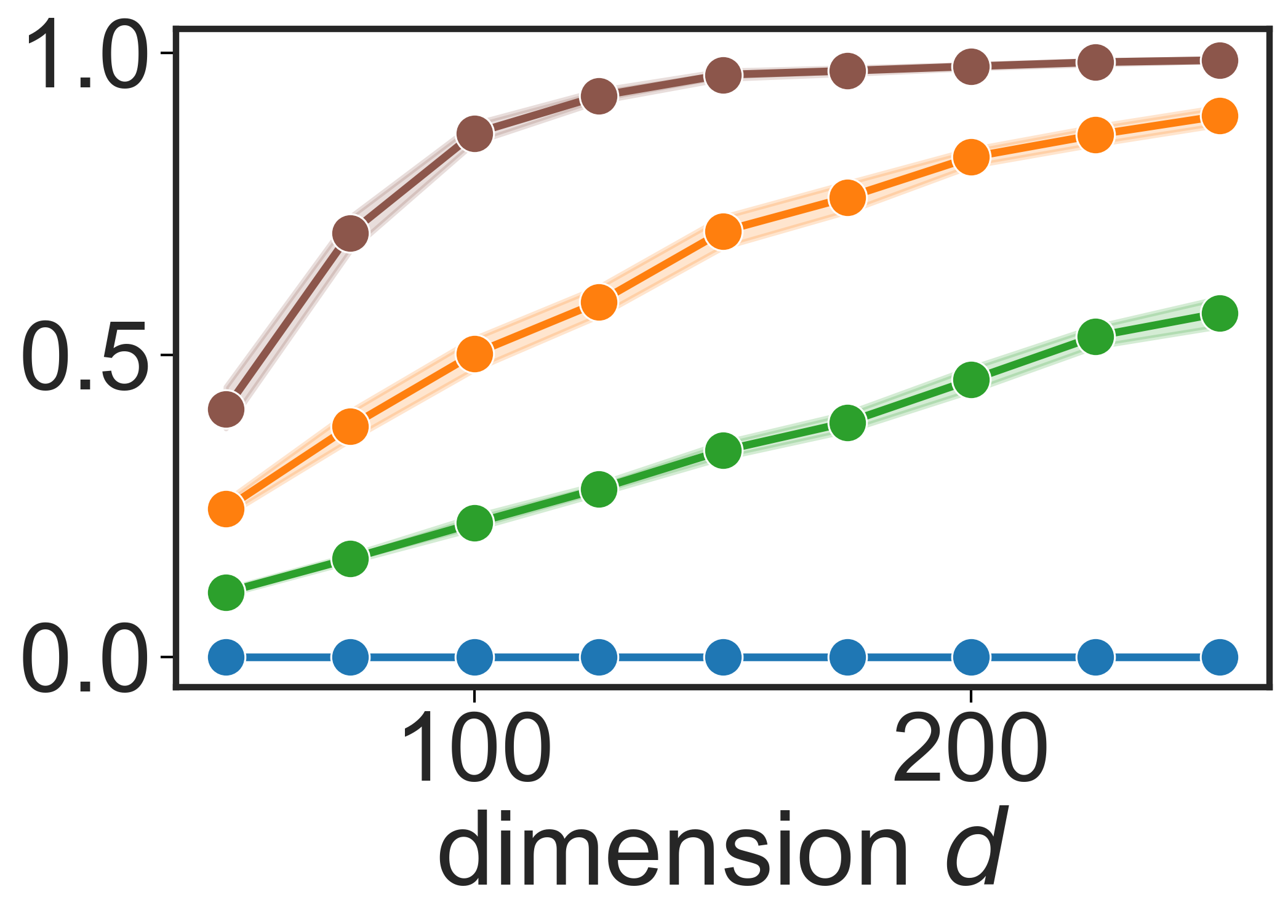}
    \caption{\small growing dimension}
  \label{fig:impact of d}
  \end{subfigure}
  \vspace{-1em}
  \label{fig:utility-wrt-sigma-and-d}
  \caption{\small Comparison of \kdppca vs DP-Gauss-1 (input perturbation), DP-Gauss-2 (output perturbation), and DP-Power-Method on the spiked covariance model. We plot the mean over 50 trials, with shaded regions representing 95\% confidence intervals. We set $k=2$, $d = 200$, $\lambda_1 = 10$, $\epsilon=1$, and $\delta = 0.01$.}\label{fig:kdppca}
\end{figure}

In~\Cref{fig:varying-eigengap}, we plot the utility against the eigengap ($\lambda_k - \lambda_{k+1}$) for different algorithms.~\texttt{DP-Gauss-2}, which is designed with large eigen-gaps in mind steadily improves in utility as the gap grows and nearly matches the utility of \kdppca for very large eigengap. By contrast, \texttt{DP-Gauss-1} which offers better scalability with dimension $d$ but is insensitive to the eigen-gap, maintains a nearly flat utility as the eigen-gap grows. Throughout, \kdppca consistently outperforms both DP-Gauss algorithms.

Next, in~\Cref{fig:kdpojas-kdppca}, we compare \kdppca against the much simpler \kdpojas algorithm. As predicted by~\Cref{corollary-spiked-covariance,corollary:k-dp-ojas}, \kdppca clearly outperforms \kdpojas in the low‐noise regime ($\sigma \ll \lambda_1$).  Conversely, at larger noise levels \kdpojas often matches or even exceeds \kdppca in practice, owing to its fewer hyperparameters and greater robustness to learning‐rate choices (see~\Cref{fig:increasing-sigma,sec-appendix-experiments}). Although both algorithms require knowledge of the eigenvalues of \(\Sigma\) to set optimal step sizes, these can be obtained  privately via the Gaussian mechanism. Nevertheless, it is interesting to note that \kdpojas remains effective even when its step size is chosen without any explicit eigenvalue estimates (see~\Cref{sec-appendix-experiments}). Lastly, we want to note that in~\Cref{sec-appendix-experiments} we present more comprehensive results, using different $d$ and $k$. The results in this section are kept simple for illustrative purposes.

\begin{figure}[t] 
  \centering
  \begin{subfigure}[b]{\textwidth}\centering
     \includegraphics[width=0.8\textwidth]{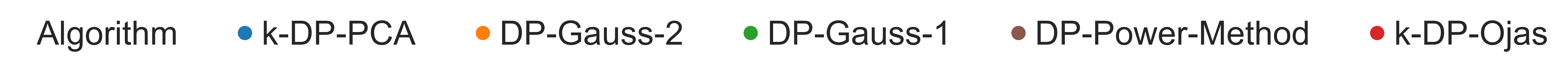}
  \end{subfigure}
  \vspace{1em}  %
  \begin{subfigure}[b]{0.3\textwidth}
    \centering
    \includegraphics[width=\linewidth]{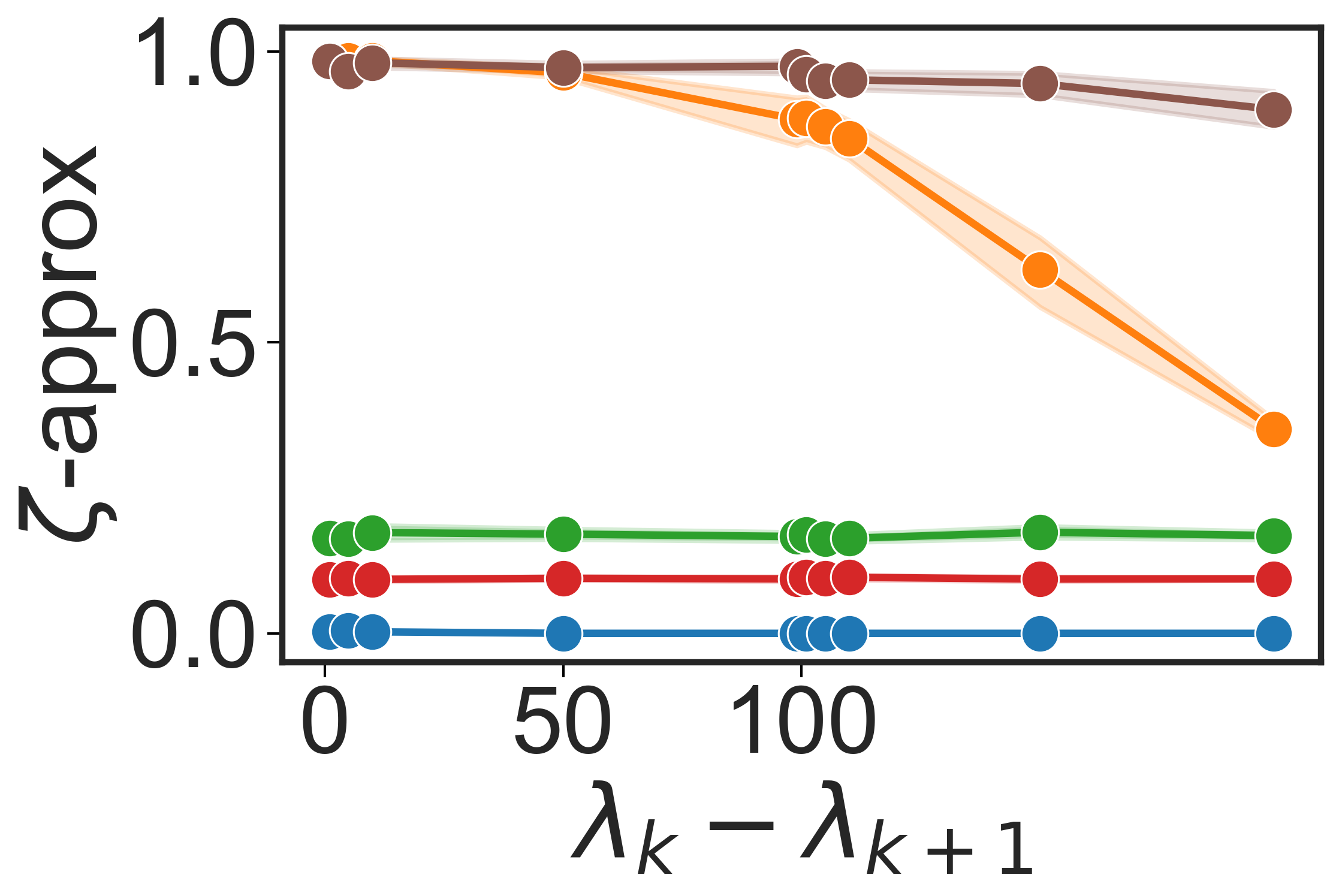}
    \caption{Increasing eigengap}
    \label{fig:varying-eigengap}
  \end{subfigure}
  \hfill
  \begin{subfigure}[b]{0.3\textwidth}
    \centering
    \includegraphics[width=\linewidth]{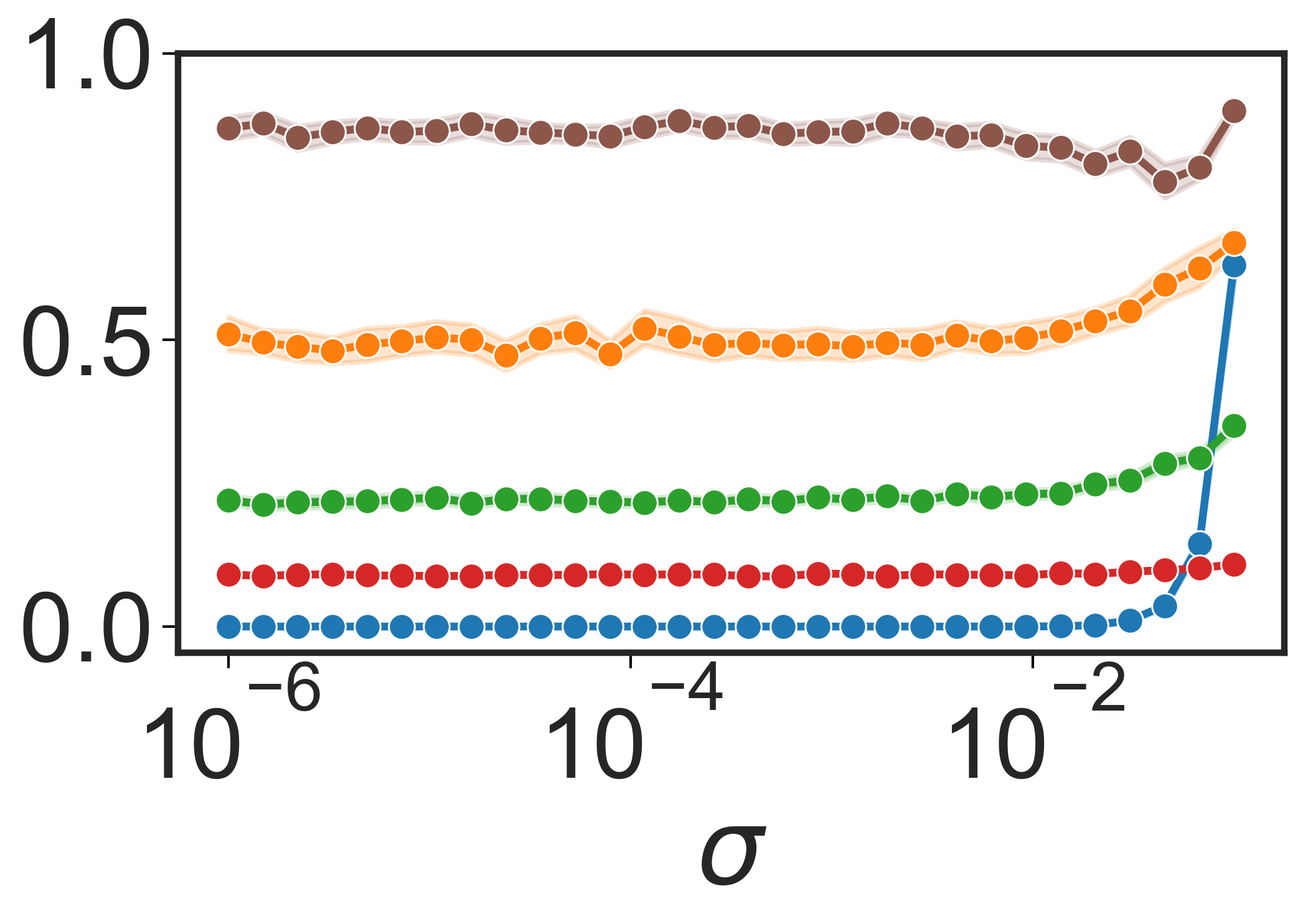}
    \caption{Varying noise level}
    \label{fig:increasing-sigma}
  \end{subfigure}
  \hfill
  \begin{subfigure}[b]{0.3\textwidth}
    \centering
    \includegraphics[width=\linewidth]{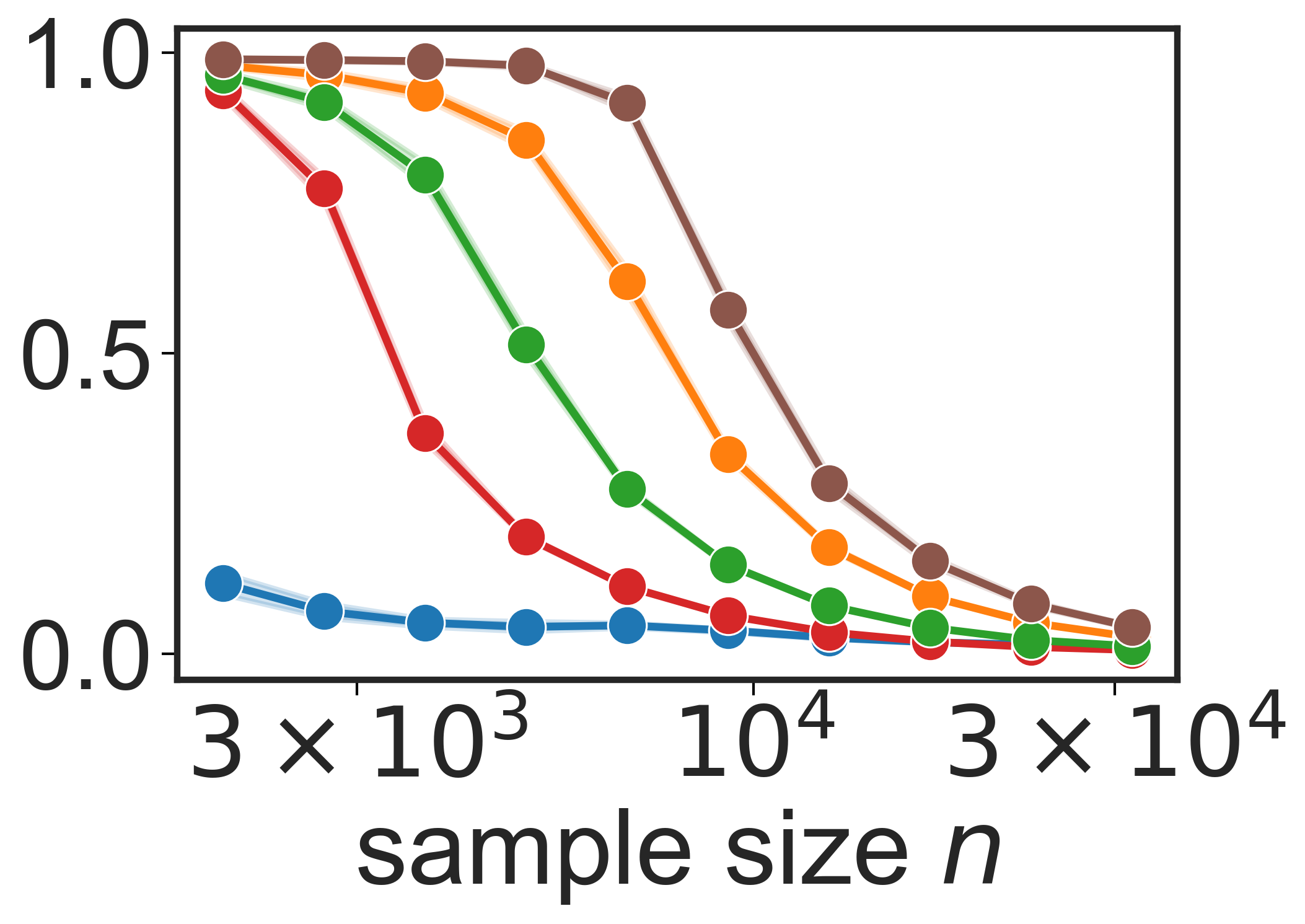}
    \caption{\small $\sigma = 0.025$}
    \label{fig:k-dp-ojas}
  \end{subfigure}
  \vspace{-1em}
  \label{fig:ojas-vs-kdppca}
  \caption{\small Comparison of \kdppca and \kdpojas in a higher noise regime (also including DP-Gauss-1 (input perturbation), DP-Gauss-2 (object perturbation), and DP-Power-Method) on the spiked covariance model. We plot the mean over 50 trials, with shaded regions representing 95\% confidence intervals. We set $k=2$, $d = 200$, $\lambda_1 = 10$, $\epsilon=1$, and $\delta = 0.01$.}\label{fig:kdpojas-kdppca}
  \vspace{-1em}
\end{figure}

\section{Related Work and Open problems }\label{sec:discussion}
\noindent\textbf{Related Work} Differentially private PCA has been  studied extensively~\citep{blum2005practical, chaudhuri2013near, hardt2013beyond, dwork2014analyze}. However, when applied to the stochastic setting, these methods typically suffer from sample complexity that scales super‑linearly in $d$ or inject noise at a scale that ignores the underlying stochasticity in the data, resulting in suboptimal error rates of $O(\sqrt{dk/n} + d^{3/2}k/(\varepsilon n))$. The first to address these limitations were ~\citep{liu2022differential, cai2024optimal}; however the results by \citep{liu2022differential} only apply for $k=1$ and \citet{cai2024optimal} provide an algorithm whose privacy guarantee is conditional on distributional assumptions on the data. In contrast, our algorithm applies to all \(k\leq d\), is private for all inputs, provides an error rate that scales linearly with \(d\), and the injected noise scales with the inherent stochasticity in the data.

A complimentary line of work,~\citep{singhal2021privately, tsfadia2024differentially} obtains sample complexity that scales independently of the dimension $d$ but requires a strong multiplicative eigengap $(\lambda_k/\lambda_{k+1})=O(\sqrt{d})$, which is a strictly stronger assumption than ours.

\noindent\textbf{Open Problems} Despite being a mild concentration requirement also seen in prior work~\citep{liu2022dp},~\Cref{ass:A4} is perhaps the most non-standard assumption in~\Cref{assumption-1}. As observed by~\citet{liu2022dp}, this can be relaxed to a bounded $k$‑th moment condition, at which point the second term in \eqref{eq-modified-dp-pca-utility} grows to $O(d(\log(1/\delta)/\epsilon n)^{1 - 1/k})$. Further, empirical improvements may also be possible from applying private robust mean estimation~\citep{liu2021robust,hopkins2022efficient}, as opposed to clipping around the mean of the gradients. Lastly, the current \rangefinder is optimal for spiked covariance data, however for other data distributions we expect different range estimators to work better. We leave this to future work.

The sample size condition in~\Cref{eq:n-lower-thm} includes an exponential dependence on the spectral gap: $n \geq \exp(\kappa')$. While this is relatively harmless as there is no such exponential dependence in the utility guarantee~\Cref{eq:main-utility}, we show in~\Cref{subsec-sample-size-requirements} how to get rid of this exponential dependence by incurring an additional $\tilde{O}(\gamma d^2\log(1/\delta)/(\epsilon n))$ term in the utility guarantee.

As already mentioned in~\Cref{sec:lower-bound}, our upper bounds are loose in their dependence in $k$ and $\delta$. We incur this additional \(k\) factor, because each deflation step must use a fresh batch of samples, so that the projection matrices $P$ remain independent of the data matrices in \Cref{step:defl} of~\Cref{algo-k-DP-PCA}. If one could safely reuse the same $A_i$’s across rounds, this could be improved to $O(\sqrt{k})$ via adaptive composition. We think it is interesting future work to see whether we can obtain a $\sqrt{k}$ factor using the techniques from the robust PCA results in~\citet{jambulapati2024black} or using our analysis but with “slightly” correlated data. However, even if one theses approaches turn out to be viable, a gap still remains between the resulting upper bound and our lower bound and it is an interesting question to resolve this. Finally, although inspired by the streaming analysis of Oja’s method~\citep{jain2016streaming,huang2021streaming}, our subroutines (\moddppca, \rangefinder, \privmean) are not directly streaming‐compatible. Adapting them to the streaming setting is an interesting avenue for future work.

\noindent\section{Conclusion}\label{sec:conclusion} We have presented the first algorithm for stochastic $k$-PCA that is both differentially private and computationally efficient, supports any $k\le d$, and achieves near-optimal error. Our analysis critically relies on our adaptation of the DP-PCA algorithm~\citep{liu2022dp}, a stochastic deflation framework inspired by \citep{jambulapati2024black}, and our novel analysis of non-private Oja’s algorithm \citep{jain2016streaming}. Along with our novel results in the ~\emph{Stochastic k-PCA} problem, we believe the above mentioned theoretical results are of independent interest, and may inspire the developement of new algorithms for this and related problems.

\section{Acknowledgement}

JD acknowledges support from the Danish Data Science Academy, which is funded by the Novo Nordisk Foundation (NNF21SA0069429) and VILLUM FONDEN (40516). AS acknowledges the Novo Nordisk Foundation for support via the Startup grant (NNF24OC0087820) and VILLUM FONDEN via the Young Investigator program (72069). The authors would also like to thank Rasmus Pagh for very insightful discussions.

\bibliographystyle{plainnat}
\bibliography{refs}

\begin{thebibliography}{43}
\providecommand{\natexlab}[1]{#1}
\providecommand{\url}[1]{\texttt{#1}}
\expandafter\ifx\csname urlstyle\endcsname\relax
  \providecommand{\doi}[1]{doi: #1}\else
  \providecommand{\doi}{doi: \begingroup \urlstyle{rm}\Url}\fi

\bibitem[Abowd et~al.(2020)Abowd, Benedetto, Garfinkel, Dahl, Dajani, Graham, Hawes, Karwa, Kifer, Kim, et~al.]{abowd2020modernization}
John~M Abowd, Gary~L Benedetto, Simson~L Garfinkel, Scot~A Dahl, Aref~N Dajani, Matthew Graham, Michael~B Hawes, Vishesh Karwa, Daniel Kifer, Hang Kim, et~al.
\newblock The modernization of statistical disclosure limitation at the us census bureau.
\newblock \emph{URL: bit. ly/DPcensus20}, 2020.

\bibitem[Allen-Zhu and Li(2016)]{allen2016lazysvd}
Zeyuan Allen-Zhu and Yuanzhi Li.
\newblock Lazysvd: Even faster svd decomposition yet without agonizing pain.
\newblock \emph{{Neural Information Processing Systems~(NeurIPS)}}, 2016.

\bibitem[Apple(2017)]{team2017learning}
Differential Privacy~Team Apple.
\newblock Learning with privacy at scale, 2017.
\newblock \url{https://machinelearning.apple.com/research/learning-with-privacy-at-scale}.

\bibitem[Bassily et~al.(2014)Bassily, Smith, and Thakurta]{bassily2014private}
Raef Bassily, Adam Smith, and Abhradeep Thakurta.
\newblock Private empirical risk minimization: Efficient algorithms and tight error bounds.
\newblock In \emph{{Foundations of Computational Science~(FOCS)}}, 2014.

\bibitem[Bassily et~al.(2019)Bassily, Feldman, Talwar, and Guha~Thakurta]{bassily2019private}
Raef Bassily, Vitaly Feldman, Kunal Talwar, and Abhradeep Guha~Thakurta.
\newblock Private stochastic convex optimization with optimal rates.
\newblock \emph{{Neural Information Processing Systems~(NeurIPS)}}, 2019.

\bibitem[Biswas et~al.(2020)Biswas, Dong, Kamath, and Ullman]{biswas2020coinpress}
Sourav Biswas, Yihe Dong, Gautam Kamath, and Jonathan Ullman.
\newblock Coinpress: Practical private mean and covariance estimation.
\newblock \emph{{Neural Information Processing Systems~(NeurIPS)}}, 2020.

\bibitem[Blum et~al.(2005)Blum, Dwork, McSherry, and Nissim]{blum2005practical}
Avrim Blum, Cynthia Dwork, Frank McSherry, and Kobbi Nissim.
\newblock Practical privacy: the sulq framework.
\newblock In \emph{{Principles of Database Systems~(PODS)}}, 2005.

\bibitem[Bun and Steinke(2019)]{bun2019average}
Mark Bun and Thomas Steinke.
\newblock Average-case averages: Private algorithms for smooth sensitivity and mean estimation.
\newblock \emph{{Neural Information Processing Systems~(NeurIPS)}}, 2019.

\bibitem[Cai et~al.(2024)Cai, Xia, and Zha]{cai2024optimal}
T~Tony Cai, Dong Xia, and Mengyue Zha.
\newblock Optimal differentially private pca and estimation for spiked covariance matrices.
\newblock \emph{arXiv:2401.03820}, 2024.

\bibitem[Chaudhuri et~al.(2013)Chaudhuri, Sarwate, and Sinha]{chaudhuri2013near}
Kamalika Chaudhuri, Anand~D Sarwate, and Kaushik Sinha.
\newblock A near-optimal algorithm for differentially-private principal components.
\newblock \emph{{Journal of Machine Learning Research}}, 2013.

\bibitem[Dwork et~al.(2006)Dwork, McSherry, Nissim, and Smith]{dwork2006calibrating}
Cynthia Dwork, Frank McSherry, Kobbi Nissim, and Adam Smith.
\newblock Calibrating noise to sensitivity in private data analysis.
\newblock In \emph{{Theory of Cryptography~(TCC)}}, 2006.

\bibitem[Dwork et~al.(2014{\natexlab{a}})Dwork, Roth, et~al.]{dwork2014algorithmic}
Cynthia Dwork, Aaron Roth, et~al.
\newblock The algorithmic foundations of differential privacy.
\newblock \emph{Foundations and Trends{\textregistered} in Theoretical Computer Science}, 2014{\natexlab{a}}.

\bibitem[Dwork et~al.(2014{\natexlab{b}})Dwork, Talwar, Thakurta, and Zhang]{dwork2014analyze}
Cynthia Dwork, Kunal Talwar, Abhradeep Thakurta, and Li~Zhang.
\newblock Analyze gauss: optimal bounds for privacy-preserving principal component analysis.
\newblock In \emph{{Symposium on Theory of Computing~(STOC)}}, 2014{\natexlab{b}}.

\bibitem[Feldman and Steinke(2018)]{feldman2018calibrating}
Vitaly Feldman and Thomas Steinke.
\newblock Calibrating noise to variance in adaptive data analysis.
\newblock In \emph{{Conference on Learning Theory~(COLT)}}, 2018.

\bibitem[Feldman et~al.(2020)Feldman, Koren, and Talwar]{feldman2020private}
Vitaly Feldman, Tomer Koren, and Kunal Talwar.
\newblock Private stochastic convex optimization: optimal rates in linear time.
\newblock In \emph{{Symposium on Theory of Computing~(STOC)}}, 2020.

\bibitem[Florina~Balcan et~al.(2016)Florina~Balcan, Du, Wang, and Yu]{florina2016improved}
Maria Florina~Balcan, Simon~S Du, Yining Wang, and Adams~Wei Yu.
\newblock An improved gap-dependency analysis of the noisy power method.
\newblock In \emph{{Conference on Learning Theory~(COLT)}}, 2016.

\bibitem[Hardt and Price(2014)]{hardt2014noisy}
Moritz Hardt and Eric Price.
\newblock The noisy power method: A meta algorithm with applications.
\newblock \emph{{Neural Information Processing Systems~(NeurIPS)}}, 2014.

\bibitem[Hardt and Roth(2013)]{hardt2013beyond}
Moritz Hardt and Aaron Roth.
\newblock Beyond worst-case analysis in private singular vector computation.
\newblock In \emph{{Symposium on Theory of Computing~(STOC)}}, 2013.

\bibitem[Hopkins et~al.(2022)Hopkins, Kamath, and Majid]{hopkins2022efficient}
Samuel~B Hopkins, Gautam Kamath, and Mahbod Majid.
\newblock Efficient mean estimation with pure differential privacy via a sum-of-squares exponential mechanism.
\newblock In \emph{{Symposium on Theory of Computing~(STOC)}}, 2022.

\bibitem[Horn and Johnson(2012)]{horn2012matrix}
Roger~A. Horn and Charles~R. Johnson.
\newblock \emph{Matrix Analysis}.
\newblock Cambridge University Press, 2012.

\bibitem[Hu et~al.(2022)Hu, Ni, Xiao, and Wang]{hu2022high}
Lijie Hu, Shuo Ni, Hanshen Xiao, and Di~Wang.
\newblock High dimensional differentially private stochastic optimization with heavy-tailed data.
\newblock In \emph{{Principles of Database Systems~(PODS)}}, 2022.

\bibitem[Hu et~al.(2024)Hu, Sanyal, and Sch\"{o}lkopf]{hu2024provable}
Yaxi Hu, Amartya Sanyal, and Bernhard Sch\"{o}lkopf.
\newblock Provable privacy with non-private pre-processing.
\newblock In \emph{{International Conference on Learning Representations~(ICLR)}}, 2024.

\bibitem[Huang et~al.(2021)Huang, Niles-Weed, and Ward]{huang2021streaming}
De~Huang, Jonathan Niles-Weed, and Rachel Ward.
\newblock Streaming k-pca: Efficient guarantees for oja’s algorithm, beyond rank-one updates.
\newblock In \emph{{Conference on Learning Theory~(COLT)}}, 2021.

\bibitem[Jain et~al.(2016)Jain, Jin, Kakade, Netrapalli, and Sidford]{jain2016streaming}
Prateek Jain, Chi Jin, Sham~M Kakade, Praneeth Netrapalli, and Aaron Sidford.
\newblock Streaming pca: Matching matrix bernstein and near-optimal finite sample guarantees for oja’s algorithm.
\newblock In \emph{{Conference on Learning Theory~(COLT)}}, 2016.

\bibitem[Jambulapati et~al.(2024)Jambulapati, Kumar, Li, Pandey, Pensia, and Tian]{jambulapati2024black}
Arun Jambulapati, Syamantak Kumar, Jerry Li, Shourya Pandey, Ankit Pensia, and Kevin Tian.
\newblock Black-box k-to-1-pca reductions: Theory and applications.
\newblock In \emph{{Conference on Learning Theory~(COLT)}}, 2024.

\bibitem[Kairouz et~al.(2015)Kairouz, Oh, and Viswanath]{pmlr-v37-kairouz15}
Peter Kairouz, Sewoong Oh, and Pramod Viswanath.
\newblock The composition theorem for differential privacy.
\newblock In \emph{{International Conference on Learning Representations~(ICLR)}}, 2015.

\bibitem[Kamath et~al.(2019)Kamath, Li, Singhal, and Ullman]{kamath2019privately}
Gautam Kamath, Jerry Li, Vikrant Singhal, and Jonathan Ullman.
\newblock Privately learning high-dimensional distributions.
\newblock In \emph{{Conference on Learning Theory~(COLT)}}, 2019.

\bibitem[Kamath et~al.(2022)Kamath, Liu, and Zhang]{kamath2022improved}
Gautam Kamath, Xingtu Liu, and Huanyu Zhang.
\newblock Improved rates for differentially private stochastic convex optimization with heavy-tailed data.
\newblock In \emph{{International Conference on Learning Representations~(ICLR)}}, 2022.

\bibitem[Kapralov and Talwar(2013)]{kapralov2013differentially}
Michael Kapralov and Kunal Talwar.
\newblock On differentially private low rank approximation.
\newblock In \emph{{Symposium on Discrete Algorithms}}, 2013.

\bibitem[Karwa and Vadhan(2017)]{karwa2017finite}
Vishesh Karwa and Salil Vadhan.
\newblock Finite sample differentially private confidence intervals.
\newblock In \emph{{Innovations in Theoretical Computer Science Conference}}, 2017.

\bibitem[Kothari et~al.(2022)Kothari, Manurangsi, and Velingker]{kothari2022private}
Pravesh Kothari, Pasin Manurangsi, and Ameya Velingker.
\newblock Private robust estimation by stabilizing convex relaxations.
\newblock In \emph{{Conference on Learning Theory~(COLT)}}, 2022.

\bibitem[Kulkarni et~al.(2021)Kulkarni, Lee, and Liu]{kulkarni2021private}
Janardhan Kulkarni, Yin~Tat Lee, and Daogao Liu.
\newblock Private non-smooth empirical risk minimization and stochastic convex optimization in subquadratic steps.
\newblock In \emph{{Neural Information Processing Systems~(NeurIPS)}}, 2021.

\bibitem[Liu et~al.(2021)Liu, Kong, Kakade, and Oh]{liu2021robust}
Xiyang Liu, Weihao Kong, Sham Kakade, and Sewoong Oh.
\newblock Robust and differentially private mean estimation.
\newblock \emph{{Neural Information Processing Systems~(NeurIPS)}}, 2021.

\bibitem[Liu et~al.(2022{\natexlab{a}})Liu, Kong, Jain, and Oh]{liu2022dp}
Xiyang Liu, Weihao Kong, Prateek Jain, and Sewoong Oh.
\newblock Dp-pca: Statistically optimal and differentially private pca.
\newblock In \emph{{Neural Information Processing Systems~(NeurIPS)}}, 2022{\natexlab{a}}.

\bibitem[Liu et~al.(2022{\natexlab{b}})Liu, Kong, and Oh]{liu2022differential}
Xiyang Liu, Weihao Kong, and Sewoong Oh.
\newblock Differential privacy and robust statistics in high dimensions.
\newblock In \emph{{Conference on Learning Theory~(COLT)}}, 2022{\natexlab{b}}.

\bibitem[Mackey(2008)]{mackey2008deflation}
Lester Mackey.
\newblock Deflation methods for sparse pca.
\newblock \emph{{Neural Information Processing Systems~(NeurIPS)}}, 2008.

\bibitem[Nicolas et~al.(2024)Nicolas, Sabater, Maouche, Mokhtar, and Coates]{nicolas2024differentially}
Julien Nicolas, C{\'e}sar Sabater, Mohamed Maouche, Sonia~Ben Mokhtar, and Mark Coates.
\newblock Differentially private and decentralized randomized power method.
\newblock \emph{arXiv:2411.01931}, 2024.

\bibitem[Oja(1982)]{oja1982simplified}
Erkki Oja.
\newblock Simplified neuron model as a principal component analyzer.
\newblock \emph{{Journal of Mathematical Biology}}, 1982.

\bibitem[Singhal and Steinke(2021)]{singhal2021privately}
Vikrant Singhal and Thomas Steinke.
\newblock Privately learning subspaces.
\newblock \emph{{Neural Information Processing Systems~(NeurIPS)}}, 2021.

\bibitem[Tropp(2012)]{tropp2012user}
Joel~A Tropp.
\newblock User-friendly tail bounds for sums of random matrices.
\newblock \emph{{Foundations of Computational Mathematics~(FoCM)}}, 2012.

\bibitem[Tsfadia(2024)]{tsfadia2024differentially}
Eliad Tsfadia.
\newblock On differentially private subspace estimation in a distribution-free setting.
\newblock In \emph{{Neural Information Processing Systems~(NeurIPS)}}, 2024.

\bibitem[Tzamos et~al.(2020)Tzamos, Vlatakis-Gkaragkounis, and Zadik]{tzamos2020optimal}
Christos Tzamos, Emmanouil-Vasileios Vlatakis-Gkaragkounis, and Ilias Zadik.
\newblock Optimal private median estimation under minimal distributional assumptions.
\newblock \emph{{Neural Information Processing Systems~(NeurIPS)}}, 2020.

\bibitem[Wang et~al.(2020)Wang, Xiao, Devadas, and Xu]{wang2020differentially}
Di~Wang, Hanshen Xiao, Srinivas Devadas, and Jinhui Xu.
\newblock On differentially private stochastic convex optimization with heavy-tailed data.
\newblock In \emph{{International Conference on Learning Representations~(ICLR)}}, 2020.

\end{thebibliography}

\appendix
\clearpage
\section*{Appendix}
The appendix is structured as follows. In~\Cref{appendix-related-work}, we provide a more detailed overview of related work to complement the discussion in the main text. \Cref{appendix:preliminaries} introduces mathematical and privacy-related preliminaries that lay the groundwork for our analysis. In~\Cref{sec-stoch-black-box,sec-non-private-Ojas}, we present novel technical contributions: in~\ref{sec-stoch-black-box} we extend the recent deflation analysis of~\citet{jambulapati2024black} to the stochastic setting and also prove~\Cref{meta-theorem} in the main text, and in~\ref{sec-non-private-Ojas} we provide a new utility analysis of the non-private Oja’s algorithm. These results are then used to prove our main theorem and establish the utility and privacy guarantees for our second proposed algorithm~(\Cref{corollary:k-dp-ojas}), \kdpojas, in~\Cref{sec-proof-main-theorm}. In~\Cref{sec:proof-lower-bound}, we prove our lower bound result from~\Cref{sec:lower-bound}. We then provide additional experimental details in~\Cref{sec-appendix-experiments}, and conclude by restating the subroutines from~\citet{liu2022differential}, which are used in \moddppca, in~\Cref{appendix:algos-used-in-moddppca}.

\section{Related Work} \label{appendix-related-work}

The problem of private $k$-PCA has been the subject of extensive research, with many works exploring it under various constraints. Several works address $k$-PCA in the standard setting, while assuming an additive eigengap~\citep{blum2005practical,chaudhuri2013near,hardt2013beyond,dwork2014analyze, nicolas2024differentially}. These works operate in a deterministic setting where each sample is assumed to be bounded ($\|x_i \| \leq \beta$). When applied to the stochastic setting, these works generally yield suboptimal error rates. This is partially due to the fact that all of these works assume a data independent bound ($\beta = 1$), which we cannot easily enforce in the stochastic setting (as discussed in~\Cref{sec:experiments}). Considering Gaussian data with $x_i \sim \mathcal{N}(0, \Sigma)$, we know $\|x_i \| \leq \beta = O(\sqrt{\lambda_1 d \log(n/\zeta)}$ for all $i$ with probability $1 - \zeta$.~\citep{blum2005practical, dwork2014analyze, nicolas2024differentially} use the Gaussian mechanism, so when scaling the privacy noise with a factor $\beta$ we ensure $(\epsilon, \delta)$-DP in the stochastic setting. The tightest of the previous discussed result then achieves
\[\bigO{\sqrt{dk/n} + d^{3/2}k/(\varepsilon n)}.\]
More recent work has considered the multiplicative eigengap setting~\citep{tsfadia2024differentially, singhal2021privately}, though this is a strictly stronger assumption. Finally, there is a set of results without spectral gap assumptions~\citep{chaudhuri2013near, kapralov2013differentially, liu2022differential}. However, these works either do not allow a tractable implementation or give utility bounds that are super-linear in their dependence on $d$.

A widely used strategy in the non-private PCA literature to mitigate the complexity of designing algorithms for $k$-PCA is to reduce the $k$-dimensional problem to a series of $1$-dimensional problems using a technique known as the deflation method~\citep{mackey2008deflation, allen2016lazysvd}.~\citet{jambulapati2024black} proved significantly sharper bounds on the degradation of the approximation parameter of deflation methods for $k$-PCA. While their analysis only catered to the standard non-stochastic setting and assumed access to the true covariance matrix $\Sigma$, their results serve as a conceptual foundation for this work. We extend similar arguments to the stochastic setting, where only access to sample matrices $A_i$ with shared expectation $\bE\bs{A_i} = \Sigma$ is available.

Our 1-PCA method builds upon Oja's algorithm~\citep{oja1982simplified}, (see~\Cref{algo-ojas}), one of the oldest and most popular algorithms for streaming PCA. The first formal utility guarantees for Oja’s algorithm in the $k=1$ case were established by~\citet{jain2016streaming}, whose analysis inspired our proofs in~\Cref{sec-non-private-Ojas}. Subsequent extensions to the $k>1$ case were provided by~\citet{huang2021streaming}.

Lastly, our ePCA oracle \moddppca is largely inspired by the DP-PCA algorithm of~\citet{liu2022differential}. Their result builds upon a series of advances in private SGD \citep{kamath2022improved, bassily2014private, bassily2019private, feldman2020private, kulkarni2021private, wang2020differentially, hu2022high}, and private mean estimation~\citep{bun2019average, karwa2017finite, kamath2019privately, biswas2020coinpress, feldman2018calibrating, tzamos2020optimal}. In this work, we use some of the techniques proposed by~\citet{liu2022differential}: specifically their \privmean and \rangefinder algorithms. Replacing them with robust and private mean estimation~\citep{liu2021robust, kothari2022private} could relax~\Cref{ass:A4}, but at the cost of sub-optimal sample complexity.

\section{Preliminaries}\label{appendix:preliminaries}

In this section we list some mathematical and privacy preliminaries. A familiar reader is welcome to skip this section.

\subsection{Mathematics Preliminaries}

\begin{lemma}\label{lemma-ACA-ADA}  Let \(C,D\in\reals^{d\times d}\) be symmetric matrices and let \(A\in\reals^{m\times d}\) be any conformable matrix. Then $C \preceq D \implies ACA^{\top} \preceq ADA^{\top}$
\end{lemma}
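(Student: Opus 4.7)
The plan is to unpack the definition of the Loewner order and pass it through a simple change of variables. Recall that $C \preceq D$ means exactly that $D - C$ is positive semidefinite, i.e., $x^\top (D-C) x \ge 0$ for every $x \in \mathbb{R}^d$. Our goal is equivalently to show that $A(D-C)A^\top \succeq 0$ as an $m \times m$ matrix, so I will verify the PSD inequality $y^\top A(D-C)A^\top y \ge 0$ for every $y \in \mathbb{R}^m$.

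First I would observe that $D - C$ is symmetric (since $C$ and $D$ are), so $A(D-C)A^\top$ is symmetric as well, and the Loewner comparison makes sense. Then, fixing an arbitrary $y \in \mathbb{R}^m$, I would set $x := A^\top y \in \mathbb{R}^d$ and write
\begin{align*}
y^\top A(D-C)A^\top y \;=\; (A^\top y)^\top (D-C)(A^\top y) \;=\; x^\top (D-C)\, x \;\ge\; 0,
\end{align*}
where the final inequality uses $C \preceq D$. Since $y$ was arbitrary, this shows $A(D-C)A^\top \succeq 0$, equivalently $ACA^\top \preceq ADA^\top$, which is the desired conclusion.

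There is really no obstacle in this proof — it is a routine use of the definition of PSD together with the identity $y^\top AMA^\top y = (A^\top y)^\top M (A^\top y)$. The only thing to be careful about is making sure $D - C$ (rather than $C$ and $D$ separately) is the object to which PSD-ness is applied, since we only need the difference to be PSD, not the individual matrices. No spectral theorem, square-root, or factorization is required.
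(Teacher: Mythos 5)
Your proof is correct and is essentially identical to the paper's: both verify the Loewner inequality by testing the quadratic form of the difference against an arbitrary vector and substituting $x = A^\top y$. No further comment is needed.
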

\begin{proof}
 Since \(C \preceq D\), we have \(C - D \preceq 0\).  For any \(x\in\reals^m\), set \(y = A^\top x\).  Then
    \begin{align*}
        x^{\top}A(C-D)Ax = y^{\top}(C - D)y \leq 0.
    \end{align*}
    which shows \(ACA^\top \preceq AD A^\top\).
\end{proof}

\begin{theorem}[Woodbury matrix identity]\label{thm-woodbury-matrix-identity} Let \(A\in\reals^{n\times n}\) and \(C\in\reals^{k\times k}\) be invertible matrices, and let \(U\in\reals^{n\times k}\), \(V\in\reals^{k\times n}\). Then \[(A + UCV)^{-1} = A^{-1} - A^{-1} U (C^{-1} + V A^{-1} U)^{-1} V A^{-1}\]
\end{theorem}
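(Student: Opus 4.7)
The plan is to verify the identity by direct computation: I multiply the claimed inverse on the right by $(A+UCV)$ and show the product equals $I_n$, which reduces the whole claim to routine algebraic simplification and requires no block-matrix factorization or limiting argument.

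Concretely, set $M := C^{-1} + VA^{-1}U \in \mathbb{R}^{k\times k}$, which the statement implicitly assumes to be invertible. Expanding the product and using $AA^{-1} = I_n$ gives
\[
(A+UCV)\bigl[A^{-1} - A^{-1}U M^{-1} V A^{-1}\bigr] = I_n + UCVA^{-1} - U M^{-1} V A^{-1} - UCVA^{-1} U M^{-1} V A^{-1}.
\]
Collecting the last two terms and factoring out $U$ on the left and $M^{-1} V A^{-1}$ on the right yields
\[
-\bigl(U + UCVA^{-1}U\bigr) M^{-1} V A^{-1} = -UC\bigl(C^{-1} + VA^{-1}U\bigr) M^{-1} V A^{-1} = -UC M M^{-1} V A^{-1} = -UCVA^{-1},
\]
and substituting back cancels the $UCVA^{-1}$ contribution, leaving $I_n$. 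The analogous left-multiplication check is identical up to symmetry, so this single verification closes the argument. The key algebraic trick — and really the only nontrivial step — is recognizing the factorization $U + UCVA^{-1}U = UC(C^{-1} + VA^{-1}U) = UCM$, which is what makes the $M^{-1}$ telescope.

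The only subtlety worth flagging is that the statement silently requires $A$, $C$, and $C^{-1}+VA^{-1}U$ to all be invertible; without the last, the right-hand side is not even well-defined. If one wished to weaken the hypotheses, one could prove the identity generically and extend by continuity of matrix inversion on a dense open set, but for every application of this lemma in the paper the stated invertibility assumptions suffice, so the direct verification above is the cleanest presentation.
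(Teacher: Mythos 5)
Your verification is algebraically correct: the grouping $U + UCVA^{-1}U = UC(C^{-1}+VA^{-1}U) = UCM$ is exactly the telescoping step that makes the product collapse to $I_n$, and since a one-sided inverse of a square matrix is automatically two-sided, the single right-multiplication check suffices. Note that the paper states this theorem as a standard preliminary and offers no proof of its own, so there is nothing to compare against; your direct computation is the canonical argument, and your remark that invertibility of $C^{-1}+VA^{-1}U$ is an implicit hypothesis (needed for the right-hand side to be well-defined) is a fair and accurate observation.
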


\begin{theorem}[Pinsker's Inequality]\label{thm-pinksers-inequality} For $P$ and $Q$ two probability distributions on a measurable space then \[ \text{TV}(P,Q) \leq \sqrt{\frac{1}{2} \text{KL}(P || Q)}\]
\end{theorem}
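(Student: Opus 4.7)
The plan is to prove Pinsker's inequality via the standard two-step argument: reduce to the Bernoulli case using the data-processing inequality for KL, and then verify the Bernoulli case by a one-variable convexity computation.

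First, I would use the dual formulation $\text{TV}(P,Q) = \sup_A |P(A)-Q(A)|$ and take $A^\star = \{dP/dQ \ge 1\}$, which attains the supremum whenever the Radon--Nikodym derivative exists (otherwise one approximates through finite $\sigma$-algebras). Setting $p := P(A^\star)$ and $q := Q(A^\star)$, we obtain $\text{TV}(P,Q) = |p-q|$. Applying the data-processing inequality for KL to the indicator map $X \mapsto \mathbf{1}[X \in A^\star]$ yields
\[
    \text{KL}(\text{Bern}(p)\,\|\,\text{Bern}(q)) \;\le\; \text{KL}(P\,\|\,Q).
\]
Consequently, it suffices to establish the binary version $2(p-q)^2 \le \text{KL}(\text{Bern}(p)\,\|\,\text{Bern}(q))$ for all $p,q \in [0,1]$.

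For the binary case, I would fix $q \in (0,1)$ and consider the one-variable function
\[
    g(p) := p\log\tfrac{p}{q} + (1-p)\log\tfrac{1-p}{1-q} - 2(p-q)^2, \qquad p \in [0,1].
\]
A direct differentiation gives $g(q) = 0$, $g'(q) = 0$, and $g''(p) = \tfrac{1}{p(1-p)} - 4 \ge 0$ (using $p(1-p) \le 1/4$ on $[0,1]$). Convexity of $g$ combined with $g(q) = g'(q) = 0$ then forces $g(p) \ge 0$ for all $p \in [0,1]$, which is exactly the binary Pinsker inequality. Extending to the boundary cases $q \in \{0,1\}$ is handled by a continuity/limit argument, and taking square roots completes the proof.

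The only subtle point is verifying that the reduction step is lossless in the general measure-theoretic setting, i.e., that the TV supremum is attained (or approximated) by a single measurable set so that the data-processing inequality bites; for the absolutely continuous distributions used elsewhere in this paper this is immediate. If the convexity computation felt too ad hoc, an alternative route I would take is the Hellinger sandwich $\text{TV}(P,Q)^2 \le 2H^2(P,Q) \le \text{KL}(P\,\|\,Q)$, which sidesteps the binary reduction but ultimately relies on analogous elementary inequalities.
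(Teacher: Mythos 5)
Your proof is correct: the reduction to the Bernoulli case via the data-processing inequality followed by the convexity computation ($g(q)=g'(q)=0$ and $g''(p)=\tfrac{1}{p(1-p)}-4\ge 0$) is the standard and complete argument for Pinsker's inequality, and the measure-theoretic subtlety you flag is handled cleanly by taking densities with respect to the dominating measure $P+Q$ (or by noting that $\mathrm{KL}(P\|Q)=\infty$ when $P\not\ll Q$, making the bound vacuous). The paper itself states this classical result as a preliminary without proof, so there is no in-paper argument to compare against; your write-up would serve as a valid self-contained justification.
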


\begin{lemma}[Lemma F.2 in \citep{liu2022dp}]\label{lemma-f2-liu} Let $G \in \mathbb{R}^{d \times d}$ be a random matrix whose entries $G_{ij}$ are i.i.d. $\mathcal{N}(0,1)$. Then there exists a universal constant \(C>0\) such that for all \(t>0\),
\begin{align*}
    \Pr\bs{\|G\|_2 \leq C(\sqrt{d} + t)} \geq 1 - 2e^{-t^2}.
\end{align*}
\end{lemma}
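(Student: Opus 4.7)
This is the classical operator-norm deviation bound for a Gaussian matrix, and my plan is to decompose the proof into the standard two ingredients: an expectation bound $\mathbb{E}\|G\|_2 \lesssim \sqrt{d}$, followed by Gaussian Lipschitz concentration around that expectation. The desired tail $1 - 2e^{-t^2}$ then falls out by rescaling.

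\textbf{Step 1 (expectation bound).} I would first control $\mathbb{E}\|G\|_2$ via an $\varepsilon$-net argument. Writing
\[
\|G\|_2 \;=\; \sup_{u,v \in S^{d-1}} u^\top G v,
\]
fix a $\tfrac14$-net $\mathcal N$ of $S^{d-1}$ with $|\mathcal N|\le 9^d$, so that $\|G\|_2 \le 2\sup_{u,v\in\mathcal N} u^\top G v$. For each fixed pair $(u,v)\in\mathcal N^2$ the scalar $u^\top G v$ is $\mathcal N(0,1)$, so by a standard sub-Gaussian maximal inequality over $|\mathcal N|^2\le 81^d$ terms one obtains $\mathbb{E}\|G\|_2 \le C_1\sqrt d$ for a universal $C_1>0$. (Alternatively, Gordon's inequality yields the sharper constant $2\sqrt d$, but that refinement is not needed here.)

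\textbf{Step 2 (Lipschitz concentration).} The map $G \mapsto \|G\|_2$ is $1$-Lipschitz with respect to the Frobenius norm, because
\[
\bigl|\|G\|_2 - \|G'\|_2\bigr| \;\le\; \|G - G'\|_2 \;\le\; \|G - G'\|_F,
\]
and $G$ viewed as a vector in $\mathbb{R}^{d^2}$ is standard Gaussian. The Borell--TIS inequality (Gaussian concentration for Lipschitz functions) then gives
\[
\Pr\!\Bigl[\,\|G\|_2 \;\ge\; \mathbb{E}\|G\|_2 + s\,\Bigr] \;\le\; 2\exp\!\bigl(-s^2/2\bigr)
\]
for every $s>0$.

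\textbf{Step 3 (combine and rescale).} Substituting the Step 1 bound and choosing $s = t\sqrt 2$,
\[
\Pr\!\Bigl[\,\|G\|_2 \;\ge\; C_1\sqrt d + t\sqrt 2\,\Bigr] \;\le\; 2e^{-t^2},
\]
so taking $C := \max(C_1,\sqrt 2)$ we obtain $\Pr[\,\|G\|_2 \le C(\sqrt d + t)\,]\ge 1-2e^{-t^2}$, as claimed.

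\textbf{Main obstacle.} There is really no obstacle here: both the net-based expectation bound and Borell--TIS are textbook tools, and the two combine cleanly because the operator norm is exactly $1$-Lipschitz in Frobenius norm. The only mildly delicate point is keeping track of absolute constants so that the final tail matches $2e^{-t^2}$ rather than $2e^{-t^2/2}$, which is handled by absorbing the $\sqrt 2$ into $C$.
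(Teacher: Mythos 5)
Your proof is correct. The paper does not actually prove this lemma --- it is imported verbatim as Lemma F.2 of \citet{liu2022dp} --- and your argument (an $\varepsilon$-net bound giving $\mathbb{E}\|G\|_2 \le C_1\sqrt{d}$, followed by Borell--TIS concentration for the $1$-Lipschitz map $G\mapsto\|G\|_2$, then absorbing the $\sqrt{2}$ into the constant) is the standard, complete derivation of exactly this statement.
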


\begin{lemma}[Lemma F.5 in \citep{liu2022dp}]\label{lemma-f5-liu}
    Under~\Cref{ass:A1,ass:A2,ass:A3}, with probability at least $1 - \tau$
    \begin{align*}
        \norm{ \frac{1}{B} \sum_{i \in [B]} A_i - \Sigma }_2 = \bigO{\sqrt{\frac{\lambda_1^2 V \log\br{\nicefrac{d}{\tau}}}{B}} + \frac{\lambda_1 M \log\br{\nicefrac{d}{\tau}}}{B}}
    \end{align*}
\end{lemma}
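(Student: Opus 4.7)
The lemma is a textbook application of the matrix Bernstein inequality~\citep{tropp2012user}, which the paper itself cites immediately after~\Cref{ass:A3}. The plan is simply to center and rescale the summands and invoke Bernstein directly.

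First, I would set $X_i := \tfrac{1}{B}(A_i - \Sigma)$ for $i \in [B]$, so that $\sum_{i=1}^B X_i = \tfrac{1}{B}\sum_i A_i - \Sigma$ is exactly the quantity whose operator norm we want to control. The three Bernstein hypotheses follow immediately from Assumptions~\ref{ass:A1}--\ref{ass:A3}: (i) $\bE[X_i]=0$ by~\Cref{ass:A1}; (ii) $\norm{X_i}_2 \le L := \lambda_1 M / B$ almost surely by~\Cref{ass:A2}; and (iii) the matrix variance parameter satisfies
\begin{align*}
    \max\bc{\norm{\textstyle\sum_{i=1}^B \bE[X_i X_i^\top]}_2,\; \norm{\textstyle\sum_{i=1}^B \bE[X_i^\top X_i]}_2} \;\le\; \nu \;:=\; \frac{\lambda_1^2 V}{B},
\end{align*}
which comes from pulling the $1/B^2$ out of the sum, using i.i.d.\ to collapse the $B$ identical expectations to a single one, and then invoking~\Cref{ass:A3}.

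Second, I would apply the matrix Bernstein tail bound (e.g., Theorem~1.4 in~\citet{tropp2012user}),
\begin{align*}
    \mathbb{P}\bs{\norm{\textstyle\sum_{i=1}^B X_i}_2 \ge t} \;\le\; 2d\, \exp\br{-\frac{t^2/2}{\nu + L t/3}},
\end{align*}
set the right-hand side equal to $\tau$, and invert. Standard manipulations split the solution into the two familiar regimes: the sub-Gaussian regime gives $t \lesssim \sqrt{\nu \log(2d/\tau)} = \sqrt{\lambda_1^2 V \log(d/\tau)/B}$, while the sub-exponential regime gives $t \lesssim L \log(2d/\tau) = \lambda_1 M \log(d/\tau)/B$. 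Taking their sum (which upper-bounds the max up to a factor of two) yields the advertised bound.

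The proof is essentially mechanical; no step is genuinely difficult. The only place where a little care is needed is propagating the $1/B$ normalization correctly through both the variance and the boundedness hypotheses, so that the final scalings $\sqrt{V/B}$ and $M/B$ appear with the right exponents in $B$. This is consistent with the lemma being imported verbatim from prior work and used here only as a black-box concentration tool in the downstream proofs.
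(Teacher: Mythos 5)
Your proof is correct and is exactly the standard matrix Bernstein argument; the paper imports this lemma from \citet{liu2022dp} without reproving it, and explicitly flags Assumptions~\ref{ass:A1}--\ref{ass:A3} as the matrix Bernstein hypotheses, so your derivation matches the intended route. The only microscopic caveat is that since the $A_i$ need not be symmetric you should invoke the rectangular version of Bernstein (hence the two-sided variance condition in~\Cref{ass:A3}), which changes the dimensional prefactor from $d$ to $2d$ and is absorbed into the $O(\cdot)$.
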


\begin{lemma}[Adapted Version of Lemma F.3 in \citep{liu2022dp}]\label{lemma-version-f3-liu}
    Let $G \in \mathbb{R}^{d \times d}$ be a random matrix where each entry $G_{ij}$ is i.i.d. sampled from standard Gaussian $\mathcal{N}(0,1)$. Then we have 
    \begin{align}
        \mathbb{E}[\|GG^{\top}\|_2] \leq C d
    \end{align}
\end{lemma}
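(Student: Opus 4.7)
\textbf{Proof proposal for \Cref{lemma-version-f3-liu}.}

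The plan is to reduce the claim to a tail bound on $\|G\|_2$ and then integrate. The first step is to observe that $\|GG^\top\|_2 = \sigma_1(G)^2 = \|G\|_2^2$, since $GG^\top$ is PSD and its operator norm equals its largest eigenvalue, which is the square of the largest singular value of $G$. Thus it suffices to show $\mathbb{E}[\|G\|_2^2] \le C d$ for some universal constant $C$ (possibly different from the one in \Cref{lemma-f2-liu}, but absorbed into the symbol $C$).

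Next I would invoke \Cref{lemma-f2-liu}, which provides $\Pr[\|G\|_2 > C(\sqrt{d}+t)] \le 2e^{-t^2}$ for all $t>0$. Squaring inside the event and using $(\sqrt{d}+t)^2 \le 2(d+t^2)$, this implies
\begin{equation*}
\Pr\!\left[\|G\|_2^2 > 2C^2(d+t^2)\right] \le 2 e^{-t^2} \qquad \text{for all } t > 0.
\end{equation*}

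The final step is a standard tail-to-expectation integration. Writing $\mathbb{E}[\|G\|_2^2] = \int_0^\infty \Pr[\|G\|_2^2 > s]\,ds$, I split the integral at $s_0 = 2C^2 d$. On $[0, s_0]$ the integrand is at most $1$, contributing at most $2C^2 d$. On $[s_0, \infty)$ I change variables via $s = 2C^2(d+t^2)$, so $ds = 4 C^2 t\, dt$, and the tail bound gives
\begin{equation*}
\int_{s_0}^\infty \Pr[\|G\|_2^2 > s]\,ds \;\le\; \int_0^\infty 2e^{-t^2}\cdot 4C^2 t\, dt \;=\; 4C^2.
\end{equation*}
Summing the two contributions yields $\mathbb{E}[\|GG^\top\|_2] = \mathbb{E}[\|G\|_2^2] \le 2C^2 d + 4C^2 \le C' d$ for a universal constant $C'$ (absorbing the additive $4C^2$ using $d \ge 1$).

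The proof has no real obstacle: the only conceptual move is recognizing $\|GG^\top\|_2 = \|G\|_2^2$, and the rest is a routine tail integration using \Cref{lemma-f2-liu}. The quadratic $t^2$ in the exponent is what guarantees the integral converges and produces the $O(d)$ bound.
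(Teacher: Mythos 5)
Your proposal is correct and follows essentially the same route as the paper: both reduce to bounding $\mathbb{E}[\|G\|_2^2]$, invoke \Cref{lemma-f2-liu}, and split the tail integral at a concentration radius of order $\sqrt{d}$ before integrating the Gaussian tail. The only differences are cosmetic (you substitute directly in the variable $s=\|G\|_2^2$ rather than first passing to $r=\sqrt{u}$ as the paper does), so the two arguments are interchangeable.
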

\begin{proof}
    \begin{align*}
        \mathbb{E}[\|GG^{\top}\|_2] &\leq \mathbb{E}[\|G\|_2^2] \\
        &=\int_0^{\infty} \mathbb{P}(\|G\|^2_2 > u) du  =\int_0^{\infty} \mathbb{P}(\|G\|_2 > \sqrt{u}) du\\
        &=\int_0^{\infty} \mathbb{P}(\|G\|_2 > r) 2r dr
    \end{align*}
    where we do the change of variable with $r := \sqrt{u}, u=r^2, du = 2r dr$. Next we split the integral into two parts using the "concentration radius" $r_0 = C_1\sqrt{d}$, as by Lemma \ref{lemma-f2-liu} the exists a universal constant $C_1 > 0$ such that 
    \begin{align*}
        \mathbb{P}( \|G\| \geq C_1(\sqrt{d} + s)) \leq e^{-s^2}, \forall s >0
    \end{align*}
    this gives us
    \begin{align*}
        \mathbb{E}[\|GG^{\top}\|_2] &:= \int_0^{r_0} \mathbb{P}(\|G\|_2 > r) 2r dr + \int_{r_0}^{\infty} \mathbb{P}(\|G\|_2 > r) 2r dr \\
        &\leq \int_0^{r_0} 2r dr + \int_{r_0}^{\infty} \mathbb{P}(\|G\|_2 > r) 2r dr\\
         &= C_1^2d + \int_{r_0}^{\infty} \mathbb{P}(\|G\|_2 > r) 2r dr\\
    \end{align*}
    for the second integral we use again Lemma \ref{lemma-f2-liu} or rather its equivalent form
    \begin{align*}
        \mathbb{P}( \|G\| \geq r) \leq e^{-(\frac{r}{c_1} - \sqrt{d})^2}
    \end{align*}
    which gives us
    \begin{align*}
        \int_{r_0}^{\infty} \mathbb{P}(\|G\|_2 > r) 2r dr &= \int_{r_0}^{\infty} e^{-(\frac{r}{c_1} - \sqrt{d})^2} 2r dr \\ &= \int_{0}^{\infty} C_1^2(\sqrt{d} + s)e^{-s^2}ds \\
        &= C_1^2\sqrt{d}\int_{0}^{\infty}e^{-s^2}ds + C_1^2\int_{0}^{\infty}s e^{-s^2}ds \\ &\leq C_2 \sqrt{d} + C_3
    \end{align*}
    where we used $r = C_1(\sqrt{d} + s)$ and $dr = C_1 ds$ in the second step, which finishes our proof.

\end{proof}
\begin{lemma}[Weyl's inequality \citep{horn2012matrix}]\label{lemam-weyls-inequ} Let $G_1$ and $G_2$ be two symmetric matrices with eigenvalues $\mu_1 \geq \dots \geq \mu_d$ and $\nu_1 \geq \dots \geq \nu_d$ respectively, then 
\begin{align*}
    | \nu_i - \mu_i | \leq \| G_1 - G_2 \|_2
\end{align*}
\end{lemma}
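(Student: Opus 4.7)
My plan is to derive Weyl's inequality from the Courant--Fischer min-max characterization of the eigenvalues of symmetric matrices. Recall that for any symmetric matrix $G \in \mathbb{R}^{d\times d}$ with eigenvalues $\lambda_1(G) \ge \cdots \ge \lambda_d(G)$ and for each $i \in [d]$,
\[
  \lambda_i(G) \;=\; \max_{\substack{S \subseteq \mathbb{R}^d \\ \dim S = i}} \;\min_{\substack{x \in S \\ \|x\|_2 = 1}} x^{\top} G\, x.
\]
Applying this to $G_1$ and $G_2$ gives representations of $\mu_i$ and $\nu_i$ that I will compare on the same test subspace.

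First, I would observe that for any unit vector $x$ and any symmetric $H$, the Rayleigh quotient satisfies $|x^{\top} H x| \le \|H\|_2$. Applying this with $H = G_2 - G_1$ yields, for every unit $x$,
\[
  x^{\top} G_2 x \;\ge\; x^{\top} G_1 x - \|G_1 - G_2\|_2.
\]
Hence for any fixed $i$-dimensional subspace $S$,
\[
  \min_{x \in S,\,\|x\|_2 = 1} x^{\top} G_2 x \;\ge\; \min_{x \in S,\,\|x\|_2 = 1} x^{\top} G_1 x \;-\; \|G_1 - G_2\|_2.
\]
Taking the maximum over all $i$-dimensional subspaces $S$ on both sides (which preserves the inequality since the right-hand side shift is independent of $S$), I obtain by Courant--Fischer $\nu_i \ge \mu_i - \|G_1 - G_2\|_2$.

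By exchanging the roles of $G_1$ and $G_2$ (and noting that $\|G_1 - G_2\|_2 = \|G_2 - G_1\|_2$), the symmetric argument gives $\mu_i \ge \nu_i - \|G_1 - G_2\|_2$, so combining both directions yields $|\mu_i - \nu_i| \le \|G_1 - G_2\|_2$ as claimed. I do not anticipate a serious obstacle: the only subtlety is ensuring that the Rayleigh quotient perturbation bound is applied uniformly over the test subspace before taking the max, which is exactly what the argument above does. Alternative proofs via the interlacing form of Weyl's inequality or via perturbation theory are available, but the Courant--Fischer route is the shortest and most self-contained.
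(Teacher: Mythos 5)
Your proof is correct: the Courant--Fischer min-max argument with the uniform Rayleigh-quotient perturbation bound $|x^{\top}(G_2-G_1)x|\le\|G_1-G_2\|_2$ is the standard derivation of Weyl's inequality, and each step (shifting inside the min over a fixed subspace, then taking the max over subspaces, then symmetrizing) is valid. The paper does not prove this lemma at all --- it is stated as a cited textbook fact from Horn and Johnson --- so your self-contained argument is a perfectly good substitute.
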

\begin{lemma}[Conditional Markov Inequality]\label{lemma-cond-markov} Let $\mathcal{F}$ be a sigma-algebra, let $X>0$ be a non negative random variable, and let $a > 0$. Then
\begin{align*}
    P(X \geq a | \cF) \leq \frac{\bE[X | \cF]}{a}.
\end{align*}
\end{lemma}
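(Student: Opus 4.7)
The plan is to mirror the classical proof of Markov's inequality, but carry every inequality through the conditional expectation operator, noting that the resulting statements hold almost surely with respect to $\mathcal{F}$.

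First I would observe the pointwise inequality
\begin{equation*}
a \cdot \mathbbm{1}\{X \geq a\} \leq X,
\end{equation*}
which holds almost surely because $X \geq 0$ and $a > 0$: on the event $\{X \geq a\}$ the left side equals $a \leq X$, while on the complement the left side is $0 \leq X$.

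Next I would apply $\mathbb{E}[\,\cdot\,|\,\mathcal{F}]$ to both sides. By monotonicity and linearity of conditional expectation, this yields
\begin{equation*}
a \cdot \mathbb{E}\bigl[\mathbbm{1}\{X \geq a\} \,\big|\, \mathcal{F}\bigr] \leq \mathbb{E}[X \,|\, \mathcal{F}] \quad \text{a.s.}
\end{equation*}
The conditional expectation of an indicator is, by definition, the conditional probability, so $\mathbb{E}[\mathbbm{1}\{X\geq a\}\,|\,\mathcal{F}] = P(X\geq a \,|\, \mathcal{F})$ almost surely. Dividing both sides by $a > 0$ gives the claimed bound.

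The result is essentially immediate; the only subtlety worth flagging is that all relations hold $\mathcal{F}$-almost surely rather than everywhere, since conditional expectations are only defined up to null sets. There is no real obstacle here, and no auxiliary technology beyond the definition of conditional probability and monotonicity of conditional expectation is required.
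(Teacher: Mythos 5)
Your proof is correct and follows essentially the same route as the paper's: both start from the pointwise bound $a\,\mathbbm{1}\{X\ge a\}\le X$ (the paper passes through the intermediate $X\,\mathbbm{1}\{X\ge a\}$, a cosmetic difference), take conditional expectations, and identify the conditional expectation of the indicator with the conditional probability. Your added remark that the inequality holds only $\mathcal{F}$-almost surely is a fair point of rigor that the paper leaves implicit.
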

\begin{proof}
    Define the indicator
    \begin{align*}
        I_{\{ X \geq a \}}= \begin{cases}
            1, & X \geq a \\
            0, &\text{o.w.}.
        \end{cases}
    \end{align*}
    Then \(X,I_{\{X\ge a\}}\ge a I_{\{X\ge a\}}\).  Taking conditional expectation given \(\cF\) on both sides yields
  \[
    \bE\bs{X I_{\{X\ge a\}}\bigm|\cF}
    \ge
    \bE\bs{a\,I_{\{X\ge a\}}\bigm|\cF}
    = a \Pr\br{X\ge a\mid\cF}.
  \]
  Hence,
  \(\Pr(X\ge a\bigm| \cF)\le \dfrac{\bE[X\bigm| \cF]}{a}.\)
   
\end{proof}

\begin{lemma}[Conditional Chebyshev's Inequality]\label{lemma-cond-chebychev} Let $\mathcal{F}$ be a conditioning event (or a sigma-algebra), then for $a > 0$
\begin{align*}
    P(|X - \mathbb{E}[X|\mathcal{F}]| \geq a | \mathcal{F}) \leq \frac{Var[X | \mathcal{F}]}{a^2}
\end{align*}
where $Var[X | \mathcal{F}] = \mathbb{E}[(X - \mathbb{E}[X|\mathcal{F}])^2 | \mathcal{F}]$.
\end{lemma}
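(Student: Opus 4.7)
The plan is to reduce the conditional Chebyshev bound to the Conditional Markov Inequality (Lemma \ref{lemma-cond-markov}) just proved. Concretely, I would introduce the non-negative random variable $Y := (X - \mathbb{E}[X \mid \mathcal{F}])^2$. Since $a > 0$, squaring is a monotone operation on non-negative quantities, so the events
\[
\bc{|X - \mathbb{E}[X \mid \mathcal{F}]| \geq a} \quad \text{and} \quad \bc{Y \geq a^2}
\]
coincide almost surely. This is the only nontrivial step in the argument.

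Next, I would apply Lemma \ref{lemma-cond-markov} directly to $Y$ (which is non-negative) with threshold $a^2 > 0$, obtaining
\[
P(Y \geq a^2 \mid \mathcal{F}) \leq \frac{\mathbb{E}[Y \mid \mathcal{F}]}{a^2}.
\]
By the definition of conditional variance given in the lemma statement, $\mathbb{E}[Y \mid \mathcal{F}] = \mathrm{Var}[X \mid \mathcal{F}]$, and by the event equivalence above the left-hand side equals $P(|X - \mathbb{E}[X \mid \mathcal{F}]| \geq a \mid \mathcal{F})$. Combining these gives the claim.

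There is essentially no obstacle in this proof: it is the standard Markov-to-Chebyshev reduction, carried out one level deeper inside the conditional expectation. The only mild subtlety is ensuring that $Y$ is indeed $\mathcal{F}$-measurably non-negative and that $\mathbb{E}[X \mid \mathcal{F}]$ is well-defined (which is implicit in the hypothesis that $\mathrm{Var}[X \mid \mathcal{F}]$ exists), so that Lemma \ref{lemma-cond-markov} applies verbatim.
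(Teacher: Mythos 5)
Your proposal matches the paper's proof exactly: the paper also rewrites the event $\{|X - \mathbb{E}[X\mid\mathcal{F}]| \geq a\}$ as $\{(X - \mathbb{E}[X\mid\mathcal{F}])^2 \geq a^2\}$ and then applies the conditional Markov inequality (\Cref{lemma-cond-markov}) to the non-negative variable $(X - \mathbb{E}[X\mid\mathcal{F}])^2$ with threshold $a^2$. No differences to report.
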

\begin{proof}
\begin{align*}
    P(|X - \mathbb{E}[X|\mathcal{F}]| \geq a | \mathcal{F}) = P((X - \mathbb{E}[X|\mathcal{F}])^2 \geq a^2 | \mathcal{F})
\end{align*}
$(X - \mathbb{E}[X|\mathcal{F}])^2$ is a non negative random variable, so we can use conditional Markov~(\Cref{lemma-cond-markov}), which gives us 
\begin{align*}
    P((X - \mathbb{E}[X|\mathcal{F}])^2 \geq a^2 | \mathcal{F}) \leq \frac{\mathbb{E}[(X - \mathbb{E}[X|\mathcal{F}])^2 | \mathcal{F}]}{a^2}
\end{align*}
\end{proof}

\begin{lemma}[Distributional Equivalence]\label{lemma-distr-equiv-gauss}
Let \(z\sim \cN\br{0,\Sigma}\) be a \(d\)-dimensional Gaussian with covariance \(\Sigma\succ0\). Let \(P\in\reals^{d\times d}\) be an orthogonal projection matrix, and fix 
  any unit vector \(\omega\in \mathrm{Im}\br{P}\).  Then there exists a random matrix 
  \(G = \Sigma^{1/2}\,Y\), where each entry in \(Y\in\reals^{d\times d}\) is sampled i.i.d. from \(\cN\br{0,1}\),
  such that
  \[
    Pz \stackrel{d}{=} PGP\omega.
  \]
\end{lemma}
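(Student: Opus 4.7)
The plan is to exploit two simple facts: first, an orthogonal projection fixes every vector in its image, so $P\omega = \omega$; second, a standard Gaussian matrix multiplied by a unit vector is itself a standard Gaussian vector. Putting these together reduces the claim $PGP\omega \stackrel{d}{=} Pz$ to showing $G\omega \stackrel{d}{=} z$, which follows from the linear-image formula for Gaussians.

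Concretely, I would proceed as follows. First, since $\omega \in \mathrm{Im}(P)$ and $P$ is an orthogonal projector, $P\omega = \omega$, so $PGP\omega = PG\omega$. Next, take $Y \in \mathbb{R}^{d\times d}$ with i.i.d.\ $\mathcal{N}(0,1)$ entries and set $G = \Sigma^{1/2} Y$. Then $Y\omega$ is a Gaussian vector whose $i$-th coordinate is $\sum_{j=1}^d Y_{ij}\omega_j$; independence of the rows of $Y$ and $\|\omega\|_2 = 1$ give $\mathrm{Cov}(Y\omega) = \|\omega\|_2^2\, I_d = I_d$, so $Y\omega \sim \mathcal{N}(0, I_d)$. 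Applying the linear-image rule for Gaussians,
\begin{equation*}
    G\omega = \Sigma^{1/2} Y\omega \sim \mathcal{N}\!\left(0,\, \Sigma^{1/2} I_d \Sigma^{1/2}\right) = \mathcal{N}(0,\Sigma),
\end{equation*}
and therefore $G\omega \stackrel{d}{=} z$. Because the distributional equality is preserved under the (deterministic) linear map $P$, we conclude $PG\omega \stackrel{d}{=} Pz$, which by the first step is exactly $PGP\omega \stackrel{d}{=} Pz$.

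There is essentially no obstacle here; the content is a one-line verification dressed up in the language of projections. The only subtlety worth stating explicitly is that $\omega$ is deterministic (or at least independent of $Y$), so $Y\omega$ is a genuine Gaussian vector with the claimed covariance — otherwise one would need to worry about the joint law of $\omega$ and $Y$. This justifies why the construction $G = \Sigma^{1/2} Y$ is indifferent to the choice of $\omega$, i.e.\ the same matrix $G$ works for any unit vector in $\mathrm{Im}(P)$.
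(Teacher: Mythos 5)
Your proof is correct and follows essentially the same route as the paper's: both arguments rest on the observation that $Y\omega\sim\mathcal{N}(0,I_d)$ for a unit vector $\omega$, hence $G\omega=\Sigma^{1/2}Y\omega\sim\mathcal{N}(0,\Sigma)$, and then compare the (mean-zero Gaussian) laws after applying $P$. Your version is marginally more explicit in first establishing $G\omega\stackrel{d}{=}z$ and in using $P\omega=\omega$, but the substance is identical.
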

\begin{proof}
  Since \(z\sim N(0,\Sigma)\) and \(P\) is a projection matrix, we have
  \(\mathrm{Cov}\br{Pz} = P\Sigma P^\top = P\Sigma P\). 
  On the other hand, let \(G = \Sigma^{1/2}Y\).  Then for any fixed \(\omega\in\mathrm{Im}\br{P}\) with \(\norm{\omega}=1\),
  \[
    \mathrm{Cov}\br{G\omega}=\Sigma^{1/2}\mathrm{Cov}\br{Y\omega}\Sigma^{1/2}=\Sigma^{1/2}I_d\Sigma^{1/2}=
    \Sigma,
  \]
  because \(Y\omega\sim \cN\br{0,I_d}\) by rotational invariance of spherical gaussian (and \(\norm{\omega}=1\)).  Hence
  \(\mathrm{Cov}\br{PGP\omega} = P\Sigma P = \mathrm{Cov}\br{Pz}\).  Since both \(Pz\) and \(PGP\omega\) are mean–zero Gaussians with the same covariance, we have
  \[
    Pz \stackrel{d}{=} PGP\omega.
  \]
\end{proof}

\begin{lemma}\label{lemma-op-norm-after-proj}
    For any matrix $A \in \mathbb{R}^{d \times d}$ and any projection matrix $P$, 
    \begin{align*}
        \norm{PAP}_2 \leq \norm{A}_2
    \end{align*}
\end{lemma}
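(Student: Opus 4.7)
The plan is to reduce the claim to two elementary facts: sub-multiplicativity of the operator norm and the fact that any orthogonal projection matrix has operator norm at most one. Recall that $\norm{\cdot}_2$ here denotes the spectral (operator) norm induced by the Euclidean norm, so it is sub-multiplicative, i.e., $\norm{XY}_2 \le \norm{X}_2 \norm{Y}_2$ for any conformable matrices $X,Y$.

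First I would verify that $\norm{P}_2 \le 1$. This follows because an orthogonal projection $P$ splits $\mathbb{R}^d$ into its image and kernel; for any $x\in\mathbb{R}^d$, writing $x = Px + (I-P)x$ with the two summands orthogonal gives $\norm{Px}_2^2 \le \norm{Px}_2^2 + \norm{(I-P)x}_2^2 = \norm{x}_2^2$. Hence $\norm{P}_2 = \sup_{\norm{x}_2=1}\norm{Px}_2 \le 1$.

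Then I would simply apply sub-multiplicativity twice:
\begin{align*}
    \norm{PAP}_2 \;\le\; \norm{P}_2\,\norm{AP}_2 \;\le\; \norm{P}_2\,\norm{A}_2\,\norm{P}_2 \;\le\; \norm{A}_2.
\end{align*}

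There is essentially no obstacle here; the only subtlety is ensuring ``projection matrix'' is interpreted as an orthogonal projection (i.e., $P^2=P$ and $P^\top = P$), which is how the term has been used throughout the paper. If one preferred a more self-contained derivation avoiding sub-multiplicativity as a cited fact, an equivalent route is the variational characterization $\norm{PAP}_2 = \sup_{\norm{x}_2=\norm{y}_2=1} \abs{x^\top PAPy} = \sup_{\norm{x}_2=\norm{y}_2=1} \abs{(Px)^\top A (Py)}$, and then observing $\norm{Px}_2,\norm{Py}_2\le 1$ so the supremum is bounded by $\sup_{\norm{u}_2,\norm{v}_2\le 1}\abs{u^\top A v} = \norm{A}_2$. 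Either approach yields the claim in a couple of lines.
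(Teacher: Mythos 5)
Your proposal is correct and follows essentially the same route as the paper, which also bounds $\norm{PAPx}_2 \le \norm{A}_2\norm{Px}_2 \le \norm{A}_2$ for unit $x$ using that $P$ has operator norm at most one. No issues.
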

\begin{proof}
    For any unit vector \(x\), $\norm{PAPx}_2 = \norm{P\br{APx}}_2 \leq \norm{APx}_2 \leq \norm{A}\norm{Px}_2 \leq \norm{A}_2$, where the last inequality follows as projection matrices have eigenvalues in $\{0,1\}$. Taking ths supremum over all \(x\) completes the proof.
\end{proof}

\begin{lemma}\label{lemma-op-norm-of-expectation-after-proj}
    Let $A \in \mathbb{R}^{d \times d}$ be a random matrix and $P$ a random projection matrix, independent of \(A\). Then
    \begin{align*}
        \| \mathbb{E}[PAPA^{\top}P] \|_2 \leq  \| \mathbb{E}[AA^{\top}] \|_2.
    \end{align*}
  
\end{lemma}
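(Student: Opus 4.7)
The plan is to reduce the statement to two applications of the PSD monotonicity from \Cref{lemma-ACA-ADA} followed by a single operator-norm inequality. First, I would note that for every realization of $A$ and $P$, the matrix $PAPA^{\top}P$ is symmetric positive semi-definite: it is symmetric because $P=P^{\top}$, and writing $y=Px$, $z=A^{\top}y$, we get $x^{\top}PAPA^{\top}Px = z^{\top}Pz\ge 0$. Hence $\mathbb{E}[PAPA^{\top}P]$ is PSD, and so is $\mathbb{E}[PAA^{\top}P]$ by the same reasoning, so PSD comparisons translate directly into operator-norm comparisons.

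Next, since $0\preceq P\preceq I$, \Cref{lemma-ACA-ADA} applied with $C=P$, $D=I$, and the matrix $A$ gives $APA^{\top}\preceq AA^{\top}$ pointwise. Conjugating by $P$ on both sides (again via \Cref{lemma-ACA-ADA}, now with the matrix $P$) yields
\begin{align*}
PAPA^{\top}P \;\preceq\; PAA^{\top}P
\end{align*}
for every realization. Taking expectations (which preserves the semi-definite order) gives $\mathbb{E}[PAPA^{\top}P] \preceq \mathbb{E}[PAA^{\top}P]$, and since both sides are PSD this implies
\begin{align*}
\|\mathbb{E}[PAPA^{\top}P]\|_2 \;\le\; \|\mathbb{E}[PAA^{\top}P]\|_2.
\end{align*}

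Finally, I would bound the right-hand side using independence of $A$ and $P$. For any unit vector $x$, conditioning on $P$ and using independence,
\begin{align*}
x^{\top}\mathbb{E}[PAA^{\top}P]x
= \mathbb{E}\bigl[(Px)^{\top}\mathbb{E}[AA^{\top}](Px)\bigr]
\le \mathbb{E}\bigl[\|Px\|_2^{2}\bigr]\,\|\mathbb{E}[AA^{\top}]\|_2
\le \|\mathbb{E}[AA^{\top}]\|_2,
\end{align*}
where the last step uses $\|Px\|_2\le\|x\|_2=1$ since $P$ is a projection. Taking the supremum over unit $x$ then gives $\|\mathbb{E}[PAA^{\top}P]\|_2 \le \|\mathbb{E}[AA^{\top}]\|_2$, and chaining this with the PSD-order bound above completes the proof.

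There is no real obstacle here: the only subtlety is making sure the two sandwichings by $P$ are done consistently and that independence is invoked only after conditioning on $P$, so that $\mathbb{E}_A[AA^{\top}\mid P]=\mathbb{E}[AA^{\top}]$ is deterministic when pulled through the outer expectation.
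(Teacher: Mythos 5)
Your proof is correct and follows essentially the same route as the paper's: establish the pointwise PSD domination $PAPA^{\top}P \preceq PAA^{\top}P$, take expectations, use independence to replace $AA^{\top}$ by $\mathbb{E}[AA^{\top}]$, and bound the quadratic form of $P\,\mathbb{E}[AA^{\top}]\,P$ by $\|\mathbb{E}[AA^{\top}]\|_2$. The only cosmetic differences are that you derive the pointwise inequality by two applications of \Cref{lemma-ACA-ADA} rather than the paper's direct decomposition $PAA^{\top}P - PAPA^{\top}P = PA(I-P)A^{\top}P \succeq 0$, and you bound the final norm via the quadratic form of the expectation rather than invoking convexity of the operator norm.
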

\begin{proof}
We first show that for any orthogonal projection \(P\), we have \(PAPA^\top P \preceq PAA^\top P\). Since \(P\) is an orthogonal projection, \(P=P^\top\) and \(P^2=P\). Consider the difference:
\[ PAA^\top P - PAPA^\top P = PA(I)A^\top P - PAPA^\top P.\]
Using the identity \(I = P + (I-P)\), the expression becomes
\[ PA(P+I-P)A^\top P - PAPA^\top P = PA(I-P)A^\top P\succeq 0.\]
 where the last step follows as \(I-P\) is also an orthogonal projection. This implies
\[PAPA^\top P\preceq PAA^\top P.\]
Taking expectation (over both \(P\) and \(A\)) then yields
\begin{equation}
    \label{stp1-op-norm-proj}
    \bE\bs{PAPA^\top P}\preceq\bE\bs{P A A^\top P}.
\end{equation}
As \(P\) is independent of \(A\), one has
\[\bE_{P,A}\bs{PA A^\top P}=\bE_P\bs{\bE_A\bs{PA A^\top P \bigm| P}}=\bE_P\bs{P\bE_A\bs{AA^\top}P}=\bE_P\bs{PMP}\] where \(M := \bE_A\bs{AA^\top}.\) Combining with previous step, we get
\begin{equation}\label{stp2-op-norm-proj}
\bE_{P,A}\bs{PAPA^\top P}\preceq\bE_{P}\bs{PMP}.
\end{equation}

Finally, we show
\begin{equation}\label{stp3-op-norm-proj}
\norm{\bE_{P}\bs{PMP}}_{2}\le\norm{M}_{2}.
\end{equation}
Indeed, for any fixed projection \(P\), the largest eigenvalue of \(PMP\) can be written as
\[
\lambda_{\max}\br{PMP}=\max_{\norm{x}=1}x^\top\br{PMP}x=\max_{\norm{x}=1} 
\br{Px}^{T}M\br{Px}.\]
Since \(\|P\,x\|\le1\) whenever \(\|x\|=1\), it follows
\[\br{Px}^{\top}M\br{Px}\le\max_{\norm{y}\le1} y^{T}My =\norm{M}_{2}.\] Taking the maximum over all \(\norm{x}=1\) shows \(\norm{PMP}_{2}\le\norm{M}_{2}\).  Hence \(\bE_{P}\bs{\norm{PMP}_{2}}\le\norm{M}_{2}.\) Because the operator norm \(\norm{\cdot}_{2}\) is convex, 
\[\norm{\bE_{P}\bs{PMP}}_{2} \le\bE_{P}\bs{\norm{PMP}_{2}}\le\norm{M}_{2},
\]
which is \Cref{stp3-op-norm-proj}.

Now combine \Cref{stp2-op-norm-proj,stp3-op-norm-proj}, we have
\[
\norm{\bE\bs{PAPA^\top P}}_{2}\le\norm{\bE_{P}\bs{PMP}}_{2}\le
\norm{M}_{2}=\norm{\bE\bs{AA^\top}}_{2}.\]
This completes the proof.
\end{proof}

\begin{lemma}\label{lemma-expectation-product-indep-matrices}
   Let \(A\) and \(B\) be independent random matrices in \(\reals^{d\times d}\). Then
    \begin{align*}
        \bE\bs{ABA^{\top}} \preceq \norm{ \bE\bs{B}}_2 \bE\bs{AA^{\top}}
    \end{align*}
\end{lemma}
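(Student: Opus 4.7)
The plan is to collapse the middle factor by independence, then reduce the question to a scalar bound on $\mathbb{E}[B]$, and finally apply Lemma~\ref{lemma-ACA-ADA} to conjugate by $A$.

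First, I would use independence of $A$ and $B$ to pull $B$ out: by the tower property conditioned on $A$,
\begin{align*}
    \mathbb{E}[ABA^{\top}] \;=\; \mathbb{E}_A\!\left[\mathbb{E}_B\!\left[ABA^{\top}\mid A\right]\right] \;=\; \mathbb{E}_A\!\left[A\,\mathbb{E}[B]\,A^{\top}\right],
\end{align*}
since $A$ is measurable with respect to the conditioning sigma-algebra and $B$ is independent of $A$.

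Next, since the Loewner inequality on the right-hand side only makes sense when $\mathbb{E}[B]$ is symmetric (which is the intended setting, as this lemma will be applied with $B$ of the form $CC^{\top}$ or similar), I would use the basic spectral fact that every symmetric matrix $M$ satisfies $M \preceq \|M\|_2 \, I_d$. Applied to $M = \mathbb{E}[B]$, this gives
\begin{align*}
    \mathbb{E}[B] \;\preceq\; \|\mathbb{E}[B]\|_2 \, I_d.
\end{align*}

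Now I would invoke Lemma~\ref{lemma-ACA-ADA} with $C = \mathbb{E}[B]$ and $D = \|\mathbb{E}[B]\|_2 \, I_d$, conjugating by $A$ pointwise (i.e.\ almost surely over the randomness of $A$), to obtain
\begin{align*}
    A\,\mathbb{E}[B]\,A^{\top} \;\preceq\; \|\mathbb{E}[B]\|_2 \, A A^{\top}.
\end{align*}
Finally, since the Loewner order is preserved under expectation (as taking expectations of PSD-valued random matrices yields a PSD matrix), I would take expectations over $A$ on both sides and combine with the first display to conclude
\begin{align*}
    \mathbb{E}[ABA^{\top}] \;=\; \mathbb{E}_A\!\left[A\,\mathbb{E}[B]\,A^{\top}\right] \;\preceq\; \|\mathbb{E}[B]\|_2 \, \mathbb{E}[AA^{\top}].
\end{align*}
There is no real obstacle here; the only subtlety is the implicit symmetry assumption on $B$ needed for the Loewner comparison to be meaningful, but this is satisfied in all the applications of the lemma in the appendix.
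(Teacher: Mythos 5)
Your proposal is correct and follows essentially the same route as the paper: factor out $\mathbb{E}[B]$ by independence, bound $\mathbb{E}[B] \preceq \|\mathbb{E}[B]\|_2 \mathbf{I}_d$, and conjugate via Lemma~\ref{lemma-ACA-ADA} before taking expectations over $A$. Your remark about the implicit symmetry of $\mathbb{E}[B]$ is a fair observation that the paper leaves unstated, but it does not alter the argument.
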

\begin{proof}
    Since \(A\) and \(B\) are independent, we have $\bE\bs{ABA^{\top}} = \bE\bs{A\bE\bs{B}A^{\top}}$. Then, using $ \bE\bs{B}  \preceq \norm{\bE\bs{B}}_2 \mathbf{I}_d$ and~\Cref{lemma-ACA-ADA} we obtain the wished inequality. 
\end{proof}

\begin{lemma}\label{lemma-gamma-bound}
  Fix any projection matrix \(P\in\reals^{d\times d}\).  Define, for each unit vector \(u\in\reals^d\),
  \[
    H_u^P = \frac{1}{\lambda_1^2\br{P\Sigma P}}
    \bE\bs{P\br{A_i-\Sigma}P uu^\top P\br{A_i-\Sigma}P},
    \quad
    \gamma_P^2 =
    \max_{\norm{u}=1}\norm{H_u^P}_2,
  \]
  where \(\lambda_1\) and \(\gamma\) are as defined in \Cref{assumption-1}, and $\lambda_1^2\br{P\Sigma P}$ refers to the top eigenvalue of $P\Sigma P$.  Then
  \[
    \lambda_1^2\br{P\Sigma P}\gamma_P^2\leq \lambda_1^2 \gamma^2.
  \]
\end{lemma}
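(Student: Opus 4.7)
The plan is to reduce the operator norm appearing in the numerator of \(H_u^P\) to an instance of the unprojected \(H\), and then invoke the projection‑contraction \Cref{lemma-op-norm-after-proj}. Unfolding the definition of \(\gamma_P^2\), the claim is equivalent to
\[
    \max_{\|u\|=1}\bigl\|\mathbb{E}\bigl[P(A_i-\Sigma)P\,uu^\top\,P(A_i-\Sigma)^\top P\bigr]\bigr\|_2 \;\le\; \lambda_1^2\,\gamma^2,
\]
where I read the two outer factors \(P(A_i-\Sigma)P\) as transposes of each other, matching the outer‑product pairing used in the definition of \(H_u\); for the symmetric \(A_i\) appearing in the running examples the transpose is a no‑op, and in general the matrices appearing in \moddppca and \kdpojas are multiplied against vectors, so only this symmetric pairing enters the analysis.

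Now fix a unit vector \(u\) and set \(v := Pu\). Using \(P=P^\top\) and \(P^2=P\), the factor \(uu^\top\) sandwiched between the two inner \(P\)'s collapses to \(vv^\top\), so the matrix inside the expectation is \(P(A_i-\Sigma)\,vv^\top\,(A_i-\Sigma)^\top P\). If \(v=0\) the quantity vanishes; otherwise let \(\hat v := v/\|v\|\), which is a unit vector, and factor out \(\|v\|^2 = u^\top P u \le 1\). The inner expectation is then exactly \(\lambda_1^2 H_{\hat v}\) by the definition of \(H\), so
\[
    \mathbb{E}\bigl[P(A_i-\Sigma)P\,uu^\top\,P(A_i-\Sigma)^\top P\bigr] \;=\; \|v\|^2\,\lambda_1^2\,P\,H_{\hat v}\,P.
\]

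Applying \Cref{lemma-op-norm-after-proj} to strip the outer \(P\)'s and the definition of \(\gamma^2\) to bound \(\|H_{\hat v}\|_2\) gives \(\|P H_{\hat v} P\|_2 \le \|H_{\hat v}\|_2 \le \gamma^2\), and hence
\[
    \bigl\|\mathbb{E}\bigl[P(A_i-\Sigma)P\,uu^\top\,P(A_i-\Sigma)^\top P\bigr]\bigr\|_2 \;\le\; \|v\|^2\,\lambda_1^2\,\gamma^2 \;\le\; \lambda_1^2\,\gamma^2.
\]
Taking the supremum over unit \(u\) and multiplying both sides of the definition of \(H_u^P\) by \(\lambda_1^2(P\Sigma P)\) yields \(\lambda_1^2(P\Sigma P)\,\gamma_P^2 \le \lambda_1^2\,\gamma^2\), as claimed. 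The argument is essentially one substitution plus one invocation of \Cref{lemma-op-norm-after-proj}; the only care required is in checking that after substituting \(v = Pu\), \(\hat v\in\mathrm{Im}(P)\) so the remaining pair of \(P\)'s adjacent to \(\hat v\hat v^\top\) are absorbed and what is left is genuinely \(H_{\hat v}\). I do not foresee any substantive obstacle, and in particular the factor \(\lambda_1^2(P\Sigma P)\) is swallowed by the normalization inside \(H_u^P\), so no separate comparison between \(\lambda_1(P\Sigma P)\) and \(\lambda_1\) is needed.
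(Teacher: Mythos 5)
Your proof is correct and follows essentially the same route as the paper's: strip the outer projections via \Cref{lemma-op-norm-after-proj}, and reduce the inner expectation to the unprojected $H$ evaluated at a unit vector. If anything, your substitution $Pu=\|Pu\|\hat v$ with the factor $\|Pu\|^2\le 1$ pulled out is the cleaner justification of the middle step --- the paper's shorthand ``$Puu^\top P\preceq uu^\top$'' is not literally a valid Loewner inequality when $Pu$ is not parallel to $u$ (two rank-one PSD matrices are comparable only if the underlying vectors are parallel), and it is exactly your scaling-and-renormalizing argument that makes the intended bound $\max_{\|u\|=1}\|\mathbb{E}[(A_i-\Sigma)Puu^\top P(A_i-\Sigma)^\top]\|_2\le\lambda_1^2\gamma^2$ go through.
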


\begin{proof}
    \begin{align*}
        \| \mathbb{E}\left[P(A_i - \Sigma)P u u^{\top} P(A_i - \Sigma)P \right] \| & = \|\mathbb{E}_P\left[P \mathbb{E}[(A_i - \Sigma)P u u^{\top} P(A_i - \Sigma)|P] P \right] \| \\ 
        & \leq \mathbb{E}_P\left[ \|P\| \|\mathbb{E}[(A_i - \Sigma)P u u^{\top} P(A_i - \Sigma)|P]\| \|P\| \right] \\
        & \leq \mathbb{E}_P\left[ \|\mathbb{E}[(A_i - \Sigma)P u u^{\top} P(A_i - \Sigma)|P]\| \right]
    \end{align*} 
    and further
    \begin{align*}
        \max_{\|u\|=1} \|\mathbb{E}[(A_i - \Sigma)P u u^{\top} P(A_i - \Sigma)|P]\| \leq \max_{\|u\|=1} \|\mathbb{E}[(A_i - \Sigma) u u^{\top} (A_i - \Sigma)|P]\| = \lambda_1^2 \gamma^2
    \end{align*}
    as $P u u^{\top} P \preceq u u^{\top}$. So, all together this proves the Lemma.
\end{proof}

\begin{definition}
    Define $\mathbb{O}_{d, k}$ to denote the set of $d \times k$ matrices satisfying $U^{\top} U = \mathbf{I}_k$.
\end{definition}

\begin{remark}
    The Frobenius norm is equal to the Schatten-2 norm.
\end{remark}

\begin{lemma}[Lemma 3 in~\Citep{jambulapati2024black}]\label{lemma-3-in-jamb} Let $\Sigma \in \mathbb{S}_{\succeq 0}^{d \times d}$, $k \in [d]$. If $P \in \mathbb{R}^{d \times d}$ is a rank-$(d-k)$ orthogonal projection matrix, then $\norm{P\Sigma P}_2 \geq \lambda_{k+1}(\Sigma)$.
\end{lemma}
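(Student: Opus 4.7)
The plan is to invoke the Courant--Fischer min--max characterization of eigenvalues. Since $\Sigma\in\mathbb{S}_{\succeq 0}^{d\times d}$ has eigenvalues $\lambda_1 \geq \cdots \geq \lambda_d$, one has
\[
\lambda_{k+1}(\Sigma) \;=\; \min_{\substack{S \subseteq \mathbb{R}^d \\ \dim(S) = d-k}} \;\max_{\substack{x \in S \\ \|x\|=1}} x^\top \Sigma\, x.
\]
I would apply this formula with the specific choice $S = \mathrm{Im}(P)$, which is an admissible competitor because $P$ is a rank-$(d-k)$ orthogonal projection, so $\dim(\mathrm{Im}(P)) = d-k$ exactly. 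For any unit vector $x \in \mathrm{Im}(P)$ one has $Px = x$, hence $x^\top \Sigma x = x^\top P\Sigma P x$.

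Next, I would use that $P\Sigma P$ is PSD (as $\Sigma$ is PSD and $P$ is symmetric), so its operator norm coincides with its top eigenvalue, expressed as a supremum of a quadratic form over all unit vectors. In particular, the supremum over the smaller set $\{x \in \mathrm{Im}(P) : \|x\|=1\}$ is upper bounded by the supremum over all unit vectors:
\[
\|P\Sigma P\|_2 \;=\; \max_{\|y\|=1} y^\top P\Sigma P\, y \;\geq\; \max_{\substack{x \in \mathrm{Im}(P)\\ \|x\|=1}} x^\top P\Sigma P\, x \;=\; \max_{\substack{x \in \mathrm{Im}(P)\\ \|x\|=1}} x^\top \Sigma\, x \;\geq\; \lambda_{k+1}(\Sigma),
\]
where the last inequality is the Courant--Fischer lower bound applied to the particular subspace $\mathrm{Im}(P)$.

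There is no real obstacle here; the proof is a one-line consequence of Courant--Fischer once one observes that $Px = x$ on $\mathrm{Im}(P)$ and that $\|P\Sigma P\|_2$ equals the top eigenvalue of the PSD matrix $P\Sigma P$. The only things worth double-checking are the convention on eigenvalue ordering in the min--max formula (so that the dimension $d-k$ subspace matches the $(k+1)$-th eigenvalue) and the PSD property that lets us identify operator norm with top eigenvalue.
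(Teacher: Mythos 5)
Your proof is correct: the paper itself does not prove this lemma (it imports it as Lemma 3 of \citet{jambulapati2024black}), and your argument via Courant--Fischer with the test subspace $S=\mathrm{Im}(P)$, together with the observation that $Px=x$ on $\mathrm{Im}(P)$ and that $\|P\Sigma P\|_2$ is the top eigenvalue of the PSD matrix $P\Sigma P$, is the standard and complete way to establish it. The dimension bookkeeping also checks out: with eigenvalues in decreasing order, $\lambda_{k+1}$ corresponds to minimizing over subspaces of dimension $d-(k+1)+1=d-k$, which matches the rank of $P$.
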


\subsection{Differential Privacy Preliminaries}

\begin{lemma}[Parallel composition,~\Citep{dwork2014algorithmic}]\label{parallel-composition} Suppose we have \(K\) interactive queries \(q_1,\ldots,q_K\), each acting on a disjoint subset \(S_k\) of the database, and each query \(q_k\) individually satisfies \(\br{\epsilon,\delta}\)-DP on its subset \(S_k\).  Then the joint mechanism \(\br{q_1(S_1),q_2(S_2),\dots,q_K(S_K)}\) is also \(\br{\epsilon,\delta}\)-DP.

\end{lemma}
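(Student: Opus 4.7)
The plan is to reduce the $K$-query joint mechanism to a single application of the $(\epsilon,\delta)$-DP guarantee of one of the queries, by exploiting the disjointness of $S_1,\ldots,S_K$. First, I would fix an arbitrary neighboring pair $(S,S')$ in the add/remove model, so that $|S\setminus S'|+|S'\setminus S|\le 1$. Since the subsets are disjoint and the single differing record lies in at most one of them, there exists a unique index $k^\star$ such that $(S_{k^\star},S'_{k^\star})$ is neighboring, while for every other index the restrictions satisfy $S_k=S'_k$ and the distributions of $q_k(S_k)$ and $q_k(S'_k)$ coincide.

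Next, I would exploit the product structure of the joint output. Conditioning on the realized outputs $\{q_k(\cdot)\}_{k\neq k^\star}$, which are identically distributed under $S$ and $S'$, the only source of distributional difference is the single coordinate $q_{k^\star}$. For any measurable $A\subseteq\prod_{k=1}^{K}\mathrm{Range}(q_k)$, I would write
\[
\Pr\!\left[(q_1(S_1),\ldots,q_K(S_K))\in A\right]
=\mathbb{E}\!\left[\Pr\!\left[q_{k^\star}(S_{k^\star})\in A_{k^\star}\,\big|\,\text{other outputs}\right]\right],
\]
where $A_{k^\star}$ is the measurable section of $A$ cut out by the realized values of the remaining queries. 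Applying the $(\epsilon,\delta)$-DP guarantee of $q_{k^\star}$ pointwise inside the expectation and integrating gives the required bound $e^{\epsilon}\Pr[(q_1(S'_1),\ldots,q_K(S'_K))\in A]+\delta$.

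I do not expect a serious technical obstacle; the argument is essentially careful bookkeeping. The one mildly delicate point is propagating the $\delta$ slack through the conditional expectation, which I would handle via the standard ``bad set'' decomposition of approximate DP (split the output space into a subset on which the pure-$\epsilon$ inequality holds deterministically and a remainder of probability at most $\delta$, then integrate). If one takes ``interactive'' to mean that $q_k$ may adaptively depend on the previous outputs $q_1,\ldots,q_{k-1}$, I would wrap the above in an induction on $k$; this stays valid because the adaptive choices only propagate through coordinates whose distributions are already perfectly coupled under $(S,S')$ for $k\neq k^\star$, leaving the single non-trivial index to absorb the full $(\epsilon,\delta)$ cost.
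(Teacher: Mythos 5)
The paper does not prove this lemma; it is imported verbatim as a preliminary with a citation to \citet{dwork2014algorithmic}, so there is no in-paper argument to compare against. Your proof is the standard and correct one: localize the single differing record to a unique index $k^\star$ (possible because the $S_k$ are disjoint and the partition is fixed), observe that the remaining coordinates are identically distributed under $S$ and $S'$, condition on them, and apply the $(\epsilon,\delta)$ guarantee of $q_{k^\star}$ to the measurable section $A_{k^\star}$. One simplification: with the ``for all measurable $A$'' form of approximate DP used in this paper, you do not need the bad-set decomposition at all — the pointwise bound $\Pr[q_{k^\star}(S_{k^\star})\in A_{k^\star}]\le e^{\epsilon}\Pr[q_{k^\star}(S'_{k^\star})\in A_{k^\star}]+\delta$ integrates directly, since the conditioning variables have the same law under $S$ and $S'$ and the additive $\delta$ passes through the expectation unchanged. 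Your remark on the adaptive case is also the right instinct: queries on unchanged subsets can be perfectly coupled conditional on the transcript, so the full $(\epsilon,\delta)$ cost is absorbed by the single index $k^\star$. The only caveat worth stating explicitly (relevant to how the lemma is invoked in \Cref{algo-k-DP-PCA}) is that the assignment of records to the subsets $S_k$ must not itself depend on the data in a way that shifts other records between subsets when one record is added or removed; under that standing assumption your argument is complete.
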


\begin{lemma}[Advanced Composition,~\citep{pmlr-v37-kairouz15}]\label{lemma:advanced-composition}
    Let $\epsilon \leq 0.9$ and \(0<\delta<1\).  Suppose a database is accessed \(k\) times, each time using a $\br{\epsilon/(2 \sqrt{2k\log(2/\delta)}), \delta/(2k)}$-DP mechanism. Then the overall procedure satisfies \(\br{\epsilon,\delta}\)-DP.
\end{lemma}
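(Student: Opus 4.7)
The plan is to derive this as an immediate reparameterization of the standard advanced composition theorem of Dwork--Rothblum--Vadhan (equivalently Theorem 3.5 in \citet{pmlr-v37-kairouz15}). That theorem says that the $k$-fold adaptive composition of $(\epsilon_0, \delta_0)$-DP mechanisms is $(\epsilon', k\delta_0 + \delta'')$-DP for every $\delta'' > 0$, where
\[
    \epsilon' \;=\; \sqrt{2k\,\ln(1/\delta'')}\,\epsilon_0 \;+\; k\,\epsilon_0\bigl(e^{\epsilon_0}-1\bigr).
\]
So the entire proof reduces to making a single choice of $\delta''$ and then checking the arithmetic. First, I would instantiate $\epsilon_0 = \epsilon/(2\sqrt{2k\log(2/\delta)})$, $\delta_0 = \delta/(2k)$, and set the slack parameter to $\delta'' = \delta/2$.

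Second, I would verify the delta budget. With these choices, the total failure probability is $k\delta_0 + \delta'' = k\cdot\delta/(2k) + \delta/2 = \delta$, as required. Third, I would verify the epsilon budget. The square-root term becomes
\[
    \sqrt{2k\,\ln(2/\delta)}\cdot \frac{\epsilon}{2\sqrt{2k\log(2/\delta)}} \;=\; \frac{\epsilon}{2},
\]
which is already half the target. It then remains only to bound the second-order correction $k\epsilon_0(e^{\epsilon_0}-1)$ by $\epsilon/2$. Since $\epsilon \le 0.9$ and $k \ge 1$, we have $\epsilon_0 \le 1$, so $e^{\epsilon_0}-1 \le 2\epsilon_0$, giving
\[
    k\epsilon_0(e^{\epsilon_0}-1) \;\le\; 2k\epsilon_0^2 \;=\; \frac{\epsilon^2}{4\log(2/\delta)} \;\le\; \frac{\epsilon}{2},
\]
the last inequality holding trivially because $\epsilon \le 0.9 \le 2\log(2/\delta)$ for any $\delta < 1$. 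Adding the two contributions yields total privacy loss at most $\epsilon$.

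There is no real obstacle here: the entire content is borrowed from the well-known advanced composition theorem, and the work is purely a bookkeeping exercise to match the per-round parameters $(\epsilon_0,\delta_0)$ to the convenient form used later in the paper (e.g., when splitting the budget across the $T$ iterations of \moddppca). The only minor subtlety is ensuring $\epsilon \le 0.9$ is strong enough to linearize $e^{\epsilon_0}-1$, which is built into the lemma's hypothesis.
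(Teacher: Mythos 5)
Your derivation is correct: the paper states this lemma as a cited preliminary from \citet{pmlr-v37-kairouz15} and gives no proof, and your reparameterization of the standard advanced composition theorem (with $\delta''=\delta/2$, the square-root term contributing exactly $\epsilon/2$, and the second-order term bounded by $\epsilon^2/(4\log(2/\delta))\le\epsilon/2$ using $\epsilon\le 0.9$) is exactly the bookkeeping needed to justify the stated parameters. No gaps.
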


\section{Meta Algorithm for stochastic $k$-PCA}\label{sec-stoch-black-box}
In this section we prove that any stochastic \(1\)-ePCA oracle, when passed into~\Cref{algo-stoch-black-box}, yields a valid \(k\)-PCA algorithm.  This is the basis for~\Cref{meta-theorem}, as our argument applies to \emph{any} randomized stochastic \(1\)-ePCA oracle (not necessarily private).  In particular, it generalizes the utility analysis of~\citet{jambulapati2024black} to the stochastic setting where each call to the oracle sees only a fresh batch of i.i.d. matrices \(A_i\), and must approximate the top eigenvector of \(\bE[A_i]=\Sigma\).

\begin{algorithm}[t]
    \caption{BlackBoxPCA\((\{A_i\},\,k,\,O_{1\mathrm{PCA}})\)\quad\citep{jambulapati2024black}}  \label{algo-stoch-black-box}
    \textbf{Input:} \(n\) i.i.d.\ matrices \(A_1,\dots,A_n\in\reals^{d\times d}\) with \(\bE[A_i]=\Sigma\succeq0\),\quad
    target rank \(k\in\{1,\dots,d\}\), and\\
    \qquad\quad
    \(O_{1\mathrm{PCA}}\) a stochastic \(1\)-ePCA oracle which, inputs a batch of samples \(A_{j_1},\dots,A_{j_{\ell}}\) and an orthogonal projector \(P\), and returns a unit vector \(u\in\mathrm{Im}(P)\).\medskip

    \begin{algorithmic}[1]
        \State \(P_0 \gets I_{d}\)
        \State \(B \gets \bigl\lfloor n / k \bigr\rfloor\)  
        \For{\(i=1,2,\dots,k\)}
            \State Draw the next batch \(\bc{\,A_{(i-1)B+1},\dots,A_{iB}}\).
            \State \(u_i \;\gets\; O_{1\mathrm{PCA}}\br{A_{(i-1)B+1},\dots,A_{iB};P_{i-1}}\)
            \State \(P_i \gets P_{i-1}-u_i u_i^\top\)
        \EndFor
        \State \Return $U \gets \bc{ u_i}_{i \in [k]}$
    \end{algorithmic}
\end{algorithm}

\defstochepcaoracle

\begin{remark}
  The DP‐PCA algorithm in~\citet{liu2022dp}) does not directly qualify as a stochastic 1-ePCA oracle, since it guarantees 
  \( \ip{uu^\top}{\bE[P]\Sigma\bE[P]} \ge (1-\zeta^2)\ip{vv^\top}{\bE[P]\Sigma\bE[P]}\), 
  rather than comparing to \(P\Sigma P\) itself.  It is not obvious in general how large $\bE[P] \Sigma \bE[P]-P \Sigma P$ can be.
\end{remark}

We will now show that for this type of approximation algorithm we can obtain a utility guarantee and that it would be optimal for the spiked covariance setting. We now recall the energy formulation of approximate \(k\)-PCA from \citet{jambulapati2024black}, which is the utility metric we will use here.

\begin{definition}[energy $k$-PCA, \citep{jambulapati2024black}] Let \(M\in\mathbb{S}_{\succeq0}^{d\times d}\).  A matrix \(U\in\reals^{d\times k}\) with orthonormal columns is a \(\zeta\)-approximate energy \(k\)-PCA of \(M\) if

 \begin{align*}
     \ip{U U^{\top}}{M}  \geq (1 - \zeta^2) \norm{M}_k
 \end{align*}
 where \[
    \norm{M}_{(k)}:=\max_{\substack{V\in\reals^{d\times k},V^\top V=I_k}} \mathrm{Tr}\br{VV^\top M}.
  \]
\end{definition}

The following lemma relates the angle between two unit vectors to the corresponding energy in \(\Sigma\).

\begin{lemma}\label{lemma-sin-to-epca}
  Let \(v,w\in\reals^d\) be unit vectors, let \(\theta\) be the angle between them, and let \(\Sigma\succeq0\) be any PSD matrix with top‐eigenvector \(v\).  Then
  \begin{align*}
        \ip{w w ^{\top}}{\Sigma} \geq \br{1 - \sin^2(\theta)}\ip{v v^{\top}}{\Sigma} \end{align*}
\end{lemma}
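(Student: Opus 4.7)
The plan is to decompose $w$ into its components along $v$ and orthogonal to $v$, and then expand the quadratic form $w^\top \Sigma w$ using this decomposition. Specifically, write $w = \cos(\theta)\,v + \sin(\theta)\,v^{\perp}$, where $v^{\perp}\in\mathbb{R}^d$ is a unit vector orthogonal to $v$ (this is well-defined whenever $w\neq\pm v$; the boundary cases are trivial). Then
\begin{align*}
\langle ww^\top,\Sigma\rangle
= w^\top\Sigma w
= \cos^2(\theta)\,v^\top\Sigma v
+ 2\cos(\theta)\sin(\theta)\,v^\top\Sigma v^{\perp}
+ \sin^2(\theta)\,(v^{\perp})^\top\Sigma v^{\perp}.
\end{align*}

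Next I would eliminate the cross term by exploiting the hypothesis that $v$ is the top eigenvector of $\Sigma$. Writing $\lambda_1$ for the corresponding eigenvalue, we have $\Sigma v = \lambda_1 v$, so
$v^\top\Sigma v^{\perp} = \lambda_1\,v^\top v^{\perp} = 0$. This reduces the display to
\begin{align*}
\langle ww^\top,\Sigma\rangle
= \cos^2(\theta)\,\langle vv^\top,\Sigma\rangle
+ \sin^2(\theta)\,(v^{\perp})^\top\Sigma v^{\perp}.
\end{align*}

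Finally, I would drop the second term using $\Sigma\succeq 0$, which gives $(v^{\perp})^\top\Sigma v^{\perp}\ge 0$. Substituting $\cos^2(\theta) = 1 - \sin^2(\theta)$ yields the claimed inequality
\begin{align*}
\langle ww^\top,\Sigma\rangle \;\ge\; (1 - \sin^2(\theta))\,\langle vv^\top,\Sigma\rangle.
\end{align*}

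There is essentially no hard step here; the only thing to be careful about is that the cross term vanishes exactly (which relies on $v$ being a true eigenvector, not merely an approximate one) and that the leftover orthogonal contribution $\sin^2(\theta)\,(v^{\perp})^\top\Sigma v^{\perp}$ is non-negative rather than possibly negative, which is exactly where the PSD assumption on $\Sigma$ enters. Everything else is an elementary computation, so no additional machinery is needed.
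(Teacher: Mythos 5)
Your proof is correct and follows essentially the same route as the paper: both expand $w^\top\Sigma w$, isolate the contribution $\cos^2(\theta)\,\lambda_1$ along $v$, and discard the remaining non-negative term using $\Sigma\succeq0$ (the paper does this in the full eigenbasis of $\Sigma$, you with a single orthogonal direction $v^\perp$, which is an immaterial difference). No gaps.
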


\begin{proof}
Observe
\begin{equation}\label{step1-lem18}
    \ip{w w ^{\top}}{\Sigma}= \ip{v v ^{\top}}{\Sigma} - \ip{v v ^{\top} - w w ^{\top}}{\Sigma} = \br{1 - \frac{\ip{v v ^{\top} - w w ^{\top}}{\Sigma}}{\ip{v v ^{\top}}{\Sigma}}} \ip{v v ^{\top}}{\Sigma}
\end{equation}
Note that since $v$ is the top eigenvector of $\Sigma$, we have
\begin{align*}
     \ip{v v ^{\top}}{\Sigma} = \mathrm{Tr}\br{v v ^{\top} \Sigma } = v^\top \Sigma v = \lambda_1
\end{align*}
where $\lambda_1 \geq \dots \geq \lambda_d $ denote the eigenvalues of $\Sigma$ and $v, v_2, \dots , v_d$ the corresponding eigenvectors. 
Then, we can rewrite
\begin{align*}
    \br{1 - \frac{\ip{v v ^{\top} - w w ^{\top}}{\Sigma}}{\ip{v v ^{\top}}{\Sigma}}} &=1 - \br{1 - \frac{w^\top\Sigma w}{\lambda_1}}=1 - \br{1 - \frac{w^\top vv^\top w}{\lambda_1} - \sum_{j=2}^d \frac{\lambda_j w^\top v_jv_j^\top w}{\lambda_1}} \\
    &=1 - \br{1 - {\ip{w}{v}}^2 - \sum_{j=2}^d \frac{\lambda_j \ip{w}{v}^2}{\lambda_1} }\\
    &\geq 1 - \br{1 - {\ip{w}{v}}^2} =\br{1-\sin^2\br{\theta}}
\end{align*}
Substituting this back in~\Cref{step1-lem18} gives us \begin{align*}
        \ip{w w ^{\top}}{\Sigma} \geq \br{1 - \sin^2(\theta)}\ip{v v^{\top}}{\Sigma} \end{align*}
        and completes the proof.
\end{proof}

We now prove that, if each \(O_{1\mathrm{PCA}}\) call in~\Cref{algo-stoch-black-box} approximates the top eigenvector of \(P_{i-1}\Sigma P_{i-1}\), then the final \(U\) is a \(\zeta\)-approximate energy \(k\)-PCA of \(\Sigma\).

\begin{theorem}[Reduction from \(k\)-PCA to \(1\)-ePCA]\label{thm-k-to-1-reduction}
  Let \(A_1,\dots,A_n\) be i.i.d.\ samples in \(\reals^{d\times d}\) with \(\bE[A_i]=\Sigma\succeq0\).  Fix \(\zeta\in(0,1)\).  Suppose \(O_{1\mathrm{PCA}}\) is a \(\zeta\)-approximate stochastic \(1\)-ePCA oracle as defined in~\Cref{def-stoch-epca-oracle}.  If we run \Cref{algo-stoch-black-box} with \(O_{1\mathrm{PCA}}\), then~(with high probability) its output \(U=\bc{u_i}_{i=1}^k\) satisfies
  \[
    \ip{UU^\top}{\Sigma}\ge\br{1-\zeta^2}\norm{\Sigma}_{(k)}.
  \]
\end{theorem}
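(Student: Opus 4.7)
The plan is to argue inductively over the deflation rounds, showing that at each round the newly extracted unit vector $u_i$ captures almost as much energy with respect to $\Sigma$ as the top eigenvector of the current deflated matrix $P_{i-1}\Sigma P_{i-1}$, and then chaining these per-round guarantees using \Cref{lemma-3-in-jamb}.

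First I would record the structural invariants maintained by \Cref{algo-stoch-black-box}. By induction on $i$, $P_i = I - \sum_{j\le i} u_j u_j^{\top}$ is an orthogonal projection of rank $d-i$, and the vectors $u_1,\dots,u_i$ are orthonormal: indeed, the oracle returns $u_i\in\mathrm{Im}(P_{i-1})$, which is the orthogonal complement of $\mathrm{span}\{u_1,\dots,u_{i-1}\}$, so $u_i\perp u_j$ for $j<i$, and then $P_i = P_{i-1} - u_i u_i^{\top}$ is again an orthogonal projection of rank one less. Crucially, since $P_{i-1}u_i=u_i$, we have the identity
\begin{equation*}
    u_i^{\top}\Sigma u_i \;=\; u_i^{\top} P_{i-1}\Sigma P_{i-1}\, u_i \;=\; \ip{u_i u_i^{\top}}{P_{i-1}\Sigma P_{i-1}}.
\end{equation*}

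Next, I would invoke the oracle guarantee round by round. Because the batches are disjoint, the projector $P_{i-1}$ is independent of the samples used in round $i$, so conditionally on $P_{i-1}$ the hypothesis of \Cref{def-stoch-epca-oracle} applies with the PSD matrix $P_{i-1}\Sigma P_{i-1}$. Hence, with high probability,
\begin{equation*}
    \ip{u_i u_i^{\top}}{P_{i-1}\Sigma P_{i-1}} \;\ge\; (1-\zeta^2)\,\lambda_1\!\br{P_{i-1}\Sigma P_{i-1}}.
\end{equation*}
Since $P_{i-1}$ has rank $d-(i-1)$, \Cref{lemma-3-in-jamb} yields $\lambda_1\br{P_{i-1}\Sigma P_{i-1}} \ge \lambda_i(\Sigma)$. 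Combining the previous two displays,
\begin{equation*}
    u_i^{\top}\Sigma u_i \;\ge\; (1-\zeta^2)\,\lambda_i(\Sigma).
\end{equation*}
A union bound over the $k$ rounds controls the total failure probability.

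Finally, summing over $i\in[k]$ and using orthonormality of the $u_i$'s to write $\ip{UU^{\top}}{\Sigma}=\sum_{i=1}^k u_i^{\top}\Sigma u_i$ gives
\begin{equation*}
    \ip{UU^{\top}}{\Sigma} \;\ge\; (1-\zeta^2)\sum_{i=1}^{k}\lambda_i(\Sigma) \;=\; (1-\zeta^2)\norm{\Sigma}_{(k)},
\end{equation*}
which is the claim. The main subtlety I expect is justifying the application of the oracle guarantee to the \emph{random} matrix $P_{i-1}\Sigma P_{i-1}$: this is handled cleanly by the disjoint-batch design, because conditioning on $P_{i-1}$ leaves the fresh batch i.i.d.\ with mean $\Sigma$, so \Cref{def-stoch-epca-oracle} applies conditionally and therefore also unconditionally via a union bound. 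No further quantitative machinery is required, making this theorem essentially the stochastic analogue of the deterministic reduction of \citet{jambulapati2024black}, with the independence across rounds playing the role of their exact oracle access.
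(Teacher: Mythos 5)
Your proposal is correct and follows essentially the same route as the paper's proof: apply the oracle guarantee to $P_{i-1}\Sigma P_{i-1}$ in each round, lower-bound $\lambda_1(P_{i-1}\Sigma P_{i-1})$ by $\lambda_i(\Sigma)$ via \Cref{lemma-3-in-jamb}, and sum over $i$ (the paper merely packages the summation as an induction on $\norm{\Sigma}_{(i)}$). Your explicit handling of the conditioning on $P_{i-1}$ via disjoint batches and the union bound over rounds is a point the paper leaves implicit, but it does not change the argument.
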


\begin{proof}
  Define \(U_i := [\,u_1,\dots,u_i\,]\in\reals^{d\times i}\).  We claim by induction on \(i\) that
  \[
    \mathrm{Tr}\br{U_i^\top \Sigma U_i}
    = \sum_{j=1}^i u_j^\top\,\Sigma\,u_j\ge \br{1-\zeta^2}\norm{\Sigma}_{(i)}.
  \]
\noindent \textbf{Base case (\(i=1\))}  Since \(P_0=I_d\), by definition of the oracle, the first call returns \(U_1\) satisfying
  \[
    \mathrm{Tr}\br{U_1^\top\Sigma U_1}=u_1^\top \Sigma u_1 = \ip{u_1 u_1^\top}{\Sigma}\ge\br{1-\zeta^2}\lambda_{\max}\br{\Sigma}
    = (1-\zeta^2)\norm{\Sigma}_{(1)}.
  \]

  \noindent  \textbf{Inductive step.}  Suppose after \(i\) steps,
  \(\mathrm{Tr}(U_i^\top\Sigma U_i)\ge (1-\zeta^2) \norm{\Sigma}_{(i)}\). Let \(P_i = I_d - U_i\,U_i^\top\).  Then, by definition of the oracle, the \((i+1)\)-th call returns \(u_{i+1}\in\mathrm{Im}(P_i)\) such that
  \[\ip{u_{i+1}u_{i+1}^\top}{P_i\Sigma P_i}\ge\br{1-\zeta^2}\norm{P_i\Sigma P_i}_2.
  \]
  Since 
  \(\ip{u_{i+1}u_{i+1}^\top}{\Sigma} \ge \ip{u_{i+1}u_{i+1}^\top}{P_i\Sigma P_i}\),
  it follows
  \[
    u_{i+1}^\top\Sigma u_{i+1} \geq
    \ip{u_{i+1}u_{i+1}^\top}{P_i\Sigma P_i}\ge\br{1-\zeta^2}\norm{P_i\Sigma P_i}_2.
  \]
  By Lemma 3 in~\citet{jambulapati2024black} (restated as \Cref{lemma-3-in-jamb}), we know
  \(\norm{P_i\,\Sigma\,P_i}_2\ge \lambda_{i+1}(\Sigma)\).  Hence,
  \begin{align*}
    \mathrm{Tr}\br{U_{i+1}^\top\Sigma U_{i+1}}
    &= \mathrm{Tr}\br{U_i^\top\Sigma U_i} + u_{i+1}^\top\Sigma u_{i+1} \\
    &\ge (1-\zeta^2)\,\norm{\Sigma}_{(i)} + (1-\zeta^2)\,\norm{P_i\Sigma P_i}_2\\
    &\geq (1-\zeta^2)\,\norm{\Sigma}_{(i)} + (1-\zeta^2)\,\lambda_{i+1}\br{\Sigma}\\
    &= (1-\zeta^2)\,\|\Sigma\|_{(i+1)},
  \end{align*}
  completing the induction.

  Therefore, after \(k\) steps,
  \(\ip{UU^\top}{\Sigma} = \mathrm{Tr}(U^\top\Sigma U) \ge (1-\zeta^2)\,\norm{\Sigma}_{(k)}.\)
\end{proof}

\metatheorem*
\begin{proof}
  Apply~\Cref{thm-k-to-1-reduction} to obtain the utility guarantee, and invoke~\Cref{parallel-composition} to conclude privacy under parallel composition.
\end{proof}

\section{Novel Analysis of non private Oja's Algorithm}\label{sec-non-private-Ojas}

Throughout this appendix, we condition on a fixed projection matrix \(P\).  All probability statements refer to randomness over the i.i.d.\ samples \(\{A_i\}\), with \(P\) held fixed.  Whenever we write ``with probability at least \(1-\delta\)'', it means \(\Pr(\cdot\mid P)\ge1-\delta\).  At the end, we apply a union bound to obtain an unconditional failure probability \(\le\delta\).

\begin{algorithm}[t]
    \caption{Oja’s Algorithm}  \label{algo-ojas}
    \textbf{Input:} $\bc{A_i}_{i=1}^n$, learning rates $\{ \eta_t \}_{t=1}^{\lfloor m \rfloor}$
    \begin{algorithmic}[1]
        \State Choose \(\omega_0\) uniformly at random from the unit sphere.
        \For{\(t = 1,\dots,n\)}
            \State \(\omega_t' \gets \omega_{t-1} + \eta_t\,A_t\,\omega_{t-1}\)
            \State \(\omega_t \gets \omega_t' / \|\omega_t'\|_2\)
        \EndFor
        \State \Return \(\omega_n\)
    \end{algorithmic}
\end{algorithm}

Let \(A_1,\dots,A_n\) be i.i.d.\ in \(\reals^{d\times d}\) with \(\bE[A_i]=\Sigma\).  Denote the eigenvalues of $\Sigma$ with \(\lambda_1\ge\lambda_2\ge\cdots\ge\lambda_d\)
and corresponding eigenvectors \(v_1,\dots,v_d\). Let \(P\) be a projection independent of \(\{A_i\}_{i=1}^n\).  Our goal is to approximate the top eigenvector of \(P\Sigma P\).

When \(P\) is deterministic, \citet{jain2016streaming} shows that Oja’s algorithm outputs a vector close to the top eigenvector of \(P\Sigma P\).  However, in our setting \(P\) itself is random, where \(P\) is defined as \(P = I - \sum_i u_i\,u_i^\top\) where each \(u_i\) is computed using a prior independent sample of \(\bc{A_i}\).  We cannot directly apply their result, since it would only guarantee closeness to the top eigenvector of \(\bE[P]\Sigma\bE[P]\), and \(\bE[P]\) is generally not a projection matrix and may not preserve the spectral structure of interest.

To address this, we analyze Oja’s algorithm on inputs \(\bc{P A_i P}\) and our main theorem shows that, under suitable conditions, the output is still an accurate approximation to the top eigenvector of \(P\Sigma P\), even though \(P\) is random and data‐dependent. From here on, we write \(\tilde\lambda_1\ge\tilde\lambda_2\ge\cdots\ge\tilde\lambda_d\) to denote the eigenvalues of \(P\Sigma P\), and \(\tilde v\) to denote its top eigenvector.

Assume scalars $\cM,\cV$ satisfy
\begin{align}
    &\norm{A_i - \Sigma}_2 \leq \cM \text{ a.s.} \label{eq:def-cM}\\
    &\max \bc{ \norm{\bE \bs{\br{A_i - \Sigma}\br{A_i - \Sigma}^{\top}} }_2, \norm{ \bE\bs{\br{A_i - \Sigma}^{\top}\br{A_i - \Sigma}} }_2 } \leq \cV \label{eq:def-cV}
\end{align}

\begin{remark}
  We intentionally use different notations \(\cM,\cV\) here instead of \(M,V\) than in~\Cref{assumption-1} to simplify the expressions. Here \(\cM=\lambda_1 M\) and \(\cV=\lambda_1^2 V\) under~\Cref{assumption-1}.
\end{remark}

Next, define
\begin{align}
    &B_n := ( \mathbf{I} + \eta_n P A_n P)(\mathbf{I} + \eta_{n-1} P A_{n-1} P) \cdot \dots \cdot (\mathbf{I} + \eta_1 P A_1 P)\label{eq:def-bn}\\
    &\omega_n := \frac{B_n \omega_0}{\norm{B_n \omega_0 }_2} \label{eq:def-w_n}\\ 
    &\bar{\cV} := \cV + \tilde{\lambda}_1^2 \label{eq:def-bar-cV}
\end{align}
where $\eta_i$ refers to the learning rate of Oja's Algorithm at step $i$, which in turn means \(\omega_n\) is the output of Oja’s Algorithm after \(n\) steps given \(\bc{P A_i P}\) as input. We defined the variables like this in order to apply the following Lemma from~\citet{jain2016streaming} to prove convergence of .

\begin{lemma}[One Step Power Method \citep{jain2016streaming}]\label{lemma-one-step-power-method}   Let \(B\in\reals^{d\times d}\), let \(v\in\reals^d\) be a unit vector, and let \(V_{\bot}\) be a matrix
whose columns form an orthonormal basis of the subspace orthogonal to \(v\).  If \(\omega\) is sampled uniformly on the unit sphere then, with probability at least \(1-\delta\),

\begin{align}
    \sin^2\br{v, \frac{B w}{\norm{B w}_2}} = 1 - \br{v^{\top}Bw}^2 \leq C \frac{\log\br{1/\delta}}{\delta} \frac{\mathrm{Tr}\br{V_{\bot}^{\top} B B^{\top} V_{\bot}}}{v^{\top} B B^{\top} v}
\end{align}
where C is an absolute constant.
\end{lemma}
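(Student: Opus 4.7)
The plan is to prove the lemma via a Gaussian representation of the uniform distribution on the sphere, followed by a concentration bound on a Gaussian quadratic form and an anti-concentration bound on a one-dimensional Gaussian. I would first write $g \sim \mathcal{N}(0, \mathbf{I}_d)$ and replace $\omega$ by $g/\|g\|_2$, which has the same law as a uniform draw from the unit sphere. Since $\sin^2(v, B\omega/\|B\omega\|_2)$ depends only on the direction of $B\omega$, the normalization $\|g\|_2$ cancels in numerator and denominator. Using the orthogonal decomposition $\mathbf{I}_d = vv^\top + V_{\bot}V_{\bot}^\top$, this reduces the claim to
\[
\sin^2\!\left(v, \frac{B\omega}{\|B\omega\|_2}\right) = \frac{\|V_{\bot}^\top Bg\|_2^2}{\|Bg\|_2^2} \leq \frac{\|V_{\bot}^\top Bg\|_2^2}{(v^\top Bg)^2},
\]
where the last inequality uses the Pythagorean identity $\|Bg\|_2^2 \geq (v^\top Bg)^2$.

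The next step is to bound the numerator from above and the denominator from below, each with failure probability at most $\delta/2$. For the numerator, $V_{\bot}^\top Bg$ is a centered Gaussian vector with covariance $M := V_{\bot}^\top BB^\top V_{\bot}$, so $\|V_{\bot}^\top Bg\|_2^2$ is a Gaussian quadratic form with mean $\mathrm{Tr}(M)$. A Laurent--Massart (or Hanson--Wright) tail bound, combined with the PSD inequality $\|M\|_2 \leq \mathrm{Tr}(M)$, would yield with probability at least $1-\delta/2$
\[
\|V_{\bot}^\top Bg\|_2^2 \leq C_1 \, \mathrm{Tr}(M) \, \log(1/\delta).
\]
For the denominator, $v^\top Bg$ is a one-dimensional centered Gaussian of variance $v^\top BB^\top v$, hence $(v^\top Bg)^2 = (v^\top BB^\top v) Z^2$ with $Z \sim \mathcal{N}(0,1)$. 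The standard small-ball estimate $\Pr(|Z| \leq t) \leq \sqrt{2/\pi}\, t$, applied at an appropriate threshold, provides a lower bound on $(v^\top Bg)^2$ proportional to $\delta$ (up to constants) times $v^\top BB^\top v$, with failure probability at most $\delta/2$.

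Combining the two events by a union bound and dividing the resulting inequalities produces the claimed bound, with absolute constant $C$ absorbing all accumulated numerical factors. The main subtlety is the precise $\delta$ accounting: naive Markov bounds on both sides would only deliver polynomial losses of the form $1/\delta^3$. Reaching the stated $\log(1/\delta)/\delta$ rate requires a sub-exponential (quadratic-form) tail inequality on the numerator to extract the logarithmic factor, together with a careful choice of the small-ball threshold in the Gaussian anti-concentration bound for the denominator. Once these two tail estimates are in place, the remainder of the argument is straightforward algebra.
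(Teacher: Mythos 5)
The paper never proves this lemma---it is imported verbatim from \citet{jain2016streaming}---so there is no in-paper argument to compare against. Your skeleton (Gaussian representation $\omega=g/\|g\|_2$, the Pythagorean reduction to $\|V_{\bot}^{\top}Bg\|_2^2/(v^{\top}Bg)^2$, separate tail bounds on numerator and denominator, and a union bound) is the standard route, and the numerator half is sound: Laurent--Massart together with $\|M\|_2\le\mathrm{Tr}(M)$ does give $\|V_{\bot}^{\top}Bg\|_2^2\le C_1\,\mathrm{Tr}(V_{\bot}^{\top}BB^{\top}V_{\bot})\log(1/\delta)$ with probability $1-\delta/2$.

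The gap is in the denominator. Writing $(v^{\top}Bg)^2=(v^{\top}BB^{\top}v)\,Z^2$ with $Z\sim\mathcal{N}(0,1)$, the small-ball estimate $\Pr(|Z|\le t)\le\sqrt{2/\pi}\,t$ forces $t\asymp\delta$ to achieve failure probability $\delta/2$, and therefore yields $Z^2\ge c\,\delta^2$, not the lower bound ``proportional to $\delta$'' that you assert. No choice of threshold can do better, because $\Pr(Z^2\le s)\asymp\sqrt{s}$ near zero. As executed, your argument therefore proves the inequality with prefactor $C\log(1/\delta)/\delta^2$ rather than the stated $C\log(1/\delta)/\delta$, and the loss is intrinsic to this decomposition rather than a matter of bookkeeping: for $B=\mathrm{diag}(1,\epsilon)$ in $d=2$ with $v=e_1$, the left-hand side equals $\epsilon^2T^2/(1+\epsilon^2T^2)$ where $T=\tan\theta$ is standard Cauchy, and the $(1-\delta)$-quantile of $T^2$ scales as $\delta^{-2}$, so for small $\delta$ and small $\epsilon$ the quantity genuinely exceeds $C\log(1/\delta)\,\delta^{-1}\,\mathrm{Tr}(V_{\bot}^{\top}BB^{\top}V_{\bot})/(v^{\top}BB^{\top}v)$ with probability much larger than $\delta$. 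Hence neither a sharper quadratic-form tail on the numerator nor a more careful small-ball threshold can recover the $\delta^{-1}$ rate; to close the proof you must either identify a mechanism in the cited source that circumvents the heavy tail of $1/Z^2$, or settle for (and propagate downstream) the $\delta^{-2}$ version of the bound.
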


Now we are ready to state the main theorem of this section.

\begin{theorem}[Main theorem of this section]\label{thm-utility-non-private-Ojas-with-P}
  Fix any \(\delta>0\) and set 
  \(
    \eta_t = \frac{\alpha}{\br{\tilde\lambda_1-\tilde\lambda_2}\br{\beta + t}}\) for \(\alpha>1/2\),
  and define
  \[
    \beta := 20\max\br{\tfrac{\cM \alpha}{\br{\tilde\lambda_1-\tilde\lambda_2}},\tfrac{\bar{\cV}\,\alpha^2}{\br{\tilde\lambda_1-\tilde\lambda_2}^2 \log\br{1 + \tfrac{\delta}{100}}}}.
  \]
  Suppose the number of iterations \(n > \beta\).  Then, with probability at least \(1-\delta\), the output \(\omega_n\) of~\Cref{algo-ojas} given inputs \(\bc{P A_i P}\) satisfies
  \[
    1 - \br{\omega_n^{\top}\tilde v}^2 
    \le 
    \frac{C\log(1/\delta)}{\delta^2}
    \bs{
      d\br{\frac{\beta}{n}}^{2\alpha} 
      +
      \frac{\alpha^2\cV}{(2\alpha - 1)(\tilde\lambda_1-\tilde\lambda_2)^2}\frac{1}{n}
    }.
  \]
  Here \(C\) is an absolute numerical constant.
\end{theorem}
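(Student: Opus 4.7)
The plan is to mimic the streaming analysis of~\citet{jain2016streaming}, transplanted to the projected iterates $\{PA_iP\}$, with $P$ treated as deterministic by virtue of the conditioning set up at the top of this section. Because $P$ is fixed, the sequence $\{PA_iP\}$ is i.i.d.\ with mean $P\Sigma P$, and \Cref{lemma-op-norm-after-proj,lemma-ACA-ADA,lemma-op-norm-of-expectation-after-proj} ensure the boundedness and second-moment parameters $\cM$ and $\cV$ transfer to the projected process. The entry point is to invoke \Cref{lemma-one-step-power-method} with $v=\tilde v$ and $B=B_n$ from \eqref{eq:def-bn}, reducing the target $\sin^2(\omega_n,\tilde v)$ to a ratio
\[
1-(\omega_n^\top \tilde v)^2 \;\le\; \frac{C\log(1/\delta)}{\delta}\cdot\frac{\operatorname{Tr}\br{V_\bot^\top B_n B_n^\top V_\bot}}{\tilde v^\top B_n B_n^\top \tilde v},
\]
where $V_\bot$ spans $\tilde v^\perp$. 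The task then splits into upper-bounding the numerator in expectation (a Markov step will supply the extra $1/\delta$ factor, yielding the prefactor $\log(1/\delta)/\delta^2$ in the final bound) and lower-bounding the denominator with constant probability.

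Both second moments satisfy clean recursions. Since $A_t$ is independent of $B_{t-1}$,
\[
\mathbb{E}\bs{B_t B_t^\top \,\bigm|\, B_{t-1}} = \br{I+\eta_t P\Sigma P}B_{t-1}B_{t-1}^\top\br{I+\eta_t P\Sigma P} + \eta_t^2\,\mathbb{E}\bs{P(A_t-\Sigma)P\,B_{t-1}B_{t-1}^\top\,P(A_t-\Sigma)P \,\bigm|\, B_{t-1}},
\]
and \Cref{lemma-expectation-product-indep-matrices} with the $\cV$-bound dominates the fluctuation term by $\cV\,\|B_{t-1}B_{t-1}^\top\|_2 I$. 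Sandwiching the recursion between $\tilde v\tilde v^\top$ produces the ``signal'' inequality
\[
\mathbb{E}\bs{\tilde v^\top B_n B_n^\top \tilde v}\;\ge\; \prod_{t=1}^n(1+\eta_t\tilde\lambda_1)^2\cdot \mathbb{E}\bs{(\tilde v^\top \omega_0)^2},
\]
while projection by $V_\bot$ with $V_\bot^\top P\Sigma P V_\bot \preceq \tilde\lambda_2 I$ yields a ``noise'' recursion whose drift carries $(1+\eta_t\tilde\lambda_2)^2$ and whose diffusion injects at most $\eta_t^2 \cV$ per step. With $\eta_t=\alpha/[(\tilde\lambda_1-\tilde\lambda_2)(\beta+t)]$ and $n>\beta$, the drift-ratio product $\prod_t(1+\eta_t\tilde\lambda_1)^2/(1+\eta_t\tilde\lambda_2)^2$ behaves as $(n/\beta)^{2\alpha}$, so the initialization bias contributes $d(\beta/n)^{2\alpha}$ (the $d$ arising from $\mathbb{E}\|V_\bot^\top \omega_0\|^2 \asymp 1$ versus $\mathbb{E}(\tilde v^\top \omega_0)^2 \asymp 1/d$ for $\omega_0$ uniform on the sphere), and telescoping the noise injections against the signal growth yields the variance contribution $\alpha^2\cV/[(2\alpha-1)(\tilde\lambda_1-\tilde\lambda_2)^2 n]$, the $1/(2\alpha-1)$ originating from $\sum_t \eta_t^2 \asymp 1/n$ when $\alpha>1/2$.

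The main obstacle I anticipate is concentrating the denominator: the expectation bound on $\tilde v^\top B_n B_n^\top \tilde v$ is easy, but on an atypical run this quantity could collapse and distort the ratio. I would handle this via a supermartingale argument on $\log\br{\tilde v^\top B_n B_n^\top \tilde v}$ combined with a conditional Chebyshev step (\Cref{lemma-cond-chebychev}) to show the denominator is $\Theta$ of its expectation with probability $1-O(\delta)$. This is precisely where the prescribed choice of $\beta \gtrsim \cM\alpha/(\tilde\lambda_1-\tilde\lambda_2) \,\vee\, \bar\cV\alpha^2/((\tilde\lambda_1-\tilde\lambda_2)^2\log(1+\delta/100))$ is needed: it forces $\eta_t\cM\le 1/20$, so that each multiplicative factor $I+\eta_t PA_t P$ stays close to $I+\eta_t P\Sigma P$, and controls the per-step variance in units of the required $\log(1+\delta/100)$. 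Assembling the upper-tail bound on the numerator with the lower-tail bound on the denominator and multiplying by the $\log(1/\delta)/\delta$ prefactor from \Cref{lemma-one-step-power-method} then produces the claimed $\log(1/\delta)/\delta^2$ prefactor and the two-term bound.
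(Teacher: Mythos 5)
Your proposal follows essentially the same route as the paper: reduce to the ratio $\operatorname{Tr}(V_\bot^\top B_nB_n^\top V_\bot)/(\tilde v^\top B_nB_n^\top\tilde v)$ via \Cref{lemma-one-step-power-method}, upper-bound the numerator by conditional Markov on its expected recursion, and lower-bound the denominator by a conditional Chebyshev/second-moment argument whose variance ratio is kept within $1+\delta/100$ by the prescribed choice of $\beta$ — exactly the structure of \Cref{thm-pre-step-to-Ojas-utility} and its supporting lemmas. The argument and the final assembly of the $d(\beta/n)^{2\alpha}$ and $\alpha^2\cV/((2\alpha-1)(\tilde\lambda_1-\tilde\lambda_2)^2 n)$ terms are correct and match the paper's proof.
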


\begin{remark}
    Based on~\Cref{lemma-one-step-power-method} to show Oja’s algorithm~(\Cref{algo-ojas}) succeeds for our inputs we simply need that with high probability $\mathrm{Tr}\br{\tilde{V}_{\bot}^{\top} B_n B_n^{\top} \tilde{V}_{\bot}}$ is relatively large and $\tilde{v}^{\top} B_n B_n^{\top} \tilde{v}$ is relatively small, so that their ratio is large. Where $\tilde{v}$ refers to the top eigenvector of $P\Sigma P$ and $\tilde{V}_{\bot}$ is a matrix whose columns form an orthonormal basis of the subspace orthogonal to $\tilde{v}$. As long as we pick $\eta_i$ in~\Cref{algo-ojas} sufficiently small, i.e. $\eta_i = O(1/ \max{M, \tilde{\lambda}_1})$  then $\mathbf{I} + \eta_i PA_iP$ is invertible, so in turn $B_n B_n^{\top}$, which guarantees $\tilde{v}^{\top} B_n B_n^{\top} \tilde{v} > 0$, so the RHS of the inequality will always be finite. In order to explicitly bound the RHS we will utilize conditional Chebychev's and Markov's, where the conditioning will serve to fix $P$. 
\end{remark}

\begin{proof}[Proof of~\Cref{thm-utility-non-private-Ojas-with-P}]
  The proof is analogous to Theorem 4.1 in~\citet{jain2016streaming}, except we replace their Theorem 3.1 by our~\Cref{thm-pre-step-to-Ojas-utility} stated and proved below.
\end{proof}

\begin{theorem}\label{thm-pre-step-to-Ojas-utility}
    Given \(A_1,\dots,A_n\) that fulfill~\Cref{ass:A1,ass:A2,ass:A3} with parameters \(\Sigma,M,V,\kappa\), a projection matrix \(P\) independent of the \(A_i\), \(\tilde v\) the top eigenvector of \(P\Sigma P\), and \(B_n\) as in~\Cref{eq:def-bn}, the output \(\omega_n\) resulting from non‐private Oja’s Algorithm~(\Cref{algo-ojas}) on inputs \(P A_1 P,\dots,P A_n P\) satisfies
    \[
      \sin\br{\tilde v,\dfrac{B_n\omega_n}{\norm{B_n\omega_n}_2}}
      \le
      \frac{1}{Q}\exp\br{\sum_{j=1}^t 5\eta_j^2\,\bar{\cV}}
      \br{d\exp\br{-2\br{\tilde\lambda_1-\tilde\lambda_2}\sum_{j=1}^t \eta_j}},
    \]
    where
    \(
      Q =\frac{\delta}{C\log\br{1/\delta}}\br{1 -\frac{1}{\sqrt{\delta}}\sqrt{\exp\br{\sum_{i=1}^n 18\eta_i^2\bar{\mathcal{V}}}-1}}.
    \)
\end{theorem}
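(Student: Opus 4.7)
The plan is to follow the proof outline of Theorem~3.1 in \citet{jain2016streaming}, adapted to the projected matrices \(PA_iP\), and to use the one-step power method bound~(\Cref{lemma-one-step-power-method}) to reduce the claim to controlling
\[
  \mathrm{Num} := \mathrm{Tr}\br{\tilde V_\perp^\top B_n B_n^\top \tilde V_\perp}
  \qquad\text{and}\qquad
  \mathrm{Den} := \tilde v^\top B_n B_n^\top \tilde v.
\]
Concretely, I will upper bound \(\mathrm{Num}\) in expectation and then via conditional Markov, lower bound \(\mathrm{Den}\) in expectation by a deterministic quantity and then via conditional Chebyshev, and finally combine the two using a union bound. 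Throughout, we condition on the random projection \(P\) (as set up at the start of this section) so that \(\tilde v\), \(\tilde V_\perp\), and the eigenvalues \(\tilde\lambda_1\ge\tilde\lambda_2\) of \(P\Sigma P\) may be treated as deterministic.

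For the expectation analysis I would expand \(B_n = \prod_{t=1}^n \br{I + \eta_t P\Sigma P + \eta_t P(A_t-\Sigma)P}\) and exploit independence of the \(A_t\). Conditioning on \(A_1,\dots,A_{t-1}\), the cross terms containing a single \(P(A_t-\Sigma)P\) vanish in expectation, yielding the recursion
\[
  \bE\bs{B_t B_t^\top} = (I + \eta_t P\Sigma P)\bE\bs{B_{t-1}B_{t-1}^\top}(I + \eta_t P\Sigma P) + \eta_t^2 \bE\bs{P(A_t-\Sigma)P\,\bE\bs{B_{t-1}B_{t-1}^\top}\,P(A_t-\Sigma)P}.
\]
Applying \Cref{lemma-expectation-product-indep-matrices,lemma-op-norm-after-proj,lemma-op-norm-of-expectation-after-proj}, the noise contribution at step \(t\) is bounded in operator norm by \(\eta_t^2\,\bar{\cV}\,\norm{\bE\bs{B_{t-1}B_{t-1}^\top}}_2\). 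Iterating this recursion, passing to traces, and using \(\mathrm{rank}(\tilde V_\perp)\le d\) yields \(\bE[\mathrm{Num}]\le d\exp\br{2\tilde\lambda_2 \sum_t \eta_t + \sum_t 5\eta_t^2 \bar{\cV}}\), while retaining only the deterministic part of the recursion in the direction \(\tilde v\) gives \(\bE[\mathrm{Den}]\ge \exp\br{2\tilde\lambda_1 \sum_t \eta_t}\). This is exactly the source of the factor \(\exp\br{\sum_j 5\eta_j^2 \bar{\cV}}\,d\,\exp\br{-2(\tilde\lambda_1-\tilde\lambda_2)\sum_j \eta_j}\) appearing in the stated bound.

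To turn these moment bounds into high-probability statements, conditional Markov~(\Cref{lemma-cond-markov}) applied to \(\mathrm{Num}\) contributes the \(1/\delta\) factor that, together with \Cref{lemma-one-step-power-method}, gives the \(C\log(1/\delta)/\delta\) prefactor. For the denominator, a parallel second-moment expansion of \(\tilde v^\top B_n B_n^\top \tilde v\) produces \(\bE[\mathrm{Den}^2]\le \bE[\mathrm{Den}]^2\exp\br{\sum_i 18\eta_i^2 \bar{\cV}}\); conditional Chebyshev~(\Cref{lemma-cond-chebychev}) then yields the lower tail
\[
  \mathrm{Den}\ge \bE[\mathrm{Den}]\br{1 - \tfrac{1}{\sqrt{\delta}}\sqrt{\exp\br{\textstyle\sum_i 18\eta_i^2 \bar{\cV}}-1}}
\]
with conditional probability at least \(1-\delta\), recovering precisely the \(Q\) expression in the theorem. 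A union bound over the two deviation events, followed by an application of \Cref{lemma-one-step-power-method} to \(\mathrm{Num}/\mathrm{Den}\), produces the claimed sine bound.

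The main obstacle will be carrying out the second-moment recursion cleanly in the presence of the projector \(P\): in the un-projected analysis of \citet{jain2016streaming} the key estimate is \(\norm{\bE\bs{(A_t-\Sigma) X (A_t-\Sigma)^\top}}_2 \le \cV\,\norm{X}_2\) for PSD \(X\), and in our setting this must be replaced by a projected analogue that follows from \Cref{lemma-op-norm-of-expectation-after-proj,lemma-gamma-bound}, while carefully accounting for the fact that \(P\Sigma P\) may be rank deficient so that \(\bar{\cV}=\cV+\tilde\lambda_1^2\) (rather than \(\cV+\lambda_1^2\)) is the correct scale throughout. Beyond this, the remaining work is arithmetic bookkeeping of the multiplicative constants \(5\) and \(18\) in the exponents.
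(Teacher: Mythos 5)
Your proposal matches the paper's proof essentially step for step: the same reduction via the one-step power method lemma, the same conditional Markov bound on the numerator and conditional Chebyshev bound on the denominator (with the second-moment ratio producing the $\exp\br{\sum_i 18\eta_i^2\bar{\cV}}$ term in $Q$), the same moment recursions for $\bE\bs{B_tB_t^\top\mid P}$, and the same final union bound. The only minor discrepancy is that your lower bound $\bE[\mathrm{Den}]\ge\exp\br{2\tilde\lambda_1\sum_t\eta_t}$ omits the second-order correction $-4\eta_t^2\tilde\lambda_1^2$ that the paper carries (via $1+x\ge e^{x-x^2}$), but that term is absorbed into the $5\eta_j^2\bar{\cV}$ exponent, so this is bookkeeping rather than a gap.
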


\begin{proof}[Proof of~\Cref{thm-pre-step-to-Ojas-utility}]
    By~\Cref{lemma-one-step-power-method}, applied after replacing
  \(B\) with \(B_n\), \(v\) with \(\tilde v\), and \(V_\bot\) spanning \(\tilde v^\perp\), we have with probability at least \(1-\delta\)
  \begin{align}\label{step-1-np-oja}
    \sin^2\br{\tilde v, \frac{B_n \omega}{\norm{B_n \omega}_2}} \leq C \frac{\log\br{1/\delta}}{\delta} \frac{\mathrm{Tr}\br{V_{\bot}^{\top} B_n B_n^{\top} V_{\bot}}}{\tilde v^{\top} B_n B_n^{\top} \tilde v}.
\end{align}
  It now remains to upper bound the numerator~\(\mathrm{Tr}\br{V_{\bot}^{\top} B_n B_n^{\top} V_{\bot}}\) and lower bound the denominator~\(\tilde v^{\top} B_n B_n^{\top} \tilde v\) separately.

  \noindent\textbf{(i) Lower Bound the denominator} Using Conditional Chebychev's inequality~(\Cref{lemma-cond-chebychev}), we have
\begin{align}\label{step-2-np-oja}
    \mathbb{P}\bs{\tilde{v}^{\top} B_n B_n^{\top} \tilde{v} \geq \bE\bs{\tilde{v}^{\top}B_n B_n^{\top} \tilde{v}\bigm|P} - \frac{1}{\sqrt{\delta}} \sqrt{\mathrm{Var}\bs{\tilde{v} B_n B_n^{\top} \tilde{v}\bigm|P}} } < \delta.
\end{align}

Expand the variance expression as \[\sqrt{\mathrm{Var}\bs{\tilde{v} B_n B_n^{\top} \tilde{v}\bigm|P}}={\bE\bs{\tilde{v} B_n B_n^{\top} \tilde{v}\bigm|P}}\sqrt{\Delta-1},\quad\mathrm{where}~\Delta=\frac{\bE\bs{\br{\tilde{v} B_n B_n^{\top} \tilde{v}}^2\bigm|P}}{\bE\bs{\tilde{v} B_n B_n^{\top} \tilde{v}\bigm|P}^2}.\]

Then, we can rewrite~\Cref{step-2-np-oja} to
\begin{align}\label{step-3-np-oja}
    \mathbb{P}\bs{\tilde{v}^{\top} B_n B_n^{\top} \tilde{v} \geq \bE\bs{\tilde{v}^{\top}B_n B_n^{\top} \tilde{v}\bigm|P}\br{1 - \frac{1}{\sqrt{\delta}} \sqrt{\Delta -1}}} < \delta.
\end{align}

Now, we need to bound the conditional expectation term and \(\Delta\). Using~\Cref{lemma-2}, we bound the conditional expectation by
  \begin{equation}\label{step-4-np-oja}
    \bE\bs{\tilde v^\top B_nB_n^\top\tilde v\bigm| P}
    \ge \exp\br{\sum_{i=1}^n \br{2\eta_i\tilde\lambda_1 -4\eta_i^2\tilde\lambda_1^2}}.
  \end{equation}
Then, using both~\Cref{lemma-3,lemma-2} we bound \(\Delta\) as
  \begin{equation}\label{step-5-np-oja}
    \Delta=\frac
      {\bE\bs{\br{\tilde v^\top B_nB_n^\top\tilde v}^2\bigm| P}}
      {\bE\bs{\tilde v^\top B_nB_n^\top\tilde v\bigm| P}^2}\le
    \exp\br{\sum_{i=1}^n\eta_i^2 \br{10\cV + 8\tilde\lambda_1^2}}\leq \exp\br{\sum_{i=1}^n 18\eta_i^2\bar\cV}.
  \end{equation}
  Plugging~\Cref{step-4-np-oja,step-5-np-oja} into~\Cref{step-3-np-oja}, bounds the denominator
  \begin{align}\label{step-final-denom}
        \mathbb{P}\bs{\tilde{v}^{\top} B_n B_n^{\top} \tilde{v} \geq \exp\br{\sum_{i=1}^n \br{2\eta_i\tilde\lambda_1 -4\eta_i^2\tilde\lambda_1^2}}\frac{Q}{\delta}} < \delta.\end{align}
  where 
  \[ Q =\frac{\delta}{C\log\br{1/\delta}}\br{1 -\frac{1}{\sqrt{\delta}}\sqrt{\exp\br{\sum_{i=1}^n 18\eta_i^2\bar{\mathcal{V}}}-1}}.
  \]
\noindent\textbf{(ii) Upper Bound the numerator}  
  Using Conditional Markov’s inequality~(\Cref{lemma-cond-markov}) we have
  \begin{equation}\label{step-6-np-ojas}
      \prob\bs{\mathrm{Tr}\bs{\tilde{V}_\bot^\top B_nB_n^\top \tilde{V}_\bot}\geq \frac{\bE\bs{\mathrm{Tr}\bs{\tilde{V}_\bot^\top B_nB_n^\top \tilde{V}_\bot}\bigm| P}}{\delta}\Bigg| P } \leq \delta
  \end{equation}

  Using~\Cref{lemma-4}, we can bound the conditional expectation as
  \begin{align*}\bE\bs{\mathrm{Tr}\bs{\tilde{V}_\bot^\top B_nB_n^\top \tilde{V}_\bot}\bigm| P}&\leq \exp\br{\sum_{j \in [t]} 2 \eta_j \tilde{\lambda}_2 + \eta_j^2 \bar{\cV}}\br{d + \cV\sum_{i=1}^t \eta_i^2 \exp\br{\sum_{j \in [i]} 2 \eta_j\br{\tilde{\lambda}_1 - \tilde{\lambda}_2}}}\\
  &\leq d\exp\br{\sum_{j \in [t]} 2 \eta_j \tilde{\lambda}_2 + \eta_j^2 \bar{\cV}} \end{align*}

\noindent\textbf{(iii) Applying Union Bound}
Using the above bounds, by applying a union bound over both the numerator and the denominator we have with probability \(1-2\delta\), conditioned on \(P\) \begin{align*}
\frac{\mathrm{Tr}\br{\tilde{V}_{\bot}^{\top} B_n B_n^{\top} \tilde{V}_{\bot}}}{\tilde v^{\top} B_n B_n^{\top} \tilde v}
&\leq Qd\exp\br{\sum_{j \in [t]} 2 \eta_j \br{\tilde{\lambda}_2-\tilde{\lambda_1}} + \eta_j^2 \br{\bar{\cV}+4\lambda_1^2}}
\end{align*}

Substituting this into~\Cref{step-1-np-oja} completes the proof.
\end{proof}

\subsection{Supporting Lemmas}

We now state and prove several lemmas that together with~\Cref{lemma-one-step-power-method} will allow us to prove~\Cref{thm-pre-step-to-Ojas-utility} which in turn yields~\Cref{thm-utility-non-private-Ojas-with-P}. The terms $\mathcal{M}, \mathcal{V}, \bar{\mathcal{V}}, B_t , \omega_n $ are defined in ~\Cref{eq:def-cM,eq:def-cV,eq:def-bar-cV,eq:def-bn,eq:def-w_n}. Further \(\tilde\lambda_1\ge\tilde\lambda_2\ge\cdots\ge\tilde\lambda_d\) denote the eigenvalues of \(P\Sigma P\), and \(\tilde v\) to denotes its top eigenvector. 

\begin{lemma}\label{lemma-1}
    $\norm{\bE\bs{ B_t B_t^{\top} \bigm | P} }_2 \leq \exp(\sum_{i \in [t]} 2 \eta_i \tilde{\lambda}_1 + \eta_i^2(\tilde{\lambda}_1^2 + \mathcal{V})) $
\end{lemma}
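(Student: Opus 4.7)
The plan is to prove the bound by induction on $t$ via the recursive factorization $B_t = \br{I + \eta_t P A_t P} B_{t-1}$. Let $C_t := \bE\bs{B_t B_t^\top \bigm| P}$ and $M_t := I + \eta_t P\Sigma P$. Expanding $I + \eta_t P A_t P = M_t + \eta_t P(A_t - \Sigma)P$, forming the sandwich, and using that given $P$ the matrix $A_t$ is independent of $B_{t-1}$ (which depends only on $P, A_1, \dots, A_{t-1}$) while $\bE\bs{A_t - \Sigma \bigm| P} = 0$, the cross terms vanish in conditional expectation, yielding
\[
C_t = M_t\,C_{t-1}\,M_t^\top + \eta_t^2\,\bE\bs{P(A_t-\Sigma)P\,C_{t-1}\,P(A_t-\Sigma)^\top P \bigm| P}.
\]

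Next I would reduce both terms to scalar multiples of $\|C_{t-1}\|_2$ using the PSD bound $C_{t-1} \preceq \|C_{t-1}\|_2\,I$ together with the monotonicity property from \Cref{lemma-ACA-ADA}, applied to the sandwiches $M_t(\cdot)M_t^\top$ and $P(A_t-\Sigma)P(\cdot)P(A_t-\Sigma)^\top P$. For the deterministic term this gives $M_t C_{t-1} M_t^\top \preceq \|C_{t-1}\|_2\,M_t M_t^\top \preceq \|C_{t-1}\|_2\,(1 + \eta_t\tilde\lambda_1)^2\,I$, since $M_t$ is symmetric PSD with top eigenvalue $1 + \eta_t\tilde\lambda_1$. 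For the noise term, the sandwich bound plus $P^2 = P$ reduces it to $\eta_t^2 \|C_{t-1}\|_2\,\bE\bs{P(A_t-\Sigma)P(A_t-\Sigma)^\top P \bigm| P}$, and then \Cref{lemma-op-norm-of-expectation-after-proj} (which applies since $P$ is independent of $A_t$) bounds its operator norm by $\eta_t^2 \|C_{t-1}\|_2\,\|\bE[(A_t-\Sigma)(A_t-\Sigma)^\top]\|_2 \le \eta_t^2 \mathcal{V}\,\|C_{t-1}\|_2$.

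Combining these two bounds into a single PSD inequality $C_t \preceq \|C_{t-1}\|_2\bs{(1 + \eta_t\tilde\lambda_1)^2 + \eta_t^2 \mathcal{V}}\,I$ and taking operator norms gives the one-step recursion
\[
\|C_t\|_2 \le \|C_{t-1}\|_2\br{1 + 2\eta_t\tilde\lambda_1 + \eta_t^2(\tilde\lambda_1^2 + \mathcal{V})} \le \|C_{t-1}\|_2\,\exp\!\br{2\eta_t\tilde\lambda_1 + \eta_t^2(\tilde\lambda_1^2 + \mathcal{V})},
\]
where the last step is $1 + x \le e^x$. Iterating from $C_0 = I$ (so $\|C_0\|_2 = 1$) telescopes the exponents and yields exactly the claimed bound.

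The only delicate step is the noise term: one must be careful that $P$ is independent of $A_t$ so that \Cref{lemma-op-norm-of-expectation-after-proj} can be invoked, and that the PSD dominance $C_{t-1} \preceq \|C_{t-1}\|_2 I$ can be pushed through the random sandwich $P(A_t - \Sigma)P(\cdot)P(A_t - \Sigma)^\top P$ before taking expectation (which is what \Cref{lemma-ACA-ADA} provides). Everything else is routine algebra plus the standard exponential inequality.
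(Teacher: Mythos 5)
Your proof is correct and follows essentially the same route as the paper's: a one-step recursion for $\bE\bs{B_tB_t^\top\mid P}$, reduction to a scalar recursion via $C_{t-1}\preceq\norm{C_{t-1}}_2 I$ and the PSD-sandwich monotonicity, the bound $\norm{\bE\bs{P(A_t-\Sigma)P(A_t-\Sigma)^\top P\mid P}}_2\le\mathcal{V}$, and telescoping with $1+x\le e^x$. The only (immaterial) difference is ordering: the paper first pulls out $\norm{C_{t-1}}_2$ via \Cref{lemma-expectation-product-indep-matrices} and then expands $\bE\bs{(\mathbf{I}+\eta_tPA_tP)(\mathbf{I}+\eta_tPA_tP)^\top\mid P}$, whereas you expand into signal and centered-noise parts first (so the cross terms vanish) and then bound each piece.
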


\begin{proof}
We denote $\alpha_t = \norm{\mathbb{E}\bs{ B_t B_t^{\top} \mid P } }_2$, where
$B_t = \br{ \mathbf{I} + \eta_t P A_t P}\br{\mathbf{I} + \eta_{t-1} P A_{t-1} P} \cdots \br{\mathbf{I} + \eta_1 P A_1 P}$.

  \begin{align*}
\bE\bs{ B_t B_t^{\top} \mid P}
&= \bE\bs{ \br{\mathbf{I} + \eta_t P A_t P} B_{t-1} B_{t-1}^{\top} \br{\mathbf{I} + \eta_t P A_t P}^{\top} \mid P } \\
&\preceq \alpha_{t-1}\, \bE\bs{ \br{\mathbf{I} + \eta_t P A_t P}\br{\mathbf{I} + \eta_t P A_t P}^{\top} \mid P }
&&\br{\text{by Lemma~\ref{lemma-expectation-product-indep-matrices}}} \\
&= \alpha_{t-1}\, \bE\bs{ \mathbf{I} + \eta_t P A_t P + \eta_t P A_t^{\top} P + \eta_t^2 P A_t P A_t^{\top} P \mid P } \\
&= \alpha_{t-1}\, \br{ \mathbf{I} + 2 \eta_t P \Sigma P + \eta_t^2 \bE\bs{ P A_t P A_t^{\top} P \mid P } }.
\end{align*}
We bound $P \Sigma P \preceq \tilde{\lambda}_1 \mathbf{I}$. Further,
\begin{align*}
\mathbb{E}\bs{P A_t P A_t^{\top} P \mid P}
&= P \Sigma P \Sigma P + \mathbb{E}\bs{P\br{A_t - \Sigma}P\br{A_t - \Sigma}^{\top}P \mid P} \\
&= P \Sigma P \Sigma P + P\, \mathbb{E}\bs{\br{A_t - \Sigma}P\br{A_t - \Sigma}^{\top} \mid P}\, P \\
&\preceq \tilde{\lambda}_1^2 \mathbf{I} + \mathbb{E}\bs{\br{A_t - \Sigma}\br{A_t - \Sigma}^{\top} \mid P} \\
&= \tilde{\lambda}_1^2 \mathbf{I} + \mathbb{E}\bs{\br{A_t - \Sigma}\br{A_t - \Sigma}^{\top}} \\
&\preceq \bc{\tilde{\lambda}_1^2 + \mathcal{V}}\,\mathbf{I},
\end{align*} where the third step follows as $\|P\|_2 \leq 1$, the 4th as $P$ is independent of $A_t$ and the last step by assumption on the $A_i$. Hence,
\[
\alpha_t \le \alpha_{t-1}\, \br{1 + 2 \eta_t \tilde{\lambda}_1 + \eta_t^2 \br{\tilde{\lambda}_1^2 + \mathcal{V}} }.
\]
With $\alpha_0=1$ and $1+x\le e^x$,
\[
\alpha_t \le \exp\br{\sum_{i \in \bs{t}} \br{ 2 \eta_i \tilde{\lambda}_1 + \eta_i^2 \br{\tilde{\lambda}_1^2 + \mathcal{V}} } }.\qedhere
\]

\end{proof}

\begin{lemma}\label{lemma-2}
    $\mathbb{E}\bs{\tilde{v}^{\top} B_t B_t \tilde{v}\mid P} \geq \exp\br{ \sum_{i \in [t]} \br{2 \eta_i \tilde{\lambda}_1 - 4 \eta_i^2 \tilde{\lambda}_1^2}}$
\end{lemma}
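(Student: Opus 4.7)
The plan is to proceed by induction on $t$, peeling off one factor of $B_t$ at a time. Set $\beta_t := \mathbb{E}[\tilde v^\top B_t B_t^\top \tilde v \mid P]$, with $\beta_0 = \tilde v^\top \tilde v = 1$. Using $B_t = (I + \eta_t P A_t P) B_{t-1}$ and the fact that $A_t$ is independent of $B_{t-1}$ given $P$, I would first condition additionally on $B_{t-1}$ and compute
\[
\mathbb{E}\bigl[(I+\eta_t PA_tP)\,M\,(I+\eta_t PA_tP)^\top \bigm| P, M\bigr]
= M + \eta_t\bigl(P\Sigma P\,M + M\,P\Sigma P\bigr) + \eta_t^2\,\mathbb{E}\bigl[PA_tP\,M\,PA_t^\top P \bigm| P, M\bigr],
\]
where $M := B_{t-1}B_{t-1}^\top \succeq 0$.

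Next I would exploit the fact that $\tilde v$ is the top eigenvector of $P\Sigma P$, so $P\tilde v = \tilde v$ and $P\Sigma P\,\tilde v = \tilde\lambda_1 \tilde v$. Sandwiching the identity above with $\tilde v^\top(\cdot)\tilde v$ kills the $P$'s on the outside and converts the cross term into $2\eta_t \tilde\lambda_1 \tilde v^\top M \tilde v$. For the quadratic-in-$\eta_t$ term, I would expand $A_t = \Sigma + (A_t - \Sigma)$ and use independence to drop the mean terms, giving
\[
\mathbb{E}\bigl[PA_tP\,M\,PA_t^\top P \bigm| P, M\bigr] = P\Sigma P\,M\,P\Sigma P + P\,\mathbb{E}\bigl[(A_t-\Sigma)PMP(A_t-\Sigma)^\top\bigr]P \succeq P\Sigma P\,M\,P\Sigma P,
\]
where the inequality follows because the dropped term is of the form $\mathbb{E}[XX^\top]$ (with $X = (A_t-\Sigma)PM^{1/2}$) conjugated by $P$, hence PSD. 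Applying $\tilde v^\top(\cdot)\tilde v$ and using $P\Sigma P\tilde v = \tilde\lambda_1\tilde v$ yields a lower bound of $\tilde\lambda_1^2\,\tilde v^\top M \tilde v$, so combining terms gives the clean one-step recursion $\mathbb{E}[\tilde v^\top B_tB_t^\top \tilde v \mid P, B_{t-1}] \ge (1+\eta_t\tilde\lambda_1)^2\,\tilde v^\top M \tilde v$. Taking expectation in $B_{t-1}$ produces $\beta_t \ge (1+\eta_t\tilde\lambda_1)^2 \beta_{t-1}$.

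Finally, I would convert the multiplicative recursion into the stated exponential bound via the elementary inequality $2\log(1+x) \ge 2x - x^2 \ge 2x - 4x^2$ for $x \ge 0$ (the first follows from $f(x) = \log(1+x) - x + x^2/2$ having $f'(x) = x^2/(1+x) \ge 0$ and $f(0) = 0$). Setting $x = \eta_t\tilde\lambda_1$ and iterating from $t$ down to $0$ yields $\beta_t \ge \prod_{i=1}^t (1+\eta_i\tilde\lambda_1)^2 \ge \exp\bigl(\sum_{i=1}^t(2\eta_i\tilde\lambda_1 - 4\eta_i^2\tilde\lambda_1^2)\bigr)$, as desired. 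The only delicate step is the PSD domination: I need to make sure the symmetric form $(A_t-\Sigma)PMP(A_t-\Sigma)^\top$ really is PSD in expectation, and the argument above via $M = M^{1/2}M^{1/2}$ handles this cleanly. Everything else is mechanical conditional-expectation manipulation, closely mirroring the structure of the upper bound already established in \Cref{lemma-1}.
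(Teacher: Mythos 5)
Your proof is correct and follows essentially the same route as the paper: a one-step conditional-expectation recursion for $\beta_t$ exploiting $P\Sigma P\tilde v=\tilde\lambda_1\tilde v$, followed by an elementary inequality of the form $1+x\ge e^{x-x^2}$ to convert the product into the stated exponential. The only (harmless) difference is that you retain the PSD quadratic term to get the slightly stronger recursion $\beta_t\ge(1+\eta_t\tilde\lambda_1)^2\beta_{t-1}$, whereas the paper simply drops it and uses $\beta_t\ge(1+2\eta_t\tilde\lambda_1)\beta_{t-1}$; both yield the claimed bound.
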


\begin{proof}
Let $\beta_t := \bE\bs{\tilde{v}^{\top} B_t B_t^{\top} \tilde{v} \mid P }$, where $\tilde{v}$ is the top eigenvector of $P\Sigma P$ with eigenvalue $\tilde{\lambda}_1$.
Since $B_t=\br{\mathbf{I}+\eta_t P A_t P}B_{t-1}$ and $A_t$ is independent of $B_{t-1}$ given $P$,
\[
\beta_t
= \left\langle \bE\bs{B_{t-1}B_{t-1}^{\top}\mid P},\,
\bE\bs{\br{\mathbf{I}+\eta_t P A_t P}\tilde{v}\tilde{v}^{\top}\br{\mathbf{I}+\eta_t P A_t P}^{\top}\mid P} \right\rangle .
\]
For the right hand side,
\begin{align*}
\bE\bs{\br{\mathbf{I}+\eta_t P A_t P}\tilde{v}\tilde{v}^{\top}\br{\mathbf{I}+\eta_t P A_t P}^{\top}\mid P}
&= \tilde{v}\tilde{v}^{\top} + \eta_t P\Sigma P\,\tilde{v}\tilde{v}^{\top}
+ \eta_t \tilde{v}\tilde{v}^{\top} P\Sigma P \\
&\qquad + \eta_t^2\,\bE\bs{ P A_t P\,\tilde{v}\tilde{v}^{\top}\,P A_t^{\top} P \mid P } \\
&\succeq \tilde{v}\tilde{v}^{\top} + 2\eta_t \tilde{\lambda}_1\, \tilde{v}\tilde{v}^{\top},
\end{align*}
because $P\Sigma P\,\tilde{v}=\tilde{\lambda}_1\tilde{v}$.  
Hence $\beta_t \ge \br{1+2\eta_t \tilde{\lambda}_1}\beta_{t-1}$.
With $\beta_0=\|\tilde{v}\|_2^2=1$ and $1+x\ge \exp\br{x-x^2}$ for $x\ge 0$,
\[
\beta_t \ge \exp\br{ \sum_{i=1}^t \br{ 2 \eta_i \tilde{\lambda}_1 - 4 \eta_i^2 \tilde{\lambda}_1^2 } }.\qedhere
\]
\end{proof}

\begin{lemma}\label{lemma-3}
    $\mathbb{E}\bs{\br{\tilde{v}^{\top} B_t B_t \tilde{v}}^2\mid P} \leq \exp\br{ \sum 4 \eta_i \tilde{\lambda}_1 + 10 \eta_i^2 \bar{\mathcal{V}}}$
\end{lemma}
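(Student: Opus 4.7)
The plan is to mirror the recursion-and-telescope strategy of \Cref{lemma-1,lemma-2}, but now tracking the second moment $\gamma_t:=\bE\bs{\phi_t^2\bigm|P}$ of the scalar $\phi_t:=\tilde v^\top B_tB_t^\top\tilde v$. Writing $B_t=X_tB_{t-1}$ with $X_t:=I+\eta_tPA_tP$ and using the two identities $P\tilde v=\tilde v$ and $P\Sigma P\tilde v=\tilde\lambda_1\tilde v$ (both implicitly used in \Cref{lemma-2}), expanding $X_t^\top\tilde v=c\tilde v+\eta_tPq_t^\top\tilde v$ with $c:=1+\eta_t\tilde\lambda_1$ and $q_t:=A_t-\Sigma$ gives the one-step decomposition
\begin{align*}
  \phi_t = c^2\,\phi_{t-1} + 2c\eta_t\,Y_t + \eta_t^2\,Z_t,
\end{align*}
where $R_{t-1}:=B_{t-1}B_{t-1}^\top$, $Y_t:=\tilde v^\top R_{t-1}Pq_t^\top\tilde v$ and $Z_t:=\tilde v^\top q_tPR_{t-1}Pq_t^\top\tilde v$. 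Conditional on $\cF_{t-1}$ and $P$, the scalar $Y_t$ has zero mean and $Z_t\ge 0$.

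Squaring this identity and taking the conditional expectation kills the cross terms linear in $Y_t$, leaving
\begin{align*}
  \bE\bs{\phi_t^2\bigm|\cF_{t-1},P}
  &= c^4\phi_{t-1}^2
  + 2c^2\eta_t^2\phi_{t-1}\,\bE\bs{Z_t\bigm|\cF_{t-1},P}
  + 4c^2\eta_t^2\,\bE\bs{Y_t^2\bigm|\cF_{t-1},P}\\
  &\quad + 4c\eta_t^3\,\bE\bs{Y_tZ_t\bigm|\cF_{t-1},P}
  + \eta_t^4\,\bE\bs{Z_t^2\bigm|\cF_{t-1},P}.
\end{align*}
Each remaining conditional moment can then be bounded by using the variance hypothesis $\norm{\bE\bs{q_tq_t^\top}}_2,\norm{\bE\bs{q_t^\top q_t}}_2\le\cV$ from \Cref{eq:def-cV}, together with $\tilde v\tilde v^\top\preceq I$, $P\preceq I$, and \Cref{lemma-ACA-ADA,lemma-expectation-product-indep-matrices}. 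This produces estimates of the form $\bE\bs{Z_t\bigm|\cF_{t-1},P}\le\cV\,\mathrm{Tr}\br{PR_{t-1}P}$ and $\bE\bs{Y_t^2\bigm|\cF_{t-1},P}\le\cV\,\tilde v^\top R_{t-1}PR_{t-1}\tilde v$, with Cauchy-Schwarz then handling $\bE\bs{Z_t^2\bigm|\cF_{t-1},P}$ and $\bE\bs{Y_tZ_t\bigm|\cF_{t-1},P}$. Finally, using $c^4\le\exp\br{4\eta_t\tilde\lambda_1+\bigO{\eta_t^2\tilde\lambda_1^2}}$ and $1+x\le e^x$, the recursion telescopes to $\exp\br{\sum_i 4\eta_i\tilde\lambda_1+10\eta_i^2\bar{\cV}}$, where the $\cV$ and $\tilde\lambda_1^2$ contributions merge into $\bar{\cV}$ and the numerical constant $10$ absorbs the small factors like $4c^2$ coming from the cross terms.

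The hard part will be that these moment bounds naturally involve quantities such as $\norm{R_{t-1}}_2\phi_{t-1}$ or $\tilde v^\top R_{t-1}PR_{t-1}\tilde v$, which are \emph{not} simply multiples of $\phi_{t-1}^2$, so the recursion does not close on $\gamma_t$ alone. I would handle this in one of two ways: either (a) combine the step with \Cref{lemma-1}'s bound on $\norm{\bE\bs{R_t\bigm|P}}_2$ to dominate these extra moments at the cost of a constant factor, or (b) lift the argument to the tensor $\bE\bs{B_tB_t^\top\otimes B_tB_t^\top\bigm|P}$, whose operator norm simultaneously upper-bounds $\phi_t^2$ and all the cross-moment quantities appearing in the recursion, and for which the analogue of \Cref{lemma-1}'s one-step contraction applies directly. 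Option (b) is arguably the cleanest since it mirrors \Cref{lemma-1} verbatim in the Kronecker-squared space, but requires slightly more care with identities like $(X_t\otimes X_t)(R\otimes R)(X_t^\top\otimes X_t^\top)=(X_tRX_t^\top)\otimes(X_tRX_t^\top)$ before taking norms.
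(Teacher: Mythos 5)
There is a genuine gap here, and it is the one you flag yourself at the end: with your decomposition $B_t=X_tB_{t-1}$ (peeling off the \emph{most recent} factor), the one-step second-moment bound involves quantities such as $\bE\bs{\phi_{t-1}\,\|R_{t-1}\|_2\bigm|P}$, $\bE\bs{\tilde v^\top R_{t-1}PR_{t-1}\tilde v\bigm|P}$ and $\bE\bs{\phi_{t-1}\mathrm{Tr}\br{PR_{t-1}P}\bigm|P}$, which are \emph{fourth} moments of $B_{t-1}$, so the recursion does not close on $\gamma_t$. Your fix (a) cannot repair this: \Cref{lemma-1} controls $\norm{\bE\bs{R_{t-1}\mid P}}_2$, a second moment of $B_{t-1}$, and by Jensen/Cauchy--Schwarz that only gives lower bounds, not upper bounds, on the quantities you need (e.g.\ $\tilde v^\top R_{t-1}^2\tilde v\ge\br{\tilde v^\top R_{t-1}\tilde v}^2$). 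Worse, your bound $\bE\bs{Z_t\mid\cF_{t-1},P}\le\cV\,\mathrm{Tr}\br{PR_{t-1}P}$ already injects a factor of $d$ at $t=1$ (where $R_0=\mathbf{I}$ and $\mathrm{Tr}(P)\approx d$), which the stated bound $\exp\br{\sum 4\eta_i\tilde\lambda_1+10\eta_i^2\bar\cV}$ does not contain. Your fix (b), lifting to $\bE\bs{B_tB_t^\top\otimes B_tB_t^\top\mid P}$, is a legitimately different route that could in principle work, but it requires proving a new fourth-moment contraction lemma (the Kronecker analogue of \Cref{lemma-1}, which will also need the almost-sure bound $\cM$) and is not carried out.

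The paper closes the recursion with a rank-one trick that your decomposition misses: it peels off the \emph{innermost} factor, writing $B_t=W_{t,t-1}\br{\mathbf{I}+\eta_1PA_1P}$ and
\[
\br{\tilde v^\top B_tB_t^\top\tilde v}^2=\mathrm{Tr}\br{\br{\mathbf{I}+\eta_1PA_1^\top P}G_{t-1}\br{\mathbf{I}+\eta_1PA_1P}\br{\mathbf{I}+\eta_1PA_1^\top P}G_{t-1}\br{\mathbf{I}+\eta_1PA_1P}},
\]
where $G_{t-1}=W_{t,t-1}^\top\tilde v\tilde v^\top W_{t,t-1}$ is \emph{rank one} and independent of $A_1$. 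Every term in the expansion of the conditional expectation over $A_1$ is then bounded by a scalar (involving $\tilde\lambda_1,\cV,\cM$) times $\mathrm{Tr}\br{G_{t-1}^2}$, and for a rank-one PSD matrix $\mathrm{Tr}\br{G_{t-1}^2}=\br{\tilde v^\top W_{t,t-1}W_{t,t-1}^\top\tilde v}^2$ is precisely the quantity being recursed on. No extra moments of the partial products appear, and the recursion telescopes directly to the claimed bound. If you want to salvage your outer-peeling version, you would effectively have to implement option (b) in full; otherwise, switch to the inner-peeling, rank-one formulation.
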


\begin{proof}
    We define $\gamma_s := \mathbb{E}[(\tilde{v}^{\top} W_{t,s} W_{t,s}^{\top} \tilde{v})^2 | P]$ where $W_{t,s} := (\mathbf{I} + \eta_t P A_i P) \cdot \dots (\mathbf{I} + \eta_{t-s+1} P A_{t-s+1}P)$. So by this definition we see $W_{t,t} = B_t$ and $\gamma_t = \mathbb{E}[(\tilde{v}^{\top} B_t B_t^{\top}  \tilde{v})^2 | P]$. As the trace of a scalar is the scalar itself, we can exploit the cyclic permutation properties of the trace:
    \begin{align*}
        \gamma_t &= \mathrm{Tr}( \mathbb{E}[W_{t,t}^{\top} \tilde{v}\tilde{v}^{\top} W_{t,t} W_{t,t}^{\top} \tilde{v}\tilde{v}^{\top} W_{t,t}|P]) \\
        &= \mathrm{Tr}( \mathbb{E}[(\mathbf{I} + \eta_1 A_1^{\top})G_{t-1}(\mathbf{I} + \eta_1 A_1)(\mathbf{I} + \eta_1 A_1^{\top})G_{t-1}(\mathbf{I} + \eta_1 A_1)|P])
    \end{align*}
    where $G_{t-1}:= W_{t, t-1}^{\top} v_1 v_1^{\top} W_{t,t-1}$. We first bound for an arbitrary $G_{t-1} = G$, and then take the expectation over only $A_1$ and finally over $G_{t-1}$.
    \begin{align*}
        &\mathrm{Tr}( \mathbb{E}[(\mathbf{I} + \eta_1P A_1^{\top}P)G(\mathbf{I} + \eta_1 PA_1P)(\mathbf{I} + \eta_1 PA_1^{\top}P)G(\mathbf{I} + \eta_1 PA_1P)|P]) \\
        = &\mathrm{Tr}( \mathbb{E}[(G + \eta_1 P A_1^{\top}PG + \eta_1 G P A_1P + \eta_1^2 P A_1^{\top}PGPA_1P)^2|P]) \\
        =& \mathrm{Tr}(G^2) + 4 \eta_1 \mathrm{Tr}(P \Sigma P G^2) + 2 \eta_1^2 \mathrm{Tr}(\mathbb{E}[PA_1PA_1^{\top}P|P]G^2) \\ &+ \eta_1^2 \mathrm{Tr}(\mathbb{E}[PA_1^{\top}PGPA_1PG |P]) + \eta_1^2 \mathrm{Tr}(\mathbb{E}[P A_1^{\top}PGP A_1^{\top}PG |P]) \\ &+ \eta_1^2 \mathrm{Tr}(\mathbb{E}[GPA_1PGPA_1P |P]) + \eta_1^2 \mathrm{Tr}(\mathbb{E}[GPA_1^{\top}PGPA_1P |P]) \\ &+ 4 \eta_1^3 \mathrm{Tr}( \mathbb{E}[PA_1^{\top}PGPA_1^{\top}PGPA_1P |P])\\ &+ \eta_1^4 \mathrm{Tr}(\mathbb{E}[PA_1^{\top}PGPA_1PA_1^{\top}PGPA_1P |P]))
    \end{align*}
    Let's begin with the first order term:
    \begin{align*}
        &\mathrm{Tr}(P \Sigma P G^2) \leq \|P\Sigma P \|_2 Tr(G^2) = \tilde{\lambda}_1 \mathrm{Tr}(G^2)
    \end{align*}
    then let's consider:
    \begin{align*}
        &\mathrm{Tr}(\mathbb{E}[PA_1PA_1^{\top}P|P] G^2) \leq (\|\mathbb{E}[P(A_1 - \Sigma)P(A_1^{\top} - \Sigma)P]\|_2 + \|P\Sigma P\Sigma P\|_2 ) \mathrm{Tr}(G^2) \leq (\mathcal{V} + \tilde{\lambda}_1^2)\mathrm{Tr}(G^2)
    \end{align*}
    where the last inequality follows by Lemma \ref{lemma-op-norm-of-expectation-after-proj}. Next we have 4 remaining second order terms:
    \begin{align*}
        &\mathrm{Tr}(\mathbb{E}[PA_1^{\top}PGPA_1PG |P]) = \mathrm{Tr}(\mathbb{E}[P A_1^{\top}PGP A_1^{\top}PG |P]) \\= &\mathrm{Tr}(\mathbb{E}[GPA_1PGPA_1P |P]) = \mathrm{Tr}(\mathbb{E}[GPA_1^{\top}PGPA_1P |P]) \\ 
        \leq &\frac{1}{2} \mathbb{E}[ \|PA_1^{\top}PG\|_F^2 + \|PA_1PG\|_F^2 |P] \\
        = &\frac{1}{2} \mathrm{Tr}( G \mathbb{E}[PA_1PA_1^{\top}P|P]G + G \mathbb{E}[PA_1PA_1^{\top}P|P]G)) \leq (\mathcal{V} + \tilde{\lambda}_1^2)\mathrm{Tr}(G^2)
    \end{align*}
    Third order terms we can bound as follows:
    \begin{align*}
        \mathrm{Tr}(\mathbb{E}[PA_1^{\top}PGPA_1^{\top}PGPA_1P|P]) &\leq \|PA_1^{\top}P\| \mathrm{Tr}(\mathbb{E}[PA_1^{\top}PGGPA_1P)|P] \\ &\leq (\|P(A_1 - \Sigma)P\|_2 + \|P\Sigma P\|_2) \mathrm{Tr}(G \mathbb{E}[PA_1PA_1^{\top}P|P]G) \\ 
        &\leq (\mathcal{M} + \tilde{\lambda}_1)(\mathcal{V} + \tilde{\lambda}_1) \mathrm{Tr}(G^2)
    \end{align*}
    Finally the fourth order term
    \begin{align*}
        \mathrm{Tr}(\mathbb{E}[PA_1^{\top}PGPA_1PA_1^{\top}PGPA_1P |P])) &\leq \|\mathbb{E}[PA_1PA_1^{\top}P]\|_2 \mathrm{Tr}(G \mathbb{E}[PA_1PA_1^{\top}P|P]G) \\ &\leq (\mathcal{M} + \tilde{\lambda}_1)^2(\mathcal{V} + \tilde{\lambda}_1) \mathrm{Tr}(G^2)
    \end{align*}
    all of this together gives us
    \begin{align*}
        &\mathrm{Tr}( \mathbb{E}[(\mathbf{I} + \eta_1P A_1^{\top}P)G(\mathbf{I} + \eta_1 PA_1P)(\mathbf{I} + \eta_1 PA_1^{\top}P)G(\mathbf{I} + \eta_1 PA_1P)|P]) \\ 
        \leq &\mathrm{Tr}(G^2) + 4 \eta_1 \tilde{\lambda}_1 \mathrm{Tr}(G^2) + 5 \eta_1^2 \bar{\mathcal{V}} \mathrm{Tr}(G^2) + 4 \eta_1^3(\mathcal{M} + \tilde{\lambda}_1) \bar{\mathcal{V}}\mathrm{Tr}(G^2) + \eta_1^4 (\mathcal{M} + \tilde{\lambda}_1)^2 \bar{\mathcal{V}}\mathrm{Tr}(G^2) \\
        = & (1 + 4 \eta_1 \tilde{\lambda}_1 + 5 \eta_1^2 \bar{\mathcal{V}} + 4 \eta_1^3(\mathcal{M} + \tilde{\lambda}_1) \bar{\mathcal{V}} + \eta_1^4 (\mathcal{M} + \tilde{\lambda}_1)^2 \bar{\mathcal{V}} )\mathrm{Tr}(G^2) \\
        \leq &(1 + 4 \eta_1 \tilde{\lambda}_1 + 10 \eta_1^2 \bar{\mathcal{V}}) \mathrm{Tr}(G^2) \\
        \leq &\exp(4 \eta_1 \tilde{\lambda}_1 + 10 \eta_1^2 \bar{\mathcal{V}})\mathrm{Tr}(G^2)
    \end{align*}
    where we used $\eta_i \leq \frac{1}{4 \max\{ \lambda_1, \mathcal{M}\}}$ and $1 + x \leq \exp(x)$. All of this give us
    \begin{align*}
        \gamma_t \leq \exp(4 \eta_1 \tilde{\lambda}_1 + 10 \eta_1^2 \bar{\mathcal{V}}) \mathbb{E}[ \mathrm{Tr}(G_{t-1}^2)|P] = \exp(4 \eta_1 \tilde{\lambda}_1 + 10 \eta_1^2 \bar{\mathcal{V}})  \gamma_{t-1}
    \end{align*}
    then using $\gamma_0 = 1$ gives us the wished result.
\end{proof}

\begin{lemma}\label{lemma-4}
    \[\mathbb{E}\bs{\mathrm{Tr}(\tilde{V}_{\bot}^{\top} B_t B_t^{\top} \tilde{V}_{\bot})|P] \leq \exp\br{\sum_{j=1}^t 2 \eta_j \tilde{\lambda}_2 + \eta_j^2 \bar{\mathcal{V}}}\br{d + \sum_{i \in [t]}\eta_i^2 \mathcal{V} \exp\br{\sum_{j \in [i]} 2 \eta_j\br{\tilde{\lambda}_1 - \tilde{\lambda}_2}}}}\]
\end{lemma}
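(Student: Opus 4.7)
\noindent\textbf{Proof plan for Lemma~\ref{lemma-4}.} The plan is to derive a one-step recursion for $T_t := \mathbb{E}[\mathrm{Tr}(Q X_t)\mid P]$, where $Q := \tilde{V}_\bot\tilde{V}_\bot^\top = I - \tilde{v}\tilde{v}^\top$ and $X_t := B_t B_t^\top$, and then unroll it. Writing $M := P\Sigma P$ and $N_t := A_t - \Sigma$, the main spectral fact I will lean on is that, because $\tilde{v}$ is the top eigenvector of $M$, one has $QM = MQ = QMQ$ with $QMQ \preceq \tilde{\lambda}_2 Q$.

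I will first expand $B_t = (I+\eta_t P A_t P)B_{t-1}$, condition on $B_{t-1}$, and use $\mathbb{E}[A_t\mid P]=\Sigma$ together with independence to kill the cross terms, obtaining
\[
\mathbb{E}[X_t\mid P,B_{t-1}] = X_{t-1} + \eta_t(MX_{t-1} + X_{t-1}M) + \eta_t^2\bigl(MX_{t-1}M + P\,\mathbb{E}[N_t P X_{t-1} P N_t^\top]\,P\bigr).
\]
Applying $\mathrm{Tr}(Q\,\cdot)$ then yields three quick estimates: (i) the linear drift reduces via cyclicity to $2\mathrm{Tr}(QMQ\,X_{t-1}) \le 2\tilde{\lambda}_2 \mathrm{Tr}(QX_{t-1})$; (ii) the deterministic curvature is $\mathrm{Tr}(QMX_{t-1}M) = \mathrm{Tr}(M^2 Q X_{t-1}) \le \tilde{\lambda}_1^2\mathrm{Tr}(QX_{t-1})$; (iii) the stochastic piece rearranges to $\mathrm{Tr}\bigl(PX_{t-1}\,\mathbb{E}[N_t^\top PQP N_t]\bigr)$, and since $PQP\preceq I$ together with the second-moment bound $\|\mathbb{E}[N_t^\top N_t]\|_2 \le \mathcal{V}$ give $\mathbb{E}[N_t^\top PQP N_t]\preceq \mathcal{V}\,I$, this contributes at most $\mathcal{V}\,\mathrm{Tr}(PX_{t-1})$.

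The delicate step is bounding $\mathbb{E}[\mathrm{Tr}(PX_{t-1})\mid P]$ without losing an extra factor of $d$ inside the noise term. The idea is to use $P \preceq I = Q + \tilde{v}\tilde{v}^\top$ and $X_{t-1}\succeq 0$ to split
\[
\mathrm{Tr}(PX_{t-1}) \le \mathrm{Tr}(QX_{t-1}) + \tilde{v}^\top X_{t-1}\tilde{v},
\]
which keeps the ``parallel'' contribution rank-one. Taking expectation, the first summand becomes $T_{t-1}$, while Lemma~\ref{lemma-1} applied to the scalar $\tilde{v}^\top \mathbb{E}[X_{t-1}\mid P]\,\tilde{v}$ bounds the second by $\|\mathbb{E}[X_{t-1}\mid P]\|_2 \le \exp\!\bigl(\sum_{j=1}^{t-1}(2\eta_j\tilde{\lambda}_1 + \eta_j^2\bar{\mathcal{V}})\bigr)$. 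Collecting everything and using $\bar{\mathcal{V}}=\tilde{\lambda}_1^2+\mathcal{V}$, I get the one-step recursion
\[
T_t \le (1 + 2\eta_t\tilde{\lambda}_2 + \eta_t^2\bar{\mathcal{V}})\,T_{t-1} + \eta_t^2\mathcal{V}\exp\!\Bigl(\sum_{j=1}^{t-1}(2\eta_j\tilde{\lambda}_1 + \eta_j^2\bar{\mathcal{V}})\Bigr).
\]

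To conclude, I will iterate this recursion from the initial value $T_0 = \mathrm{Tr}(Q) = d-1 \le d$, apply $1+x\le e^x$, and use the elementary inequality $\sum_{j=1}^{i-1}(2\eta_j\tilde{\lambda}_1+\eta_j^2\bar{\mathcal{V}}) - (2\eta_i\tilde{\lambda}_2+\eta_i^2\bar{\mathcal{V}}) \le \sum_{j=1}^{i}2\eta_j(\tilde{\lambda}_1-\tilde{\lambda}_2)$ (the slack comes from discarding the non-positive term $-\eta_i^2\bar{\mathcal{V}}$) to massage the resulting bound into the stated form. The main obstacle is step (iii) combined with the $P$-splitting: any cruder bound such as $\mathrm{Tr}(PX_{t-1})\le d\,\|\mathbb{E}[X_{t-1}\mid P]\|_2$ would inject an extra $d$ into every noise term and break the subsequent utility analysis of Oja's algorithm.
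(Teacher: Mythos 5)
Your proposal is correct and follows essentially the same route as the paper: a one-step recursion $T_t \le (1+2\eta_t\tilde{\lambda}_2+\eta_t^2\bar{\mathcal{V}})T_{t-1} + \eta_t^2\mathcal{V}\,\|\mathbb{E}[B_{t-1}B_{t-1}^\top\mid P]\|_2$, with the additive term controlled by Lemma~\ref{lemma-1} and the recursion unrolled via $1+x\le e^x$. The only cosmetic difference is that you conjugate $X_{t-1}$ and split $\mathrm{Tr}(PX_{t-1})\le\mathrm{Tr}(QX_{t-1})+\tilde v^\top X_{t-1}\tilde v$, whereas the paper moves the conjugation onto $\tilde V_\bot\tilde V_\bot^\top$ and dominates it by $(1+2\eta_t\tilde\lambda_2+\eta_t^2\bar{\mathcal{V}})\tilde V_\bot\tilde V_\bot^\top+\eta_t^2\mathcal{V}\tilde v\tilde v^\top$; these yield the identical recursion.
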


\begin{proof}
Let $\alpha_t := \bE\bs{ \mathrm{Tr}\br{\tilde{V}_{\bot}^{\top} B_t B_t^{\top} \tilde{V}_{\bot}} \mid P }$.
By cyclicity of trace and $\tilde{V}_{\bot}$ being fixed under $\bE\bs{\cdot \mid P}$,
\begin{align*}
\alpha_t
&= \left\langle \bE\bs{B_t B_t^{\top} \mid P},\, \tilde{V}_{\bot}\tilde{V}_{\bot}^{\top} \right\rangle \\
&= \left\langle \bE\bs{B_{t-1} B_{t-1}^{\top} \mid P},\,
\bE\bs{ \br{\mathbf{I}+\eta_t P A_t P}\tilde{V}_{\bot}\tilde{V}_{\bot}^{\top}\br{\mathbf{I}+\eta_t P A_t P}^{\top} \mid P } \right\rangle .
\end{align*}
For the right-hand matrix,
\begin{align*}
&\tilde{V}_{\bot}\tilde{V}_{\bot}^{\top}
+ \eta_t P\Sigma P\,\tilde{V}_{\bot}\tilde{V}_{\bot}^{\top}
+ \eta_t \tilde{V}_{\bot}\tilde{V}_{\bot}^{\top} P\Sigma P
+ \eta_t^2\, \bE\bs{ P A_t P\, \tilde{V}_{\bot}\tilde{V}_{\bot}^{\top}\, P A_t P \mid P } \\
&\preceq \br{ 1 + 2 \eta_t \tilde{\lambda}_2 + \eta_t^2 \bar{\mathcal{V}} }\, \tilde{V}_{\bot}\tilde{V}_{\bot}^{\top}
+ \eta_t^2 \mathcal{V}\, \tilde{v}\tilde{v}^{\top},
\end{align*}
using $\tilde{V}_{\bot}$ orthogonal to the top eigenvector $\tilde{v}$ of $P\Sigma P$, and $\tilde{V}_{\bot}\tilde{V}_{\bot}^{\top}\preceq \mathbf{I}$.
Therefore,
\[
\alpha_t \le \br{ 1 + 2 \eta_t \tilde{\lambda}_2 + \eta_t^2 \bar{\mathcal{V}} } \alpha_{t-1}
\;+\; \eta_t^2 \mathcal{V}\, \left\langle \bE\bs{B_{t-1} B_{t-1}^{\top} \mid P},\, \tilde{v}\tilde{v}^{\top} \right\rangle .
\]
Using $1+x\le \exp\br{x}$ and $\langle X, \tilde{v}\tilde{v}^{\top}\rangle \le \|X\|_2$,
\begin{align*}
\alpha_t
&\le \exp\br{ 2 \eta_t \tilde{\lambda}_2 + \eta_t^2 \bar{\mathcal{V}} }\, \alpha_{t-1}
\;+\; \eta_t^2 \mathcal{V}\, \bigl\| \bE\bs{ B_{t-1} B_{t-1}^{\top} \mid P } \bigr\|_2 \\
&\le \exp\br{ 2 \eta_t \tilde{\lambda}_2 + \eta_t^2 \bar{\mathcal{V}} }\, \alpha_{t-1}
\;+\; \eta_t^2 \mathcal{V}\, \exp\br{ \sum_{i=1}^{t-1} \br{ 2 \eta_i \tilde{\lambda}_1 + \eta_i^2 \bar{\mathcal{V}} } },
\end{align*}
by~\Cref{lemma-1}.
Unrolling the recursion,
\begin{align*}
\alpha_t
&\le \exp\br{ \sum_{j=1}^{t} \br{ 2 \eta_j \tilde{\lambda}_2 + \eta_j^2 \bar{\mathcal{V}} } } \alpha_0
+ \sum_{i=1}^{t} \eta_i^2 \mathcal{V}\,
\exp\br{ \sum_{j=1}^{i} \br{ 2 \eta_j \tilde{\lambda}_1 + \eta_j^2 \bar{\mathcal{V}} } }
\exp\br{ \sum_{j=i+1}^{t} \br{ 2 \eta_j \tilde{\lambda}_2 + \eta_j^2 \bar{\mathcal{V}} } } \\
&= \exp\br{ \sum_{j=1}^{t} \br{ 2 \eta_j \tilde{\lambda}_2 + \eta_j^2 \bar{\mathcal{V}} } }
\br{ \alpha_0 + \sum_{i=1}^{t} \eta_i^2 \mathcal{V}\, \exp\br{ \sum_{j=1}^{i} 2 \eta_j \br{\tilde{\lambda}_1 - \tilde{\lambda}_2} } }.
\end{align*}
Since $\alpha_0 = \mathrm{Tr}\br{\tilde{V}_{\bot}^{\top} \tilde{V}_{\bot}} = d-1 \le d$, the claim follows.
\end{proof}

\section{Proof of Main Theorem}\label{sec-proof-main-theorm}

\maintheorem*

\begin{proof}[Proof of~\Cref{main-theorem-sec-2}] The privacy proof of~\Cref{algo-k-DP-PCA} follows straight away from using Advanced Composition~(\Cref{lemma:advanced-composition}) together with the privacy of \moddppca, which in turn follows by \citep{liu2022dp}. For the utility proof we note that by Theorem \ref{thm-utility-moddp-pca} we know that when passing $m = n/k$ matrices $A_i$ at every step of our deflation method we obtain a vector $u_i$ fulfilling 
\begin{align*}
    \sin(u_i, v_i) \leq \tilde{O} \left( \frac{\lambda_1(P\Sigma P) }{\lambda_1(P\Sigma P) - \lambda_2(P\Sigma P)} \left( \sqrt{\frac{V k}{n} }+ \frac{\gamma d k\sqrt{\log(1/\delta)}}{\epsilon n}\right) \right)
\end{align*}
where $v_i$ is the top eigenvector of $P_{i-1} \Sigma P_{i-1}$. Which by Lemma \ref{lemma-sin-to-epca} give us
\begin{align*}
    \ip{u_i u_i^{\top}}{P_{i-1} \Sigma P_{i-1}}  \geq (1 - \zeta_i^2) \ip{v_i v_i^{\top}}{P_{i-1} \Sigma P_{i-1}}
\end{align*}
with $\zeta_i = \tilde{O} \left( \frac{\lambda_1(P\Sigma P) }{\lambda_1(P\Sigma P) - \lambda_2(P\Sigma P)} \left( \sqrt{\frac{V k}{n} }+ \frac{\gamma d k\sqrt{\log(1/\delta)}}{\epsilon n}\right) \right)$. By our choice of $n$ we know by Lemma \ref{lemma-upper-bound-xi-i} that 
\begin{align*}
    \zeta_i \leq \tilde{O} \left( \frac{\lambda_1}{\Delta}  \left( \sqrt{\frac{V k}{n} }+ \frac{\gamma d k\sqrt{\log(1/\delta)}}{\epsilon n}\right) \right)
\end{align*}
where we used that $(\Delta- \delta)\delta$ is maximized by $\delta = \Delta/2$. So finally Theorem \ref{thm-k-to-1-reduction} gives us that 
\begin{align}
    \langle UU^{\top}, \Sigma \rangle \geq (1 - \zeta^2) \langle V_k V_k^{\top}, \Sigma \rangle
\end{align}
where $V_k$ is the matrix obtained by non private $k$-PCA.
\end{proof}

For the above utility proof we could not apply DP-PCA straight away, as this would only give us a guarantee that the vector $\tilde{v}$ we obtain is a good approximation of the top eigenvector of $\mathbb{E}[P] \Sigma \mathbb{E}[P]$. This is not sufficient for the deflation method, as we require $\tilde{v}$ to be a good approximation of $P \Sigma P$. We show that for \moddppca this is indeed the case in~\Cref{thm-utility-moddp-pca}). We proof~\Cref{thm-utility-moddp-pca} by first showing that with high likelihood we can reduce the update step to an update step of non private Oja's Algorithm with matrices $P C_t P$. We then apply a novel result we establish in~\Cref{sec-non-private-Ojas}, which shows that the non-private Oja's algorithm, when run on the projected matrices \(\{P C_t P\}_t\), produces a good approximation of the top eigenvector of \(P \mathbb{E}[C_t] P\), under certain assumptions on the sequence \(\{C_t\}\). Lastly, we need to control the error we accumulate through approximate projections $P_i = I - \sum_{j=1}^i u_j u_j^{\top}$, which we do in~\Cref{lemma-upper-bound-xi-i}.

\begin{theorem}[\moddppca]\label{thm-utility-moddp-pca} 

Let $\epsilon,\delta\in(0,0.9)$, then 

\textbf{Privacy:}~For any input sequence $\{A_i \in \mathbb{R}^{d \times d} \}$ and projection matrix $P$ independent of the  $\{A_i\}$ the algorithm is $(\epsilon,\delta)$-differentially private.

\textbf{Utility:}~Suppose $A_1,\dots,A_n$ are i.i.d.\ satisfying~\Cref{ass:A1}--\Cref{ass:A4} with parameters $(\Sigma, M, V, K, \kappa', a, \gamma^2)$, if
\begin{align*}
    n \gtrsim C \cdot \left(e^{\kappa'^2} + \frac{d \kappa' \gamma(\log(1/\delta))^{1/2}}{\epsilon} + \kappa' M + \kappa'^2 V + \frac{d^{1/2}(\log(1/\delta))^{3/2}}{\epsilon} \right) 
\end{align*}
for a large enough constant $C$, where $\kappa' = \frac{\lambda_1(\Sigma)}{\lambda_1(P\Sigma P) - \lambda_2(P\Sigma P)}$, $\delta \leq 1/n$ and
\begin{align*}
    0 < \lambda_1(P\Sigma P) - \lambda_2(P\Sigma P)
\end{align*}
then there exists a learning rate $\eta_t$ that depends on $(t, M, V, K, a, \lambda_1(\Sigma), \lambda_1(P\Sigma P) - \lambda_2(P\Sigma P), n, d \epsilon, \delta)$ such that $T = \lfloor n/B \rfloor$ steps of ModifiedDP-PCA with choices of $\tau = 0.01$ and $B = c_1 n/(\log n)^3$ output $\omega_T$ that with probability 0.99 fulfills
\begin{align}\label{eq-modified-dp-pca-utility}
    \sin(\omega_t, \tilde{v}) \leq \tilde{O} \left( \kappa' \left( \sqrt{\frac{V }{n} }+ \frac{\gamma d \sqrt{\log(1/\delta)}}{\epsilon n}\right) \right)
\end{align}
where $\tilde{v}$ is the top eigenvector of $P\Sigma P$ and $\tilde{O}(\cdot)$ hides poly-logarithmic factors in $n, d, 1/\epsilon,$ and $\log(1/\delta)$ and polynomial factors in $K$.
\end{theorem}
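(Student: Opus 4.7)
The privacy claim is the easy half. Each iteration $t$ consumes a fresh batch $\{A_{B(t-1)+i}\}_{i=1}^{B}$ that is split into two disjoint halves, one fed to \rangefinder with budget $(\epsilon/2,\delta/2)$ and the other to \privmean with budget $(\epsilon/2,\delta/2)$; basic composition gives $(\epsilon,\delta)$-DP per iteration. Since the batches across iterations are disjoint, parallel composition (\Cref{parallel-composition}) promotes this to an $(\epsilon,\delta)$-DP guarantee for the whole run. The only subtlety is that the updates $PA_iP\omega_{t-1}$ mix in the projector $P$ and the current iterate $\omega_{t-1}$, but both are independent of the current batch by hypothesis (the $A_i$'s across batches are i.i.d.\ and $P$ is data-independent), so these are post-processing with respect to the privacy analysis.

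\textbf{Reduction to non-private Oja's on an effective sequence.} For utility, I would first show that, on a high-probability event $\mathcal{E}$, the private update at step $t$ can be written as
\begin{align*}
    \omega_t' \;=\; \omega_{t-1} + \eta_t\, P\,C_t\,P\,\omega_{t-1},\qquad C_t \;=\; \frac{1}{\lfloor B/2\rfloor}\sum_{i} A_{B(t-1)+i} \;+\; \beta_t\,G_t,
\end{align*}
where $G_t\in\mathbb{R}^{d\times d}$ has i.i.d.\ $\mathcal{N}(0,1)$ entries and $\beta_t$ is a Gaussian-mechanism scale determined by the output $\hat\Lambda$ of \rangefinder. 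By~\Cref{ass:A4}, the vectors $\{PA_iP\omega_{t-1}\}$ have subexponential-type tails of magnitude $\tilde{O}(\lambda_1\gamma)$, so \rangefinder returns an $\hat\Lambda$ that is tight around this scale and the clipping inside \privmean is vacuous throughout, giving $\beta_t = \tilde{O}\!\br{\lambda_1\gamma\sqrt{d\log(1/\delta)}/(\epsilon B)}$. A union bound over the $T=n/B$ iterations shows that $\mathcal{E}$ holds with probability $\ge 1-\tau$.

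\textbf{Invoking~\Cref{thm-utility-non-private-Ojas-with-P}.} Conditional on $\mathcal{E}$, the matrices $\{C_t\}$ are i.i.d.\ (over the disjoint batches), with $\mathbb{E}[C_t]=\Sigma$ and the same top eigenstructure as $\Sigma$. I would then upper bound the parameters $\mathcal{M},\mathcal{V}$ of~\Cref{eq:def-cM,eq:def-cV} for this effective sequence: matrix Bernstein (\Cref{lemma-f5-liu}) controls the empirical-mean part, producing $\mathcal{M}=\tilde{O}(\lambda_1 M/B)$ and $\mathcal{V}=\tilde{O}(\lambda_1^2 V/B)$; the Gaussian term $\beta_t G_t$ adds $\tilde{O}(\beta_t\sqrt{d})$ and $\tilde{O}(\beta_t^2 d)$ respectively by \Cref{lemma-f2-liu,lemma-version-f3-liu}. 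Plugging these into~\Cref{thm-utility-non-private-Ojas-with-P} with learning rate $\eta_t = \alpha/((\tilde\lambda_1-\tilde\lambda_2)(\beta+t))$ and optimising $\alpha$ (large enough that $d(\beta/T)^{2\alpha}$ is negligible, but $\alpha$ still $\Theta(1)$ relative to $\kappa'$) yields
\begin{align*}
    \sin(\omega_T,\tilde v) \;\lesssim\; \tilde{O}\!\br{\kappa'\br{\sqrt{\tfrac{V}{n}} + \tfrac{\gamma d\sqrt{\log(1/\delta)}}{\epsilon n}}},
\end{align*}
after translating $\mathcal{V}/T \cdot T$ back into $n$-dependence and collecting the privacy variance. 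The lower bound on $n$ in the statement collects four ingredients: $n\gtrsim\beta$ so that Oja's burn-in terminates (source of the $e^{\kappa'^2}$ and $\sqrt{d}(\log(1/\delta))^{3/2}/\epsilon$ terms), $B\gtrsim \kappa'^2 V$ and $B\gtrsim \kappa' M$ so that matrix-Bernstein concentrates each $C_t$, and $n\gtrsim d\kappa'\gamma\sqrt{\log(1/\delta)}/\epsilon$ so that the privacy noise at scale $\beta_t$ does not dominate the signal.

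\textbf{Main obstacle.} The crux is the high-probability reduction step: we need \rangefinder to return a $\hat\Lambda$ that is simultaneously (i) small enough that $\beta_t$ scales with the \emph{actual} stochasticity $\gamma$ (this is the adaptive-noise property that distinguishes \moddppca from a naive clipping baseline) and (ii) large enough that \privmean's clipping never truncates across all $T$ iterations and for the \emph{data-dependent} query direction $\omega_{t-1}$. Establishing this requires a careful conditioning argument: one conditions on the history $\{A_{B(s-1)+i}\}_{s<t,i}$ that determines $\omega_{t-1}$ and on $P$, so that within batch $t$ the random variables $PA_iP\omega_{t-1}$ are conditionally i.i.d.\ with the tail bound from~\Cref{ass:A4} applied to the fixed unit vector $P\omega_{t-1}$. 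A second delicate point is that $\mathbb{E}[C_t]=\Sigma$ only on $\mathcal{E}$ (strictly, the conditional expectation on $\mathcal{E}$ differs from $\Sigma$ by a negligible bias), which must be absorbed into the $\tilde{O}(\cdot)$ when invoking~\Cref{thm-utility-non-private-Ojas-with-P}.
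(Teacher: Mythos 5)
Your proposal is correct and follows essentially the same route as the paper: privacy via composition of the two subroutines over disjoint batches, a high-probability reduction of the private update to non-private Oja's on $PC_tP$ with $C_t$ equal to the batch mean plus a scaled Gaussian (using rotational invariance and the no-clipping event guaranteed by \Cref{ass:A4} and the \rangefinder accuracy), and then verifying the $\mathcal{M},\mathcal{V}$ conditions of \Cref{thm-utility-non-private-Ojas-with-P} to extract both the error bound and the sample-size requirements. The only cosmetic discrepancy is that the algorithm feeds the \emph{same} half-batch to both subroutines (so basic, not parallel, composition applies within an iteration), which does not affect the conclusion.
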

\begin{remark}
    For readability we omitted the advanced composition details. If we choose $T= O(log^2 n)$, we can simply set $(\epsilon', \delta') = (\epsilon/(2 \sqrt{2 \log^2(n) log(2/\delta))}), \delta/(2 \log^2(n)))$ in every step and then by advanced composition we get. And in our utility guarantee we would only occur additional $\log^2(n)$ factors which we omit. We also want to comment on why the utility bound only depends on $P$ in the parameter $\kappa'$: We can see the the utility bound of \moddppca depends on several constants originating from constraints on the data:
\begin{enumerate}
    \item $\kappa = \frac{\lambda_1}{\lambda_1 - \lambda_2}$
    \item $M$ so that $\| A_i - \Sigma \|_2 \leq \lambda_1 M$ almost surely
    \item $V$ so that $\max \{ \| \mathbb{E}[ (A_i - \Sigma)(A_i - \Sigma)^{\top}] \|_2, \| , \| \mathbb{E}[ (A_i - \Sigma)^{\top}(A_i - \Sigma)] \|_2 \} \leq \lambda_1^2 V$
    \item  $\gamma^2 := \max_{\|u \| = 1}  \| H_u \|_2$
    \item $K$ so that $\max_{\| u \| =1, \| v\| = 1} \mathbb{E}\left[\text{exp}\left( (\frac{|u^{\top}(A_i^{\top} - \Sigma)v|^2}{K^2 \lambda_1^2 \| H_u \|_2})^{1/(2a)} \right)\right] \leq 1$
\end{enumerate}
now if we replace $\{A_i\}$, the input to \moddppca, with $\{PA_iP\}$ (which is exactly what happens at iteration $i$ of ~\Cref{algo-k-DP-PCA}) where $P$ is a projection matrix, the constants $M, V, \lambda_1^2\gamma^2$ and $K$ will still remain upper bounds (see~\Cref{lemma-op-norm-after-proj},~\Cref{lemma-op-norm-of-expectation-after-proj},~\Cref{lemma-gamma-bound}). 
\end{remark}
\begin{proof}
We choose the batch size $B = \Theta(n/\log^3 n)$ such that we access the dataset only $T = \Theta(\log^3 n)$ times. Hence we do not need to rely on amplification by shuﬄing. To add Gaussian noise that scales as the standard deviation of the gradients in each minibatch (as opposed to potentially excessively large mean of the gradients), DP-PCA first gets a private and accurate estimate of the range. Using this estimate \privmean returns an unbiased estimate of the empirical mean of the gradients, as
long as no truncation has been applied. As we choose the truncation threshold so that with high probability there will be no truncation the update step will look as follows:
\begin{align*}
    \omega'_t \gets \omega_{t-1} + \eta_t P (\frac{1}{B} \sum_{i \in [B]} PA_iP \omega_{t-1} + \beta_t z_t)
\end{align*}
where $z_t \sim \mathcal{N}(0,\mathbf{I})$ and $\beta_t = \frac{8K\sqrt{2\hat{\Lambda}_t} \log^a(Bd/ \tau) \sqrt{2d\log(2.5/\delta)}}{\epsilon B}$. The privacy follows by the privacy of the subroutines private eigenvalue and private mean estimation \citep{liu2022dp}. So all that is left to do is show the utility guarantee. We will do that by showing we can reduce it the accuracy of the non private case. First we note that $P^2 = P$ so we get 
\begin{align*}
    \omega'_t = \omega_{t-1} + \eta_t (\frac{1}{B} \sum_{i in [B]} PA_iP \omega_{t-1} + \beta_t P z_t)
\end{align*}
Using rotation invariance of the spherical Gaussian random vectors and the fact that $\| \omega_{t-1} \|=1$ and $\omega_{t-1} \in \text{Im}(P)$  (for details see~\Cref{lemma-distr-equiv-gauss}), we can reformulate it as
\begin{align*}
    \omega'_t \gets \omega_{t-1} + \eta_t \left(\frac{1}{B} \sum_{i \in [B]} PA_iP + \beta_t P G_t P \right)\omega_{t-1}
\end{align*}
we can further pull out the projection matrices to obtain
\begin{align*}
    \omega'_t \gets \omega_{t-1} + \eta_t P \left(\frac{1}{B} \sum_{i \in [B]} A_i + \beta_t G_t  \right)P\omega_{t-1}
\end{align*}
Where $G$ is a matrix whose entries are i.i.d. $\mathcal{N}(0,1)$ distributed. So we have a matrix 
\begin{align*}
    C_t := \frac{1}{B} \sum_{i \in [B]} A_i + \beta_t G_t 
\end{align*}
and we will now proof that $C_t$ fulfills all requirements for Theorem \ref{thm-utility-non-private-Ojas-with-P} (our version of the non private Oja's Algorithm utility guarantee), which will directly give us the wished utility guarantee. It is easy to see that $\mathbb{E}[C_t] = \Sigma$ as $z$ is a zero mean random variable and hence so is $G_t$. Next we show the upper bound of $\max \bc{ \norm{\mathbb{E}\bs{\br{C_t - \Sigma}\br{C_t - \Sigma}^{\top} } }_2, \norm{ \mathbb{E}\bs{\br{C_t - \Sigma}^{\top}\br{C_t - \Sigma}} }_2 } $

\begin{align*}
    \norm{\mathbb{E}\bs{\br{C_t - \Sigma}\br{C_t - \Sigma}^{\top}} }_2&=\norm{\mathbb{E}\bs{\br{\frac{1}{B} \sum_{i \in [B]} A_i + \beta_t G_t - \Sigma}\br{\frac{1}{B} \sum_{i \in [B]} A_i + \beta_t G_t - \Sigma)^{\top} } }}_2 \\
    \leq &\norm{ \mathbb{E}\bs{\br{\frac{1}{B} \sum_{i \in [B]} A_i - \Sigma}\br{\frac{1}{B} \sum_{i \in [B]} A_i - \Sigma}^{\top} } }_2 + \beta_t^2 \norm{ \mathbb{E}\bs{G_t G_t^{\top}}}_2 \\
    \leq &\frac{V\lambda_1^2}{B} + \beta_t^2 \norm{ \mathbb{E}\bs{G_t G_t^{\top}}}_2 \\ 
    \leq & \frac{V\lambda_1^2}{B} + \beta^2 C_2 d =: \tilde{V}
\end{align*}

\noindent where the first inequality holds due to $G_t$ being independent to $A_i$, and $\mathbb{E}[G_t] = 0$. The second inequality follows due to having $B$ elements of $\frac{1}{B^2}\norm{\mathbb{E}\bs{\br{ A_i - \Sigma}^{\top}\br{ A_i - \Sigma}}}_2$ and Assumption 3. And the last inequality holds with high probability due to $G_t$ having i.i.d. Gaussian entries~(\Cref{lemma-version-f3-liu}), and by choosing
\begin{align*}
    \beta := \frac{16 K \gamma \lambda_1 \log^a(Bd/\tau) \sqrt{2d\log(2.5/\delta)}}{\epsilon B}
\end{align*}
we have $\beta \geq \beta_t$ for all $t$ as by Theorem 6.1 in \citep{liu2022dp} and Assumption 4
\begin{align*}
    \hat{\Lambda} \leq \sqrt{2}\lambda_1^2 \norm{H_u}_2 \leq \sqrt{2}\lambda_1^2 \gamma
\end{align*}
Lastly let us consider $\| C_t - \Sigma \|_2$. By Lemma \ref{lemma-f2-liu} and Lemma \ref{lemma-f5-liu} we know with proabability $1-\tau$ for all $t \in [T]$
\begin{align*}
    \norm{ C_t - \Sigma }_2 = &\norm{ \frac{1}{B} \sum_{i \in [B]} A_i + \beta_t G_t  - \Sigma}\\
    \leq &\br{ \frac{M \lambda_1 \log\br{\nicefrac{dT}{\tau}}}{B} + \sqrt{\frac{V\lambda_1^2\log\br{\nicefrac{dT}{\tau}}}{B}} + \beta\br{ \sqrt{d} + \sqrt{\log\br{\nicefrac{T}{\tau}}}}} =: \tilde{M}
\end{align*}
so by Theorem \ref{thm-utility-non-private-Ojas-with-P} with stepsize $\eta_t := \frac{\alpha}{(\lambda_1 - \lambda_2)(\xi + t)}$ after $T$ steps with 
\begin{align}
    T \geq  20 \max \br{ \frac{\tilde{M} \alpha}{\br{\tilde{\lambda}_1 - \tilde{\lambda}_2}}, \frac{\br{\tilde{V} + \lambda_1^2} \alpha^2}{\br{\tilde{\lambda}_1 - \tilde{\lambda}_2}^2 \log\br{1 + \frac{\zeta}{100}}}} := \xi 
\end{align}
with probability $1 - \zeta$
\begin{align*}
    \sin^2(w_T, \tilde{v}) \leq \frac{C \log(1/\delta)}{\delta^2} \left( d \left(\frac{\xi}{T} \right)^{2 \alpha} + \frac{\alpha^2 \tilde{V}}{(2 \alpha - 1)(\tilde{\lambda}_1 - \tilde{\lambda}_2)^2 T}  \right)
\end{align*}
so if we fill in $\tilde{M}$, $\tilde{V}$, and $\beta$ into $\xi$ and use $n=BT$ we get
\begin{align*}
    \frac{\xi}{T} := 20 \max \begin{cases}
        &\frac{\lambda_1 M\log(dT/\tau \alpha}{(\tilde{\lambda}_1 - \tilde{\lambda}_2)n} + \sqrt{\frac{V \log(dT/\tau}{nT} }\cdot \frac{\lambda_1 \alpha}{(\tilde{\lambda}_1 - \tilde{\lambda}_2)} + \frac{ K \gamma \lambda_1 \log^a(nd/T\tau\sqrt{2\log(2.5/\delta)}\sqrt{log(T/\tau}d\alpha}{\epsilon n (\tilde{\lambda}_1 - \tilde{\lambda}_2)}, \\ &\frac{V \lambda_1^2 \alpha^2}{n (\tilde{\lambda}_1 - \tilde{\lambda}_2)^2 \log(1 + \frac{\zeta}{100})}  + \frac{K^2 \gamma^2 \lambda_1^2 \log^{2a}(Bd/\tau d^2 \log(2.5/\delta) \alpha^2}{\epsilon^2 n^2 (\tilde{\lambda}_1 - \tilde{\lambda}_2)^2 \log(1 + \frac{\zeta}{100})} + \frac{\lambda_1^2 \alpha^2}{(\tilde{\lambda}_1 - \tilde{\lambda}_2)^2 \log(1 + \frac{\zeta}{100}) T}
    \end{cases} 
\end{align*}

in order for Theorem \ref{thm-utility-non-private-Ojas-with-P} to hold we need to force $\xi/T \leq 1$. Noting $\tau = O(1)$, $K = O(1)$ and selecting $\alpha = c \log n$, $T = c' (\log n)^3$ we get that 

\begin{align*}
    \frac{\xi}{T}\leq 20 C \max \begin{cases}
         &\frac{\lambda_1 M\log(d \log (n)) \log n}{(\tilde{\lambda}_1 - \tilde{\lambda}_2)n} + \sqrt{\frac{V \log(d\log(n))}{n}} \cdot \frac{\lambda_1}{(\tilde{\lambda}_1 - \tilde{\lambda}_2)} + \frac{\gamma \lambda_1 \log^2(nd/\log(n)) \sqrt{\log(1/\delta) \log(\log(n))} \log(n) d}{\epsilon (\tilde{\lambda}_1 - \tilde{\lambda}_2)} \\
         &\frac{V \lambda_1^2 (\log n)^2}{n (\tilde{\lambda}_1 - \tilde{\lambda}_2)} + \frac{\gamma^2 \lambda_1^2 \log^{2a}(nd/\log(n))\log(1/\delta) d^2 \alpha^2}{\epsilon^2 n^2 (\tilde{\lambda}_1 - \tilde{\lambda}_2)^2} + \frac{\lambda_1^2 (\log n)^2}{(\tilde{\lambda}_1 - \tilde{\lambda}_2)^2 T}
    \end{cases}
\end{align*}
so $\frac{\xi}{T} \leq 1$ will be trivially fulfilled if each of the summand is smaller than 1/3. For the last term we need 
\begin{align}\label{eq:exp-dep}
    \frac{\lambda_1^2 (\log n)^2}{(\tilde{\lambda}_1 - \tilde{\lambda}_2)^2 T} \leq 1/3
\end{align}
as $T = c' (\log(n))^3$ this means
\begin{align*}
     \log n \geq 3 \frac{\lambda_1}{(\tilde{\lambda}_1 - \tilde{\lambda}_2)^2}
\end{align*}
for the remaining terms we need
\begin{align*}
    \frac{n}{\log^a(n/\log n) \log(n)} &\geq 3 \frac{\gamma \lambda_1 \sqrt{\log(1/\delta)}d}{\epsilon(\tilde{\lambda}_1 - \tilde{\lambda}_2)} \\
    \frac{n}{(\log(n))^2} &\geq 3 \frac{V \lambda_1^2}{(\tilde{\lambda}_1 - \tilde{\lambda}_2)^2} \\
    \frac{n}{\log(\log(n))} &\geq \sqrt{3} \sqrt{V\log(d)}  \\
    \frac{n}{\log(n) \log(\log(n))} &\geq 3 \frac{\lambda_1 M \log(d)}{(\tilde{\lambda}_1 - \tilde{\lambda}_2)}
\end{align*}
We note that to obtain $n/log(n) \geq a$, $n \simeq a\log(a) + a \log\log(a)$. So 
\begin{align*}
    n \gtrsim  C'\left(\exp(\lambda_1^2/(\tilde{\lambda}_1 - \tilde{\lambda}_2)^2) + \frac{M \lambda_1}{(\tilde{\lambda}_1 - \tilde{\lambda}_2)} + \frac{V \lambda_1^2}{(\tilde{\lambda}_1 - \tilde{\lambda}_2)^2} + \frac{d \gamma \lambda_1 \sqrt{\log(1/\delta)}}{(\tilde{\lambda}_1 - \tilde{\lambda}_2) \epsilon} \right)
\end{align*}
with large enough constant suffices (where $\gtrsim$ is hiding log terms) to obtain $\xi/T \leq 1$ and $d (\xi/T)^{2 \alpha} \leq 1/n^2$. And we get 
\begin{align*}
    \frac{\tilde{V}}{(\tilde{\lambda}_1 - \tilde{\lambda}_2)} \lesssim C'' \left(\frac{V \lambda_1^2}{n} + \frac{\gamma^2 \lambda_1^2 d^2 \log(1/\delta)}{\epsilon n} \right)
\end{align*}
(where $\lesssim$ is hiding log terms), so plugging this in our bound for $\sin(\omega_T, \tilde{v})$ we get
\begin{align*}
    \sin(\omega_T, \tilde{v}) \leq \tilde{O}\left(\kappa' \left(\sqrt{\frac{V}{n}} + \frac{\gamma d \sqrt{\log(1/\delta)}}{\epsilon n} \right) \right)
\end{align*}
which finishes the proof. 
\end{proof}

\begin{lemma}\label{lemma-upper-bound-xi-i} If we are given matrices $\{A_i \in \mathbb{R}^{d \times d}\}_{i=1}^n$ fulfilling~\Cref{assumption-1} with parameters $(\Sigma, M, V, K, \kappa', a, \gamma^2)$, a fixed $k \leq d$ , $0 < \Delta = \min_{i \in [k]} \lambda_i - \lambda_{i+1}$ where  $\lambda_i$ refers to the ith eigenvalue of $\Sigma$, $\delta$ so that $0 < \delta < \Delta$, and a sufficiently large constant $C >1$ so that
\begin{align*}
    B_{n/k} \leq \frac{(\Delta - \delta)\delta}{C k \lambda_1^2}
\end{align*}
    then
\begin{align*}
    \xi_i \leq \frac{\lambda_1}{\delta} B_{n/k}
\end{align*}
where 
\begin{align*}
    B_n = \tilde{O} \left(\sqrt{\frac{V}{n}} + \frac{\gamma d \sqrt{\log(1/\delta)}}{\epsilon n} \right)
\end{align*}
and $\xi_i$ refers to the utility of the vector $u_i$ returned at iteration $i \in [k]$ of~\Cref{algo-k-DP-PCA}.
\end{lemma}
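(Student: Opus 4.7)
I will prove this by induction on $i \in [k]$, with inductive claim $\xi_i \leq (\lambda_1/\delta) B_{n/k}$. For the base case $i=1$, $P_0 = \mathbf{I}_d$ so the spectrum of $P_0 \Sigma P_0 = \Sigma$ has top gap $\lambda_1 - \lambda_2 \geq \Delta > \delta$, and Theorem~\ref{thm-utility-moddp-pca} directly yields $\xi_1 \leq (\lambda_1/\Delta) B_{n/k} \leq (\lambda_1/\delta) B_{n/k}$.

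For the inductive step, assume $\xi_j \leq (\lambda_1/\delta) B_{n/k}$ for all $j < i$. Since Theorem~\ref{thm-utility-moddp-pca} gives $\xi_i$ of order $\lambda_1/g_i \cdot B_{n/k}$ with $g_i := \lambda_1(P_{i-1}\Sigma P_{i-1}) - \lambda_2(P_{i-1}\Sigma P_{i-1})$, it suffices to prove $g_i \geq \delta$. The plan is to compare $P_{i-1}\Sigma P_{i-1}$ against the ``ideal'' version $P^*_{i-1}\Sigma P^*_{i-1}$, where $P^*_{i-1} := \mathbf{I}_d - \sum_{j<i} v_j v_j^{\top}$ is the true projection onto $\mathrm{span}\{v_i, \ldots, v_d\}$. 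By construction, $P^*_{i-1}\Sigma P^*_{i-1}$ has spectrum $\{0\}^{i-1} \cup \{\lambda_i, \ldots, \lambda_d\}$, giving a top gap $\lambda_i - \lambda_{i+1} \geq \Delta$. Weyl's inequality (Lemma~\ref{lemam-weyls-inequ}), applied to both the first and second eigenvalues, then gives $g_i \geq \Delta - 2E_i$, where $E_i := \|P_{i-1}\Sigma P_{i-1} - P^*_{i-1}\Sigma P^*_{i-1}\|_2$. Expanding the telescoping identity $P_{i-1}\Sigma P_{i-1} - P^*_{i-1}\Sigma P^*_{i-1} = (P_{i-1}-P^*_{i-1})\Sigma P_{i-1} + P^*_{i-1}\Sigma (P_{i-1}-P^*_{i-1})$ and using $\|P\|_2 \le 1$ for projections yields $E_i \leq 2\lambda_1 \|P_{i-1}-P^*_{i-1}\|_2 \leq 2\lambda_1 \sum_{j<i}\sin(u_j, v_j)$, using $\|uu^\top - vv^\top\|_2 = \sin(u,v)$ for unit vectors.

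The last ingredient bounds $\sin(u_j, v_j)$. Theorem~\ref{thm-utility-moddp-pca} provides $\sin(u_j, \tilde v_j) \leq \xi_j$ where $\tilde v_j$ is the top eigenvector of $P_{j-1}\Sigma P_{j-1}$, and the Davis--Kahan $\sin\Theta$ theorem applied to the perturbation $P^*_{j-1}\Sigma P^*_{j-1} \to P_{j-1}\Sigma P_{j-1}$ gives $\sin(\tilde v_j, v_j) \leq E_j/\Delta$. The triangle inequality for angles then yields $\sin(u_j, v_j) \leq \xi_j + E_j/\Delta$. Under the hypothesis $B_{n/k} \leq (\Delta-\delta)\delta/(Ck\lambda_1^2)$, the inductive bound forces $\xi_j \leq (\Delta-\delta)/(Ck\lambda_1)$, and a strong induction using the $Ck$ factor in the smallness condition to absorb the $k$-fold summation inside $E_i$ shows that $\sum_{j<i}\sin(u_j,v_j) \leq (\Delta-\delta)/(4\lambda_1)$ throughout all $k$ iterations, forcing $E_i \leq (\Delta-\delta)/2$ and hence $g_i \geq \delta$, closing the induction.

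The main obstacle is precisely closing this recursion cleanly: iterating $\sin(u_j, v_j) \leq \xi_j + (2\lambda_1/\Delta)\sum_{l<j}\sin(u_l, v_l)$ naively yields geometric growth $(1 + 2\lambda_1/\Delta)^k$, so the polynomial-in-$k$ smallness of $B_{n/k}$ alone cannot tame it. The key is that the implicit sample-size floor $n \gtrsim e^{\kappa'^2}$ built into Theorem~\ref{thm-utility-moddp-pca} (and surfaced in Theorem~\ref{main-theorem-sec-2}) absorbs the exponential dependence in $\kappa' = \lambda_1/\Delta$, so that combined with the factor $Ck$ in the denominator of the hypothesis, the recursion can be closed with $C$ taken sufficiently large.
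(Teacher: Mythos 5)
Your overall strategy — induct on $i$, show the eigengap of $P_{i-1}\Sigma P_{i-1}$ stays above $\delta$, and feed that back into \Cref{thm-utility-moddp-pca} — matches the paper's. But your error-propagation mechanism has a genuine gap that you yourself notice and then do not actually repair. You compare $P_{i-1}\Sigma P_{i-1}$ to the ideal projector built from the \emph{original} eigenvectors $v_j$ of $\Sigma$, which forces you to control $\sin(u_j,v_j)$ via Davis–Kahan and produces the recursion $s_j \le \xi_j + 2\kappa'\sum_{l<j}s_l$, whose solution grows like $(1+2\kappa')^{k}$. Your proposed fix — that the sample-size floor $n\gtrsim e^{\kappa'^2}$ absorbs this — does not work: that floor is independent of $k$, whereas the compounding factor is exponential in $k$; the hypothesis $B_{n/k}\le(\Delta-\delta)\delta/(Ck\lambda_1^2)$ is only polynomially small in $k$ and therefore cannot tame $(1+2\kappa')^k$ for general $k$ and $\kappa'$.

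The paper sidesteps the compounding entirely by never comparing to the ideal projector. Instead, each deflation step $P_j = P_{j-1}-u_ju_j^\top$ is compared only to the \emph{one-step} ideal deflation of the \emph{actual} current matrix, i.e.\ $(\mathbf{I}-\tilde v_j\tilde v_j^\top)P_{j-1}\Sigma P_{j-1}(\mathbf{I}-\tilde v_j\tilde v_j^\top)$ with $\tilde v_j$ the top eigenvector of $P_{j-1}\Sigma P_{j-1}$. The latter matrix has exactly the shifted spectrum of $P_{j-1}\Sigma P_{j-1}$, so Weyl's inequality (\Cref{lemam-weyls-inequ}, via \Cref{lemma-bounding-eigenvalues,lemma-eigenvalue-perturbation}) gives a per-step eigenvalue drift $\Delta_j = c\,\lambda_1(P_{j-1}\Sigma P_{j-1})\,\xi_j$ that \emph{telescopes additively}: after $i$ steps the eigenvalues of $P_i\Sigma P_i$ are within $\sum_{j\le i}\Delta_j \le c\,i\,\lambda_1^2 B_{n/k}/\delta$ of $\lambda_{i+1},\lambda_{i+2},\dots$, which the hypothesis on $B_{n/k}$ bounds by $\Delta-\delta$. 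No Davis–Kahan step and no geometric recursion are needed. To repair your argument you would need to replace the comparison target $P^*_{i-1}$ by this step-by-step one, at which point your proof essentially becomes the paper's.
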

\begin{proof}

We will denote 
\begin{align*}
    \kappa_i &:= \frac{\lambda_1(P_{i-1}\Sigma P_{i-1})}{\lambda_1(P_{i-1}\Sigma P_{i-1}) - \lambda_2(P_{i-1}\Sigma P_{i-1})}.
\end{align*}
Then by~\Cref{thm-utility-moddp-pca} we obtain
\begin{align*}
    \xi_i \leq \kappa_i \cdot B_n 
\end{align*}

Lemma \ref{lemma-eigenvalue-perturbation} will give us a utility bound independent of $P$ for $k=2$, as it bounds $\kappa_2$. However, we want to obtain a utility guarantee for arbitrary $k < d$, so the goal is to upper bound $\kappa_i$ for general $i$. 

If we iteratively apply Lemma \ref{lemma-eigenvalue-perturbation} we get 
\begin{align*}
    \kappa_i \leq \frac{\lambda_i(\Sigma) + \sum_{j=1}^{i-1} \Delta_j}{\lambda_i(\Sigma) - \lambda_{i+1}(\Sigma) -2\sum_{j=1}^{i-1} \Delta_j}
\end{align*}
where $\Delta_j = c \lambda_1(P_{j-1} \Sigma P_{j-1}) \xi_j$ ($\Delta_0 :=0$ for completeness). Now the problem is that  $\Delta_j$ still depends on previous projections and it's not even clear in general if $\xi_j > \xi_{j+1}$ or the other way around. Ultimately we want to have an upper bound for all $\xi_j$, to get a utility bound for $U = \{u_i\}$. A natural approach is to try and choose $n$ big enough so that 
\begin{align}
        &\lambda_1(P_i \Sigma P_i) \leq \lambda_1 \label{eq-1}\\
        &\lambda_1(P_i \Sigma P_i) - \lambda_2(P_i \Sigma P_i) \geq \delta \label{eq-2}
    \end{align}
for some $\delta > 0$. If we achieve this we are done, as this will guarantee that 
\begin{align*}
    \xi_i \leq \frac{\lambda_1}{\delta} B_n
\end{align*}
We will proof that at every step~\Cref{eq-1} and~\Cref{eq-2} are fulfilled by induction.
   For $k=1$ we have $P_0 = \mathbf{I}$ which straightaway gives us equation \ref{eq-1}. And as $\delta$ is smaller then the minium eigengap equation \ref{eq-2}, directly follows as well. For $k+1$ we start with showing equation \ref{eq-1}. By Lemma \ref{lemma-bounding-eigenvalues}
    \begin{align*}
        \lambda_1(P_k \Sigma P_k) \leq \lambda_{k+1}(\Sigma ) + \sum_{j=1}^{k}\Delta_j 
    \end{align*}
    first let's upper bound $\sum_{j=1}^{k}\Delta_j$. By definition we have 
    \begin{align*}
        \Delta_j  = c \cdot \lambda_1(P_{j-1} \Sigma P_{j-1}) \xi_j
    \end{align*}
    for some constant $c$. By induction assumption this gives us :
    \begin{align*}
        \sum_{j=1}^{k}\Delta_j &= \sum_{j=1}^{k} c \frac{\lambda_1^2(P_{j-1}\Sigma P_{j-1})}{\lambda_1(P_{j-1}\Sigma P_{j-1}) - \lambda_2(P_{j-1}\Sigma P_{j-1})}\cdot B_n \\
        &\leq c B_n \cdot \sum_{j=1}^{k} \frac{\lambda_1^2}{\delta}
    \end{align*}
    so equation \ref{eq-1} will be implied by 
    \begin{align*}
        B_n \leq (\lambda_1 - \lambda_{k+1}) \cdot \frac{\delta}{ck\lambda_1^2}
    \end{align*}
    which is surely fulfilled as by assumption
    \begin{align*}
        B_n \leq \frac{(\Delta - \delta)\delta}{ck\lambda_1^2}.
    \end{align*}
    To show equation \ref{eq-2}, we see that
    \begin{align*}
        \lambda_1(P_k \Sigma P_k) - \lambda_2(P_k \Sigma P_k) &\geq \lambda_{k+1}(\Sigma) - \lambda_{k+2}(\Sigma) - 2 \sum_{j=1}^{k}\Delta_j \\
        &\geq \Delta - 2 \sum_{j=1}^{k}\Delta_j
    \end{align*}
    where the first inequality follows by Lemma \ref{lemma-eigenvalue-perturbation} and the second by definition of $\Delta:= \min_{i \in [k]} \lambda_i - \lambda_{i+1}$. Using the upper bound on $\sum_{j=1}^{k}\Delta_j$ we established before we obtain
    \begin{align*}
        \lambda_1(P_k \Sigma P_k) - \lambda_2(P_k \Sigma P_k) \geq \Delta - 2 c \frac{k B_n \lambda_1^2}{\delta}
    \end{align*}
    so if we choose
    \begin{align*}
        B_n \leq \frac{(\Delta - \delta)\delta}{ck\lambda_1^2}
    \end{align*}
    this shows equation \ref{eq-2} will be fulfilled.
\end{proof}

We need~\Cref{lemma-upper-bound-xi-i} as the utility result for \moddppca depends on the eigenvalues of the input. After the first step of $k$-DP-PCA our input is of the form $PA_1P, \dots, PA_nP$, so our utility bound depends on the eigenvalues of $P\Sigma P$. In general $\lambda_1(P \Sigma P) - \lambda_2(P \Sigma P)$ can be arbitrarily much smaller than the actual eigengap of $\Sigma$, and therefore it is not a sufficient utility bound as is, to proof~\Cref{main-theorem-sec-2}. However, as we iteratively apply projection matrices of the form 
\begin{align*}
    P = I - u u^{\top}
\end{align*}
where $u$ is a unit vector, and further $u$ is $\epsilon$-close to the top eigenvector of the matrix we apply it to, we can actually relate the eigengap of $P\Sigma P$ to the one of $\Sigma$ using Weyl's Theorem.

\begin{lemma}\label{lemma-bounding-eigenvalues}
Given $\sin^2(\theta) \leq \xi$, where $\theta$ refers to the angle between $v_1$, the top eigenvector of $\Sigma$ (psd), and the unit vector $u$, then we have
\begin{align*}
    &\tilde{\lambda}_i \geq \lambda_{i+1} - \Delta \\
    &\tilde{\lambda}_i \leq \lambda_{i+1} + \Delta
\end{align*}
where $\tilde{\lambda}_i$ is the ith eigenvector of $P \Sigma P$, with $P = \mathbf{I}_d - u u^{\top}$, $\lambda_i$ the ith eigenvector of $\Sigma$, and $\Delta = 8 \lambda_1 \sqrt{\xi}(1 + \sqrt{\xi})$
\end{lemma}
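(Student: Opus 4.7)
My approach is a direct application of Weyl's inequality (\Cref{lemam-weyls-inequ}) to $P\Sigma P$ compared against $\tilde P\Sigma \tilde P$, where $\tilde P := \mathbf{I}_d - v_1 v_1^\top$ is the ``ideal'' projection one would use if the true top eigenvector were known. Since $\tilde P$ projects out exactly the top eigenvector of $\Sigma$, the spectrum of $\tilde P\Sigma\tilde P$ consists of $\{0,\lambda_2,\lambda_3,\dots,\lambda_d\}$, so its $i$-th largest eigenvalue equals $\lambda_{i+1}$ for $i=1,\dots,d-1$. Consequently, Weyl's inequality gives
\[
\bigl|\tilde\lambda_i - \lambda_{i+1}\bigr| \;\le\; \bigl\| P\Sigma P - \tilde P\Sigma\tilde P\bigr\|_2,
\]
so the entire task reduces to controlling the operator norm of this perturbation.

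\textbf{Expanding the perturbation.} Writing $E := uu^\top - v_1 v_1^\top$, I would expand
\[
P\Sigma P - \tilde P\Sigma\tilde P \;=\; -\Sigma E - E\Sigma + \bigl(uu^\top\Sigma uu^\top - v_1 v_1^\top\Sigma v_1 v_1^\top\bigr),
\]
and then use the further expansion $uu^\top\Sigma uu^\top - v_1 v_1^\top\Sigma v_1 v_1^\top = v_1 v_1^\top\Sigma E + E\Sigma v_1 v_1^\top + E\Sigma E$, via $uu^\top = v_1 v_1^\top + E$. By the triangle inequality and submultiplicativity, together with $\|v_1 v_1^\top\|_2=1$ and $\|\Sigma\|_2=\lambda_1$, this yields
\[
\bigl\| P\Sigma P - \tilde P\Sigma\tilde P\bigr\|_2 \;\le\; 4\lambda_1 \|E\|_2 + \lambda_1\|E\|_2^2.
\]

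\textbf{Bounding $\|E\|_2$.} The key spectral fact, which I would quickly verify, is that for any two unit vectors $u,v_1$ with angle $\theta$, the matrix $uu^\top - v_1 v_1^\top$ has rank at most $2$ with eigenvalues $\pm\sin\theta$, so $\|E\|_2 = \sin\theta \le \sqrt{\xi}$. Substituting back gives
\[
\bigl| \tilde\lambda_i - \lambda_{i+1}\bigr| \;\le\; 4\lambda_1\sqrt{\xi} + \lambda_1 \xi \;\le\; 8\lambda_1 \sqrt{\xi}\bigl(1+\sqrt{\xi}\bigr) \;=\; \Delta,
\]
which is exactly the two-sided bound claimed.

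\textbf{Anticipated obstacle.} The only mildly delicate step is the identity $\|uu^\top - v_1 v_1^\top\|_2 = \sin\theta$; one can prove it either by diagonalizing in the two-dimensional span of $\{u,v_1\}$ or by directly computing the eigenvalues of the rank-$2$ symmetric matrix. Everything else is routine algebraic manipulation and Weyl's inequality, so I expect no conceptual difficulty beyond keeping constants tidy. The final constant $8$ in $\Delta$ is loose relative to what the computation gives ($\approx 4$), which gives comfortable slack for any cross-term bookkeeping I may be overlooking.
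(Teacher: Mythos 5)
Your proposal is correct and follows essentially the same route as the paper: both apply Weyl's inequality to compare $P\Sigma P$ with $(\mathbf{I}_d - v_1v_1^\top)\Sigma(\mathbf{I}_d - v_1v_1^\top)$, whose $i$-th eigenvalue is $\lambda_{i+1}$, and then bound the operator norm of the difference via $\|uu^\top - v_1v_1^\top\|_2$. Your version is in fact slightly tighter, since you use the exact identity $\|uu^\top - v_1v_1^\top\|_2 = \sin\theta$ where the paper settles for the looser bound $2\sqrt{\xi}(1+\sqrt{\xi})$, but both land comfortably within the stated $\Delta = 8\lambda_1\sqrt{\xi}(1+\sqrt{\xi})$.
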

\begin{proof}
    We will use Weyl's Theorem~(\Cref{lemam-weyls-inequ}) to proof this, by defining
    \begin{align*}
        &G_1 = (\mathbf{I} - v_1 v_1^{\top})\Sigma(\mathbf{I} - v_1 v_1^{\top}) \\
        &G_2 = (\mathbf{I} - u u^{\top})\Sigma(\mathbf{I} - u u^{\top})
    \end{align*}
    then for $\mu_i$ the eigenvalues of $G_1$, and $\nu_i$ the eigenvalues of $G_2$ we know $\lambda_2=\mu_1, \lambda_3 = \mu_2, \dots$ and $\tilde{\lambda}_1 = \nu_1, \tilde{\lambda}_2 = \nu_2, \dots$ etc. Now we can use this as follows:
    \begin{align*}
        \tilde{\lambda}_i &= \lambda_{i-1} + (\tilde{\lambda}_i -\lambda_{i-1}) \\ 
        &\leq \lambda_{i-1} + |\tilde{\lambda}_i -\lambda_{i-1}| \\
        &\leq \lambda_{i-1} + \|G_1 - G_2\|
    \end{align*}
    where the last inequality follows by Weyl's Theorem. Next we will bound $\|G_1 - G_2\|$
    \begin{align*}
        \|G_1 - G_2\| &= \|(v_1 v_1^{\top} \Sigma - u u^{\top} \Sigma) + (\Sigma v_1 v_1^{\top} - \Sigma u u^{\top}) + (u u^{\top}\Sigma u u^{\top} - v_1 v_1^{\top}\Sigma v_1 v_1^{\top}) \| \\
        &\leq 4 \| v_1 v_1^{\top} - u u^{\top} \|_2 \|\Sigma\|_2
    \end{align*}
    where the last step follows as $(u u^{\top}\Sigma u u^{\top} - v_1 v_1^{\top}\Sigma v_1 v_1^{\top} = (u u^{\top} - v_1 v_1^{\top})\Sigma u u^{\top} + v_1 v_1^{\top}\Sigma (u u^{\top} - v_1 v_1^{\top}$ and $\|v_1 v_1^{\top}\|_2 = \|u u^{\top}\|_2=1$. Further it turns out that we can bound $\| v_1 v_1^{\top} - u u^{\top} \|_2$ using $\sin^2(v_1, u) \leq \xi$: First we note that as $u$ and $v_1$ are unit vectors we can write
    \begin{align*}
        u = \cos \theta v_1 + \sin \theta v_1^{\bot}
    \end{align*}
    so this means
    \begin{align*}
        u u^{\top} =  \cos^2 \theta v_1 v_1^{\top} + \cos \theta ( v_1 v_1^{\bot \top} + v_1^{\bot} v_1^{\top}) + \sin^2 \theta v_1^{\bot} v_1^{\bot \top}
    \end{align*}
    and also gives us
    \begin{align*}
        \| u u^{\top} - v_1 v_1^{\top} \|_2 &=  \|(\cos^2 \theta - 1)v_1 v_1^{\top} + \cos \theta \sin \theta ( v_1 v_1^{\bot \top} + v_1^{\bot} v_1^{\top}) + \sin^2 \theta v_1^{\bot} v_1^{\bot \top} \|_2 \\
        &= \| -\sin^2 \theta v_1 v_1^{\top} + \cos \theta ( v_1 v_1^{\bot \top} + v_1^{\bot} v_1^{\top}) + \sin^2 \theta v_1^{\bot} v_1^{\bot \top} \|_2 \\
        & \leq | \sin^2 \theta| \|v_1 v_1^{\top}\|  + |\cos \theta \sin \theta | \| v_1 v_1^{\bot \top} + v_1^{\bot} v_1^{\top}\|_2 + |\sin^2 \theta| \| v_1^{\bot} v_1^{\bot \top} \|_2 \\
        &\leq 2 |\sin^2 \theta | + 2 |\sin \theta| \leq 2\sqrt{\xi}(1 + \sqrt{\xi})
    \end{align*}
    
\end{proof}

We can now use~\Cref{lemma-bounding-eigenvalues} to lowerbound the eigengap of $P \Sigma P$.

\begin{lemma}\label{lemma-eigenvalue-perturbation}
For $\Sigma \in \mathbb{R}^{d \times d}$ a matrix with eigenvalues $\lambda_1 \geq \lambda_2 \geq \dots \geq \lambda_d$, $P=I - u u^{\top}$, with $u \in \text{Im}(\Sigma)$, and  $\tilde{\lambda}_1 \geq \tilde{\lambda}_2 \geq \dots \geq \tilde{\lambda}_{d-1}$ the eigenvalues of $P\Sigma P$
    \begin{align*}
        \tilde{\lambda}_1 - \tilde{\lambda}_2 \geq \lambda_2 - \lambda_3 - 2 \Delta
    \end{align*}
    where $\Delta = 8 \lambda_1 \sqrt{\xi}(1 + \sqrt{\xi})$
    and $\xi \geq \sin^2(\theta)$ with $\theta$ the angle between $u$ and $v_1$, the top eigenvector of $\Sigma$.
\end{lemma}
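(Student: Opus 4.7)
The plan is to derive this as an immediate corollary of \Cref{lemma-bounding-eigenvalues}, which already provides two-sided Weyl-type bounds relating the eigenvalues of $P\Sigma P$ to those of $\Sigma$ whenever $u$ is $\xi$-close (in $\sin^2$) to the true top eigenvector $v_1$.

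First I would instantiate \Cref{lemma-bounding-eigenvalues} at $i=1$ to obtain the lower bound $\tilde{\lambda}_1 \geq \lambda_2 - \Delta$. The intuition is that because $u$ is nearly aligned with $v_1$, the projection $P = I - uu^\top$ approximately removes the top eigendirection from $\Sigma$, so $\tilde{\lambda}_1$ should be close to $\lambda_2$, with error controlled by $\|G_1 - G_2\|_2 \leq \Delta$ where $G_1 := (I - v_1 v_1^\top)\Sigma(I - v_1 v_1^\top)$ and $G_2 := P\Sigma P$.

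Next I would apply the same lemma at $i=2$ to get the upper bound $\tilde{\lambda}_2 \leq \lambda_3 + \Delta$ by the analogous Weyl-type argument. Subtracting the two bounds then yields
\begin{align*}
    \tilde{\lambda}_1 - \tilde{\lambda}_2 \;\geq\; (\lambda_2 - \Delta) - (\lambda_3 + \Delta) \;=\; \lambda_2 - \lambda_3 - 2\Delta,
\end{align*}
which is exactly the claimed inequality.

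There is no real technical obstacle here, since all the heavy lifting, namely the perturbation bound $\|G_1 - G_2\|_2 \leq 8\lambda_1\sqrt{\xi}(1+\sqrt{\xi})$ obtained via the identity $u = \cos\theta\, v_1 + \sin\theta\, v_1^{\bot}$ and Weyl's inequality, has already been carried out in \Cref{lemma-bounding-eigenvalues}. The only point worth double-checking is the index alignment: the eigenvalues $\tilde{\lambda}_i$ of $P\Sigma P$ are compared against $\lambda_{i+1}$ rather than $\lambda_i$, because the rank-one perturbation effectively ``kills'' the top eigendirection, so that $\tilde{\lambda}_1$ tracks $\lambda_2$ and $\tilde{\lambda}_2$ tracks $\lambda_3$. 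This alignment is valid precisely because $u \in \mathrm{Im}(\Sigma)$ is approximately aligned with $v_1$ (guaranteed by $\sin^2(\theta) \leq \xi$), which is exactly the hypothesis of the lemma.
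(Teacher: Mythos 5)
Your proposal is correct and matches the paper's intended argument: the paper states \Cref{lemma-eigenvalue-perturbation} as a direct consequence of \Cref{lemma-bounding-eigenvalues} (``We can now use \Cref{lemma-bounding-eigenvalues} to lowerbound the eigengap of $P\Sigma P$'') and gives no further proof, so applying the two-sided bounds at $i=1$ and $i=2$ and subtracting is exactly the intended derivation. Your remark on the index shift ($\tilde{\lambda}_i$ tracking $\lambda_{i+1}$) is also consistent with how the paper's proof of \Cref{lemma-bounding-eigenvalues} aligns the spectra of $(I-v_1v_1^\top)\Sigma(I-v_1v_1^\top)$ and $P\Sigma P$ via Weyl's inequality.
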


\subsection{Proof of Utility of DP-Ojas}

\moddppcaandkdpojasepca* 
\begin{proof}
    The proof follows by the utility proofs of \moddppca and \dpojas~(\Cref{thm-utility-moddp-pca} and~\Cref{thm-utility-dp-ojas}) and~\Cref{lemma-sin-to-epca}.
\end{proof}

\begin{theorem}[\dpojas]\label{thm-utility-dp-ojas}
    
    \textbf{Privacy:} If $\epsilon = O(\sqrt{\log(n/\delta)/n})$ then~\Cref{algo-dp-ojas} is $(\epsilon, \delta)$-DP.
    
    \textbf{Utility:} Given $n$ i.i.d. samples $\{ A_i \in \mathbb{R}^{d \times d} \}_{i=1}^n$ satisfying~\Cref{assumption-1} with parameters $(\Sigma, M, V, K, \kappa', a, \gamma^2)$, if
    \[ n \gtrsim C \cdot \left(\kappa'^2 + \kappa M + \kappa'^2V + \frac{d \kappa' (\gamma + 1) \log(1/\delta)}{\epsilon} \right)\]
    with a large enough constant $C$, then there exists a choice of learning rate $\eta_t$ such that~\Cref{algo-dp-ojas} with a choice of $\zeta = 0.01$ outputs $w_n$ that with probability 0.99 fulfills
    \[ \sin(w_n, v_1) \leq \tilde{O} \left( \kappa' \left( \sqrt{\frac{V}{n}} + \frac{(\gamma + 1)d \log(1/\delta)}{\epsilon n} \right) \right)\]
    where $\kappa' = \frac{\lambda_1(\Sigma)}{\lambda_1(P\Sigma P) - \lambda_2(P\Sigma P)}$ and $\tilde{O}(\cdot)$ hides poly-logarithmic factors in $n, d, 1/\epsilon,$ and $\log(1/\delta)$ and polynomial factors in $K$.
\end{theorem}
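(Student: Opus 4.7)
}

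My plan is to mirror the structure of the proof of Theorem \ref{thm-utility-moddp-pca} for \moddppca, but adapted to the single-sample-per-step, fixed-clipping-threshold design of \dpojas. The three conceptual ingredients will be: (i) a privacy argument via the Gaussian mechanism and advanced composition, (ii) a high-probability no-clipping event under \Cref{ass:A4}, and (iii) a reduction of each iterate update to an update of non-private Oja's algorithm on an auxiliary matrix sequence, to which I can apply \Cref{thm-utility-non-private-Ojas-with-P}.

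\textbf{Privacy.} Conditioned on $\omega_{t-1}$, the vector $\mathrm{clip}_{\beta}(PA_tPu_{t-1})$ has $\ell_2$-norm at most $\beta$, so changing a single $A_t$ shifts this vector by at most $2\beta$ in $\ell_2$. Adding $2\beta\alpha z_t$ with $z_t \sim \mathcal{N}(0,\mathbf{I}_d)$ thus implements the Gaussian mechanism at noise multiplier $\alpha$; a standard Gaussian-mechanism calculation shows each step is $(\epsilon_0,\delta_0)$-DP with $\epsilon_0 = O(1/\alpha)$ times a $\sqrt{\log(1/\delta_0)}$ factor. Composing over the $n$ iterations via advanced composition (\Cref{lemma:advanced-composition}) and plugging in $\alpha = C'\log(n/\delta)/(\epsilon\sqrt{n})$ yields overall $(\epsilon,\delta)$-DP. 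The projection $P$ is post-processing and does not affect the privacy guarantee. The restriction $\epsilon = O(\sqrt{\log(n/\delta)/n})$ is exactly what ensures each per-step privacy loss is in the regime where advanced composition gives the claimed guarantee.

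\textbf{Utility --- reduction to non-private Oja's.} I first argue that the clipping operator is a no-op with high probability. Under \Cref{ass:A4}, for each fixed $\omega_{t-1}$ on the unit sphere, $\|PA_tP\omega_{t-1}\|_2 \le \|A_t\|_2$ can be controlled, and the concentration on quadratic forms together with $\mathbb{E}[A_t]=\Sigma$ with $\|\Sigma\|_2 \le \lambda_1$ gives $\|PA_tP\omega_{t-1}\|_2 \le C\lambda_1\sqrt{d}(K\gamma\log^a(nd/\zeta)+1) = \beta$ for all $t \in [n]$ with probability $\ge 1-\zeta/10$ (the $+1$ inside the parenthesis absorbs the deterministic $\lambda_1\sqrt{d}$ part from $\Sigma$, which is why the utility bound carries $\gamma+1$ rather than $\gamma$). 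Conditioning on this no-clipping event, the update becomes $\omega_t' = \omega_{t-1} + \eta_t P(A_t P\omega_{t-1} + 2\beta\alpha z_t)$. Using $P^2=P$ together with the rotational-invariance trick of \Cref{lemma-distr-equiv-gauss}, I can rewrite $P z_t \stackrel{d}{=} P G_t P \omega_{t-1}$ for a fresh $G_t$ with i.i.d.\ standard Gaussian entries, so the effective update is non-private Oja's applied to the surrogate matrices $C_t := A_t + 2\beta\alpha G_t$, which satisfy $\mathbb{E}[C_t]=\Sigma$.

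\textbf{Applying \Cref{thm-utility-non-private-Ojas-with-P}.} The parameters $\tilde{\mathcal{M}}, \tilde{\mathcal{V}}$ for $C_t$ decompose into the data contribution and the noise contribution. By \Cref{lemma-f5-liu} (with $B=1$) and \Cref{lemma-f2-liu,lemma-version-f3-liu}, $\|C_t - \Sigma\|_2 = O(\lambda_1 M + \beta\alpha\sqrt{d})$ and $\|\mathbb{E}[(C_t-\Sigma)(C_t-\Sigma)^\top]\|_2 = O(\lambda_1^2 V + \beta^2\alpha^2 d)$, using independence of $G_t$ and $A_t$. Plugging these into \Cref{thm-utility-non-private-Ojas-with-P} with the choice $\eta_t = \alpha/((\tilde\lambda_1-\tilde\lambda_2)(\xi + t))$ yields, after $n$ steps,
\[
    \sin^2(\omega_n,\tilde v) \lesssim \frac{\log(1/\zeta)}{\zeta^2}\left(\frac{\lambda_1^2 V}{(\tilde\lambda_1-\tilde\lambda_2)^2 n} + \frac{\beta^2\alpha^2 d}{(\tilde\lambda_1-\tilde\lambda_2)^2 n}\right),
\]
modulo the $d(\xi/n)^{2\alpha}$ term which is driven below $1/n^2$ by the sample-size condition on $n$ (exactly as in the \moddppca proof). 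Substituting $\beta = O(\lambda_1\sqrt{d}(\gamma+1))$ and $\alpha = O(\log(n/\delta)/(\epsilon\sqrt{n}))$ turns the noise contribution into $O(\lambda_1^2(\gamma+1)^2 d^2\log^2(n/\delta)/(\epsilon^2 n^2 (\tilde\lambda_1-\tilde\lambda_2)^2))$, and taking the square root gives the claimed bound with $\kappa' = \lambda_1/(\tilde\lambda_1-\tilde\lambda_2)$.

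\textbf{Main obstacle.} The delicate step is the interaction between the no-clipping event and the iterate dependence: $\omega_{t-1}$ is a data-dependent unit vector, so bounding $\|PA_tP\omega_{t-1}\|_2$ cannot simply use a union bound over a fixed vector. I plan to handle this by exploiting that $A_t$ is independent of $\omega_{t-1}$ (which only depends on $A_1,\dots,A_{t-1}$), so conditionally on the filtration $\mathcal{F}_{t-1}$, $\omega_{t-1}$ is a fixed unit vector, and \Cref{ass:A4} can be applied conditionally; a union bound over $t \in [n]$ then gives the uniform no-clipping guarantee at the cost of a $\log n$ factor absorbed into the $\tilde O(\cdot)$. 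The second subtle point is that the sample-size condition must simultaneously kill the $d(\xi/n)^{2\alpha}$ initialization term, which requires $n \gtrsim e^{\kappa'^2}$ plus the $(\gamma+1)d\kappa'\log(1/\delta)/\epsilon$ term arising from the noise-dominated part of $\tilde{\mathcal{V}}$; these combine into the stated sample-size lower bound.
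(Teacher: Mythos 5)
Your proposal follows essentially the same route as the paper's proof: privacy via the Gaussian mechanism with noise multiplier $\alpha$ (the paper simply cites Lemma~3.1 of \citet{liu2022differential}), a high-probability no-clipping event from \Cref{ass:A4}, the rewriting of the update as non-private Oja's on surrogate matrices $\tilde A_t = A_t + 2\beta\alpha G_t$ via $P^2=P$ and \Cref{lemma-distr-equiv-gauss}, and the same $\tilde{\mathcal M},\tilde{\mathcal V}$ bounds fed into \Cref{thm-utility-non-private-Ojas-with-P}. One correction: your claim that killing the $d(\xi/n)^{2\alpha}$ term forces $n\gtrsim e^{\kappa'^2}$ is carried over from the \moddppca analysis, where the number of Oja steps is only $T=\Theta(\log^3 n)$; here the algorithm takes $n$ steps, so the analogous condition $\lambda_1^2\log^2(n)/((\lambda_1-\lambda_2)^2 n)\le 1/3$ only requires $n\gtrsim\kappa'^2$ up to logs, which is why the theorem's sample-size condition (and the paper's proof) contains $\kappa'^2$ rather than $e^{\kappa'^2}$ — with your exponential requirement you would prove a strictly weaker statement than the one claimed.
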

\begin{proof}
    \textbf{Privacy:} The privacy proof follows by Lemma 3.1 in~\citep{liu2022differential}.\\ \textbf{Utility:} By~\Cref{ass:A4} it follows analogously to Lemma 3.2 in~\citep{liu2022differential} that with probability $1 - O(\zeta)$~\Cref{algo-dp-ojas} does not have any clipping. Under this event, the update rule becomes 
    \begin{align*}
        &w_t' \gets w_{t-1} + \eta_t P (PA_tP w_{t-1} + 2 \beta \alpha z_t )\\
        &w_t \gets P w_t'/ \|P w_t'\|
    \end{align*}
    where $\beta = C \lambda_1 \sqrt{d}(K \gamma \log^2(nd/\zeta) +1)$ and $z_t \sim \mathcal{N}(0, \mathbf{I})$. Just like in the proof of~\Cref{thm-utility-moddp-pca} we use that $P^2 = P$ and~\Cref{lemma-distr-equiv-gauss} to rewrite this as 
    \[ w_t' \gets w_{t-1} + \eta_t P\left(A_t + 2 \beta \alpha G_t \right)P w_{t-1}\]
    where $G$ is a matrix whose entries are i.i.d. $\mathcal{N}(0,1)$ distributed. So if we define 
    \[\tilde{A}_t := A_t + 2 \beta \alpha G_t\]
    this becomes 
    \[ w_t' \gets w_{t-1} + \eta_t P \tilde{A}_t P w_{t-1}\]
    so if we can show the $\tilde{A}_t$'s fulfill all requirements for~\Cref{thm-utility-non-private-Ojas-with-P}, we will directly obtain the wished utility guarantee. Equivalently to the proof of~\Cref{thm-utility-moddp-pca} we can show
    \begin{align*}
        &\| \mathbb{E}[(\tilde{A}_t - \Sigma)(\tilde{A}_t - \Sigma)^{\top}]\|_2 \leq V \lambda_1^2 + 4 \alpha^2 \beta^2 C_2 d =: \tilde{V} \\
        &\| \tilde{A}_t - \Sigma \|_2 \leq M \lambda_1 + 2 C_3 \alpha \beta( \sqrt{d} + \sqrt{\log(n/\zeta)} =: \tilde{V}
    \end{align*}
    Under the event that $\| \tilde{A}_t - \Sigma \|_2 \leq \tilde{M}$ for all $t \in [n]$, we apply~\Cref{thm-utility-non-private-Ojas-with-P} with a learning rate $\eta_t = \frac{h}{(\lambda_1 - \lambda_2)(\xi +t)}$ where \[ \xi = 20 \max \left( \frac{\tilde{M}h}{(\lambda_1 - \lambda_2)}, \frac{(\tilde{V} + \lambda_1)^2 h^2}{(\lambda_1 - \lambda_2)^2 \log(1 + \frac{\zeta}{100})} \right)\] which tells us that with probability $1 - \zeta$, for $n > \xi$
    \[ \sin^2(w_n, v_1) \leq \frac{C \log(1/\zeta)}{\zeta^2} \left(  d \left( \frac{\xi}{n}\right)^{2 h} + \frac{h^2 \tilde{V}}{(2 h - 1) (\lambda_1 - \lambda_2)^2 n}\right)\]
    for some positive constant $C$. If we plug in $\alpha = \frac{C' \log(n/\delta)}{\epsilon \sqrt{n}}$ (as defined in~\Cref{algo-dp-ojas}), set $\zeta = O(1)$, $K = O(1)$, select $h = c \log(n)$ and assume \[n \geq C \left( \frac{M \lambda_1 \log(n)}{\lambda_1 - \lambda_2} + \frac{V \lambda_1^2(\log(n))^2}{(\lambda_1 - \lambda_2)^2}\frac{(K \gamma \log^2(nd/\zeta) +1) \lambda_1\log(n/\delta)\log(n)d}{(\lambda_1 - \lambda_2)\epsilon}  + \frac{\lambda_1^2 \log^2(n)}{(\lambda_1 - \lambda_2)^2}\right)\]
    we are guaranteed $n \geq \xi$ and $d (\xi/n)^{2 \alpha} \leq 1/n^2$, so we will obtain the wished bound.
\end{proof} 
\begin{remark}
    An analogue to~\Cref{lemma-upper-bound-xi-i} holds as well for \kdpojas, by simply setting \[B_n = \tilde{O} \left( \sqrt{\frac{V}{n}} + \frac{(\gamma + 1)d \log(1/\delta)}{\epsilon n} \right).\]
\end{remark}

\subsection{Sample Size requirements}\label{subsec-sample-size-requirements}

The sample size condition in~\Cref{main-theorem-sec-2}:
\begin{align*}
n &\geq C \max
  \begin{cases}
    e^{\kappa'^2}
      + \dfrac{d\,\kappa'\,\gamma\,\sqrt{\ln(1/\delta)}}{\epsilon}
      + \kappa' M
      + \kappa'^2 V
      + \dfrac{\sqrt{d}\,(\ln(1/\delta))^{3/2}}{\epsilon},\\[1ex]
    \lambda_1^2\,\kappa'^2\,k^3\,V,\\[1ex]
    \dfrac{\kappa'^2\,\gamma\,k^2\,d\,\sqrt{\ln(1/\delta)}}{\epsilon}
  \end{cases},
\end{align*}
includes an exponential dependence on the spectral gap: $n \geq \exp(\kappa')$.  While this is relatively harmless as there is no such exponential dependence in the utility guarantee of the Theorem, we are able to get rid of this exponential dependency in exchange for an additional term in the utility guarantee. When looking at the utility proof of \moddppca~(\Cref{thm-utility-moddp-pca}) we see this term arises as we choose $T$ and $n$ so that $(\xi/T) < 1$, as this is one of the requirements of~\Cref{thm-utility-non-private-Ojas-with-P}. The specific inequality that arose from bounding $(\xi/T)$ and that lead to this exponential dependency is
\begin{align}
    \frac{\lambda_1^2 (\log n)^2}{(\tilde{\lambda}_1 - \tilde{\lambda}_2)^2 T} \leq 1/3
\end{align}
(see~\Cref{eq:exp-dep}). As we selected $T =c'(\log n)^3$, we required $\log(n) \geq \lambda_1/(\lambda_1 - \lambda_2)$. By selecting a slightly larger $T = c \kappa \log^3 n$, we would get rid of this exponential dependence, however at the cost of getting an extra term of $\tilde{O}(\kappa^r \gamma^2 d^2 \log(1/\delta)/(\epsilon n)^2)$ in the utility guarantee.

\section{Proof of Lower Bound}\label{sec:proof-lower-bound}

\corollarylowerbound*
\begin{proof}
Combining~\Cref{lemma:reduction-to-frob-norm} with~\Cref{spiked-cov-lower-bound} we obtain the lower bound in~\Cref{corollary-lower-bound-spiked-cov}.
\end{proof}

\lemmaredfrobnorm
\begin{proof}
As
\begin{align*}
    \langle UU^{\top}, X \rangle &= \frac{\langle UU^{\top}, X \rangle}{\langle V_kV_k^{\top}, X \rangle}\langle V_kV_k^{\top}, X \rangle \\
    &= \frac{\text{Tr}(UU^{\top}X)}{\text{Tr}(V_kV_k^{\top}X)}\langle V_kV_k^{\top}, X \rangle
\end{align*}
this implies that 
\begin{align}\label{eq-tr-vs-1-zeta}
    \frac{\text{Tr}(UU^{\top}X)}{\text{Tr}(V_kV_k^{\top}X)} \geq 1 - \zeta^2.
\end{align}
So any upper bound on $\frac{\text{Tr}(UU^{\top}X)}{\text{Tr}(V_kV_k^{\top}X)}$ will give us a lower bound on $\zeta^2$. By~\Cref{lemma-forb-vs-trace} we know \[ \frac{\| U U^{\top} - V V\|_F^2 \Delta_k}{2} \leq \text{Tr}(VV^{\top} X) - \text{Tr}(UU^{\top} 
X)\]
which gives us that \[ \frac{\text{Tr}(UU^{\top}X)}{\text{Tr}(V_kV_k^{\top}X)} \leq 1 -   \frac{\| U U^{\top} - V_k V_k\|_F^2 \Delta_k}{2 \text{Tr}(V_kV_k^{\top} X)}.\] By equation \ref{eq-tr-vs-1-zeta} this gives us \[ \frac{\| U U^{\top} - V_k V_k\|_F^2 \Delta_k}{2 \sum_{i=1}^k \lambda_i} \leq \zeta^2 .\]
\end{proof}

\begin{lemma}\label{lemma-forb-vs-trace} For an orthonormal matrix $U \in \mathbb{R}^{d \times k}$ and a psd matrix $X \in \mathbb{R}^{d \times d}$ with eigengap $\Delta_k = \lambda_k - \lambda_{k+1}$ and top $k$ eigenvectors $V \in \mathbb{R}^{d \times k}$, we have \[\frac{\| U U^{\top} - V V\|_F^2 \Delta_k}{2 } \leq \text{Tr}(VV^{\top} X) - \text{Tr}(UU^{\top} X)\]
    
\end{lemma}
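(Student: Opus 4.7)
The plan is to exploit the eigen-decomposition of $X$ relative to the subspace $\mathrm{span}(V)$ and reduce everything to the two basic identities $X_1 \succeq \lambda_k P_V$ and $X_2 \preceq \lambda_{k+1} P_V^\perp$, where $P_V = VV^\top$, $P_V^\perp = I - P_V$, $X_1 = P_V X P_V$ and $X_2 = P_V^\perp X P_V^\perp$. Since $V$ collects the top $k$ eigenvectors of $X$, the cross terms $P_V X P_V^\perp$ vanish and $X = X_1 + X_2$. I will also write $P_U = UU^\top$ and $P_U^\perp = I - P_U$.

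First I would split $\mathrm{Tr}((P_V - P_U)X) = \mathrm{Tr}((P_V - P_U)X_1) + \mathrm{Tr}((P_V - P_U)X_2)$ and handle each term separately. For the $X_1$ piece, using $P_V X_1 = X_1$ and idempotency,
\begin{equation*}
\mathrm{Tr}((P_V - P_U)X_1) = \mathrm{Tr}(X_1) - \mathrm{Tr}(P_U X_1) = \mathrm{Tr}(P_U^\perp X_1 P_U^\perp) \ge \lambda_k \,\mathrm{Tr}(P_U^\perp P_V P_U^\perp) = \lambda_k \,\mathrm{Tr}(P_V P_U^\perp),
\end{equation*}
where I used $X_1 \succeq \lambda_k P_V$ conjugated by $P_U^\perp$ (monotonicity of trace under PSD ordering). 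For the $X_2$ piece, $P_V X_2 = 0$, so
\begin{equation*}
\mathrm{Tr}((P_V - P_U)X_2) = -\mathrm{Tr}(P_U X_2 P_U) \ge -\lambda_{k+1}\,\mathrm{Tr}(P_U P_V^\perp P_U) = -\lambda_{k+1}\,\mathrm{Tr}(P_U P_V^\perp),
\end{equation*}
using $X_2 \preceq \lambda_{k+1} P_V^\perp$ the same way.

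Next I would combine the two bounds using the standard identity that, for two rank-$k$ orthogonal projectors, $\mathrm{Tr}(P_V P_U^\perp) = \mathrm{Tr}(P_U P_V^\perp) = \tfrac{1}{2}\|P_V - P_U\|_F^2$; this is immediate from expanding $\|P_V - P_U\|_F^2 = \mathrm{Tr}(P_V) + \mathrm{Tr}(P_U) - 2\mathrm{Tr}(P_V P_U) = 2k - 2\mathrm{Tr}(P_V P_U)$. Adding the two pieces gives
\begin{equation*}
\mathrm{Tr}(VV^\top X) - \mathrm{Tr}(UU^\top X) \ge (\lambda_k - \lambda_{k+1})\,\mathrm{Tr}(P_V P_U^\perp) = \frac{\Delta_k}{2}\,\|UU^\top - VV^\top\|_F^2,
\end{equation*}
which is the claim.

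There is no real obstacle here: the only things to be careful with are (i) the decomposition $X = X_1 + X_2$ really holds because $V$ diagonalises $X$, and (ii) the monotonicity step $A \preceq B \Rightarrow \mathrm{Tr}(QAQ) \le \mathrm{Tr}(QBQ)$ for any matrix $Q$, applied with $Q = P_U^\perp$ and $Q = P_U$ respectively. The use of the same rank-$k$ trace identity on both sides is what produces the clean $\Delta_k/2$ factor.
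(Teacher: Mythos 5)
Your proof is correct, and it takes a genuinely different route from the paper's. The paper proves the lemma in two stages: it first shows $\Delta_k\,\|\sin\Theta(U,V)\|_F^2 \le \mathrm{Tr}(VV^\top X)-\mathrm{Tr}(UU^\top X)$ by subtracting $\lambda_{k+1}\mathbf{I}_d$ from $X$ (which costs nothing in trace against $VV^\top-UU^\top$ since both projectors have trace $k$), splitting via $\mathbf{I}_d = VV^\top + (\mathbf{I}_d-VV^\top)$, and bounding below by $\Delta_k\,\mathrm{Tr}\bigl((VV^\top-UU^\top)_+\bigr) = \Delta_k\sum_i\sin\theta_i \ge \Delta_k\sum_i\sin^2\theta_i$; it then separately establishes $\|UU^\top-VV^\top\|_F = \sqrt{2}\,\|\sin\Theta(U,V)\|_F$. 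You instead decompose $X$ itself into its restriction to $\mathrm{span}(V)$ and to the orthogonal complement (valid because $V$ spans an invariant subspace, so the cross blocks vanish), apply the sandwich bounds $X_1\succeq\lambda_k P_V$ and $X_2\preceq\lambda_{k+1}P_V^\perp$ conjugated by $P_U^\perp$ and $P_U$ respectively, and close with the elementary projector identity $\mathrm{Tr}(P_VP_U^\perp)=\mathrm{Tr}(P_UP_V^\perp)=\tfrac12\|P_V-P_U\|_F^2$. Your argument is more self-contained: it avoids principal-angle machinery entirely and does not need the lossy step $\sin\theta\ge\sin^2\theta$, landing exactly on the $\Delta_k/2$ constant. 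The paper's route, on the other hand, yields the slightly stronger intermediate bound in terms of $\sum_i\sin\theta_i$ and makes the $\sin\Theta$ geometry explicit, which it reuses elsewhere. All the individual steps you invoke (PSD monotonicity of the trace under conjugation, the invariant-subspace block decomposition, the rank-$k$ projector trace identity) check out.
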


\begin{proof}
    We will proof this by proving the following two (in)equalities:
    \begin{align}
        \Delta_k \| \sin \Theta (U, V) \|_F^2 &\leq \mathrm{Tr}( VV^{\top} X) - \text{Tr}( UU^{\top}X) \label{eq-bound-sine}\\
        \| UU^{\top} - VV^{\top}\|_F &= \sqrt{2} \| \sin \Theta (U, V) \|_F \label{eq-equality-frob-norm-sine}
    \end{align}
    ~\Cref{eq-bound-sine}: We first note that \[\text{Tr}( VV^{\top} X) - \text{Tr}( UU^{\top}X) = \text{Tr}(( VV^{\top} - UU^{\top})(X - \lambda_{k+1})\mathbf{I}_d)\] as \[\text{Tr}(( VV^{\top} - UU^{\top})\lambda_{k+1}) = \lambda_{k+1} \left(\text{Tr}(VV^{\top}) -  \text{Tr}(UU^{\top}) \right) = 0\] where the last equality follows as $\text{Tr}(UU^{\top}) = k = \text{Tr}(VV^{\top})$. Now 
    \begin{align*}
        \text{Tr}(( VV^{\top} - UU^{\top})(X - \lambda_{k+1})\mathbf{I}_d) &= \text{Tr}((VV^{\top} + (\mathbf{I}_d - VV^{\top}))( VV^{\top} - UU^{\top})(X - \lambda_{k+1})\mathbf{I}_d) \\ &= \text{Tr}(VV^{\top}( VV^{\top} - UU^{\top})(X - \lambda_{k+1}) +  \text{Tr}((\mathbf{I} - VV^{\top})( VV^{\top} - UU^{\top})(X - \lambda_{k+1}) \\ &\geq \text{Tr}(VV^{\top}( VV^{\top} - UU^{\top})(X - \lambda_{k+1}\mathbf{I}_d)) \\ &\geq \Delta_k \text{Tr}( (V_k V_k^{\top} - UU^{\top})_+)
    \end{align*}
    where $(A)_+$ is obtained by replacing each eigenvalue of the matrix $A$ with $\max \{ \mu_i, 0\}$. Now we note that 
    \begin{align*}
        \text{Tr}( (V_k V_k^{\top} - UU^{\top})_+) \geq \|\sin \Theta (U, V)\|_F^2
    \end{align*}
    Hence, since the $\sin \theta_i$ are nonnegative (as the principal angles $\theta_i$ lie in $[0, \pi/2]$) we have $\text{Tr}( (V_k V_k^{\top} - UU^{\top})_+) = \sum_{i=1}^k \sin \theta_i$. Further, by definition we have \[ \| \sin \Theta(U,V) \|_F^2 = \sum_{i=1}^k \sin^2 \theta_i.\] So by noticing that for any angle $\theta \in [0, \pi/2]$, $\sin \theta \geq \sin^2 \theta$ we have proved the first inequality. 

    ~\Cref{eq-equality-frob-norm-sine}: $\| UU^{\top} - VV^{\top}\|_F^2 = \text{Tr}((UU^{\top} - VV^{\top})^2)$. By expanding $(UU^{\top} - VV^{\top})^2$ we see
    \begin{align*}
        (UU^{\top} - VV^{\top})^2 = UU^{\top} - UU^{\top} VV^{\top} - VV^{\top}UU^{\top} + VV^{\top} 
    \end{align*}
    which gives us 
    \begin{align*}
        \text{Tr}((UU^{\top} - VV^{\top})^2) &= 2k - 2 \text{Tr}(UU^{\top}VV^{\top}) \\ &= 2k - \text{Tr}(V^{\top}UU^{\top}V) \\&= 2k - \|U^{\top}V\|_F^2
    \end{align*}
    Lastly, utilizing
    \begin{align*}
        \|U^{\top}V\|_F^2 = 2 \sum_{i=1}^k \cos^2 \theta_i = 2 \sum_{i=1}^k(1 - \sin^2 \theta_i)
    \end{align*}
    the proof follows.
\end{proof}

\subsection{Existing Lower Bounds}\label{sec:appendix-lower-bound}

\begin{theorem}[Lower bound, Gaussian distribution, Theorem 5.3 in \cite{liu2022dp}]\label{thm-gauss-lower-bound-k-1} Let $\mathcal{M}_{\epsilon}$ be a class of $(\epsilon, 0)$-DP estimators that map $n$ i.i.d. samples to an estimate $\hat{v} \in \mathbb{R}^d$. A set of Gaussian distributions with $(\lambda_1, \lambda_2)$ as the first and second eigenvalues of the covariance matrix is denoted by $\mathcal{P}_{(\lambda_1, \lambda_2)}$. There exists a universal constant $C >0$ such that 
\begin{align*}
    \inf_{\hat{v} \in \mathcal{M}_{\epsilon}} \sup_{P \in \mathcal{P}_{(\lambda_1, \lambda_2)}} \mathbb{E}_{S \sim P^n}[\sin(\hat{v}(S), v_1)] \geq C \min \left( \kappa \left(\sqrt{\frac{d}{n}} + \frac{d}{\epsilon n} \right) \sqrt{\frac{\lambda_2}{\lambda_1}}, 1 \right)
\end{align*}
\end{theorem}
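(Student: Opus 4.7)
The plan is to prove the bound by separately establishing the statistical term $\kappa\sqrt{\lambda_2/\lambda_1}\sqrt{d/n}$ and the privacy term $\kappa\sqrt{\lambda_2/\lambda_1}\,d/(\epsilon n)$, then combining them. Since the claim is a minimax lower bound over a restricted class of Gaussians, the natural strategy is the two--point/packing method: construct a hard instance family within $\mathcal{P}_{(\lambda_1,\lambda_2)}$ indexed by unit vectors, reduce estimation to testing, and apply Fano (for the statistical rate) together with a DP-Fano / coupling argument (for the privacy rate).

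First I would choose the hard family $\Sigma_v := \lambda_2 I_d + (\lambda_1-\lambda_2)\,vv^\top$, so that $P_v = \mathcal{N}(0,\Sigma_v)\in\mathcal{P}_{(\lambda_1,\lambda_2)}$ has top eigenvector $v$. Using the Gelfand--Varshamov / sphere packing construction, I would fix a packing $\mathcal{V}\subset S^{d-1}$ of size $\exp(c\,d)$ with pairwise $\sin\angle(v,v')\gtrsim\theta$ for a parameter $\theta$ to be chosen. A direct computation (Sherman--Morrison on $\Sigma_{v}^{-1}$ and expanding the Gaussian KL formula) gives
\begin{equation*}
\mathrm{KL}\!\left(P_v\,\|\,P_{v'}\right)
\;=\;
\tfrac{(\lambda_1-\lambda_2)^2}{2\lambda_1\lambda_2}\bigl(1-(v^\top v')^2\bigr)
\;\asymp\;
\tfrac{(\lambda_1-\lambda_2)^2}{\lambda_1\lambda_2}\,\sin^2\angle(v,v').
\end{equation*}
For the non-private part, applying Fano's inequality to the product $P_v^n$ requires $n\cdot \tfrac{(\lambda_1-\lambda_2)^2}{\lambda_1\lambda_2}\theta^2 \lesssim d$ to prevent testing, and inverting this gives $\theta\gtrsim \sqrt{d/n}\cdot\sqrt{\lambda_1\lambda_2}/(\lambda_1-\lambda_2)=\kappa\sqrt{\lambda_2/\lambda_1}\sqrt{d/n}$, which is exactly the first term.

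For the privacy term, I would use the DP analogue of the packing/Le Cam argument (as in Acharya--Sun--Zhang or Kamath et al.): for any pair $P_v,P_{v'}$ in the packing, any $(\epsilon,0)$-DP mechanism run on $n$ samples produces output distributions whose total variation is at most $O\bigl(\epsilon n\cdot\mathrm{TV}(P_v,P_{v'})\bigr)$ via group privacy together with a coupling that pairs each sample to a neighbor whose marginals agree in expectation. Using $\mathrm{TV}\lesssim\sqrt{\mathrm{KL}}$ (Pinsker, \Cref{thm-pinksers-inequality}) gives $\mathrm{TV}(P_v,P_{v'})\lesssim \theta(\lambda_1-\lambda_2)/\sqrt{\lambda_1\lambda_2}$. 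Combining with DP-Fano over the exponential-size packing yields the constraint $\epsilon n \,\theta(\lambda_1-\lambda_2)/\sqrt{\lambda_1\lambda_2} \lesssim d$, i.e. $\theta\gtrsim \kappa\sqrt{\lambda_2/\lambda_1}\cdot d/(\epsilon n)$, the second term. Taking the maximum of the two obstructions, translating testing failure into an $L^1$ lower bound on $\mathbb{E}[\sin(\hat v,v_1)]$, and capping at $1$ (since $\sin\le 1$ trivially) finishes the proof.

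The main technical obstacle is the coupling argument that converts sample-level TV into a mechanism-level TV bound while respecting the Gaussianity of the data; this is delicate because the coupling must preserve the joint distribution of $n$ independent samples and does not follow from naive group privacy applied row-by-row without losing constants. A secondary subtlety is ensuring the packing exists while maintaining $\Sigma_v\in\mathcal{P}_{(\lambda_1,\lambda_2)}$ with the prescribed top two eigenvalues; this is automatic for the chosen spiked family, but care is needed if one wants the lower bound on $\sin\angle(\hat v,v_1)$ rather than on a subspace distance, which forces a reduction that only works with the one-dimensional spike (consistent with the $k=1$ restriction in the theorem).
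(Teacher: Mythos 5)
This theorem is imported verbatim from \citet{liu2022dp} (their Theorem~5.3) and is stated in the paper's ``Existing Lower Bounds'' appendix without any proof, so there is no in-paper argument to compare against. Your sketch --- the spiked family $\Sigma_v=\lambda_2 I_d+(\lambda_1-\lambda_2)vv^\top$, the exact KL identity $\mathrm{KL}(P_v\|P_{v'})=\tfrac{(\lambda_1-\lambda_2)^2}{2\lambda_1\lambda_2}\sin^2\angle(v,v')$, a local sphere packing of size $e^{cd}$, classical Fano for the $\sqrt{d/n}$ term, and the coupling-plus-group-privacy (DP-Fano) argument for the $d/(\epsilon n)$ term --- is exactly the standard route taken in the cited source, and the algebra correctly recovers $\kappa\sqrt{\lambda_2/\lambda_1}$ as $\sqrt{\lambda_1\lambda_2}/(\lambda_1-\lambda_2)$. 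The only minor gap is that the packing must also have pairwise separation bounded \emph{above} by $O(\theta)$ for the Fano KL bound, which the standard cap-packing construction supplies but your statement of the packing omits.
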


\begin{theorem}[Lower bound without Assumption 4, Theorem 5.4 in \cite{liu2022dp}]\label{thm-lower-bound-k-1-wo-ass-4} Let $\mathcal{M}_{\epsilon}$ be a class of $(\epsilon, \delta)$-DP estimators that map $n$ i.i.d. samples to an estimate $\hat{v} \in \mathbb{R}^d$. A set of distributions satisfying 1.-3. of Assumption \ref{assumption-1} with $M = \tilde{O}(d + \sqrt{n\epsilon/d})$, $V = O(d)$ and $\gamma = O(1)$ is denoted by $\tilde{\mathcal{P}}$. For $d \geq 2$, there exists a universal constant $C > 0$ such that 
\begin{align*}
    \inf_{\hat{v} in \mathcal{M}_{\epsilon}} \sup_{P \in \tilde{\mathcal{P}}} \mathbb{E}_{S \sim P^n}[\sin(\hat{v}(S), v_1)] \geq C \kappa \min \left( \sqrt{\frac{d \wedge \log((1 - e^{- \epsilon})/\delta)}{\epsilon n}} ,1\right)
\end{align*}
\end{theorem}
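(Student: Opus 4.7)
The plan is to derive this minimax lower bound via the classical ``reduction + packing'' strategy, adapted to the differentially private setting, following the template developed in~\citet{liu2022dp} and related DP lower-bound work. The overall idea is: construct a collection of distributions in $\tilde{\mathcal{P}}$ whose top eigenvectors are maximally separated in the $\sin\theta$-metric, show they are statistically indistinguishable by any $(\epsilon,\delta)$-DP mechanism, and conclude that no private estimator can simultaneously be accurate on all of them.

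First I would construct a spherical packing. By a standard volume argument, there is a set $\{v_1,\dots,v_N\}$ of unit vectors in $\reals^d$ with $N=2^{\Omega(d)}$ such that $\sin\theta(v_i,v_j)\ge \xi$ pairwise. Associate to each $v_i$ the Gaussian $P_i=\cN(0,\sigma^2 I+\alpha\,v_iv_i^\top)$, so that $v_i$ is the top eigenvector with eigengap $\alpha$ and $\kappa=(\sigma^2+\alpha)/\alpha$. A direct computation of KL divergence between $P_i$ and $P_j$ gives $\mathrm{KL}(P_i\|P_j)\asymp \frac{\alpha^2}{\sigma^2(\sigma^2+\alpha)}\bigl(1-\langle v_i,v_j\rangle^2\bigr)\lesssim \frac{\xi^2}{\kappa^2}$, and checking the three non-tail parts of~\Cref{assumption-1} for Gaussians yields $M=\tilde O(d+\sqrt{n\epsilon/d})$, $V=O(d)$, $\gamma=O(1)$, so each $P_i\in\tilde{\mathcal P}$.

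Next I would invoke the DP version of Fano's inequality (as in Acharya--Sun--Zhang for pure $\epsilon$-DP, and its group-privacy / coupling refinement for $(\epsilon,\delta)$-DP). The key estimate is that for any $(\epsilon,\delta)$-DP mechanism $\hat v$, the ``DP-information'' between two product distributions $P_i^n$ and $P_j^n$ is bounded by $n$ times a privacy-dependent functional of TV distance between neighbouring samples, not by their full KL. Concretely, one obtains two separate regimes: (i) if the $n$ samples give an information bound $n\,\mathrm{TV}(P_i,P_j)\lesssim \epsilon n\,\xi/\kappa$, Fano forces $\xi\gtrsim \kappa\sqrt{d/(\epsilon n)}$; (ii) using the coupling/secrecy-of-the-sample argument of Kamath--Ullman--Karwa--Vadhan-style for approximate DP, one extracts the additional term $\kappa\sqrt{\log((1-e^{-\epsilon})/\delta)/(\epsilon n)}$, capped at dimension $d$. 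Taking the minimum of the two and observing that trivially $\sin\theta\le 1$ yields the stated bound.

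The main obstacle is the tight $(\epsilon,\delta)$-DP analysis: the naive approach that simply relates DP-Fano to $\epsilon$ loses the $\log((1-e^{-\epsilon})/\delta)$ factor, and capturing it requires careful use of group privacy together with the fingerprinting / secrecy-of-the-sample coupling, including the truncation step that is why the moment parameter $M$ must be allowed to scale as $\tilde O(d+\sqrt{n\epsilon/d})$ rather than a constant. Matching the construction to all three parts of~\Cref{assumption-1} simultaneously (in particular keeping $V=O(d)$ while varying the top eigenvector) is the other delicate bookkeeping step, but it follows because the perturbation $\alpha v_iv_i^\top$ is rank-one and small relative to $\sigma^2 I$.
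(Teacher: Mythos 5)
This statement is restated verbatim from Theorem~5.4 of \citet{liu2022dp} and appears in the appendix under ``Existing Lower Bounds'' without any proof, so there is no in-paper argument to compare against; I can only assess your sketch on its own terms, and it has a genuine gap at its core.

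The gap is the choice of hard family. You instantiate the packing with Gaussians $P_i=\mathcal{N}(0,\sigma^2 I+\alpha v_iv_i^\top)$, but every such distribution satisfies \Cref{ass:A4} with $K,a,\gamma=O(1)$ (this is exactly \Cref{example-gaussian-data}), so the DP-PCA upper bound (\Cref{main-theorem-sec-2} with $k=1$, cf.\ \Cref{corollary-gaussian-data}) guarantees error $O\bigl(\kappa(\sqrt{d/n}+d/(\epsilon n))\bigr)$ over your entire family. For, say, $n=d^3$ and $\epsilon=1/d$ this is $O(1/d)$, while the claimed lower bound is $\Omega(1/\sqrt{d})$ --- so no Gaussian packing can witness \Cref{thm-lower-bound-k-1-wo-ass-4}. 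The same issue shows up quantitatively in your Fano step: with $\mathrm{TV}(P_i,P_j)\lesssim \xi/\kappa$ and $\log N=\Theta(d)$, the TV-based DP-Fano inequality forces $\xi\gtrsim \kappa\,d/(\epsilon n)$, i.e.\ the \emph{linear} privacy rate of \Cref{thm-gauss-lower-bound-k-1}, not the $\sqrt{d/(\epsilon n)}$ rate asserted in step~(i). The whole point of this theorem is that dropping \Cref{ass:A4} makes the problem strictly harder; the hard instances are mixtures with a rare, large spike (probability $p$, magnitude tied to the $\sqrt{n\epsilon/d}$ term in $M$, which is a feature of the construction rather than an artifact of truncation in the analysis). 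Under such a mixture, two hypotheses whose top eigenvectors are $\xi$-separated can be coupled so that the two $n$-sample datasets differ in only $\approx np$ entries, and group privacy over those $np$ entries yields the requirement $np\,\epsilon\gtrsim d\wedge\log((1-e^{-\epsilon})/\delta)$; trading $p$ off against the induced eigenvector perturbation is what produces the square-root rate and places \emph{both} branches of the minimum inside one square root (your part~(ii) is the right ingredient, but it must carry the entire bound, not supply an ``additional term'' on top of a Gaussian-Fano regime). As written, the proposal would prove a false statement for the family it constructs.
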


\begin{theorem}[Theorem 4.2 in \citet{cai2024optimal}]\label{spiked-cov-lower-bound} Let the $d \times n$ data matrix $X$ have i.i.d. columns samples from a distribution $P = \mathcal{N}(0, U^{\top} \Lambda U^{\top} + \sigma^2 \mathbf{I}_d) \in \mathcal{P}(\lambda, \sigma^2)$. Suppose $\lambda \leq c_0' \exp \{e \epsilon - c_0(\epsilon \sqrt{ndk} + dk) \}$ for some small constants $c_0, c'_0 > 0$. Then, there exists an absolute constant $c_1 > 0$ such that 
\begin{align*}
    \inf_{\tilde{U} \in \mathcal{U}_{\epsilon, \delta}} \sup_{P \in \mathcal{P}(\lambda, \sigma^2)} \frac{\mathbb{E} \| \tilde{U} \tilde{U}^{\top} - UU^{\top} \|_F}{\sqrt{k}} \geq c_1 \left( \left( \frac{\sigma \sqrt{\lambda + \sigma^2}}{\lambda}\right) \left( \sqrt{\frac{d}{n}} + \frac{d \sqrt{k}}{n \epsilon}\right) \bigwedge 1 \right)
\end{align*}
where the infimum is taken over all the possible $(\epsilon, \delta)$-DP algorithms, denoted by $\mathcal{U}_{\epsilon, \delta}$ and the expectation is taken with respect to both $\tilde{U}$ and $P$ and \[\mathcal{P}(\lambda, \sigma^2) := \{ \mathcal{N}(0, \Sigma): \Sigma = U \Lambda U^{\top} + \sigma^2 \mathbf{I}_d, U \in \mathbb{O}_{d,k} , \Lambda = \text{diag}(\lambda_1, \dots, \lambda_k), c_0 \lambda \leq \lambda_k \leq \lambda_1 \leq C_0 \lambda \}\]
\end{theorem}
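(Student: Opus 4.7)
The plan is to establish this lower bound through the standard two-component strategy for differentially-private minimax estimation: a classical Fano-type argument over a packing of the Grassmannian yields the non-private rate $\tfrac{\sigma\sqrt{\lambda+\sigma^2}}{\lambda}\sqrt{d/n}$, and a DP-Fano (or coupling/fingerprinting) argument yields the privacy-cost rate $\tfrac{\sigma\sqrt{\lambda+\sigma^2}}{\lambda}\cdot\tfrac{d\sqrt{k}}{n\epsilon}$. The final bound is the maximum of the two, intersected with the trivial bound of $1$.

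Concretely, I would first reduce to a finite hypothesis-testing problem by constructing a $\rho$-packing $\{U_1,\dots,U_N\}\subset\mathbb{O}_{d,k}$ in the Frobenius subspace distance. By standard volume comparison on the Grassmannian $\mathrm{Gr}(k,d)$, one obtains such a family with $\log N\gtrsim k(d-k)$ and $c\rho\sqrt{k}\le\|U_iU_i^\top-U_jU_j^\top\|_F\le C\rho\sqrt{k}$ for all $i\ne j$. For the associated distributions $P_i=\mathcal{N}(0,U_i\Lambda U_i^\top+\sigma^2 I_d)$ with $\Lambda=\lambda I_k$, a direct computation using the Woodbury identity (\Cref{thm-woodbury-matrix-identity}) and the structure of $\Sigma_i^{-1}$ yields the KL bound $\mathrm{KL}(P_i^n\,\|\,P_j^n)\lesssim n\cdot \lambda^2/(\sigma^2(\lambda+\sigma^2))\cdot\|U_iU_i^\top-U_jU_j^\top\|_F^2$. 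Plugging into Fano's inequality and requiring estimation error below $\rho\sqrt{k}/2$ forces $\rho\gtrsim\tfrac{\sigma\sqrt{\lambda+\sigma^2}}{\lambda}\sqrt{d/n}$, which produces the non-private term after the normalization by $\sqrt{k}$ in the statement.

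For the privacy-cost term I would invoke a DP-Fano lemma (in the style of Acharya--Sun--Zhang or the coupling method of Karwa--Vadhan), which strengthens classical Fano by replacing the KL radius with a ``DP-coupling radius'': the minimum expected Hamming distance of any coupling between $P_i^n$ and $P_j^n$. A maximal Gaussian coupling per coordinate gives an expected number of differing records of order $n\cdot\tfrac{\lambda}{\sigma\sqrt{\lambda+\sigma^2}}\|U_iU_i^\top-U_jU_j^\top\|_F$, and combining with $\log N\gtrsim k(d-k)\asymp dk$ produces the target $\tfrac{d\sqrt{k}}{n\epsilon}$ scaling. The technical hypothesis $\lambda\le c_0'\exp\{e\epsilon-c_0(\epsilon\sqrt{ndk}+dk)\}$ is precisely what is needed to absorb the additive $\delta$-slack in DP-Fano into the signal scale so that it does not dominate. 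Finally, applying \Cref{lemma:reduction-to-frob-norm} with $\Delta_k\asymp\lambda$ and $\sum_{i=1}^k(\lambda_i+\sigma^2)\asymp k(\lambda+\sigma^2)$ converts the Frobenius lower bound into the $\zeta$-error lower bound stated in \Cref{corollary-lower-bound-spiked-cov}. The main obstacle is the DP step: lifting scalar DP lower bounds to a matrix-valued parameter with the correct $\sqrt{k}$ dependence requires a packing that is simultaneously well-separated in Frobenius norm \emph{and} admits low-sensitivity Gaussian couplings between its associated product measures; once the packing and KL bound are in place, the classical Fano piece is routine.
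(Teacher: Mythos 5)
This statement is not proved in the paper at all: it is a verbatim restatement of Theorem~4.2 of \citet{cai2024optimal}, imported as a black box and used only to derive \Cref{corollary-lower-bound-spiked-cov} via \Cref{lemma:reduction-to-frob-norm}. So there is no in-paper proof to compare against; the relevant comparison is with the original source. Your sketch follows the standard two-term template (Grassmannian packing plus Fano for the statistical rate, a DP-specific argument for the privacy rate), which is the right shape, and your bookkeeping of exponents is consistent: $\log N \asymp dk$ together with a per-pair coupling cost of order $n\,\tfrac{\lambda}{\sigma\sqrt{\lambda+\sigma^2}}\,\rho\sqrt{k}$ does recover $\rho \gtrsim \tfrac{\sigma\sqrt{\lambda+\sigma^2}}{\lambda}\,\tfrac{d\sqrt{k}}{n\epsilon}$ after the $\sqrt{k}$ normalization. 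The final paragraph about converting to $\zeta$ via \Cref{lemma:reduction-to-frob-norm} is not part of this theorem; it belongs to the corollary.

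The genuine gap is in the privacy term, which is exactly the part you defer. First, the two load-bearing ingredients --- the DP-Fano-type inequality and the claimed maximal-coupling bound $\mathbb{E}[\text{Ham}] \lesssim n\cdot \mathrm{TV}(P_i,P_j) \lesssim n\,\tfrac{\lambda}{\sigma\sqrt{\lambda+\sigma^2}}\|U_iU_i^\top - U_jU_j^\top\|_F$ --- are invoked rather than established, and the second requires the same Woodbury/KL computation you only gesture at for the non-private term, followed by Pinsker; without it the $\sqrt{\lambda+\sigma^2}$ factor (as opposed to, say, $\lambda+\sigma^2$ or $\sigma^2$) is unjustified. Second, your claim that the hypothesis $\lambda \le c_0'\exp\{e\epsilon - c_0(\epsilon\sqrt{ndk}+dk)\}$ ``is precisely what is needed to absorb the additive $\delta$-slack in DP-Fano'' is asserted, not derived, and it is doubtful as stated: in the original proof this condition arises from a score-attack (tracing/fingerprinting) argument, where it controls the contribution of the prior over the parameter space, not from a $\delta$-slack term in a coupling inequality. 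A DP-Fano route may well work, but you would need to exhibit a specific lemma whose hypotheses match this condition, or accept that your route will produce a different (possibly weaker or differently-shaped) side condition on $\lambda$. As written, the proposal is a credible plan but not a proof of the stated theorem.
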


\section{Experiments}\label{sec-appendix-experiments}

\begin{figure}[t]
    \centering
    \includegraphics[width=\linewidth]{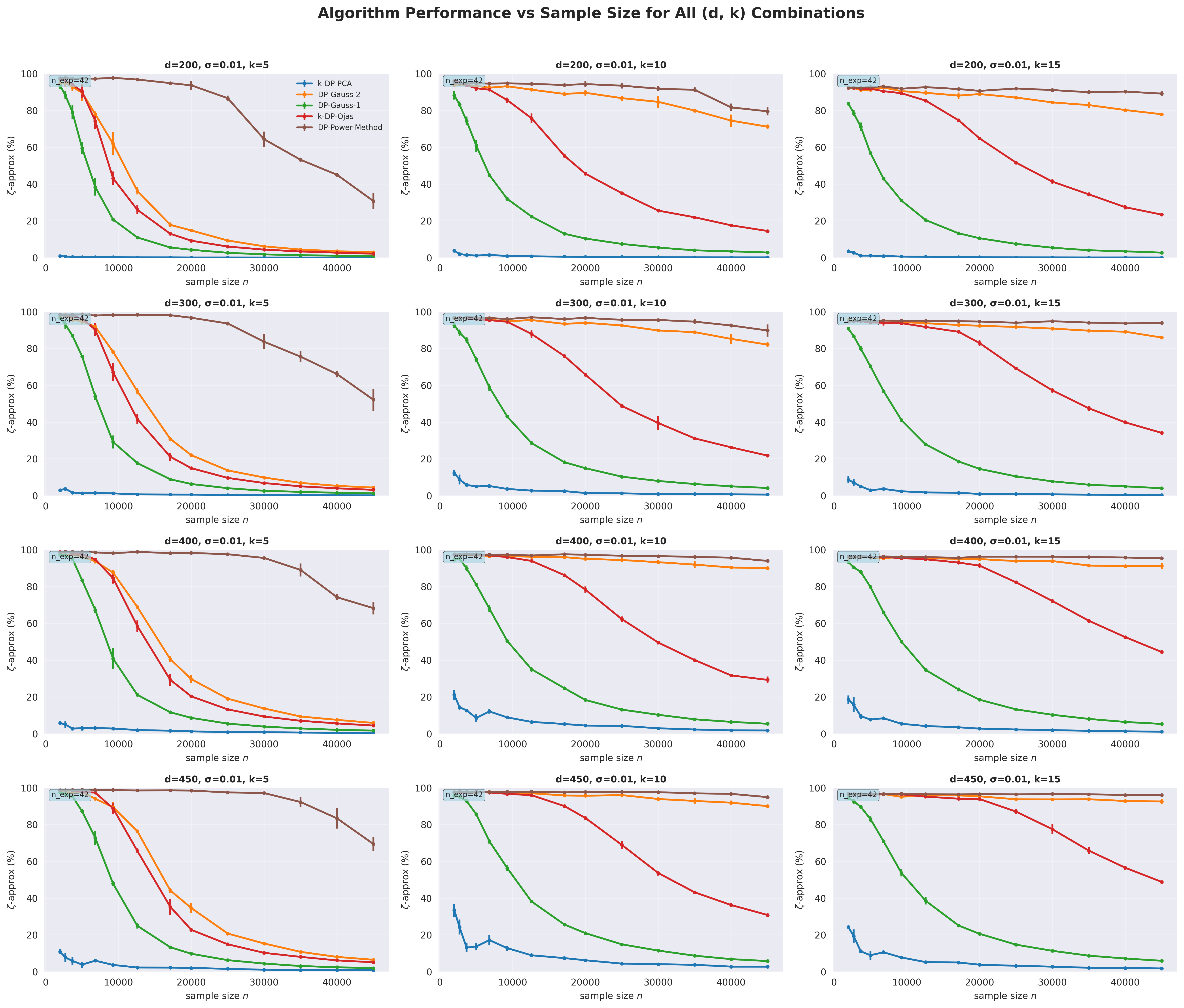}
    \caption{Comparison of \kdppca and \kdpojas for varying $k$ and $d$ (also including DP-Gauss-1 (input perturbation), DP-Gauss-2 (object perturbation), and DP-Power-Method) on the spiked covariance model. We plot the mean over 50 trials, with the bars representing the standard deviation.}
    \label{fig:placeholder}
\end{figure}

In~\Cref{sec:experiments} we compare the performance of \kdppca and \kdpojas to two modified versions of the DP‐Gauss algorithms of \citet{dwork2014analyze}, we refer to as \texttt{DP-Gauss-1} and \texttt{DP-Gauss-2} respectively, and a modified version of the noisy power method~\citep{hardt2014noisy}.

Given a stream of matrices $\{A_i\}$ and a clipping threshold $\beta$ (that is chosen based on the distribution of the input data), \texttt{DP-Gauss-1} first clips each matrix to have trace at most $\beta^2$: $\tilde{A}_i = A_i \cdot \min \{ 1, \beta^2/\text{Tr}(A_i) \}$. In a second step it computes the sum of the $\tilde{A}_i$: $X = \sum_i \tilde{A}_i$ and then performs the gaussian mechanism: $X' = X + E$, where $E$ is a symmetric matrix with their upper triangle values (including its diagonal) i.i.d. sampled from $\mathcal{N}(0, \Delta_1^2 \mathbf{I}_d)$ and $\Delta_1 = \beta^2 \sqrt{2 \log(1.25/\delta)}/\epsilon$. Lastly, it performs an eigenvalue decomposition on $X'$, and releases the top $k$ eigenvectors.

\texttt{DP-Gauss-2} just like  \texttt{DP-Gauss-1} clips the matrices and sums them up to obtain $X$. Next it extracts $V_k$ the top $k$ eigenvectors of $X$ via an eigenvalue decomposition and privatizes its eigengap: $g_k = \lambda_k - \lambda_{k+1} + z$,  where $z \sim \operatorname{Lap}\left(2/\varepsilon\right)$. It then applies the Gaussian mechanism to $V_k$: $V_k' = V_k + E$,  where $E$ is a symmetric matrix with their upper triangle values (including its diagonal) i.i.d. sampled from $\mathcal{N}(0, \Delta_2^2 \mathbf{I}_d)$ and \[\Delta_2 = \frac{\beta^2(1 + \sqrt{2 \log(1/\delta)}/ \epsilon}{| g_k - 2(1 + \log(1/\delta)/\epsilon) |}.\] Finally, an additional eigenvalue decomposition is performed on \( V_k' \), as the introduction of noise may result in a matrix whose columns are no longer orthogonal. The top \( k \) eigenvectors obtained from this decomposition are then released. It is important to note that if \( g_k \) is not positive, the procedure is no longer differentially private, despite adherence to the algorithm described in the original paper (see Algorithm~2 in~\citep{dwork2014analyze}). A fully compliant implementation—also discussed in the paper—would employ the PTR mechanism, albeit at the expense of increased privacy loss. For simplicity and to allow greater flexibility, we instead opt to resample fresh noise whenever \( g_k \leq 0 \).

\texttt{DP-Power-Method} clips the matrices with respect to the square root of the trace and the trace of the square root of the diagonal. For $A = aa^{\top}$ with $a \in \mathbb{R}^d$ the first corresponds to clipping with respect to $\|a \|_2 \leq \beta$, whereas the second to clipping with respect to $\|a\|_1 \leq \alpha$, where the clipping threshold $\beta$ is the same as we choose for the DP-Gauss algorithms. In a second step it then also computes the sum of the clipped matrices and then performs the noisy power method (find algorithm in~\citep{nicolas2024differentially}) where the gaussian noise that is being added at every iteration of the power method is scaled with an additional $\beta \cdot \alpha$ factor.

\subsection{Synthetic Data}

We sample data from the spiked covariance model, meaning each matrix \( A_i \in \mathbb{R}^{d \times d} \) consists of a deterministic rank-\( k \) component, plus random noise that ensures \( A_i \) is full-rank. For the case \( k = 1 \), we generate samples via \( x_i = s_i + n_i \), where \( s_i \sim \text{Unif}\left(\{\lambda_1 v, -\lambda_1 v\}\right) \), with \( v \in \mathbb{R}^d \) a unit vector and \( \lambda_1 \in \mathbb{R} \) a scalar. The noise term is sampled as \( n_i \sim \mathcal{N}(0, \sigma^2 \mathbf{I}_d) \). We then define \( A_i = x_i x_i^{\top} \). Here, \( \lambda_1 \) and \( \sigma \) are inputs to the sampling function, while \( v \) is obtained by sampling a standard Gaussian vector of dimension \( d \) and normalizing it to unit length. For \( k > 1 \), we proceed differently: we first sample a random matrix \( V \in \mathbb{R}^{d \times k} \) with i.i.d. standard normal entries, then apply the Gram–Schmidt process to obtain \( V_k \in \mathbb{R}^{d \times k} \), a matrix with \( k \) orthonormal columns. We construct \( A_i = V_k \Lambda V_k^{\top} + z_i z_i^{\top} \), where \( z_i \sim \mathcal{N}(0, \sigma^2 \mathbf{I}_d) \), and \( \Lambda \in \mathbb{R}^{k \times k} \) is a diagonal matrix whose entries are user-specified eigenvalues. We note that this construction for \( k > 1 \) is not a direct extension of the \( k = 1 \) case. In particular, independently sampling \( k \) vectors as in the \( k = 1 \) case and summing their outer products would result in a mixture of Gaussians rather than a single spiked covariance structure. To avoid this and retain a well-defined rank-\( k \) component, we instead fix the subspace and apply deterministic structure through \( V_k \Lambda V_k^{\top} \).

We set $\beta = C \sqrt{\lambda_1} + \sigma \sqrt{d \log(n/\zeta)}$ for \texttt{DP-Gauss-1} and \texttt{DP-Gauss-2}, where $n$ is the number of samples, $1-\zeta$ is the probability of not clipping. We set \( \zeta = 0.01 \) uniformly across all methods, including our algorithms (\moddppca and \kdpojas) as well as both Gauss baselines. For both \kdppca and \kdpojas, the parameters \( K \) and \( a \) (as defined in~\Cref{assumption-1}) must be provided as inputs. In the case of data generated as described above, we have \( a = 1 \) and \( K = O(1) \), and thus we set \( a = 1 \) and \( K = 1 \) for our experiments. Additionally, \kdppca requires specifying a batch size \( B \), which is used in the \privmean algorithm. While the theoretical analysis suggests that the optimal choice is \( B = n/\log^3(n) \), where \( n \) is the sample size, we found empirically that setting \( B = \sqrt{n} \) yielded improved performance in practice. Lastly, we need to set a learning rate for \kdppca and \kdpojas. For \kdppca we set the learning rates to be 
\[ \eta_t^i = 1/(20 \sigma \lambda_i + (\lambda_i - \lambda_{i+1})\cdot t/\log(n) )\]
where $t$ refers to the $t$th update step inside of \moddppca ($t \in [T]$ where $T=\lfloor n/B\rfloor$) and $i$ to the $i$th iteration of \kdppca. For \kdpojas we empirically found that simply choosing a decreasing learning rate (independent of eigenvalues) resulted in good performance, so we set the learning rate to be \[\eta_j = 1/(1 + j)\] for $j \in [n]$ for all $k$ iterations of \kdpojas.

\subsection{Gaussian Data}

For more general data distributions—that is, those not exhibiting a clean signal-plus-noise decomposition—Corollary~\ref{corollary-gaussian-data} indicates that \kdppca can still outperform existing state-of-the-art methods, primarily due to its favorable scaling with the ambient dimension \( d \). However, our second algorithm, \kdpojas, offers comparable utility guarantees in such settings (see~\Cref{corollary:k-dp-ojas}).

While \kdppca has strong theoretical properties, it requires careful tuning of the learning rate, which can be challenging in practice. Specifically, it depends on a step size parameter that must be adapted to the signal-to-noise ratio and spectrum of the data. In regimes where the noise level is moderate or high, the theoretical gains of \kdppca do not clearly outweigh the practical overhead of hyperparameter tuning and range estimation.

In contrast, \kdpojas is simpler to deploy: it requires no hyperparameter tuning and exhibits robust performance across a range of learning rates. As shown in~\Cref{fig:ojas-utility-wrt-n-eigengap-and-d}, \kdpojas consistently outperforms other state-of-the-art methods on data of the form \( A_i = x_i x_i^{\top} \) with \( x_i \sim \mathcal{N}(0, \Sigma) \). For these reasons, we recommend \kdpojas as the preferred method in practical settings involving general data distributions.

\begin{figure}[t]
  \vspace{-3em}
  \centering
  \begin{subfigure}[b]{\textwidth}
    \centering
    \includegraphics[width=0.8\textwidth]{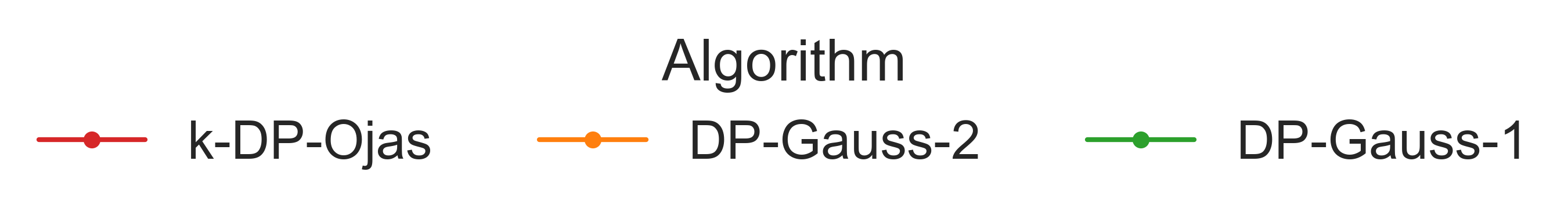}
  \end{subfigure}

  \begin{subfigure}[b]{0.3\textwidth}
    \centering
    \includegraphics[width=\textwidth]{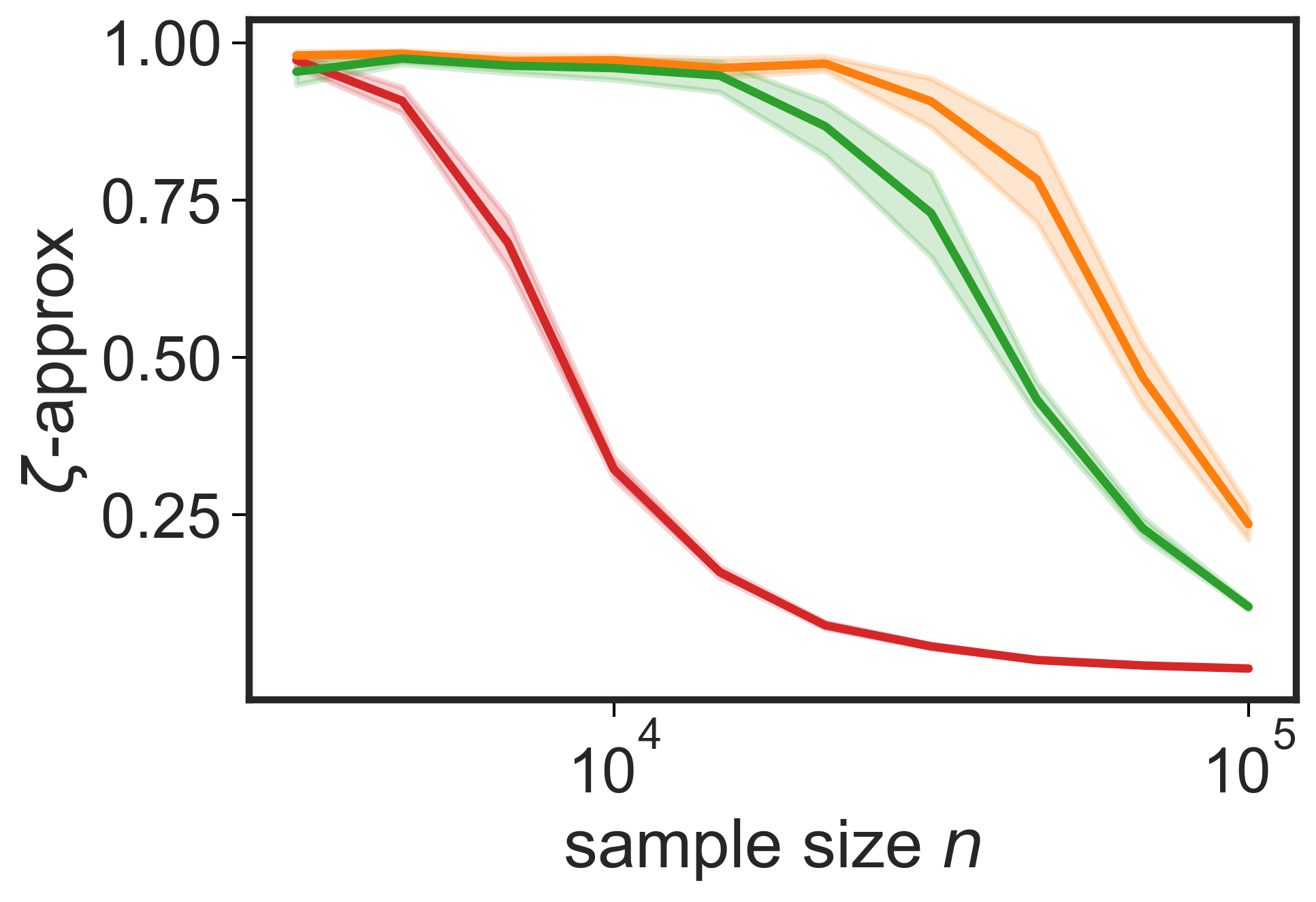}
    \caption{utility vs sample size}
    \label{fig:ojas-utility-vs-sample-size}
  \end{subfigure}
  \hfill
  \begin{subfigure}[b]{0.3\textwidth}
    \centering
    \includegraphics[width=\textwidth]{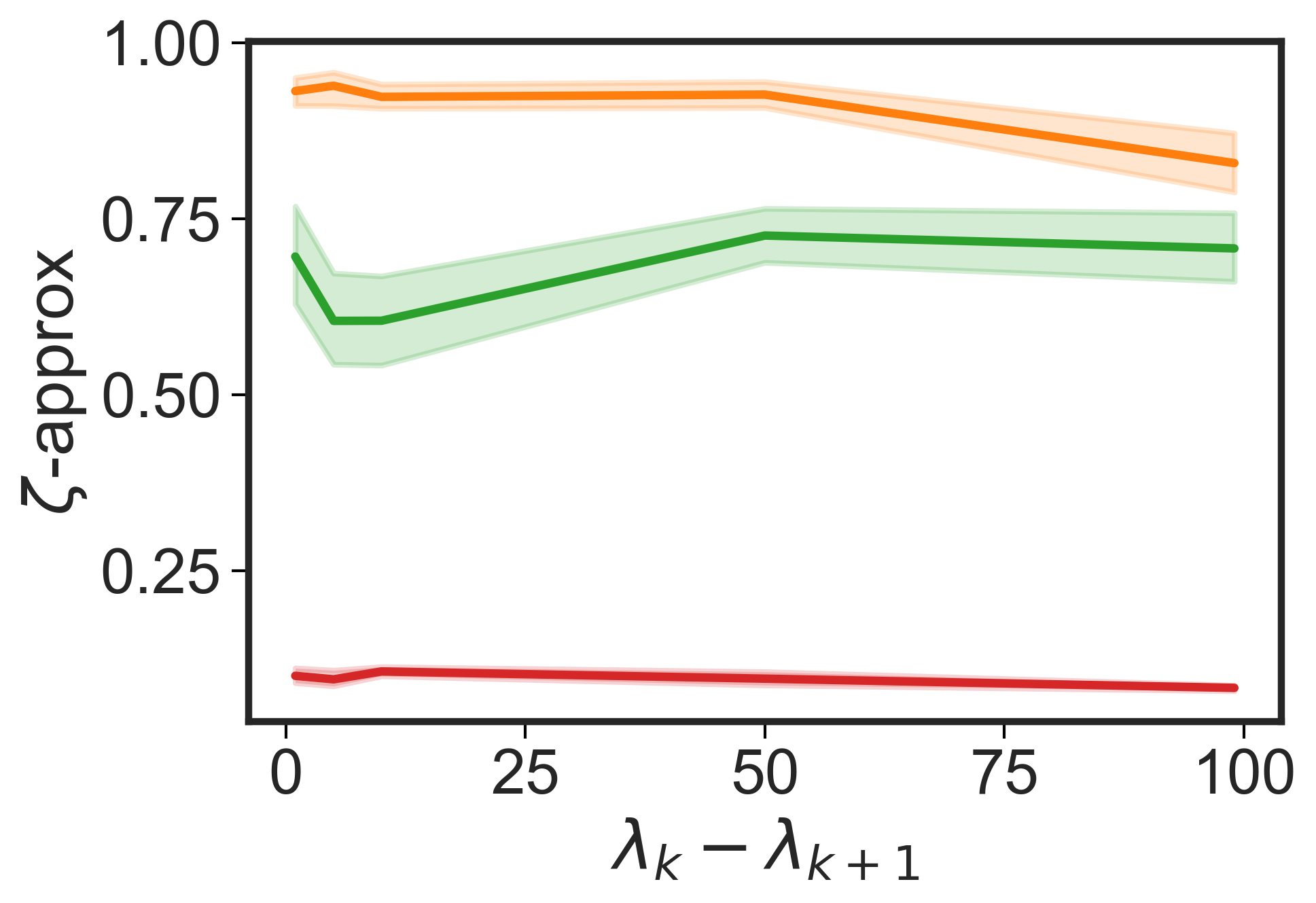}
    \caption{utility vs eigengap}
    \label{fig:ojas-utility-vs-eigengap}
  \end{subfigure}
  \hfill
  \begin{subfigure}[b]{0.3\textwidth}
    \centering
    \includegraphics[width=\textwidth]{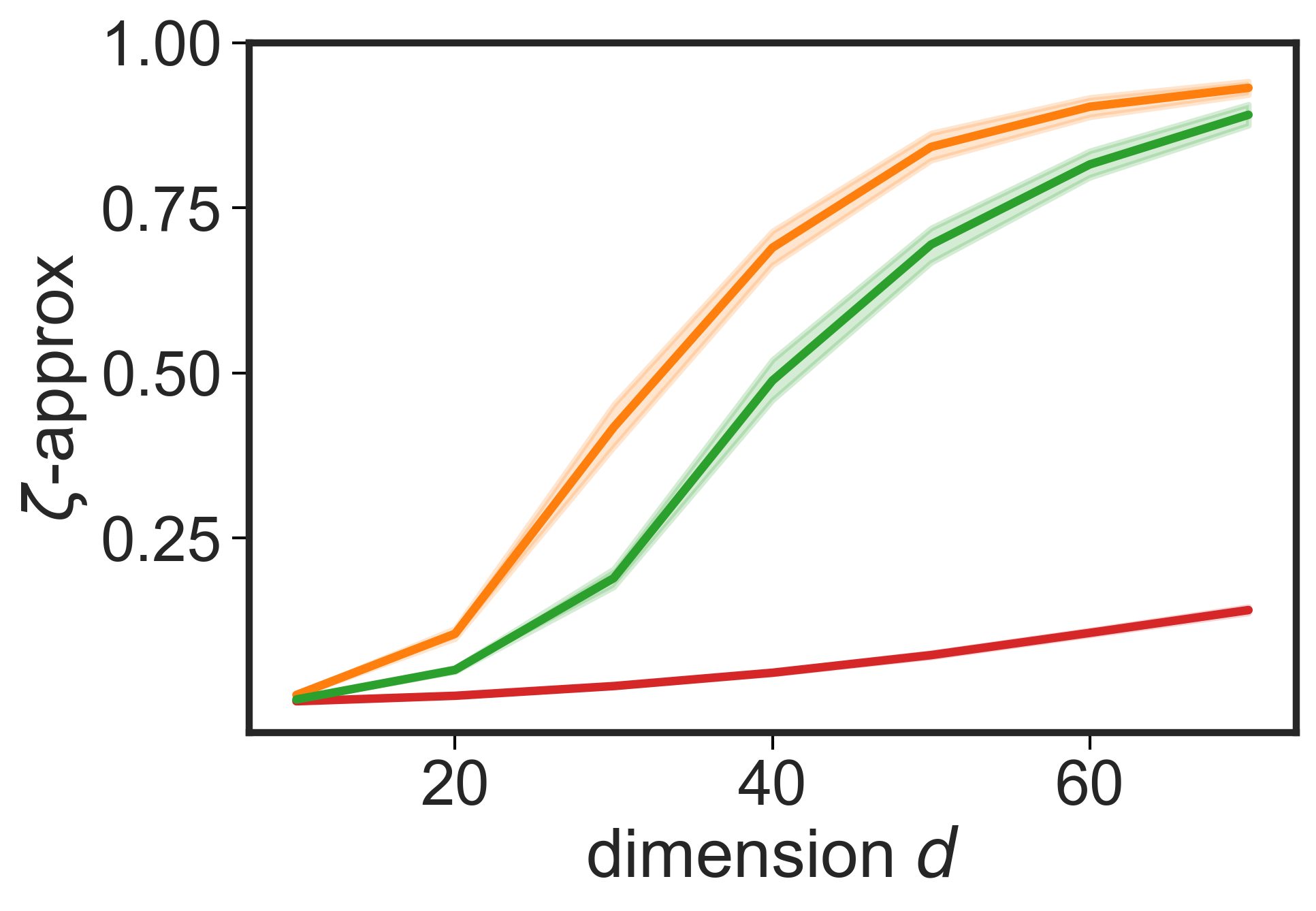}
    \caption{utility vs dimension}
  \label{fig:gaussian-data-impact-of-d}
  \end{subfigure}
  \caption{\small Comparison of \kdppca vs DP-Gauss-1 (input perturbation) and DP-Gauss-2 (output perturbation) on gaussian data. We plot the mean over 50 trials, with shaded regions representing 95\% confidence intervals. We set $k=2$, $d = 200$, $\lambda_1 = 10$, $\epsilon=1$, and $\delta = 0.01$.}
  \label{fig:ojas-utility-wrt-n-eigengap-and-d}
  \vspace{-1em}
\end{figure}

\subsection{Further comments}

Lastly, we comment on a potential modification to our algorithm. The subroutine \rangefinder is used to privately estimate a suitable truncation threshold around the mean for \privmean. In certain scenarios, however, it may be preferable to fix this threshold in advance or determine it through an alternative (non-private) mechanism. Doing so would eliminate the need to estimate the threshold from the data under differential privacy, thereby avoiding the substantial sample complexity that this estimation typically requires.

This consideration directly explains the lower bound on sample size in \kdppca: a sufficient number of samples is necessary to ensure that the truncation threshold can be estimated both meaningfully and in a privacy-preserving manner. Interestingly, this also sheds light on why the algorithm may perform better in practice than its theoretical utility bounds suggest. In particular, even when using fewer samples than required for formal utility guarantees—i.e., below the threshold for reliable private estimation of the truncation point—\kdppca can still exhibit strong empirical performance. In such cases, the algorithm retains its privacy guarantees, but the formal utility guarantees no longer apply.

More broadly, while our algorithm is provably asymptotically optimal, the choice of range finder or mean estimation method can significantly impact empirical performance depending on the data distribution. One of the key advantages of our iterative framework is its modularity. As demonstrated by \kdpojas in~\Cref{sec:technical-results}, the algorithm can be viewed as a plug-and-play template: the private mean estimation subroutine can be replaced with alternative methods tailored to specific data characteristics. Crucially, \Cref{meta-theorem} ensures that any such substitution carries over a corresponding utility guarantee, enabling both flexibility and theoretical rigor.

\section{Algorithms used in Modified DP-PCA}\label{appendix:algos-used-in-moddppca}

Below we describe the two subroutines that estimate the range and mean of the gradients in \moddppca.

\begin{algorithm}[H]
\caption{Top-Eigenvalue-Estimation, Algorithm 4 in \citep{liu2022dp}}  \label{algo-top-eval-est} 
    \textbf{Input:} $S = \{g_i\}_{=1}^B$ , privacy parameters $(\epsilon, \delta)$, failure probability $\tau \in (0,1)$
    \begin{algorithmic}[1]
        \State $\tilde{g}_i \gets g_{2i}- g_{2i-1}$ for $i \in 1,2, \dots , \lfloor B/2 \rfloor$
        \State $\tilde{S} = \{\tilde{g}_i\}_{=1}^{\lfloor B/2\rfloor}$
        \State Partition $\tilde{S}$ into $k= C_1 \log(1/(\delta \tau)/\epsilon$ subsets and denote each dataset as $G_j \in \mathbb{R}^{d \times b}$ (where $b = \lfloor B/2k\rfloor$ is the size of the dataset)
        \State $\lambda_1^{(j)} \gets$ top eigenvalue of $(1/b) G_j G_j^{\top}$ for all $j \in [k]$
        \State partition $[0, \infty)$ into $\Omega \gets \{ \dots, [2^{-2/4}, 2^{-1/4}), [1, 2^{1/4}), \dots \}$ 
        \State run $(\epsilon, \delta)$-DP histogram learner on $\{ \lambda_1^{(j)}\}_{j=1}^k$ over $\Omega$
        \If{ \text{all bins are empty}}
            \State \Return $\bot$
        \Else 
            \State for $[l,r]$ the bin with the maximum number of points in the DP histogram
            \State \Return $\hat{\Lambda} = l$
        \EndIf
    \end{algorithmic}
\end{algorithm}

\begin{algorithm}[H]
\caption{Private-Mean-Estimation, Algorithm 5 in \citep{liu2022dp}}\label{algo-priv-mean-est} 
    \textbf{Input:} $S = \{g_i\}_{=1}^B$ , privacy parameters $(\epsilon, \delta)$, target error $\alpha$, failure probability $\tau \in (0,1)$, approximate top eigenvalue $\hat{\Lambda}$
    \begin{algorithmic}[1]
        \State let $\upsilon = 2^{1/4}K \sqrt{\hat{\Lambda}} \log^2(25)$
        \For{$j=1, 2, \dots, d$} 
            \State Run ($ \frac{\epsilon}{4 \sqrt{2 d \log(4/\delta)}}$ , $\frac{\delta}{4d}$)-DP histogram learner of Lemma on $\{g_{ij} \}_{i \in [B]}$ over $\Omega = \{ \dots, (-2\upsilon, -\upsilon], (-\upsilon, 0], (0, \upsilon], (\upsilon, 2\upsilon], \dots \}$
            \State Let $[l, h]$ be the bucket that contains maximum number of points in the private histogram
            \State $\bar{g}_j \gets l$
            \State Truncate the $j$-th coordinate of gradient $\{g_i\}_{i \in [B]}$ by $[\bar{g}_j - 3K \sqrt{\hat{\Lambda}} \log^a (BD/\tau), \bar{g}_j + 3K \sqrt{\hat{\Lambda}} \log^a (BD/\tau)]$.
            \State Let $\tilde{g}_i$ be the truncated version of $g_i$
        \EndFor
        \State Compute empirical mean of truncated gradients $\tilde{\mu} = (1/B) \sum_{i=1}^B \tilde{g}_i$ and add Gaussian noise:
        \begin{align*}
            \hat{\mu} = \tilde{\mu} + \mathcal{N}\left(0, \left( \frac{12K  \sqrt{\hat{\Lambda}} \log^a (BD/\tau) \sqrt{2 d \log(2.5/\delta)}}{\epsilon B}\right)^2\mathbf{I}_d \right)
        \end{align*}

        \State \Return $\hat{\mu}$
        
    \end{algorithmic}
\end{algorithm}

\end{document}